\def \I{\mathbb{I}}
\def \N{\mathbb{N}}
\def \R{\mathbb{R}}
\def \Z{\mathbb{Z}}
\def \E{\mathbb{E}}
\def \P{\mathbb{P}}
\def \D{\mathbb{D}}
\def \0{\mathds{O}}
\def \Fc{\mathcal{F}}
\def \Hc{\mathcal{H}}
\def \Pc{\mathcal{P}}
\def\0{\mathds{O}}
\def \l{\left}
\def \r{\right}
\newcommand{\mustar}{\mu^{\star}}
\def\msa {\mathsf{A}}
\def\mse{\mathsf{E}}
\def\msx{\mathsf{X}}
\def\msy{\mathsf{Y}}
\def\mcy{\mathcal{Y}}
\def\mcx{\mathcal{X}}
\def\mce{\mathcal{E}}
\def\mcf{\mathcal{F}}
\def\mcm{\mathcal{M}}
\def\mcp{\mathcal{P}}
\def\mcp{\mathcal{P}}
\def\rset{\mathbb{R}}
\def\nset{\mathbb{N}}
\def\mrl{\mathrm{L}}
\def\rml{\mathrm{L}}
\def\rmd{\mathrm{d}}
\def\mre{\mathrm{e}}
\def\rme{\mathrm{e}}
\def\rmn{\mathrm{n}}
\def\rmC{\mathrm{C}}
\def\mrC{\mathrm{C}}
\newcommandx{\functionspace}[2][1=+]{\mathbb{F}_{#1}(#2)}
\newcommand{\argmin}{\operatorname*{arg\,min}}
\newcommandx{\VarDeux}[3][3=]{\operatorname{Var}^{#3}_{#1}\left\{#2 \right\}}
\newcommand{\1}{\mathds{1}}
\newcommand{\LeftEqNo}{\let\veqno\@@leqno}
\newcommand{\PE}{\mathbb{E}}
\newcommand{\PP}{\mathbb{P}}
\newcommand{\QQ}{\mathbb{Q}}
\newcommand{\abs}[1]{\left\vert #1 \right\vert}
\newcommand{\tvnorm}[1]{\| #1 \|_{\mathrm{TV}}}
\newcommandx{\Vnorm}[2][1=V]{\| #2 \|_{#1}}
\newcommandx{\VnormEq}[2][1=V]{\left\| #2 \right\|_{#1}}
\newcommandx{\norm}[2][1=]{\ifthenelse{\equal{#1}{}}{\left\Vert #2 \right\Vert}{\left\Vert #2 \right\Vert^{#1}}}
\newcommandx{\normLigne}[2][1=]{\ifthenelse{\equal{#1}{}}{\Vert #2 \Vert}{\Vert #2\Vert^{#1}}}
\newcommand{\parenthese}[1]{\left(#1 \right)}
\newcommand{\parentheseDeux}[1]{\left[ #1 \right]}
\newcommandx\probaMarkovTilde[2][2=]
\newcommand{\expe}[1]{\PE \left[ #1 \right]}
\newcommand{\plusinfty}{+\infty}
\def\ie{\textit{i.e.}}
\def\eqsp{\;}
\newcommand{\coint}[1]{\left[#1\right)}
\newcommand{\ocint}[1]{\left(#1\right]}
\newcommand{\ooint}[1]{\left(#1\right)}
\newcommand{\ccint}[1]{\left[#1\right]}
\newcommandx{\weight}[2][2=n]{\omega_{#1,#2}^N}
\def\as{\ensuremath{\text{a.s.}}}
\newcommandx\sequence[3][2=,3=]
\newcommandx\sequenceD[3][2=,3=]
\newcommandx{\sequencen}[2][2=n\in\N]{\ensuremath{\{ #1_n, \eqsp #2 \}}}
\newcommandx\sequenceDouble[4][3=,4=]
\newcommandx{\sequencenDouble}[3][3=n\in\N]{\ensuremath{\{ (#1_{n},#2_{n}), \eqsp #3 \}}}
\def\iid{\text{i.i.d.}}
\def\eg{e.g.}
\newcommand{\opnorm}[1]{{\left\vert\kern-0.25ex\left\vert\kern-0.25ex\left\vert #1
    \right\vert\kern-0.25ex\right\vert\kern-0.25ex\right\vert}}
\newcommandx{\CPE}[3][1=]{{\mathbb E}_{#1}\left[\left. #2 \, \middle \vert \, #3 \right. \right]} %%%% esperance conditionnelle
\newcommandx{\CPELigne}[3][1=]{{\mathbb E}_{#1}[\left. #2 \,  \vert \, #3 \right. ]} %%%% esperance conditionnelle
\newcommandx{\CPVar}[3][1=]{\mathrm{Var}^{#3}_{#1}\left\{ #2 \right\}}
\newcommand{\CPP}[3][]
{\ifthenelse{\equal{#1}{}}{{\mathbb P}\left(\left. #2 \, \right| #3 \right)}{{\mathbb P}_{#1}\left(\left. #2 \, \right | #3 \right)}}
\newcommandx{\osc}[2][1=]{\mathrm{osc}_{#1}(#2)}
\def\transpose{\operatorname{T}}
\newcommand\coupling[2]{\Gamma(\mu,\nu)}
\renewcommand{\geq}{\geqslant}
\renewcommand{\leq}{\leqslant}
\def\rmT{\mathrm{T}}
\def\bfX{\mathbf{X}}
\def\bfo{\mathbf{o}}
\newcommandx{\Voi}[1][1=i]{\mathfrak{V}_{\bfo,#1}}
\newcommandx{\Vlyapc}[2][1=\bfo,2=i]{\mathfrak{V}_{#1,#2}}
\def\Wlyap{\mathfrak{W}}
\def\bfomega{\boldsymbol{\omega}}
\newcommandx{\Woi}[1][1=i]{\Wlyap_{\bfomega,#1}}
\newcommandx{\Wlyapc}[2][1=\bfomega,2=i]{\Wlyap_{#1,#2}}
\renewcommand{\iint}[2]{\{#1,\ldots,#2\}}
\def\espilon{\epsilon}
\def\Tf{\mathrm{T}_{f}}
\newcommand{\foX}{\overrightarrow{X}}
\newcommand{\fop}{\overrightarrow{p}}
 \newcommand{\baX}{\overleftarrow{X}}
 \newcommand{\baP}{\overleftarrow{P}}
 \newcommand{\KL}{\mathrm{KL}}
\newcommand{\categorial}{\mathrm{Cate}}
\newcommand{\unif}{\ensuremath{\operatorname{Unif}}}
\newcommand{\poissondist}{\ensuremath{\operatorname{Pn}}}
\newcommand{\simiid}{\ensuremath{\stackrel{\mathrm{iid}}{\sim}}}
\def\loss{\mathfrak{L}}
\def\loikhi2{\mathbf{\chi^2}}
\theoremstyle{plain}
\newtheorem{theorem}{Theorem}[section]
\newtheorem{proposition}[theorem]{Proposition}
\newtheorem{lemma}[theorem]{Lemma}
\newtheorem{corollary}[theorem]{Corollary}
\theoremstyle{definition}
\newtheorem{definition}[theorem]{Definition}
\newtheorem{assumption}[theorem]{Assumption}
\theoremstyle{remark}
\def\ovr{\overrightarrow}
\def\ovl{\overleftarrow}
\def\baq{\overleftarrow{q}}
\def\balambda{\overleftarrow{\lambda}}
\def\bak{\overleftarrow{k}}
\def\hk{\hat{k}}
\def\Tf{T_f}
\def\fX{\overrightarrow{X}}
\def\iX{\overleftarrow{X}}
\def\iXtheta{\overleftarrow{X}^{\theta}}
\def\l{\left}
\def\r{\right}
\def\mE{\mathbb{E}}
\def\foq{\overrightarrow{q}}
\def\trans{\varphi^{(\ell)}}
\def\foR{\overrightarrow{R}}
\def\transi{\varphi^{(i)}}
\def\baP{\overleftarrow{\P}}
\def\baR{\overleftarrow{R}}
\def\lossscore{\loss_{\mrl^2}}
\def\lossscoredenoiser{\loss_{\mrl^2}^{\text{den}}}
\def\lossscorew{\loss_{\mrl^2}^{w}}
\def\lossKL{\loss_{\mre}}
\def\lossCE{\loss_{\text{CE}}}
\def\lossCEw{\loss_{\text{CE}}^w}
\def\losslinear{\loss_{\varpi}}
\def\losslinearw{\loss_{\varpi}^{w}}
\def\MP{\mathrm{MP}}
\def\mB{\mathcal{B}}
\def\mF{\mathcal{F}}
\def\mM{\mathcal{M}}
\def\mS{\mathcal{S}}
\def\D{\mathbb{D}}
\def\mD{\mathcal{D}}
\def\canoX{\mathbf{X}}
\def\canoq{\mathbf{q}}
\def\genericq{\mathfrak{q}}
\def\bfmu{\boldsymbol{\mu}}
\def\Dtf{\mathbb{D}_{\Tf}}
\def\Y{\mathrm{Y}}
\def\Z{\mathrm{Z}}
\def\mri{\mathrm{I}}
\icmltitlerunning{Discrete Markov Probabilistic Models}
\begin{document}

\setlength{\abovedisplayskip}{10pt}
\setlength{\belowdisplayskip}{10pt}

\twocolumn[
% \icmltitle{Discrete Markov Probabilistic Models: Improved Discrete Diffusion Models under Minimal Theoretical Assumptions}

\icmltitle{Bit-Level Discrete Diffusion with Markov Probabilistic Models: An Improved Framework with Sharp Convergence Bounds under Minimal Assumptions}

% It is OKAY to include author information, even for blind
% submissions: the style file will automatically remove it for you
% unless you've provided the [accepted] option to the icml2025
% package.

% List of affiliations: The first argument should be a (short)
% identifier you will use later to specify author affiliations
% Academic affiliations should list Department, University, City, Region, Country
% Industry affiliations should list Company, City, Region, Country

% You can specify symbols, otherwise they are numbered in order.
% Ideally, you should not use this facility. Affiliations will be numbered
% in order of appearance and this is the preferred way.
\icmlsetsymbol{equal}{*}

\begin{icmlauthorlist}
\icmlauthor{Le Tuyet Nhi Pham}{equal,yyy}
\icmlauthor{Dario Shariatian}{equal,zzz}
\icmlauthor{Antonio Ocello}{yyy}
\icmlauthor{Giovanni Conforti}{xxx}
\icmlauthor{Alain Durmus}{yyy}
%\icmlauthor{}{sch}
%\icmlauthor{}{sch}
\end{icmlauthorlist}

\icmlaffiliation{yyy}{Centre de Math\'ematiques Appliqu\'ees (CMAP), \'Ecole Polytechnique, Institut Polytechnique de Paris, 91120 Palaiseau, France}
\icmlaffiliation{xxx}{Department of Mathematics, University of Padova, Italy}
\icmlaffiliation{zzz}{Sierra lab, Inria, Paris, France}
%\icmlaffiliation{sch}{School of ZZZ, Institute of WWW, Location, Country}

\icmlcorrespondingauthor{Le Tuyet Nhi Pham}{le-tuyet-nhi.pham@polytechnique.edu}
% \icmlcorrespondingauthor{A. Ocello}{antonio.ocello@polytechnique.edu}

% You may provide any keywords that you
% find helpful for describing your paper; these are used to populate
% the "keywords" metadata in the PDF but will not be shown in the document
\icmlkeywords{Machine Learning, ICML}

\vskip 0.3in
]

% this must go after the closing bracket ] following \twocolumn[ ...

% This command actually creates the footnote in the first column
% listing the affiliations and the copyright notice.
% The command takes one argument, which is text to display at the start of the footnote.
% The \icmlEqualContribution command is standard text for equal contribution.
% Remove it (just {}) if you do not need this facility.

%\printAffiliationsAndNotice{}  % leave blank if no need to mention equal contribution
\printAffiliationsAndNotice{\icmlEqualContribution} % otherwise use the standard text.

\begin{abstract}
% Score-based generative models have gained widespread recognition for their success in generating complex data distributions. In the continuous setting, the standard approach relies on a Gaussian noising process and learning a sequence of score functions to progressively denoise the data at various noise levels. While this framework has achieved consensus in the continuous domain, its extension to discrete settings remains less established. Moreover, to the best of the authors’ knowledge, the theoretical guarantees for discrete generative models in the literature are scarce.

This paper introduces the Discrete Markov Probabilistic Models (DMPMs), a novel diffusion-based algorithm for discrete data generation. The algorithm operates in discrete bit space, where the noising process is a continuous-time Markov chain 
% that can be sampled exactly via a Poissonian clock 
that flips labels uniformly at random. The time-reversal process, like the forward noise process, is a jump process with its intensity governed by a discrete analogue of the classical score function. Crucially, this intensity is proven to be the conditional expectation of a function of the forward process, underlining  theoretical alignment with score-based generative models. We establish convergence bounds for the algorithm under minimal assumptions, ensuring robustness and efficiency, which we demonstrate through experiments on low-dimensional Bernoulli-distributed datasets and high-dimensional binary MNIST data. The results highlight competitive performance in generating discrete structures compared to the state-of-the-art. 
This work bridges theoretical foundations and practical applications, advancing the development of effective and theoretically grounded discrete generative modeling.

% {\red Discrete Generative Models aim to discover the underlying distribution of discrete data, like sequences or categorical variables, and have drawn much attention recently for their wide applications, especially in text modeling. There have been many researches extending a comprehensive understanding of generative models from continuous settings to discrete data. However, their models are still not universally used due to their objective's complications and the lack of theoretical guarantee. In this work, we investigate discrete Markov probabilistic models on a hypercube and their associated time-reversal evolution, a perturbation of the forward process, with the drift including the "score" function. We show that the score function is an $\mrl^2$-projection due to its conditional expectation expression and enables using a simple objective to explore its approximation by a Neural Network. We provide a robust algorithm, show its theoretical guarantee, and present its outstanding performance in practice by some experiments.}
\end{abstract}

\section*{Introduction}

Diffusion or Score-based Generative Models (SGMs) have become a key reference for generating complex data, such as images \citep[see, \eg,][]{rombach2022high,ramesh2022hierarchical,saharia2022photorealistic}, audio \citep{chen2020wavegrad,kong2020diffwave}, and video \citep{ho2022video,villegas2022phenaki,bar2024lumiere}. In continuous time, this approach benefits from a strong theoretical framework, and a scalable, stable learning objective.

By contrast, \emph{discrete} generative modeling continues to pose significant challenges. Multiple diffusion-based methods have recently been proposed for discrete spaces \citep{austin2021structured,hoogeboom2021argmax,shi2024simplified,campbell2022continuous,holderrieth2024generator,ren2024discrete}, or spaces of mixed type \citep{bertazzi2024piecewise}, but there is still no consensus on which approach is theoretically sound or most practically efficient. Various formulations rely on complex forward kernels or computationally unstable ratio-based estimators for backward transitions, leading to limited convergence guarantees and high computational costs in high dimensions. Furthermore, recent analyses of discrete diffusions have introduced valuable theoretical tools \citep{campbell2022continuous,holderrieth2024generator,ren2024discrete}, yet most methods remain either overly generic or require strong assumptions, making them difficult to scale or to deploy with simple, stable training objectives.

\noindent
\textbf{Contributions.}
In this paper, we introduce \emph{Discrete Markov Probabilistic Models} (DMPMs), a score-based generative models for discrete data inspired by \citet{sohl2015deep} and \citet{austin2021structured} that bridges these gaps.
Our framework specializes the forward noising process to a continuous-time Markov chain on the hypercube $\{0,1\}^d$.
% where each bit is flipped according to a Poisson clock.
Leveraging theoretical insights on time-reversal Markov dynamics of this process,
this choice preserves the key strengths and structure of continuous SGMs, addressing the issues raised in prior work.
% ---exact simulation of the forward process and simple mixing behavior---while yielding a backward \emph{jump process} whose drift is governed by a \emph{discrete analogue of the classical score function}.
% Notably, we show that this “discrete score” can be expressed as a conditional expectation, paving the way for a simple $\mrl^2$-regression objective that avoids the costly ratio-based estimators used by prior works.
Our main results are summarized as follows:
\begin{itemize}
\itemsep=0em
\item
\textbf{Forward-Backward Construction.}\ We provide a principled derivation of the noising (forward) and denoising (backward) processes.

\item
\textbf{Score and denoiser function and stable estimation.} Our analysis reveals that the time-reversed process inherits a score function with an explicit conditional expectation form. By formulating the learning objective as an $\mrl^2$ projection onto this score, we obtain a simple and principled regression loss term, as opposed to the high-variance estimators based on probability ratios used in some prior work. Furthermore, we introduce a \emph{denoiser reparameterization} of the score function, closely paralleling the continuous setting, which provides an interpretable and practical target for model training.
% This formulation not only improves numerical stability but also strengthens the theoretical guarantees of our method.
% This justifies its inclusion as a core component of our methodology.

\item
\textbf{Theoretical Guarantees.} We prove that DMPMs converge to the underlying data distribution under minimal assumptions, providing non-asymptotic error bounds that underscore the method’s reliability. In contrast to many earlier works, we prove that the sampling error grows linearly rather than exponentially with respect to dimension.

\item
\textbf{Empirical Performance.} We demonstrate that our approach attains competitive or superior performance on discrete datasets, including binarized MNIST, frequently with fewer function evaluations compared to existing discrete diffusion frameworks \citep[\eg, 2.89 vs 7.34 FID compared to Discrete Flow Matching,][with 2.5x fewer network calls]{gat2024discrete}.
\end{itemize}

\textbf{Notation.} Given a measurable space $(\mse, \mce)$, we denote by $\Pc(\mse)$ the set of probability measures on $\mse$. Given two probability measures $\mu, \nu \in \Pc(\mse)$, the Kullback--Leibler divergence (also called relative entropy) of $\mu$ with respect to $\nu$ is defined as $\KL(\mu|\nu):=\int \log\frac{\rmd\mu}{\rmd\nu}d\mu$ if $\mu$ is absolutely continuous with respect to $\nu$, and $\KL(\mu|\nu)=+\infty$ otherwise. The total variation distance between $\mu$ and $\nu$ is defined as $\tvnorm{\mu-\nu}= \sup_{\msa \in \mce} \abs{\mu(\msa)-\nu(\msa) }$. Consider a random variable $X$, we denote by $\mathrm{Law}(X)$ the law of $X$. We denote by $\updelta_x$ the Dirac mass at $x$.

\section{Forward and backward process of DMPMs}\label{matrix_POV}

We propose a generative modeling framework that adapts diffusion-based methods to discrete spaces. Let $(\foX_t)_{t \in [0,T]}$ be a forward Markov process on $\{0,1\}^d$, initialized from the data distribution $\mustar$, and evolving over a fixed time horizon $\Tf > 0$ toward a simple base distribution. In the continuous setting, this role is often played by the Ornstein–Uhlenbeck process converging to a Gaussian.
We define the corresponding backward process $(\baX_t)_{t \in [0,\Tf]}$ as $\baX_t := \foX_{T - t}$, which reconstructs $\mustar$ from the base distribution. While the backward process can be Markovian too, its transition rates or drfit terms are typically intractable and must be approximated from forward trajectories, commonly done via score matching in continuous domains.

To adapt this idea, we introduce a forward CTMC on $\{0,1\}^d$ where each bit flips via an independent Poisson clock. We show that its time-reversal remains a tractable CTMC with backward rates given by conditional expectations over the forward process, enabling efficient regression-based training in the discrete setting.

\subsection{CTMCs on discrete state-spaces}
\label{sec:ctcm}
A CTMC $({X}_t)_{t\in \ccint{0,\Tf}}$ defined over the discrete space $\msx$ is a Markovian stochastic process which is piecewise constant with jump at random times following a non-homogeneous Poisson process.  As we shall see, under mild assumption, a (non-homogeneous) CTMC is uniquely characterized by a family of rate matrices $({q}_t)_{t\in \ccint{0,\Tf}}$, ${q}_t : \msx \times  \msx  \to \mathbb{R}$, which constitutes the infinitesimal generator of the process, and should satisfy $\sum_{y\in \msx} {q}_t(x, y) = 0$, for any $x \in \msx$. In particular, this object allows to define a Markov process whose transitions are informally characterized as $h \to 0$ by:
\begin{equation} \label{eq:forward_leap_rate_matrix}
    \PP(X_{t+h} = y | X_t = x) =
    \updelta_{x}(y) + h {q}_t(x, y) + o(h) \eqsp,
\end{equation}
% and $\sum_{y \neq x} \overrightarrow{q}_t(x, y) = -\overrightarrow{q}_t(x, x)$,
where $o$ is the standard little-o Landau notation.
% For instance, when $\overrightarrow{q}_t = \overrightarrow{q}$, then $\overrightarrow{p_t} = \exp(t\overrightarrow{q})$ {\color{red} cite}.

\textbf{Definition and sampling procedure.} When a simple characterization of the transition probability matrix is not available or does not exist, it is possible to simulate the process using the rate matrices, as suggested by equation \eqref{eq:forward_leap_rate_matrix}. Popular sampling strategies include Gillespie's algorithm or $\tau$-leaping \cite{gillespie2007}. In our case, we introduce the former which uses the jump rate and jump kernel associated to the process, defined as:
\begin{equation}
    \lambda_t(x) = \sum_{y \in \msx, y\neq x} {q}_t(x, y) \eqsp, \quad k_t(x, y) = \1_{x \neq y} \frac{{q}_t(x, y)}{\lambda_t(x)} \eqsp.
\end{equation}
Informally, the jump rate governs the frequency of the random jumps of the process, and the jump kernel the next state at these jumps. More precisely, starting from a drawn ${X}_0$ from $\mu_{0}$ and a sequence of \iid~random variables distributed according to the exponential distribution with parameter $1$, $\{E_i \,: \, i \in \nset\ \}$, we can define the jump times $(T_i)_{i \in \nset}$
%$(T_i)_{i\in\{1,\ldots,N\}}|N\simiid \unif(\ccint{0,\Tf})$
of the process and its transition by induction setting $T_0=0$. Given $(T_i, X_{T_i})$, we define the next jump time as $T_{i+1} = T_i + \Delta T_{i+1}$, where
\begin{equation}
\label{eq:def_next_jump}
    \Delta T_{i+1} = \inf \{t \geq 0\, :\, \int_0^t \lambda_{T_i+r}({X}_{T_i})\rmd r \geq E_i \} \eqsp.
\end{equation}
% $\Delta T_{i+1} = \inf \{t \geq 0\, :\, \int_0^t \blambda_{T_i+t}(\baX_{T_i}) \geq E_i \}$
Then, set ${X}_t = {X}_{T_i}$ for $t\in\ooint{T_i,T_{i+1}\wedge \Tf}$, and finally if $T_{i+1} < \Tf$, ${X}_{T_{i+1}} = Y$ for $Y$ distributed according to $ \categorial(\{ k_{T_{i+1}}({X}_{T_i},y)\}_{y \in \msx})$. Note that in the case where $\lambda_t=\lambda>0$ for any $t\in\ccint{0,\Tf}$, $(T_i)_{i\in\nset}$ simply corresponds to the jump times a simple homogeneous Poisson process with rate $\lambda$, defined as $\mathfrak{N}_t = \sum_{i>1} \1_{T_i \leq t}$.
% $\bak_{T_i}(x,\cdot) = \categorial(\{  \lambda(1- s_{T_{i+1}}^{\ell}(\baX_{T_i})) / \balambda_{T_{i+1}}(\baX_{T_i}) \}_{\ell=1}^d)$.
Another equivalent procedure for simulating the process is provided in the supplement (see \Cref{appendix:simulation_backward}). In practice, since the integral in \eqref{eq:def_next_jump} cannot be computed exactly, a time-discretized sampling strategy must be employed. We present these methodological considerations in \Cref{sec:generative_process} below.

\textbf{Kolmogorov equation.} As a final remark, the time-marginal distributions of the process $(X_t)_{t \in \ccint{0,\Tf}}$ starting from $X_0$,  $\nu_t^{(X_0)}(x) = \PP(X_t=x)$ and identified as a tuple, satisfy the backward Kolmogorov equation:
\begin{equation}
\label{eq:kolmogorov}
    \partial_t \nu_t^{(X_0)}(x) = [\nu_t^{(X_0)}]^{\transpose}{q}_t(x) = \sum_{y\in\msx}  \nu_t^{(X_0)}(y) q_t(y,x) \eqsp.
\end{equation}
Here, we identify $q_t$ as a matrix, similarly to $\nu_t^{(X_0)}$. When the rate matrix is simple (\eg, as we use in our forward process in \cref{simplestcase}), we can use the Kolmogorov equation to derive the transition probability matrix ${p}_{t}(x_0, x_t) = \nu_t^{(x_0)}(x_t)= \PP(X_t = x_t | X_0 = x_0)$, for $x_0,x_t \in \msx, 0\leq t\leq  \Tf$, yielding a \emph{simulation-free} procedure.

\subsection{Simple case $\msx=\l\{0,1\r\}$}\label{simplestcase}

To introduce the key ideas, we first consider the simple case where the state space is $\msx = \{0,1\}$, \ie, when $d = 1$. We will then extend our method to $d > 1$ by factorizing the forward process over its components.

\textbf{Forward process.} We define the forward process $( \overrightarrow{X}_t)_{t\in \ccint{0,\Tf}}$ as the homogeneous CTMC starting from  $\overrightarrow{X}_0 \sim \mustar$, and driven by a simple bit-flip process associated with the rate matrix defined for any $x,y \in \msx$, as
\begin{align}\label{eq:def_generator_d=1}
     \overrightarrow{q}^1(x,y):=\begin{cases}
        \hspace{0.2cm}\lambda \eqsp, \quad &\text{if } y \neq x \eqsp,\\
        -\lambda\eqsp, \quad &\text{otherwise\eqsp.}
    \end{cases}
\end{align}
In the case of a constant forward rate matrix $\overrightarrow{q}^1$, the transition probability matrix $\overrightarrow{p}^1_{t}$, for $0\leq t\leq  \Tf$, is known to be $\overrightarrow{p}^1_{t} = \exp(t \overrightarrow{q}^1)$ \citep{liggett2010continuous}, which we compute as:
% For instance, when $\overrightarrow{q}_t = \overrightarrow{q}$, then $\overrightarrow{p_t} = \exp(t\overrightarrow{q})$ {\color{red} cite}.
\begin{equation}
\label{eq:def_density_one_dim_transition}
  \overrightarrow{p}_{t}^1(x,y) = \begin{cases}
        \frac{1}{2}+\frac{1}{2}\rme^{-2\lambda t}\eqsp, \quad &\text{if } x=y \eqsp,\\
        \frac{1}{2}-\frac{1}{2}\rme^{-2\lambda t}\eqsp, \quad &\text{otherwise\eqsp.}
    \end{cases}
\end{equation}
The detailed derivations are given in the supplementary material \ref{proof_eq:transition_d=1} and rely on equation \eqref{eq:kolmogorov}. We note that using a time-dependent rate $\lambda_t$ is a straightforward extension, as defining $\overrightarrow{q}^1_t = \lambda_t \overrightarrow{q}^1$ yields the transition matrix $\exp(\overrightarrow{q}^1 \int_0^t \lambda_s \rmd s) = \overrightarrow{p}_{\int_0^t \lambda_s \rmd s}^{1}$, and leave it to future work.

%We leave the characterization of other types of noising process, such as the masking process {\color{red} cite}, to future work, {\color{red} do we keep the following?, as the scope of our study is to obtain improved theoretical convergence bounds under $L_2$ loss hypothesis, and to include such a loss term when training our generative model to improved performance.}
% as the scope of this work is to
% - theoretically: convergence bounds under the L2 loss
% - methodologically: using the L2 loss term to obtain improved performance

\textbf{Backward process.} To recover the data distribution, we analyze the time-reversed process, which is denoted by $(\overleftarrow{X}_t)_{t\in\ccint{0,\Tf}}$, and defined as $\overleftarrow{X}_t=\overrightarrow{X}_{\Tf-t}$ for any $t\in \ccint{0,\Tf}$.
 \citet[Theorem 2.8]{conforti2022time} show that $(\overleftarrow{X}_t)_{t\in\ccint{0,\Tf}}$ is also a {non-homogeneous} CTMC, associated with a family of generator matrices $(\overleftarrow{q}_t)_{t \in\ccint{0,\Tf}}$ satisfying the time-reversal formula:
\begin{equation}
\label{eq:time_rev_formula}
    \mu_{\Tf-t}(x)\overleftarrow{q}_t^1(x,y)=\mu_{\Tf-t}(y)\overrightarrow{q}^1(y,x) \eqsp,
\end{equation}
for any $0 \leq t \leq \Tf$ and  $x \neq y \in \msx$, where for any $t\in\ccint{0,\Tf}$, we denote by $\mu_t$ the forward marginal distribution:
\begin{equation}
    \mu_t(x) = \PP(\foX_t = x) \eqsp.
\end{equation}
Since our chosen rate matrix $\overrightarrow{q}$ is symmetric (see \eqref{eq:def_generator_d=1}) and $\mu_{\Tf-t}(x) > 0$ for all $x \in \msx$, $t\in [0,\Tf)$, we deduce that the backward generator $\ovl{q}^1_t$ for $0\leq t < \Tf$ is given for any $x \neq y \in \msx$ by
\begin{equation}\label{eq:generator_backward_d=1}
    \overleftarrow{q}^1_t(x,y) =\overrightarrow{q}^1(y,x)\dfrac{\mu_{\Tf-t}(y)}{\mu_{\Tf-t}(x)}\eqsp.
\end{equation}

\textbf{Discrete score function.}
Define  $s_t: \msx \to \R$  for any $x\in \msx$ by
\begin{equation}
\label{eq:def_s_d_1}
    s_{t}(x):=\dfrac{\mu_{\Tf-t}(x)-\mu_{\Tf-t}(1-x)}{\mu_{\Tf-t}(x)} \eqsp.
\end{equation}
$s_{t}$ acts as a discrete derivative in $\msx$ of $\log \mu_t$, and thus serves as a discrete analogue of the score function in continuous models.
With this notation, $\overleftarrow{q}_t(x,y)$ \eqref{eq:generator_backward_d=1} can be expressed, for any $x \neq y \in \msx$, for $0\leq t < \Tf$, as:
\begin{align}\label{eq:generator_backward_d=1_1}
     \overleftarrow{q}^1_t(x,y):=\begin{cases}
        \hspace{0.2cm} \lambda(1- s_t(x)) \eqsp, \quad &\text{if } y \neq x \eqsp,\\
        -\lambda(1- s_t(x)) \eqsp, \quad &\text{otherwise\eqsp.}
    \end{cases}
\end{align}
Access to the sequence $(s_t)_{t\in\ccint{0,\Tf}}$ is equivalent to having access to $(\overleftarrow{q}^1_t)_{t\in\ccint{0,\Tf}}$, and therefore allows to sample from $\overleftarrow{X}_t$ for any $t\in\ccint{0,\Tf}$ using the procedure described in \Cref{sec:ctcm}.
However, the score function $s$ is generally intractable, as it depends on the unknown marginal distributions $(\mu_t)_{t \in \ccint{0, \Tf}}$. To address this, we derive an alternative expression for $s_t$ in terms of a conditional expectation over the forward process.
% This reformulation will enable tractable approximation via regression, and serves as the foundation for our learning objectives.
For $x\in\msx$ and $t \in [0,\Tf)$,
% {\small
\begin{align}
\label{eq:function_score_1d}
\begin{split}
    &\textstyle s_{t}(x)=
  \\
  &\E \l[\dfrac{2\alpha_{\Tf-t}}{1+\alpha_{\Tf-t}}-\dfrac{4\alpha_
  {\Tf-t} \1_{\overrightarrow{X}_0}(\overrightarrow{X}_{\Tf-t})}{1-\alpha_{\Tf-t}^2}  \bigg |  \overrightarrow{X}_{\Tf-t}=x\r],
\end{split}
\end{align}
% }
with
% {\small
\begin{equation}
\label{eq:def_alpha}
    \alpha_t := \rme^{-2\lambda t} \eqsp.
\end{equation}
% }
Indeed, the score function can be computed as
\begin{align*}
    s_t(x) &= 1 - \frac{\mu_{\Tf - t}(y)}{\mu_{\Tf - t}(x)} = 1 - \sum_{x_0 \in \msx} \frac{p_{{\Tf - t} | 0}(y | x_0)}{\mu_{\Tf - t}(x)} \mustar(x_0)  \\
    &= 1 - \sum_{x_0 \in \msx} \frac{p_{{\Tf - t} | 0}(y | x_0)}{p_{{\Tf - t}|0}(x | x_0)} p_{0 | {\Tf - t}}(x_0 | x) \\
    &= \mathbb{E} \left[ 1 - \frac{p_{{\Tf - t} | 0}(y | \overrightarrow{X_0})}{p_{{\Tf - t}|0}(x | \overrightarrow{X_0})}  \ \bigg | \ \overrightarrow{X}_{{\Tf - t}} = x\right] \eqsp,
\end{align*}
and plugging in the expression for our forward transition matrix given in \eqref{eq:def_density_one_dim_transition} we obtain equation \eqref{eq:function_score_1d}. Therefore, the function $s$ is an $\mrl^2$-projection and its approximation boils down to a regression problem.

% Therefore the score function is an L2 projection. We will introduce additional methodology later on.

% Indeed, the score function can be computed as
% {\small
% \begin{align*}
% s_{t}(x) &= \dfrac{\mu_{\Tf-t}(x)-\mu_{\Tf-t}(1-x)}{\mu_{\Tf-t}(x)}\\
%     &=\sum_{z\in \msx}\l[1 -\dfrac{p^1_{\Tf-t}(z,1-x)}{p^1_{\Tf-t}(z,x)}\r]\dfrac{\mu_0(z)\overrightarrow{p}^1_{\Tf-t}(z,x)}{\mu_{\Tf-t}(x)}\\
%      = \sum_{z\in \msx}&\l[\dfrac{2\alpha_{\Tf-t}}{1+\alpha_{\Tf-t}}-\dfrac{4\alpha_{\Tf-t}(x-z)^2}{1-\alpha_{\Tf-t}^2}\r] \P [\foX_0 = z | \foX_{\Tf-t} = x] \\
%     & \hspace{-0.95cm}=\E \Bigg[\dfrac{2\alpha_{\Tf-t}}{1+\alpha_{\Tf-t}}-\dfrac{4\alpha_{\Tf-t}(\overrightarrow{X}_{\Tf-t}-\overrightarrow{X}_0)^2}{1-\alpha_{\Tf-t}^2} | \overrightarrow{X}_{\Tf-t}=x\Bigg]\eqsp.
% \end{align*}}

\subsection{General state space $\msx=\l\{0,1\r\}^d$} \label{sec:general_state_space}

\textbf{Forward process.}
% The process $(\overrightarrow{X}_t)_{t \in \ccint{0,\Tf}}$ is a Markov jump process again, associated with the generator $\overrightarrow{q}$ defined for any function $g : \msx \to \rset$ as
%  \alain{change $Q$ to $k$}
We generalize the previous results for the hypercube in $\R^d$, \ie, the state space is $\msx=\l\{ 0,1\r\}^d$ with $d\in \N^*$. We consider the homogeneous CTMC $(\overrightarrow{X}_t)_{t \in \ccint{0,\Tf}}$ starting from  $\overrightarrow{X}_0 \sim \mustar$, defined with the following rate matrix:
% {\small
% \begin{equation}
%   \label{eq:def_generator}
%   \overrightarrow{q} g (x) = \lambda d   \{ kg(x) - g(x) \}\eqsp,
% \end{equation}
% }
% {\small
\begin{align}\label{eq:def_generator}
     \overrightarrow{q}(x,y):=\begin{cases}
        \hspace{0.2cm}\lambda \eqsp, \quad &\text{if } \| y - x \|^2 = 1 \eqsp,\\
        -\lambda d \eqsp, \quad &\text{if } y = x \eqsp, \\
        \hspace{0.2cm} 0 \eqsp, \quad &\text{otherwise\eqsp.}
    \end{cases}
\end{align}
% }
The corresponding sampling procedure following the one provided in \Cref{sec:ctcm} is given in \Cref{app:general_state_space} for completeness.
% Here, $\lambda d >0$ is the prescribed jump rate, and the jump kernel $k$ is defined as:
% {\small
% \begin{align*}
%     k(x,y) := \mathds{1}_{\|x-y\|^2=1}\cdot 1/d\eqsp,
% \end{align*}
% }
% for any $x,y \in \msx$.
% \begin{align*}
%     K^{{\genericq}}(x,y) :=\begin{cases}
%         1/d \eqsp, \quad &\text{ if } \|x-y\|^2=1\eqsp,\\
%         \hspace{0.2cm}0  \eqsp, \quad &\text{otherwise}\eqsp,
%     \end{cases}
% \end{align*}
% This means that at any time $t$, the process $\overrightarrow{X}_t$ can either stay or jump into a new state that differs from the old state on only one component.
Similarly to the one-dimensional case, we can establish an explicit expression for the transition probability matrix $\overrightarrow{p}_{t}$ for $0\leq t\leq \Tf$ as
% {\small
\begin{align}
\label{eq:transition_d}
    \overrightarrow{p}_{t}(x,y)=\prod_{i=1}^d \overrightarrow{p}_{t}^1(x^i,y^i) \eqsp,
\end{align}
% }
\!\!where $\ovr{p}_t^1$ is defined in \eqref{eq:def_density_one_dim_transition} and $x=(x^i)_{i=1}^d,y=(y^i)_{i=1}^d \in \msx$. The detailed computation is given in the supplementary material \ref{proof_eq:transition_d}
%\alain{provide link to the section}.
The factorization of the transition probability in \eqref{eq:transition_d} is of great practical interest, as this tells us that the dynamic of the forward process simply consists in the single-bit forward dynamic applied independently to each component, as described in \cref{simplestcase}. As a consequence, the forward marginal distribution $\mu_t$ of $\foX_t$ admits the formula
\vspace{-7pt}
% {\small
\begin{align}\label{marginaldist}
    \mu_t(x)=\sum_{z\in \msx} \mu_0(z)\prod_{i=1}^d \overrightarrow{p}_{t}(z,x) \eqsp.
\end{align}
% }
% Expression of the transition probability in terms of the product of each component is interesting to interpret. This
% tells us that given that the initial point $\overrightarrow{X}_0$ the Markov process considered on the product space $\l\{0,1\r\}^d$ is indeed constructed as $d$ independent copies $(\overrightarrow{X}^i_t)_{t\in\ccint{0,\Tf}}$ of the dynamics mentioned in section \ref{simplestcase} starting $\foX_0^i$.

% With the previous interpretation at hand,

% As a consequence, the forward marginal distribution $\mu_t$ of $\foX_t$ can be expressed as
% $\mu_t(x)=\sum_{z\in \msx} \mu_0(z)\prod_{i=1}^d \overrightarrow{p}_{t}(z,x)$.

\vspace{-7pt}
\textbf{Backward process and score function.}
Denote by $(\overleftarrow{X}_t)_{t\in\ccint{0,\Tf}}$, the time-reversal process associated with $(\overrightarrow{X}_t)_{t \in\ccint{0,\Tf}}$, and defined as $\overleftarrow{X}_t = \overrightarrow{X}_{\Tf-t}$ for any $t\in\ccint{0,\Tf}$. As in the case $d=1$,
\citet[Theorem 2.8]{conforti2022time} shows that $(\overleftarrow{X}_t)_{t\in\ccint{0,\Tf}}$ is also a \emph{non-homogeneous} CTMC, with backward generator matrix $(\overleftarrow{q}_t)_{t \in\ccint{0,\Tf}}$ that  satisfies \eqref{eq:time_rev_formula} and therefore \eqref{eq:generator_backward_d=1},  proceeding as before.
As in the case $d=1$, we show that $(\overleftarrow{q}_t)_{t \in\ccint{0,\Tf}}$ depends only on a discrete score function, which we now introduce.

First, note that  \eqref{eq:generator_backward_d=1} and \eqref{eq:def_generator} yield $\overleftarrow{q}_t(x,y)= 0$, for $x,y\in\msx$ satisfying $\|x-y\|^2\neq 1$ and $x\neq y$.
Then, for $0\leq t < \Tf$, define $s_t : \msx \to \rset^d$ for any $x \in\msx$, $s_t(x)=\{s_t^{\ell}(x)\}_{\ell=1}^d$ as the vector in $\R^d$, with components $\ell \in\{1,\ldots,d\}$,
% {\small
\begin{align}\label{eq:score1}
   s_t^{\ell}(x):=\dfrac{\mu_{\Tf-t}(x)-\mu_{\Tf-t}(\varphi^{(\ell)}(x))}{\mu_{\Tf-t}(x)} \eqsp,
\end{align}
% }

where $\varphi^{(\ell)}: \msx \to \msx$ is defined  as  $\varphi^{(\ell)}(x)=y$,
% \alain{change $\varphi_{\ell}$ in $\varphi^{(\ell)}$}
with $y$ obtained by flipping the $\ell$-th bit of $x$, \ie, $y^\ell=1-x^\ell$, and $y^i=x^i$ for $i\neq \ell$.
Then, for $0\leq t < \Tf$, $x \neq y \in \msx$, we can write the backward generator $\overleftarrow{q}_t(x,y)$, as given in  \eqref{eq:time_rev_formula}, as:
% {\small
\begin{equation*}
  \overleftarrow{q}_t(x,y) = \sum_{\ell=1}^d     \lambda(1- s_t^{\ell}(x))  \1_{ y = \varphi^{(\ell)}(x)} \eqsp.
\end{equation*}
% }

%{\color{red} we can remove the following if we want, in my opinion it is better to de-clutter as much as possible by removing redundant equations}

%This defines the non-homogeneous jump rate $\balambda_t$ and jump kernel $\bak_t$ of the backward process:
%{\small
%\begin{equation}
%\label{eq:def_char_back}
%\begin{aligned}
%    \balambda_t(x)& = \lambda \sum_{\ell=1}^{d}(1- s_{t}^{\ell}(x))\\
%    \bak_t(x,y) &=  \mathds{1}_{y=\varphi^{(\ell)}(x)} \cdot \lambda(1- s_{t}^{\ell}(x)) / \balambda_{t}(x)
%    \end{aligned}
%\end{equation}
%}
%for $x \neq y \in\msx$ and $t\in\ccint{0,\Tf}$.

\textbf{Score function.} Note that the function $s$ thus defined is an extension to the case $d\geq 1$ of the function $s$ defined for $d=1$ in \eqref{eq:def_s_d_1}. As a result, $s_t$ is a conditional expectation over the forward process, where each of its components admits an expression similar to the 1d case \eqref{eq:function_score_1d}.
% For fixed $t$, $s_t$ can be interpreted as a discrete gradient in $\msx$ for $\log \mu_t$ and therefore plays the natural counterpart of score functions in diffusion models.
\begin{proposition}\label{prop:1}
The score function can be expressed as a conditional expectation:
% {\small
\begin{align}
\label{eq:function_score_d}
     s_{t}^\ell(x)=\E \l[  f^{\ell}_t(\overrightarrow{X}_0,  \overrightarrow{X}_{\Tf-t})| \overrightarrow{X}_{\Tf-t}=x\r]\eqsp,
\end{align}
% }

where  $t \in [0,\Tf)$, $x\in \msx$, $\ell = 1,\ldots,d$, $s_t^{\ell}$ is the $\ell$-th component of the score function $s_t$, and
% {\small
\begin{equation}\label{def:f}
f^{\ell}_t(\overrightarrow{X}_0,  \overrightarrow{X}_{\Tf-t})   =  \dfrac{2\alpha_{\Tf-t}}{1+\alpha_{\Tf-t}}-\dfrac{4\alpha_{\Tf-t}(\overrightarrow{X}_{\Tf-t}^\ell-\overrightarrow{X}_0^\ell)^2}{1-\alpha_{\Tf-t}^2}\eqsp.
\end{equation}
% }
\end{proposition}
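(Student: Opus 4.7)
The plan is to reduce the $d$-dimensional claim to the one-dimensional computation already carried out for \eqref{eq:function_score_1d}, by fully exploiting the product structure of the forward kernel \eqref{eq:transition_d}. I would start from the definition \eqref{score1} and write
\[
s_t^\ell(x) \;=\; 1 - \frac{\mu_{\Tf-t}(\varphi^{(\ell)}(x))}{\mu_{\Tf-t}(x)}\eqsp,
\]
then substitute \eqref{marginaldist} into both numerator and denominator. Since $\varphi^{(\ell)}(x)$ differs from $x$ only in the $\ell$-th coordinate, the product formula \eqref{eq:transition_d} gives, for every $z\in\msx$,
\[
\overrightarrow{p}_{\Tf-t}(z,\varphi^{(\ell)}(x)) \;=\; \overrightarrow{p}_{\Tf-t}(z,x)\,\frac{\overrightarrow{p}^1_{\Tf-t}(z^\ell,1-x^\ell)}{\overrightarrow{p}^1_{\Tf-t}(z^\ell,x^\ell)}\eqsp,
\]
so all one-dimensional factors for $i\neq\ell$ cancel at the level of the ratio inside the sum.

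Next, I would recognize that the normalized weights $\mu_0(z)\overrightarrow{p}_{\Tf-t}(z,x)/\mu_{\Tf-t}(x)$ are exactly $\P[\overrightarrow{X}_0=z\mid \overrightarrow{X}_{\Tf-t}=x]$ by Bayes' rule. Hence
\[
s_t^\ell(x) \;=\; \sum_{z\in\msx}\Big(1 - \tfrac{\overrightarrow{p}^1_{\Tf-t}(z^\ell,1-x^\ell)}{\overrightarrow{p}^1_{\Tf-t}(z^\ell,x^\ell)}\Big)\,\P[\overrightarrow{X}_0=z\mid \overrightarrow{X}_{\Tf-t}=x]\eqsp,
\]
which is already a conditional expectation under the forward process. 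It remains to show that the bracketed quantity coincides with $f_t^\ell(z^\ell,x)$ defined in \eqref{def:f}, since then taking the conditional expectation inside gives \eqref{eq:function_score_d}.

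That last step is where the small amount of real computation happens, and it mirrors exactly the case $d=1$. Using the explicit one-dimensional transition \eqref{eq:def_density_one_dim_transition} together with the notation $\alpha = \alpha_{\Tf-t}$ from \eqref{eq:def_alpha}, the ratio equals $(1-\alpha)/(1+\alpha)$ when $z^\ell=x^\ell$ and $(1+\alpha)/(1-\alpha)$ when $z^\ell\neq x^\ell$. Writing $(z^\ell-x^\ell)^2\in\{0,1\}$ as the indicator of the second case and combining over a common denominator $1-\alpha^2$ yields
\[
1 - \tfrac{\overrightarrow{p}^1_{\Tf-t}(z^\ell,1-x^\ell)}{\overrightarrow{p}^1_{\Tf-t}(z^\ell,x^\ell)} \;=\; \tfrac{2\alpha}{1+\alpha} - \tfrac{4\alpha (z^\ell-x^\ell)^2}{1-\alpha^2}\eqsp,
\]
which is $f_t^\ell(z^\ell,x)$. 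Substituting back closes the proof.

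The only delicate point is this algebraic unification of the two cases into a single closed form polynomial in $(z^\ell-x^\ell)^2$; conceptually, however, everything is driven by the product structure \eqref{eq:transition_d} (which makes the $\ell$-th bit flip affect only the $\ell$-th factor) and by the Bayes identification of the normalized forward weights with the time-reversed conditional law. I do not expect any obstacle beyond these bookkeeping steps, since the construction mirrors the $d=1$ derivation already displayed before \eqref{eq:def_alpha}.
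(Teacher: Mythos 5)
Your proposal is correct and follows essentially the same route as the paper's proof: substitute the marginal formula \eqref{marginaldist} into \eqref{score1}, use the product structure \eqref{eq:transition_d} so that only the $\ell$-th one-dimensional factor survives in the ratio, identify the normalized weights $\mu_0(z)\overrightarrow{p}_{\Tf-t}(z,x)/\mu_{\Tf-t}(x)$ with $\P[\overrightarrow{X}_0=z\mid\overrightarrow{X}_{\Tf-t}=x]$, and verify by the same two-case computation that $1-\overrightarrow{p}^1_{\Tf-t}(z^\ell,1-x^\ell)/\overrightarrow{p}^1_{\Tf-t}(z^\ell,x^\ell)$ equals $f_t^\ell$. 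The only cosmetic difference is that the paper manipulates the difference $\mu_{\Tf-t}(x)-\mu_{\Tf-t}(\varphi^{(\ell)}(x))$ before dividing, whereas you divide first; the algebra and the Bayes step are identical.
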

The proof of this result is given in \cref{proof_prop:1}.
Similarly to the 1d case, access to the score allows to simulate the backward process following the procedure described in \Cref{sec:ctcm} since the
 the non-homogeneous jump rate $\balambda_t$ and jump kernel $\bak_t$ of the backward process are given by
% {\small
\begin{equation}
\label{eq:def_char_back}
\begin{aligned}
    \balambda_t(x)& = \lambda \sum_{\ell=1}^{d}(1- s_{t}^{\ell}(x))
    \eqsp,
    \\
    \bak_t(x,y) &= \sum_{\ell=1}^d  \mathds{1}_{y=\varphi^{(\ell)}(x)} \cdot \lambda(1- s_{t}^{\ell}(x)) / \balambda_{t}(x) \eqsp,
    \end{aligned}
\end{equation}
% }
for $x \neq y \in\msx$ and $t\in [0,\Tf)$.

\subsection{Approximating the score function}\label{sec:backward_characteristics}

In this section, we derive the training objective to estimate the score function $(s_t)_{t\in [0,\Tf)}$ defined in \eqref{eq:score1}, which governs the backward rate matrix. As in standard diffusion models, our goal is to sample from the time-reversed process, requiring approximation of the (intractable) score. Leveraging its conditional expectation structure (\cref{prop:1}), we approximate $s_t$ using a parameterized family $\l\{(t,x) \mapsto s_t^{\theta}(x)\r\}_{\theta \in \Theta}$, where $\theta$ is optimized via an adapted score-matching objective, defined as the function
\begin{equation}
\label{eq:obje_modified_sm}
    % {\small
     \lossscore : \theta \mapsto \int_0^{\Tf} \E \l[\| s^\theta_{\Tf-t}(\overrightarrow{X}_t) -f_{\Tf-t}(\overrightarrow{X}_0,\overrightarrow{X}_t) \|^2 \r]\rmd t\eqsp.
    % }
\end{equation}
Another objective function to fit $\theta$ can be derived by using
the fact that for any $x \in \msx$, $t\in [0,\Tf),  \ell \in \iint{1}{d}$,
$1-s_t^{\ell}(x)$ is non-negative. Thus, we introduce the following entropy-based term:
% the add to the loss $\loss_{\mrl^2}$
% {\small
\begin{equation*}
     \theta \mapsto  \int_0^{\Tf} \expe{\sum_{\ell=1}^d(1-s_{\Tf-t}^{\theta,\ell})h\parenthese{\frac{1-s_{\Tf-t}^{\ell}}{1-s_{\Tf-t}^{\theta,\ell}}}(\foX_t)} \rmd t \eqsp,
\end{equation*}
% }
where $h(a) = a \log(a) -(a-1)$. Minimizing this function is equivalent to minimizing:
% {\small
\begin{align}
\label{eq:1}
    \begin{split}
        \lossKL : &\theta \mapsto \int_0^{\Tf} \E\Bigg[
        \sum_{\ell=1}^d\Bigg(-s_{\Tf-t}^{\theta,\ell}(\foX_t)\\
        &+(s_{\Tf-t}^{\ell}(\foX_t)-1)\log (1-s_{\Tf-t}^{\theta, \ell}(\foX_t) )\Bigg) \Bigg]\rmd t \eqsp.
    \end{split}
\end{align}
% \vspace{-15pt}
We further derive a discrete denoiser structure, rewriting the score function as
% {\small
\begin{align} \label{eq:discrete_denoiser}
     s_{t}^\ell(x)=\dfrac{2\alpha_{\Tf-t}}{1+\alpha_{\Tf-t}}-\dfrac{4\alpha_{\Tf-t} d_{t}^\ell(x)}{1-\alpha_{\Tf-t}^2}\eqsp,
\end{align}
% }
where $d_{t}^\ell(x) = \mathbb{P} (\fX_0^\ell \neq x^\ell \big| \fX_{\Tf - t} = x)$ serves as a classifier referred to as a \emph{discrete denoiser} (see \cref{app:discrete_denoiser}). We leverage this insight by reparameterizing our model as:
% {\small
\begin{equation}
\label{eq:discrete_score_reparameterization}
     s_{t}^{\theta, \ell}(x)= \frac{2\alpha_{\Tf-t}}{1+\alpha_{\Tf-t}}-\frac{4\alpha_{\Tf-t} d_t^{\theta, \ell}(x)}{1-\alpha_{\Tf-t}^2}
     \eqsp.
\end{equation}
% }
As a result, we modify our objective $\lossscore$ to $\lossscoredenoiser$ to fit the conditional expectations $(d_t)_{t\in [0,\Tf)}$ instead of the score functions $(s_t)_{t\in [0,\Tf)}$, as follows:
% {\small
\begin{equation}
\label{eq:l2_loss_denoiser}
    \lossscoredenoiser :
     \theta \mapsto \int_0^{\Tf} \mE \l[\| \sum_{\ell =1}^d d^{\theta, \ell}_{\Tf-t}(\overrightarrow{X}_t) - \mathds{1}_{\overrightarrow{X}_0^\ell}(\overrightarrow{X}_t^\ell) \|^2 \r] \rmd t \eqsp,
\end{equation}
% }

see \cref{app:objective_functions} for more details. This reparameterization moves the approximation from a space of ratios into probability space, which is smoother and more amenable to learning, mitigating the instability of direct score or rate estimation as reported in \cite{lou2024discrete}. To fit $d_{t}^{\theta}(x)$ to $d_{t}(x)$, we introduce an additional cross-entropy loss $\lossCE$:
% { \small
\begin{align*}
\lossCE : \theta & \mapsto \int_{0}^{\Tf} \mE \bigg[ \sum_{\ell=1}^d
    -\mathds{1}_{\fX_0^{\ell} \neq \fX_{\Tf - t}^{\ell}} \log d_t^{\theta, \ell}(\fX_{\Tf - t}^{\ell}) \\
    & + ( \mathds{1}_{\fX_0^{\ell} \neq \fX_{\Tf - t}^{\ell}} -1 ) \log(1 - d_t^{\theta, \ell}(\fX_{\Tf - t}^{\ell}))
    \bigg]
    \rmd t \eqsp.
\end{align*}
% }

Based on the previous discussions, we consider a linear combination of the losses $\lossscoredenoiser$, $\lossKL$, $\lossCE$, respectively weighted by factors $\varpi_1, \varpi_2, \varpi_3$, which results in the loss $\losslinear$:
% (see \eqref{eq:linear_loss}).
\begin{equation} \label{eq:linear_loss}
\losslinear = \varpi_1 \lossscoredenoiser + \varpi_2 \lossKL + \varpi_3 \lossCE\eqsp.
\end{equation}
Finally, the expected value of $d_t^{\ell}$ is given by
\begin{equation}
w_t = \mE \l[ d_t^{\ell}(\fX_{\Tf - t}) \r] = (1 - \alpha_{\Tf - t})/2\eqsp,
\end{equation}
as detailed in \Cref{app:objective_functions}.
Thus, we propose to scale losses $\lossscore, \lossCE$ by $1 / w_t$, ensuring a more balanced average magnitude across timesteps; see \eqref{eq:gamma_losses} and \Cref{fig:ce-l2-comparison} in \cref{app:objective_functions}. This leads to the updated loss $\losslinearw$ (see \eqref{eq:linear_loss_gamma}). 
The final training procedure is outlined in \cref{alg:training}.

% we suggest to consider as loss
% $\theta \mapsto \loss_{\mrl^2}(\theta) +\zeta \loss_{\rme}(\theta)$, with $\zeta$ is the parameter that balances between these two losses.

% Full derivations are given in \Cref{app:objective_functions}.

% in the supplement \ref{pseudo_code}
% \alain{add in the supplement for all pseudo-code}.
%\alain{fill}.

\subsection{Generative process} \label{sec:generative_process}

Alike classical continuous diffusion models, exact simulations of the reverse process are not possible and face the same challenges:
i) we do not have access to i.i.d. samples from $\mu_{\Tf}$, ii) the backward process characteristics depend on the score function of the forward process defined in \eqref{eq:score1}, which is intractable, and iii) we have to discretize the continuous process.

\textbf{Initialize the backward from the uniform distribution.}\
We show that $(\foX_t)_{t \in\ccint{0,\Tf}}$ converges geometrically to $\gamma^d$, the uniform distribution over $\msx$
(see \Cref{invariant_measure} in  the supplementary document).
%\alain{if used before, introduce it also before}
%\alain{denote $s_t^{\ell}$ by $s_t^{\ell}$}
This should be put in parallel with diffusion-based models, where the stochastic process at hand, \eg, Ornstein--Uhlenbeck, converges geometrically fast to some Gaussian distribution. The generative model can then be initialized to $\gamma^d$ rather than $\mu_{\Tf}$.

\textbf{Score approximation.} We have access to a score approximation
%{\color{black}$(s^{\theta^\star}_{t_k})_{k\in\{0,\ldots,M\}}$} with $t_0=0$ and $t_M=\Tf$.
$(s^{\theta^\star}_{t})_{t\in\coint{0,\Tf}}$, so the generative model can then be sampled analogously to the backward process, replacing $(s_t)_{t\in\coint{0,\Tf}}$ with $(s^{\theta^\star}_{t})_{t\in\coint{0,\Tf}}$, leading to
the non-homogeneous jump rate and kernel approximating \eqref{eq:def_char_back}:
\begin{align*}
% \label{eq:def_char_back_approx}
\begin{split}
    \lambda_t^{\theta^\star}(x)& = \lambda \sum_{\ell=1}^{d}(1- s_{t}^{\theta^\star,\ell}(x))
    \eqsp,
    \\
    k^{\theta^\star}_t(x,y) &= \sum_{\ell=1}^d \mathds{1}_{y=\varphi^{(\ell)}(x)} \cdot  \lambda(1- s_t^{\theta^\star,\ell}(x)) / \lambda_{t}^{\theta^\star}(x)
    % \begin{cases}
    %     \lambda(1- s_t^{\theta^\star,\ell}(x)) / \lambda_{t}^{\theta^\star}(x) & \text{ if } y=\varphi^{(\ell)}(x) \eqsp,\\
    %     0 & \text{ otherwise}
    \eqsp,
    % \end{cases}
    \end{split}
\end{align*}
for $x,y \in\msx$ and $t\in\coint{0,\Tf}$, where we denote by $s_t^{\theta^\star,\ell}$ the $\ell$-th component of $s_t^{\theta^\star}$. For completeness, \cref{alg:backward_approximation_continuous} in \cref{pseudo_code} provides the pseudo-code for an ideal, continuous-time approximation of the backward process.

\textbf{Time discretization.}
Exact integration of jump rates is infeasible in practice, so we discretize time and approximate the backward rate and kernel using piecewise constant functions.  {\color{black}
% $\hlambda$ and $\hk$ based on \eqref{eq:def_char_back_approx}.
Let $\{t_k\}_{k=0}^K$ be a time grid with step sizes $\tau_k = t_k - t_{k-1}$. Given $\baX_{t_k}^\star$, for $x \neq y \in \msx$ and $t \in [t_k, t_{k+1})$, we set:
\begin{align*}
    \baq_{t}^{\theta^\star}(x,y) &= \lambda\sum_{\ell=1}^d (1-s_{t_k}^{\theta^\star, \ell}(\baX_{t_k}^\star)) \1_{y=\trans(x)}\\
    &=  (1-\hat s^{\theta^\star}_{t}(x,y)) \foq(x,y) \eqsp,
    % \hat s^{\theta^\star,\ell}_t (x) =  s_{t_k}^{\theta^\star, \ell}(\baX_{t_k}^\star) \eqsp,
\end{align*}
where $\hat s^{\theta^\star}_t(x,y) = \sum_{\ell=1}^d s^{\theta^\star, \ell}_{t_k} (\baX_{t_k}^\star) \1_{y=\trans(x)}$.} 

This yields a tractable CTMC $(\baX^{\star}_t)_{t \in \ccint{0, \Tf}}$, which can be simulated starting from $\gamma^d$. Under mild conditions, its final law converges to the target distribution. The associated DMPM sampler is given in \cref{alg:dmpm_sampler}, \cref{app:generative_process_sampling}, with time-schedule choices listed in \cref{tab:time_schedules}.

% It defines the corresponding backward rate $\hlambda$ and kernel $\hk$ by replacing $s^{\theta^\star}$ by $\hat s^{\theta^\star}$ in \eqref{eq:def_char_back_approx}.
% For $x,y \in \msx$ and $t \in [t_k,t_{k+1})$, we set:
% \begin{equation}
% \label{eq:def_char_back_approx_discrete}
% \begin{aligned}
%     \hlambda_t^{\theta^\star}(x) = \lambda \sum_{\ell=1}^d (1-\hat s^{\theta^\star,\ell}_t(x))
%     % \hlambda_t^{\theta^\star}(x) = \lambda_{t_k}^{\theta^\star}(x)
%     \eqsp,\quad
%     % \\ % \lambda \sum_{\ell=1}^{d}(1- s_{t_k}^{\theta^\star,\ell}(x)) \\
%     \hk_t^{\theta^\star}(x,y) &= k^{\theta^\star}_{t_k}(x,y)\eqsp,
%     % \hk_t^{\theta^\star}(x,y) &= k^{\theta^\star}_{t_k}(x,y)\eqsp,%\begin{cases}
%     %     \lambda(1- s_{t_k}^{\theta^\star,\ell}(x)) / \hlambda_{t_k}^{\theta^\star}(x) & \text{ if } y=\varphi^{(\ell)}(x) \eqsp,\\
%     %     0 & \text{ otherwise} \eqsp,
%     % \end{cases}
% \end{aligned}
% \end{equation}

\textbf{Flips sampler.}
The standard sampler updates one bit at each timestep. To improve parallelism and sample diversity, we propose a flip-schedule where $M_{t_k}$ components are flipped simultaneously at step $t_k$, based on the probability distribution defined by $\hk^{\theta}$. We consider linear and cosine flip-schedules (\cref{tab:M_schedules}), implemented in \cref{alg:dmpm_sampler_flip_schedule}, \cref{app:generative_process_sampling}.

% {
% \color{red}

% Can spend more time discussing the denoise-renoise sampler. introduce discrete time steps; instead of approximating the bridge distribution, as we do with rate matrices, we do something else, and obtain a stochastic process with a much different path measure but with the same marginals. We empirically observe better performance.
% }

\textbf{Denoise-renoise sampler.}
We also introduce a denoise-renoise sampler based on the discrete denoiser from \cref{eq:discrete_denoiser}. Inspired by multistep consistency models \citep{song2023consistencymodels}, this method alternates denoising from $t_0 = 0$ to $\Tf$ and re-noising back to $t_1$, and so on. The full procedure is detailed in \cref{alg:dmpm_dennoise_cycling} and \cref{app:generative_process_sampling}.

\section{Convergence of DMPMs algorithm}

This section provides quantitative error estimates between the generated final distribution $\mathrm{Law}(\baX^\star_{\Tf})$ and our data distribution $\mustar$ via the Kullback--Leibler divergence $\KL$. To this end, we consider the following assumptions on the parameterized score and the original data distribution:

%\alain{meditate about this loss and the one that we will be using in practice}
% \np{discuss: if discretize time: then the time scheme for forward and backward are the same?}
\begin{assumption}  \label{ass:approx_score}
Let $h(a):=a\log (a)-(a-1)$ for $a>0$. There exists $\epsilon \in (0,1)$ such that
% {\small
\begin{align}
\label{assumptiononscore}
       \max_{0\leq k\leq K} \E\l[\sum_{\ell= 1}^d (1-s_{t_k}^{\theta^\star,\ell} )h\l(\dfrac{1-s^\ell_{t_{k}}}{1-s_{t_k}^{\theta^\star,\ell}}\r)(\foX_{\Tf - t_k}) \r] \leq \epsilon \eqsp.
\end{align}
\end{assumption}
\vspace{-5pt}
    Note that \Cref{ass:approx_score} is induced by the entropic term $\loss_{\rme}$ defined in \eqref{eq:1} of the loss function we consider in practice. This condition naturally appears as we bound the $\KL$ divergence of the path probability measures corresponding to the approximate score $s^{\theta^\star}$ and the ideal one $s$ respectively. Indeed, we prove a Girsanov type theorem which provide an explicit expression of the density between these two measures in \cref{girsanovtheorem} in the supplement \ref{sec:girsanov}.
    % \alain{fill}.
    While standard Girsanov theorem for diffusion implies an $\mrl^2$-type approximation error condition for generative models  \citep[see, \eg,][]{conforti2025kl, lee2023convergence, chen2022sampling}, our result naturally involve the entropic-type condition \eqref{assumptiononscore} due to the discrete structure of our noising process.

  \begin{assumption}\label{ass:finite_fisher}
 The data distribution does not admit any zero-value, \ie, $\mustar(x) \in (0,1) $ for any $x\in \msx$.
   \end{assumption}

\cref{ass:finite_fisher} implies that the data distribution has the finite Fisher-like information
% {\small
\begin{align}
\label{eq:finite_fisher}
    \beta_{\gamma^d}(\mustar):=\E\l[\sum_{\ell=1}^d h\l(\rme^{g(\overrightarrow{X}_0)-g(\varphi^{(\ell)}(\overrightarrow{X}_{0}))}\r)\r]<+\infty \eqsp,
\end{align}
% }
with $g:=-\log(\rmd \mustar/\rmd \gamma^d)$.
Note that \cref{ass:finite_fisher} is put in parallel with the finite relative Fisher information condition provided by \citet{conforti2025kl}. However, \cref{ass:finite_fisher} is much simpler as the state space is finite, and the function $h$ is only infinite if $\mustar$ has not full support.

We are now ready to state the error's bound of the generated data using DMPMs given in \cref{alg:dmpm_sampler}, noting that every measure on the hypercube possesses finite entropy.
\begin{theorem}\label{theo:5}
    Under \Cref{ass:approx_score} and \Cref{ass:finite_fisher}, the following bound holds
    % \alain{use KL for H and $\mathrm{Law}$ for $\loss$}
    % {\small
    \begin{align}
    \label{eq:bound_convergence}
    \begin{split}
        &\KL(\mustar| \mathrm{Law}(\overleftarrow{X}_{\Tf}^\star ) )
        \leq \rme^{-4\lambda\Tf}\KL(\mustar|\gamma^d)
        \\
        &\qquad\qquad \qquad \qquad \qquad
        + \lambda \tau \beta_{\gamma^d}(\mustar)+ \lambda \epsilon\Tf \eqsp,
    \end{split}
    \end{align}
    % }
    \!with $\tau:=\max\{\tau_k, k= 1,\ldots,K\}$.
    % =\max_{0\leq k\leq M-1}(t_{k+1}-t_k)$.
\end{theorem}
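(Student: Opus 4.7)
The strategy is to bound $\KL(\mustar \,|\, \mathrm{Law}(\baX^{\star}_{\Tf}))$ by moving from terminal distributions to path laws via data processing, and then to decompose the resulting path KL into three pieces that match the three terms of \eqref{eq:bound_convergence}. Concretely, by data processing and the identity $\mustar = \mathrm{Law}(\baX_{\Tf})$,
\[
\KL(\mustar \,|\, \mathrm{Law}(\baX^{\star}_{\Tf})) \leq \KL(\baP \,|\, \baP^{\star}),
\]
where $\baP$ is the law of the ideal reversal (started from $\mu_{\Tf}$, with rates $(\balambda_t,\bak_t)$ of \eqref{eq:def_char_back}) and $\baP^{\star}$ the law of the implementable scheme (started from $\gamma^d$, with the piecewise-constant rates $(\hlambda_t^{\theta^\star},\hk_t^{\theta^\star})$ of \eqref{eq:def_char_back_approx_discrete}). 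The chain rule for path-KL on jump processes then splits this as $\KL(\mu_{\Tf}|\gamma^d)$ plus a path-wise contribution conditional on the initial value.

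For the initialization term, I will use that the forward CTMC is reversible with respect to $\gamma^d$ and that the hypercube dynamics admits a modified log-Sobolev inequality (with constant fixed by the normalization of the jump rate), so the forward semigroup contracts relative entropy at exponential rate. This yields $\KL(\mu_{\Tf}|\gamma^d) \leq \rme^{-\Tf}\,\KL(\mustar|\gamma^d)$, matching the first term in \eqref{eq:bound_convergence}.

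For the path-wise contribution, I apply the discrete Girsanov identity of \cref{girsanovtheorem}, which evaluates the conditional KL as
\[
\int_0^{\Tf}\!\E\!\l[\sum_{\ell=1}^d (1-s^{\ell}_{\Tf-t})\, h\!\l(\frac{1-s^{\ell}_{\Tf-t}}{1-s^{\theta^\star,\ell}_{\Tf-t_{k(t)}}}\r)(\foX_t)\r]\rmd t,
\]
where $k(t)$ is the unique index with $t \in [t_{k(t)}, t_{k(t)+1})$. A standard algebraic manipulation of $h$ splits this integrand into a pure score-approximation piece (replacing $s_{\Tf-t}$ by $s_{\Tf-t_{k(t)}}$ inside the argument of $h$ and in the outer factor) and a time-continuity correction (comparing $s_{\Tf-t}$ with $s_{\Tf-t_{k(t)}}$). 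After summing over the $K$ subintervals the score-approximation piece is bounded by $\epsilon\,\Tf$ directly from \Cref{ass:approx_score}, producing the third term in \eqref{eq:bound_convergence}.

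The main obstacle is the time-continuity correction, which has to be shown to be of order $\tau$ with the initial Fisher-like quantity $\beta_{\gamma^d}(\mustar)$ as prefactor. The plan is to exploit a discrete entropy/Fisher dissipation identity: the functional $t\mapsto \beta_{\gamma^d}(\mu_t)$ is non-increasing along the forward semigroup (the discrete analogue of the classical de Bruijn identity for Ornstein--Uhlenbeck), so the variation of the integrand on each $[t_k, t_{k+1}]$ is controlled by $h_k$ times the instantaneous Fisher information, itself dominated by $\beta_{\gamma^d}(\mustar)$ thanks to \Cref{ass:finite_fisher}. Summing produces $\tau\,\beta_{\gamma^d}(\mustar)$, and combining the three estimates yields \eqref{eq:bound_convergence}.
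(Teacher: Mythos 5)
Your overall skeleton is the same as the paper's: data processing to pass to path laws, the discrete Girsanov decomposition of \cref{girsanovtheorem} yielding $\KL(\mu_{\Tf}|\gamma^d)$ plus an integrated divergence between the true and implemented jump intensities, exponential entropy decay of the forward semigroup for the initialization term, and a Fisher-type monotonicity for the rest. The genuine gap is in how you extract the term $\tau\,\beta_{\gamma^d}(\mustar)$. If each interval $[t_k,t_{k+1}]$ contributes at most $h_k$ times an ``instantaneous Fisher information'' that you then dominate by $\beta_{\gamma^d}(\mustar)$, summing over $k$ gives $\sum_k h_k\,\beta_{\gamma^d}(\mustar)=\Tf\,\beta_{\gamma^d}(\mustar)$, not $\tau\,\beta_{\gamma^d}(\mustar)$: your route loses exactly the improvement the theorem claims. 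The factor $\tau$ comes from a telescoping structure that your sketch omits. Writing $u_{t_k}h(u_t/u_{t_k})=h(u_t)-h(u_{t_k})-(u_{t_k}-u_t)\log u_{t_k}$, the cross term has vanishing conditional expectation because $\sum_{q\in\Qc_d}u^*_t(\baX_t,q)$ is a martingale under the ideal backward law (\cref{prop:3}); hence each interval contributes at most $h_k\,(y_{t_{k+1}}-y_{t_k})\le\tau\,(y_{t_{k+1}}-y_{t_k})$ with $y_t=\E[\sum_{q\in\Qc_d}h(u^*_t(\baX_t,q))]$, and the increments telescope to $y_{\Tf}-y_0\le y_{\Tf}=\beta_{\gamma^d}(\mustar)$. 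The monotonicity of $y_t$ (equivalently, of $t\mapsto\beta_{\gamma^d}(\mu_t)$ in forward time, your de Bruijn argument) is indeed part of \cref{prop:3} and is needed to pass from $y_t$ to $y_{t_{k+1}}$ inside each interval, but by itself it cannot produce $\tau$ in place of $\Tf$; the martingale property and the algebraic identity above are the missing ingredients.

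A second, related gap: the score-approximation piece is not bounded ``directly'' by \Cref{ass:approx_score}. After your split, that piece is evaluated along the path at time $t$ (state $\baX_t$, true control at time $t$), whereas \eqref{assumptiononscore} controls the error only at the grid times and under the corresponding marginal. The paper's reduction again conditions on $\Fc_{t_k}$ and uses the martingale property of $\sum_{q\in\Qc_d}u^*_t(\baX_t,q)$ to replace $u_t$ by $u_{t_k}$, after which the integrand becomes exactly $u^{\theta^\star}_{t_k}h\bigl(u_{t_k}/u^{\theta^\star}_{t_k}\bigr)$, i.e.\ the quantity appearing in the assumption, giving $\epsilon\sum_k(t_{k+1}-t_k)=\epsilon\Tf$; without this step the bound does not follow. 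Finally, a smaller slip: the Girsanov integrand for $\KL(\overleftarrow{\P}\,|\,\ovl{\P}^\star)$ has the form $v\,h(u/v)$ with $v$ the \emph{implemented} intensity, so the prefactor should be $1-s^{\theta^\star,\ell}_{\cdot}$ (as in \Cref{ass:approx_score}), not the true $1-s^{\ell}_{\Tf-t}$ you wrote.
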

\Cref{theo:5} is one of our distinguishing results, which guarantees the convergence of DMPMs algorithm, and makes it stronger than other algorithms built before for discrete target distribution.

The term $\epsilon\Tf$ in \eqref{eq:bound_convergence} appears because the score function $s_t$ is replaced in the discretization by its approximation $s_t^{\theta^\star}$ satisfying \cref{ass:approx_score}. The term $\rme^{-4\lambda\Tf}\KL(\mustar|\gamma^d)$ represents the initialization error, as our backward dynamic starts at $\gamma^d$ instead of $\mu_{\Tf}$. Finally, the term $\tau\beta_{\gamma^d}(\mustar)$ means that the data distribution $\mustar$ cannot be peculiar, in the sense that $\mustar$ does not admit any zero-value.
The detailed proof of \cref{theo:5} is given in the supplementary material \ref{proof_theo:5}.

Following \citet[Theorem 3]{conforti2025kl},
a tighter bound on the sampling error, one that scales logarithmically rather than linearly with the discrete Fisher information, is achieved with an appropriate sequence of step sizes. %$\{\tau_k \}_{k=1}^K$.
% we can select an appropriate sequence of step sizes $\{\tau_k \}_{k=1}^K$ to obtain a tighter bound on the sampling error, one that scales logarithmically rather than linearly with the discrete Fisher information. % acquire the number of iterations of complexity $\tilde O(d)$ instead of

\begin{theorem}\label{theo:6}
    Let $c \in (0,1/2]$ and $\Tf \geq 1+2c$. Suppose \cref{ass:approx_score} and \cref{ass:finite_fisher} hold,
    and
    $L =  d^{-1} \beta_{\gamma^d} (\mustar) \geq 2$.
    % Set $L =  d^{-1} \beta_{\gamma^d} (\mustar) $ and assume that $L \geq 2$.
    Choose
    % the constant and
    an
    exponentially decreasing
    % sequence of step-size, $\ie$,
    step-sizes, \emph{i.e.},
    % satisfying
    $\tau_{k+1} = c \min \l\{ \max \l\{ \Tf-t_k, a \r\}, 1 \r\}$ for $k < K$, with $a = 1/L$, then we have that
    %  {\small
    \begin{align}
    \label{eq:bound_convergence_scaled}
      \KL(\mustar| \mathrm{Law}(\overleftarrow{X}_{\Tf}^\star ) ) &\lesssim \rme^{-4\lambda\Tf}\KL(\mustar|\gamma^d)+ \lambda \epsilon\Tf \nonumber \\
      &\hspace{1.5cm} +\lambda cd [ 1+ \log (L) ] \eqsp.
    \end{align}
    % }
\end{theorem}
The proof of \cref{theo:6} benefits from the choice of the step-size's scheme and is postponed to \cref{proof_theo:6}.
We deduce the following complexity result for DMPMs to achieve an $\espilon >0$ discretization error.

\begin{corollary}\label{coro:1}
Consider the sequence of step-size as in \cref{theo:6} and suppose \cref{ass:approx_score} and \cref{ass:finite_fisher} hold. Choosing
% $\lambda \leq 1$ and
\begin{equation}\label{eq:def_c_Tf}
    c = \dfrac{\epsilon}{\lambda d [1+\log (L)]} \quad \text{and} \quad \Tf = \frac{1}{4\lambda} \log \dfrac{\KL(\mustar|\gamma^d)}{\epsilon} \eqsp,
\end{equation}
we get
   \begin{equation*}
       K \lesssim d[1+\log(L)] \l[\log (\KL(\mustar|\gamma^d)/\epsilon) + \lambda \log(L) \r]/\epsilon \eqsp,
   \end{equation*}
  and makes the approximation error $\tilde O(\epsilon \log (\KL(\mustar|\gamma^d))$, where the notation $\tilde O$ means that logarithmic factors of $d, \epsilon$ have been dropped.
\end{corollary}

The proof of \cref{coro:1} is provided in \cref{proof_coro:1}.

% \begin{corollary}
% Consider fixed step-size $\tau_k = h$ for any $h>0$.
% Assume \Cref{ass:approx_score} and \Cref{ass:finite_fisher} hold.  If for $\varespilon >0$, we set the step size and the number of iterations as
% \begin{align*}
%     h &\leq  \frac{\varespilon}{2\lambda \beta_{\gamma^d}(\mu^*)}\\
%      \Kf &\geq \frac{\log(2\KL(\mustar|\gamma^d)/\varespilon)}{h}\eqsp,
% \end{align*}
%     then setting the horizon $\Tf = h \Kf$, it holds  $ \KL(\mustar| \mathrm{Law}(\overleftarrow{X}_{\Tf}^\star ) ) \leq  \varespilon+ \lambda \epsilon\Tf$.
% \end{corollary}

In our next result, we get rid of \Cref{ass:finite_fisher} using an early stopping strategy.
% \begin{theorem}\label{theo:early_stopping}
%     Under \Cref{ass:approx_score}, for any $\eta >0$ the following bound holds
%     % \alain{use KL for H and $\mathrm{Law}$ for $\mathcal{L}$}

%     \begin{align}\label{eq:bound_convergence_early_stopping}
%       \KL(\mu_{\eta}| \mathrm{Law}(\overleftarrow{X}_{\Tf -\eta }^\star ) ) &\leq \rme^{-(\Tf-\eta)}\KL(\mustar|\gamma^d) \nonumber\\
%       &+\tau \beta_{\gamma^d}(\mu_{\eta})+\epsilon(\Tf-\eta) \eqsp,
%     \end{align}
%     with $\tau:=\max\{\tau_k, k= 1,\ldots,K\}$.
% \end{theorem}
% The proof of \cref{theo:early_stopping} is given in the supplement \ref{proof_theo:early_stopping}.

\begin{theorem}\label{theo:early_stopping}
    Under \Cref{ass:approx_score}, for any $\eta \in (0,\Tf)$, let $c \in (0, 1/2]$ and $\Tf - \eta \geq 1+2c$. Set $L = d^{-1} \beta_{\gamma^d} (\mu_\eta)$ and assume that $L \geq 2$. Choose the constant and exponentially decreasing sequence of step-size, \ie, satisfying $\tau_{k+1} = c \min \l\{ \max \l\{ \Tf - \eta -t_k, 1/L \r\}, 1 \r\}$ for $k < K$ and the associated discrete time scheme $\l\{ t_k \r\}_{k =0}^K$ such that $t_0 = 0$ and $t_K = \Tf - \eta$. Then, the following bound holds
    % \alain{use KL for H and $\mathrm{Law}$ for $\mathcal{L}$}
    \vspace{-3pt}
    \begin{align}
      \KL(\mu_{\eta}| \mathrm{Law}(\overleftarrow{X}_{\Tf -\eta }^\star ) ) &\lesssim d\eta^{-1} \rme^{-4\lambda (\Tf - \eta) } + \lambda \epsilon(\Tf-\eta) \nonumber\\
      &+ \lambda c d [ 1+ \log(\eta^{-1}) ]  \eqsp.
    \end{align}
    % with $\btau:=\max\{\bh_m, m= 1,\ldots,M\}$.
\end{theorem}
\vspace{-3pt}

% \begin{theorem}\label{theo:early_stopping}
%     Under \Cref{ass:approx_score}, for any $\eta \in (0,\Tf)$, consider the discrete time scheme $\{\bt_m\}_{m=0}^M$ associated with step-sizes $\{\bh_m\}_{m=1}^M$, $\bt_m=\sum_{i=1}^m \bh_i$ such that $\bt_0=0$ and $\bt_M=\Tf-\eta$. Then, the following bound holds
%     % \alain{use KL for H and $\mathrm{Law}$ for $\mathcal{L}$}

%     \begin{align}
%       \KL(\mu_{\eta}| \mathrm{Law}(\overleftarrow{X}_{\Tf -\eta }^\star ) ) &\leq \rme^{-4\lambda\Tf}\KL(\mustar|\gamma^d) \nonumber\\
%       &+ \lambda \btau \beta_{\gamma^d}(\mu_{\eta})+ \lambda \epsilon(\Tf-\eta) \eqsp,
%     \end{align}
%     with $\btau:=\max\{\bh_m, m= 1,\ldots,M\}$.
% \end{theorem}

\cref{theo:early_stopping} is a consequence of \cref{theo:6} when the backward dynamic stops early at $\mu_\eta$ instead of $\mustar$. We benefit from the structure of $\mu_\eta$ to obtain a bound growing linearly with dimension, which is advantageous for high-dimensional sampling. The full proof is deferred to \cref{proof_theo:early_stopping}. It is worth noting that \eqref{marginaldist} ensures that $\mu_\eta$ is always positive for any $\eta \in (0,\Tf)$, thus the Fisher-like information $\beta_{\gamma^d}(\mu_\eta)$ is always finite. As a result, \cref{ass:finite_fisher} is no longer required.
To obtain then a complexity bound for DMPMs on its discretization error without \Cref{ass:finite_fisher}, we bound in our next result, the total variation distance between $\mustar$ and $\mu_{\eta}$ for $\eta >0$.

\begin{proposition}
    \label{prop:bound_tv_mustar_and_noise}
    For any $\eta \in (0,\max\l\{\Tf,\frac{1}{\lambda}\r\})$, the following holds
    \begin{equation}
        \tvnorm{\mu_{\eta}-\mustar}
        % \leq  2-2\l(\frac{1}{2}+\frac{1}{2}\rme^{-2\lambda \eta}\r)^d
        \leq  2-2(1-\lambda \eta)^d \eqsp.
        % \tvnorm{\mu_{\eta}-\mustar} \leq  1- \rme^{-\lambda \eta} \leq  \lambda \eta \eqsp.
    \end{equation}
\end{proposition}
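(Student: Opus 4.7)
The plan is to exploit the natural coupling provided by the forward process itself. Let $\foX_0 \sim \mustar$ and let $(\foX_t)_{t\in[0,\Tf]}$ denote the forward CTMC started from $\foX_0$. Then $(\foX_0,\foX_\eta)$ is a coupling of $\mustar$ and $\mu_\eta$, and the classical coupling inequality (adapted to the paper's normalization $\tvnorm{\mu-\nu}=\int |\rmd\mu/\rmd R-\rmd\nu/\rmd R|\rmd R$, which equals $2\sup_A|\mu(A)-\nu(A)|$) yields
\[
\tvnorm{\mu_\eta-\mustar}\leq 2\,\mathbb{P}(\foX_\eta\neq\foX_0).
\]

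Next, I would invoke the factorization of the transition kernel established in \eqref{eq:transition_d}. Conditional on $\foX_0=x$, the probability of returning to the starting point is
\[
\mathbb{P}(\foX_\eta=x\mid \foX_0=x)=\prod_{i=1}^d \ovr{p}^1_\eta(x^i,x^i)=\left(\tfrac{1}{2}+\tfrac{1}{2}\rme^{-2\lambda\eta}\right)^d,
\]
by \eqref{eq:def_density_one_dim_transition}. Crucially, this quantity does not depend on $x$, so averaging over $\foX_0\sim\mustar$ gives $\mathbb{P}(\foX_\eta=\foX_0)=(1/2+\rme^{-2\lambda\eta}/2)^d$, which combined with the coupling bound delivers the first claimed inequality.

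For the second inequality, I would apply the elementary estimate $\rme^{-u}\geq 1-u$ with $u=2\lambda\eta$, which gives $\tfrac{1}{2}+\tfrac{1}{2}\rme^{-2\lambda\eta}\geq 1-\lambda\eta$. When $\lambda\eta\leq 1$, both sides lie in $[0,1]$ and raising to the $d$-th power preserves the inequality; when $\lambda\eta\geq 1$ the bound $2-2(1-\lambda\eta)^d$ is at least $2$, and so the inequality is trivial since TV is bounded by $2$. No real obstacle arises here: the proof is essentially the observation that the product structure of the forward kernel makes the diagonal probability $\ovr{p}_\eta(x,x)$ independent of $x$, collapsing the coupling bound to a single closed-form quantity.
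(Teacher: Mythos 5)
Your proof of the main bound is correct and is essentially the paper's argument in different packaging: the paper inserts the quantity $\mustar(x)\,\ovr{p}_\eta(x,x)$ and applies the triangle inequality termwise, which is exactly your coupling bound $\tvnorm{\mu_\eta-\mustar}\leq 2\,\PP(\foX_\eta\neq\foX_0)$ written out by hand, combined with the same key observation that $\ovr{p}_\eta(x,x)=\bigl(\tfrac12+\tfrac12\rme^{-2\lambda\eta}\bigr)^d$ does not depend on $x$. One caveat on your treatment of the second inequality: the claim that $2-2(1-\lambda\eta)^d\geq 2$ whenever $\lambda\eta\geq 1$ is false for even $d$ (there $(1-\lambda\eta)^d>0$), and indeed for even $d$ and $\lambda\eta$ sufficiently large the comparison $\bigl(\tfrac12+\tfrac12\rme^{-2\lambda\eta}\bigr)^d\geq(1-\lambda\eta)^d$ can genuinely fail; the paper's own proof has the same blind spot, raising $\tfrac12+\tfrac12\rme^{-2\lambda\eta}\geq 1-\lambda\eta$ to the $d$-th power without checking signs. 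Since the bound is only used (cf.\ the early-stopping condition \eqref{cond:early_stopping}) in the regime $\lambda\eta<1$, where your case analysis is complete and correct, this is a shared edge-case issue rather than a defect specific to your argument.
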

The proof of \cref{prop:bound_tv_mustar_and_noise} is provided in the supplement \ref{proof_prop:bound_tv_mustar_and_noise}.
Combining \Cref{theo:early_stopping} and \Cref{prop:bound_tv_mustar_and_noise}, we deduce that\

\begin{corollary}\label{coro:early_stopping}
Consider the sequence of step-size as in \cref{theo:early_stopping} and let \cref{ass:approx_score} hold. Choosing
% For $\espilon >0$, choosing $\lambda \leq 1$ and
\begin{equation}\label{cond:early_stopping}
\begin{aligned}
    \eta &= \frac{1-(1-\epsilon)^{1/d}}{\lambda}
    \eqsp,
    % \\
    % c &=
    \quad c=
    \frac{\epsilon^2}{\lambda d [1+\log (\eta^{-1})]}
    \eqsp,
    \\
    \Tf &= \eta + \frac{1}{4\lambda }\log \frac{d}{\eta\epsilon^2} \eqsp,
    % \Tf &= \frac{1}{4\lambda }\log \dfrac{\KL(\mustar|\gamma^d)}{\epsilon^2} \eqsp,
\end{aligned}
\end{equation}
implies that
   \begin{equation*}
       K \lesssim  d[1+ \log(\lambda d/\epsilon)][ \log (d/\epsilon^2) + (\lambda+1)\log(\lambda d/\epsilon)] /\epsilon^2\eqsp,
   \end{equation*}
  and the following bound holds
    % {\small
    $$ \tvnorm{\mustar- \mathrm{Law}(\overleftarrow{X}_{\Tf-\eta}^\star ) } \lesssim \epsilon+ \sqrt{\lambda \epsilon (\Tf - \eta)} \eqsp.$$
\end{corollary}
The proof of \cref{coro:early_stopping} is given in \cref{proof_coro:early_stopping}.

\section{Existing works on diffusion-based generative models for discrete data}

We briefly review existing approaches to discrete generative modeling based on diffusion processes. Additional discussion is provided in \cref{app:related_works}.

A first class of methods maps discrete variables into continuous spaces, enabling the use of classical diffusion machinery \citep{dieleman2022continuous, chen2022analog, richemond2022categorical}, but struggle to scale in dimensions and lacks theoretical guarantees.

Other methods, such as Argmax Flows and Multinomial Diffusion \citep{hoogeboom2021argmax}, operate directly in discrete spaces and use categorical noise models or argmax transformations to handle discrete tokens, but can impose considerable computational overhead.

More recently, CTMC-based frameworks were introduced \citep{campbell2022continuous}, upon which flow matching techniques were adapted to the discrete domains \citep{gat2024discrete, campbell2024generativeflowsdiscretestatespaces}, using conditionally constructed rate matrices built ad-hoc, contrasting with our principled time-reversal derivation.
% with correction steps for further performance gains at the expense of sampling cost,

\citet{holderrieth2024generator} proposed a general framework for generator matching over arbitrary Markov processes, assuming access to a conditional interpolating distribution, leaving the choice of interpolating process and loss function to problem-specific adaptation.

Alternative directions include masked diffusion models \citep{austin2021structured} and stochastic integral formulations \citep{ren2024discrete}, aiming to balance tractability and theoretical soundness. \citet{shi2024simplified, sahoo2024simpleeffectivemaskeddiffusion} propose efficient training via absorbing-state kernels, optimizing model parameterization and loss.
Meanwhile, \citet{lou2024discrete} model the score function as a density ratio rather than a discrete denoiser. This yields an entropic regularization loss equivalent to \eqref{eq:1}, missing the $\mrl^2$ projection and cross-entropy terms applied to the discrete denoiser employed in our formulation.

Concurrently with our work, \citet{bach2025samplingbinarydatadenoising} propose a discrete analogue of Gaussian smoothing, entirely forgoing the continuous-time framework. Their denoising-based method offers a static-noise alternative to CTMCs, yielding Langevin-type sampling dynamics on the hypercube.
%with provable Langevin convergence on the hypercube.

Theoretical results for discrete generative models are much scarcer than for their continuous counterparts. To the best of our knowledge, only \citet{campbell2022continuous, ren2024discrete} provide theoretical guarantees, and these rely on significantly stronger assumptions than those used in our work.

Regarding Theorem 1 in \citet{campbell2022continuous}, we note that our approach does not require an \( \mathrm{L}^\infty \)-bound on the score approximation error, but only an \( \mathrm{L}^2 \)-bound. Moreover, we do not impose any assumptions on the marginal density of the forward process (cf. Assumption 2), but only on the data distribution itself. We also avoid placing assumptions on the backward transition rates (cf. Assumption 3). In contrast to \citet{campbell2022continuous}, we impose no conditions on the rates or densities of the forward and backward processes. Additionally, their discretization error scales linearly with the time horizon, whereas our bounds do not incur such a cost.

Concerning \citet{ren2024discrete}, and in particular Theorems 4.7 and 4.9, we highlight that we make no assumptions on the score function. In their Assumption 4.4, a time-dependent \( \mathrm{L}^\infty \)-bound of the form \( |s_t(x)| \lesssim 1 \vee t^{-1} \) is imposed on the true score \( s_t \), and a time-uniform \( \mathrm{L}^\infty \)-bound is assumed for the learned score \( s_t^\theta \). As in \citet{campbell2022continuous}, these assumptions are arguably unnatural, as they are not placed directly on the data distribution but on more complex transformations of it. Furthermore, Assumption 4.5 in \citet{ren2024discrete} imposes a quantitative form of Lipschitz continuity on \( s_t \), and as the authors themselves state, "Assumption 4.5 corresponds to the Lipschitz continuity of the score function." However, it is now well known that such an assumption is unnecessary in the continuous setting.

\section{Experiments}
% We evaluate DMPM's performance across various design choices on two discrete datasets. For comprehensive implementation details, we refer readers to the  on \emph{sawtooth} dataset (\Cref{app:experiment_small}) and binarized MNIST (\Cref{app:experiment_image}).

The full experimental details are available in \cref{app:experiment}. We evaluate our Discrete Markov Probabilistic Model (DMPM) on two datasets. The first is a low-dimensional synthetic \emph{sawtooth} dataset, with dimension $4 \leq d \leq 16$. The second is binarized MNIST, with $d = 32 \times 32$.
We explore various design choices, and compare DMPM against MD4 (masked diffusion) \citep{shi2024simplified} and DFM (discrete flow matching) \citep{gat2024discrete}, two state-of-the-art discrete generative approaches.

\subsection{Experiments on Small-Dimensional Bernoulli Data}

We study a discrete data distribution $p$ such that each component of $X = (X_i)_{i=1}^d \sim p$ is independently distributed as Bernoulli$(p_i)$. The map $i \mapsto p_i$ forms a sawtooth pattern (see \Cref{fig:sawtooth_pattern}). We evaluate performance using a custom Sliced Wasserstein Distance ($\text{SWD}$) between the learned and true distributions (see \Cref{app:experiment_metric}). Indeed, the state space size $2^d$ can get too big for traditional histogram-based metrics like $\KL$ divergence or Hellinger distance.
% \vspace{-2em}
\begin{figure}[ht]
\centering
\includegraphics[width=0.7\columnwidth]{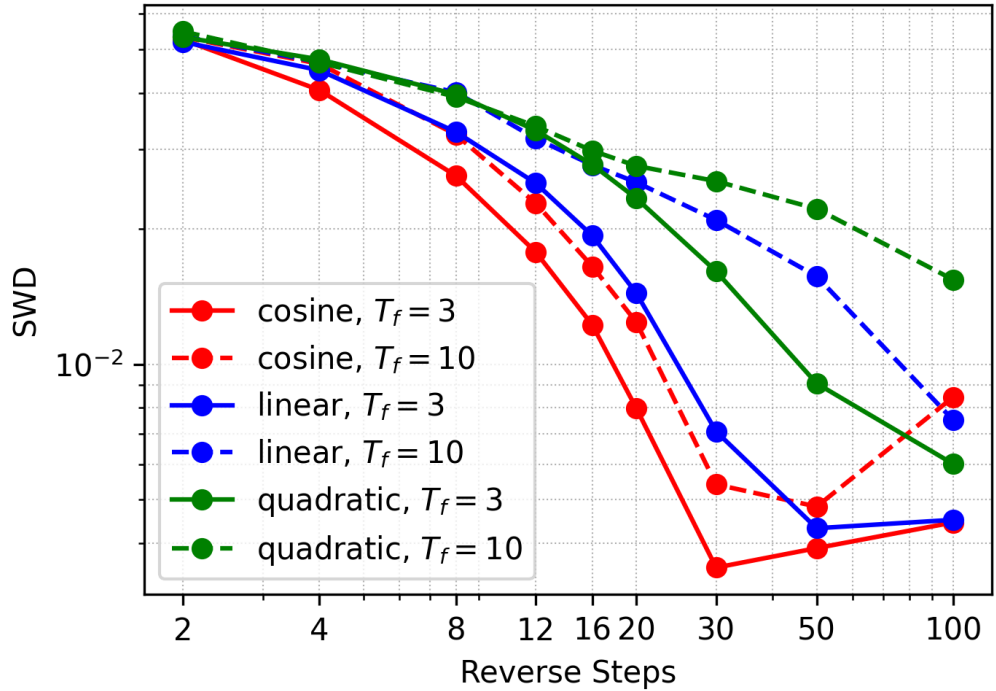}
\caption{Comparison of time-schedules (\textit{cosine}, \textit{linear}, \textit{quadratic}) and time horizon ($\Tf=3$ vs.\ $\Tf=10$).
% DMPM obtains its best performance with the cosine schedule and $\Tf=3$.
}
\label{fig:tf-schedule-comparison}
\end{figure}

\textbf{Time horizon and time-schedule.}
We study the impact of the time horizon $\Tf$ and various time-schedules—uniform, quadratic, and cosine (see Table~\ref{tab:time_schedules})—on model performance. \Cref{fig:tf-schedule-comparison} presents results for a model trained with the basic $\lossscore$ loss on data with $d=16$, evaluated across multiple reverse step counts. The cosine schedule with $\Tf = 3$ yields the best performance in terms of sliced Wasserstein distance (SWD), outperforming other schedules and longer horizons, while also requiring fewer reverse steps. This indicates that $\Tf = 3$ is sufficient to reach near-uniformity during forward diffusion, without excessive transitions to uniform states. We adopt this configuration for all subsequent experiments.

\textbf{Comparison with state-of-the-art methods.}
We compare DMPM (with cosine schedule, $\Tf = 3$, and $\lossscore$ loss) against MD4 and Discrete Flow Matching (DFM). \Cref{fig:method-vs-data-dim} shows SWD scores across varying data dimensions $d$. DMPM consistently outperforms both baselines, achieving superior results with significantly fewer reverse steps (typically $30$ vs $100$), highlighting its sampling efficiency.

% We compare DMPM against two other generative models, \textit{MD4} and \textit{DFM}. For DMPM we use the cosine time-schedule and $\Tf=3$, as based on the previous study, and we use the model trained with the default $\ell_2$ loss $\loss_{1, 0, 0}$. We set the data dimension to $d = 16$, but we have observed the same results for $4 \leq d \leq 64$.
% Figure~\ref{fig:method-vs-data-dim} displays $\text{SWD}$ for each method as the data dimension varies. We observe that DMPM outperforms the other two methods for every tested data dimension. Moreover, DMPM achieves superior $\text{SWD}$ performance with much fewer reverse steps.
% % In our example with the data dimension being equal to $16$,
% DMPM achieves better performance with $30$ reverse steps than MD4 and DFM at $100$ steps, in terms of $\text{SWD}$. Together, these results confirm that DMPM is both sample-efficient and accurate for small-dimensional discrete data.

% \begin{figure}[t]
% \centering
% \includegraphics[width=0.95\columnwidth]{./}%
% \caption{Comparison of DMPM, md4, and dfm across different data dimensions. Our DMPM consistently achieves the lowest $\text{SWD}$ values.}
% \label{fig:method-vs-data-dim}
% \end{figure}

\begin{figure}[t]
\centering
\scalebox{0.7}{
\begin{tabular}{lcccc}
\toprule
$d$ & 4 & 8 & 12 & 16 \\ %& optimal \#steps\\
\midrule
\textbf{DFM} & 6.102 & 8.864 & 5.019  & 8.302 \\ % & 100 \\
\textbf{MD4} & 9.376 & 7.670 & 4.045 & 8.037 \\ % & 100 \\
\textbf{DMPM} & \textbf{3.174} & \textbf{3.308} & \textbf{2.342} & \textbf{2.515} \\ % & 30 \\
\bottomrule
\end{tabular}
}
\caption{$\text{SWD} \downarrow$, in 1e-3, for DMPM, MD4, and DFM across data dimension $d$. Selected the best result with \#steps $2\leq K \leq 200$ for each method.
% for their respective optimal number of reverse steps $2\leq N \leq 200$
}
\label{fig:method-vs-data-dim}
\end{figure}

\subsection{Experiments on Higher-Dimensional Binary MNIST}

% \begin{figure}[t]
% \centering
% \includegraphics[width=0.95\columnwidth]{./}
% \caption{Default algorithm with various loss configurations, showing a U-shaped curve as the number of reverse steps increases. Excessively few steps yield noisy images, whereas too many cause mode collapse.}
% \label{fig:exp-fig1}
% \end{figure}

% \dario{modify discussion to remove mode collpase discussion, focus on sample efficiency with the M-schedule. Don't mention $B$ so reviewer are not tempted to ask us to vary it.}

% We first study how the default algorithm performs as we vary the number of reverse steps, focusing on potential mode collapse.
% Figure~\ref{fig:exp-fig1} plots the performance (FID) for different loss configurations against the number of reverse steps.
% In each configuration, we observe a characteristic U-shaped curve.
% To explain this phenomena,

% As a reminder, at each reverse step $t_k$, we can flip $M_{t_k}$ bits, instead of just one, based on the distribution on indices derived from the learned score model.
\textbf{DMPM sampler and flip-schedule.}
We evaluate the DMPM sampler using constant and linear flip-schedules $\{M_{t_k}\}_{k=1}^K$. Empirically, performance is optimal when the total number of flipped bits matches the input dimension, i.e., $\sum_{k=1}^K M_{t_k} = d$. We scale each flip-schedule accordingly.
\Cref{fig:exp-fig3} illustrates that performance remains stable across different values of $K$ as long as the total number of flips is held constant, allowing for significant speedups by reducing the number of reverse steps and thus network calls.

Using a model trained with the weighted loss $\loss_{1/3, 1/3, 1/3}^{w}$ and a linear flip-schedule, we achieve a best FID of $4.77$ using only $25$ network calls. The linear schedule consistently outperforms the constant variant; flipping fewer bits early helps guide the model toward more coherent samples, similarly as what is reported for MD4 \citep{shi2024simplified}.

\begin{figure}[t]
\centering
\includegraphics[width=0.7\columnwidth]{./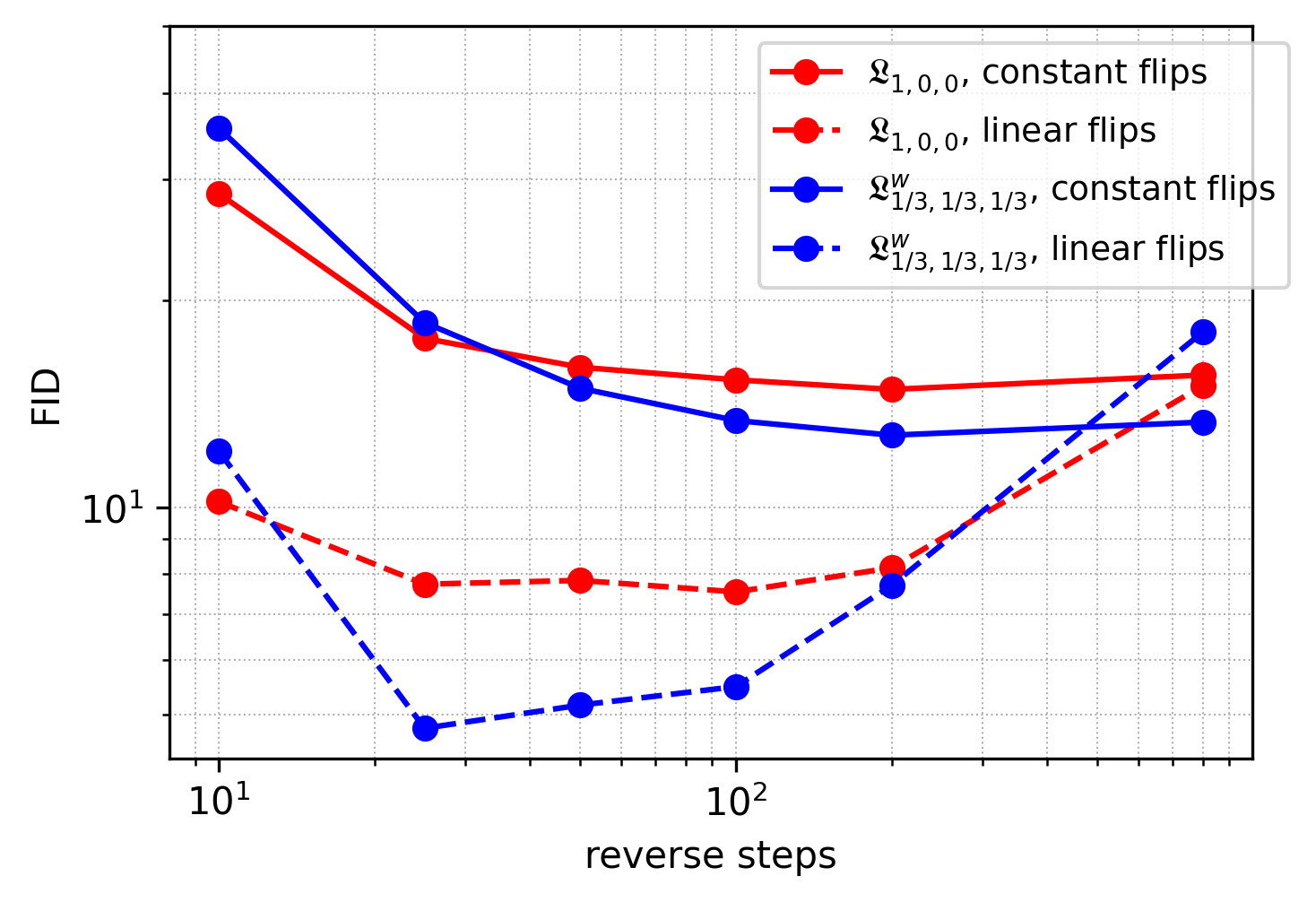}
\caption{FID$\downarrow$ on MNIST, linear vs.\ constant flip-schedules scaled for $d$ total bit flips, with various loss configurations.}
\label{fig:exp-fig3}
\end{figure}

% \begin{figure}[t]
% \centering
% \includegraphics[width=0.95\columnwidth]{./}
% \caption{Constant-M scheduling. Each curve corresponds to a different total number of reverse steps, \ie, network calls, but we ensure $\sum_t M_{t} = B$. The FID remains stable across $N$.}
% \label{fig:exp-fig2}
% \end{figure}

\textbf{Loss function configuration.}
Among the losses we tested, the balanced form $\loss_{1,,1,,1}^{w}$ consistently yields the best results. The weighting factor $w$ helps normalize the scale of the $\ell_2$, cross-entropy, and KL components at each timestep, improving overall synergy. \Cref{fig:gamma_no_gamma} illustrates these effects. Nonetheless, simpler variants such as $\lossscore$ and $\lossscore^w$ already achieve near-optimal performance in many settings.

\textbf{Denoise-renoise sampler and comparison with state-of-the-art.}
We further exploit the discrete denoiser structure through a denoise-renoise sampler (\cref{alg:dmpm_dennoise_cycling}, \cref{app:objective_functions}), which alternates single-step denoising and re-noising in a multistep loop. This approach leverages the model’s learned transitions more effectively, leading to notable gains in sample quality.

We compare DMPM, trained with the balanced loss $\loss_{1, 1, 1}^w$, under two sampling strategies, denoise-renoise and linear flip-schedule, against state-of-the-art baselines: MD4 (masked diffusion) and DFM (discrete flow matching). Results are reported for varying numbers of reverse steps $K$, using both Fréchet Inception Distance (FID) and the $\text{F}_1^{\mathrm{dc}}$ metric, a harmonic mean of coverage and density \citep{naeem2020reliable}. While FID captures overall realism, $\text{F}_1^{\mathrm{dc}}$ reflects local distributional fidelity; full details are in \cref{app:experiment_metric}.

As shown in \cref{tab:mnist_fid_f1dc}, both DMPM variants (rows 3 and 4) consistently outperform the baselines. At $K = 200$, DMPM (denoise-renoise) achieves the best FID ($2.89$) and perfect $\text{F}_1^{\mathrm{dc}}$ ($1.00$), surpassing MD4 ($4.48$) and DFM ($16.26$). Even with $K = 50$, it maintains strong results (FID $8.62$, $\text{F}_1^{\mathrm{dc}}$ $0.87$). Similarly, DMPM with flip-schedule achieves FID below $10$ and $\text{F}_1^{\mathrm{dc}}$ above $0.90$ at $K = 25$, demonstrating excellent efficiency with minimal network calls.

Sample grids illustrating visual quality are shown in \cref{fig:dmpm_default_grid} and \cref{fig:dmpm_denoise_renoise_grid}.

\begin{table}[t]
\centering
\scalebox{0.7}{
\begin{tabular}{llrrrrrr}
\toprule
\textbf{Method} & & \textbf{10} & \textbf{25} & \textbf{50} & \textbf{100} & \textbf{200} & \textbf{500} \\
\midrule
\multirow{2}{*}{\textbf{DFM}}
  & FID & 227.55 & 156.26 & 88.93 & 39.62 & 16.26 & 7.34 \\
  & F$_1^{\mathrm{dc}}$ & 0.00 & 0.00 & 0.01 & 0.14 & 0.41 & 0.68 \\
\midrule
\multirow{2}{*}{\textbf{MD4}}
  & FID & 97.97 & 33.50 & 14.06 & 6.83 & 4.48 & \textit{3.43} \\
  & F$_1^{\mathrm{dc}}$ & 0.04 & 0.29 & 0.57 & 0.76 & 0.83 & 0.86 \\
\midrule
\multirow{2}{*}{\textbf{DMPM}$_{\textit{flips}}$}
  & FID & 16.30 & 9.98 & 11.07 & 9.07 & 7.80 & 10.84 \\
  & F$_1^{\mathrm{dc}}$ & 0.64 & 0.92 & 0.93 & 0.93 & 0.93 & 0.70 \\
\midrule
\multirow{2}{*}{\textbf{DMPM}$_{\textit{denoise}}$}
  & FID & 78.20 & 20.94 & 8.62 & \underline{3.98} & \textbf{2.89} & 4.36 \\
  & F$_1^{\mathrm{dc}}$ & 0.13 & 0.67 & 0.87 & \underline{0.96} & \textit{1.00} & \textbf{1.00} \\
\bottomrule
\end{tabular}
}
\caption{FID$\downarrow$ (first row of each method) and F$_1^{\mathrm{dc}}$ $\uparrow$ (second row) on MNIST for various total reverse steps. We highlight the best result in \textbf{bold}, the 2\textsuperscript{nd} best in \textit{italics}, and \underline{underline} the 3\textsuperscript{rd} best.}
\label{tab:mnist_fid_f1dc}
\end{table}

\subsection{Conclusions}
Our experiments show that DMPM consistently matches or surpasses state-of-the-art performance on both low- and high-dimensional discrete datasets. On binarized MNIST, it achieves better FID and $\text{F}_1^{\text{dc}}$ than competing methods, with fewer network calls. These gains stem from our principled reparameterization of the score function as a denoiser, and a stable, well-structured training objective. Together, they yield a scalable and theoretically sound framework for discrete generative modeling.

% On the synthetic sawtooth dataset, DMPM achieves smaller approximation error with fewer reverse steps.

% We trace this success to the discrete diffusion structure, and subsequent sensible design choices.

%We hope for fast progress and adoption of our DMPM models, thanks to their similarity with existing continuous diffusion approaches.

% both theoretically and in terms of training and sampling methodologies.

% , including our loss configuration, the cosine time-scheduling, or our denoise-renoise sampler.

% Together, these results suggest that discrete diffusion approaches, when carefully structured, can efficiently scale to high-dimensional spaces while retaining strong generative performance.

% Our experiments on both small and higher-dimensional Bernoulli data (sawtooth distributions and binary MNIST) show consistent benefits to using a smaller time horizon $\Tf$ and smaller jump rate $\lambda$. In the higher-dimensional regime, introducing the KL loss component significantly damages performance. Dividing by a time-varying $w_t$ factor only helps if cross-entropy and L2 terms are equally weighted in the loss. Overall, we find that a moderate combination of cross-entropy and $\mrl^2$ yields the best fidelity metrics, especially when $\Tf$ remains small.

% In the unusual situation where you want a paper to appear in the
% references without citing it in the main text, use \nocite
%\nocite{langley00}

\clearpage
\newpage
\section*{Impact Statement}

This paper presents work whose goal is to advance the field of
Machine Learning. There are many potential societal consequences
of our work, none which we feel must be specifically highlighted here.

\section*{Acknowledgements}

The work of L.T.N.P. was supported by the \'Ecole Doctorale de Math\'ematiques Hadamard (EDMH). The work of A.O. was funded by the European Union (ERC-2022-SYG-OCEAN-101071601). The work of D.S was funded by the European Union (ERC, Dynasty, 101039676). Views and opinions expressed are however those of the authors only and do not necessarily reflect those of the European Union or the European Research Council Executive Agency. Neither the European Union nor the granting authority can be held responsible for them.

This work was granted access to the HPC resources of IDRIS under the allocation 2025-AD011015323R1 made by GENCI.

\bibliography{ICML2025}
\bibliographystyle{icml2025}

%%%%%%%%%%%%%%%%%%%%%%%%%%%%%%%%%%%%%%%%%%%%%%%%%%%%%%%%%%%%%%%%%%%%%%%%%%%%%%%
%%%%%%%%%%%%%%%%%%%%%%%%%%%%%%%%%%%%%%%%%%%%%%%%%%%%%%%%%%%%%%%%%%%%%%%%%%%%%%%
% APPENDIX
%%%%%%%%%%%%%%%%%%%%%%%%%%%%%%%%%%%%%%%%%%%%%%%%%%%%%%%%%%%%%%%%%%%%%%%%%%%%%%%
%%%%%%%%%%%%%%%%%%%%%%%%%%%%%%%%%%%%%%%%%%%%%%%%%%%%%%%%%%%%%%%%%%%%%%%%%%%%%%%
\newpage
\appendix
\onecolumn
%\section{You \emph{can} have an appendix here.}

\section{Existing works on diffusion-based generative models for discrete data}
\label{app:related_works}
This section provides details of the recent researches on discrete generative models.

\textbf{Embedding discrete structure in the continuous space. }
To keep the benefits of continuous representations, \citet{dieleman2022continuous} and \citet{chen2022analog} mapped discrete structures into Euclidean space, while \citet{richemond2022categorical} placed them into the simplex, all while continuing to use forward continuous diffusion models. In particular, \citet{dieleman2022continuous} proposed a continuous diffusion model for categorical data, which has some advantages over autoregressive models, such as the ability to perform arbitrary infilling and a more flexible sampling process. However, this method comes with an expensive training cost  and lacks of strong theoretical guarantees.

\textbf{Argmax flows and Multinomial Diffusion. }\citet{hoogeboom2021argmax} introduced two new generative models, Argmax Flows and Multinomial Diffusion, to handle categorical data like text and image segmentation. Argmax Flows connect discrete data with continuous models by using an argmax function combined with a probabilistic inverse, making categorical distributions easy-learning. Multinomial Diffusion process uses a categorical distribution to add noise to discrete data and then trains a model to reverse the process. However, both Argmax Flows and Multinomial Diffusion have some limitations: computational costs increase due to additional steps
% (such as learning probabilistic inverses or training a diffusion process)
, and the theoretical guarantee is missing.

\textbf{Designing the flow processes over the discrete state space. }
\citet{campbell2022continuous}  introduced the first complete continuous-time framework for denoising diffusion models applied to discrete data. They used CTMCs to model the forward noising process and its time-reversal dynamics. While the core idea is similar to ours, their approach is more complex because their method consider generic CTMC and is not specialized to the noising process that we consider. As a result, their method essentially boils down learning density ratios which can be computationally demanding and fail to offer efficient approximation in high dimensions.
They also added a correction step to bring the sample distribution closer to the desired one, which increased the practical training cost \citet{gat2024discrete}.
By focusing on the random-walk CTMC on $\msx$, we were able to provide a discrete counterpart to the score function that is learn in continuous diffusion models and also to establish strong convergence guarantees for our method. \citet{campbell2024generativeflowsdiscretestatespaces} further extended this line of work by adapting flow-matching techniques to discrete domains using conditionally defined rate matrices. However, their method does not derive the reverse process from a time-reversal principle, and thus relies on hand-crafted dynamics with limited theoretical justification.

\textbf{Generator Matching. }
Another recent approach to handle discrete data is generative modeling with arbitrary Markov processes using generator matching, introduced by \citet{holderrieth2024generator}. In this approach, the authors design an appropriate Markov process that transforms a simple distribution into the desired data one using a generator, which can be efficiently trained with a neural network. This method is quite flexible and can be applied to different state spaces, especially in discrete settings. However, this method being very generic suffer from the same drawback as \citet{campbell2022continuous}.

\textbf{Masked diffusion models. }
One important step toward more advanced models is the ``masked’’ diffusion process, a discrete diffusion approach first introduced by \citet{austin2021structured}. Recently, \citet{shi2024simplified} looked into this model further, simplifying its training objective by expressing it as a signal-to-noise ratio, which helps highlight some useful features. However, despite these improvements, the model still lacks theoretical guarantees. \citet{sahoo2024simpleeffectivemaskeddiffusion} improved upon this direction by leveraging the structure of the absorbing kernel and refining the bridge-based reverse process, leading to more efficient optimization. The model’s reliance on absorbing-state approximations and heuristic training objectives limits its theoretical grounding.

\textbf{Direct score parameterization.}
\citet{lou2024discrete} propose to learn the score function directly as a density ratio rather than a denoising map. This formulation leads to a single entropic regularization loss equivalent to \eqref{eq:1} in our paper. However, it misses our discrete denoiser decomposition we use and the associated $\mrl^2$ projection and cross-entropy terms, which improve training stability.

\textbf{Discrete Diffusion Models via a Stochastic Integral Framework. }
\citet{ren2024discrete} introduced a new way to analyze discrete diffusion models via L\'evy-type stochastic integrals and expanded Poisson random measures. Specifically, they established the stochastic integral expressions of the noising and denoising processes for the categorical data. They provided a unified error analysis framework and showed the first error bound for their algorithms in $\KL$ divergence. However, their results rely on strong assumptions in contrast to our results. Besides, our bounds are simpler and better, in particular with respect to the time horizon.  % has some drawbacks such as strong assumptions required, high computational complexity, complicated formulations, and especially a lack of empirical validation.

\textbf{Denoising without time dynamics.}
Concurrently with our work, \citet{bach2025samplingbinarydatadenoising} propose a discrete denoising model that avoids continuous-time formulations altogether. Their method treats Bernoulli corruption as an analogue of Gaussian smoothing, enabling a Langevin-style sampler on the hypercube. This approach is complementary to CTMC-based methods and their dynamic interpretability.

Our paper takes a step toward bridging these gaps. By clearly describing the forward Markov process, we can express the score function as a conditional expectation, which helps us avoid the costly signal-to-noise ratio training used in \citet{shi2024simplified}. This way, we not only offer a simpler and more affordable training approach, but also provide solid theoretical guarantees for our models in practice.

\section{Interpretation of DMPMs}
\subsection{The simple case $\msx=\l\{0,1\r\}$} \label{app:simple_case}

We start by explicitly constructing our forward process $( \overrightarrow{X}_t)_{t\in \ccint{0,\Tf}}$ starting from  $\overrightarrow{X}_0 \sim \mustar$. Consider the fixed jump times $(T_i)_{i\in\{1,\ldots,N\}}|N\simiid \unif(\ccint{0,\Tf})$ of a Poisson process over $\ccint{0,\Tf}$ where $N \sim \poissondist(\lambda \Tf)$ is the number of jumps, and $\lambda >0$ is a prescribed jump rate.  Without loss of generality, we assume that $0=T_0 \leq T_1< \ldots < T_N$. We define recursively $(\overrightarrow{X}_t)_{t \in \ccint{0,\Tf}}$ over $\ocint{T_{i},T_{i+1}}$. Suppose that $\overrightarrow{X}_{T_i}$ has been defined we set $\overrightarrow{X}_t = \overrightarrow{X}_{T_i}$ for any $t \in \ooint{T_{i},T_{i+1}}$ and %{\color{red} $\overrightarrow{X}_{T_i} = \overrightarrow{X}_{T_{i-1}} - \bfe_{\ell_i} \pmod{2}$,}
 $\ovr{X}_{T_{i+1}} = 1-\ovr{X}_{T_i}$. It is well known that $(\overrightarrow{X}_t)_{t \in \ccint{0,\Tf}}$ is a Markov jump process \citep[Section 6]{owen2021monte} with generator $\overrightarrow{q}^1$ defined for any $x,y \in \msx$ as
\begin{align}
     \overrightarrow{q}^1(x,y):=\begin{cases}
        \hspace{0.2cm}\lambda \eqsp, \quad &\text{if } y \neq x \eqsp,\\
        -\lambda\eqsp, \quad &\text{otherwise\eqsp.}
    \end{cases}
\end{align}
The transition probability matrix $\PP(\foX_t = y | \foX_0 = x) =\overrightarrow{p}^1_{t}(x, y)$, for $x,y \in \msx, 0\leq t\leq  \Tf$, is known to be
% For instance, when $\overrightarrow{q}_t = \overrightarrow{q}$, then $\overrightarrow{p_t} = \exp(t\overrightarrow{q})$ {\color{red} cite}.
\begin{equation}
  \overrightarrow{p}_{t}^1(x,y) = \begin{cases}
        \frac{1}{2}+\frac{1}{2}\rme^{-2\lambda t}\eqsp, \quad &\text{if } x=y \eqsp,\\
        \frac{1}{2}-\frac{1}{2}\rme^{-2\lambda t}\eqsp, \quad &\text{otherwise\eqsp.}
    \end{cases}
\end{equation}
%\alain{give ref sec}.
%\ao{add result in the appendix}

\begin{proof}[Detailed calculation of the transition probability in \eqref{eq:def_density_one_dim_transition}]\label{proof_eq:transition_d=1}
Based on the Kolmogorov equation, the transition matrix $\ovr{p}^1_t$ for $0\leq t \leq \Tf$ admits the following formula
\begin{equation*}
    \ovr{p}^1_t=\rme^{t\ovr{q}^1} \eqsp,
\end{equation*}
where $\ovr{q}^1$ is define in \eqref{eq:def_generator_d=1}. Clearly, the generator $\ovr{q}^1$ admits two eigenvalues $0$ and $-2\lambda$ associated with the eigenvectors $\begin{pmatrix}1 & 1 \end{pmatrix}^{\rmT}$ and $\begin{pmatrix} 1 & -1 \end{pmatrix}^{\rmT}$ respectively. Thus we can diagonalize $\ovr{q}^1$ as
\begin{equation*}
    \ovr{q}^1=\begin{pmatrix}
    1 & 1 \\
    1 & -1
    \end{pmatrix}
    \begin{pmatrix}
    0 & 0 \\
    0 & -2\lambda
    \end{pmatrix}
    \begin{pmatrix}
    1 & 1 \\
    1 & -1
    \end{pmatrix}^{-1} \eqsp,
\end{equation*}
and the transition matrix $\ovr{p}^1_t$ follows
\begin{equation*}
    \ovr{p}^1_t=\rme^{t\ovr{q}^1}=\begin{pmatrix}
    1 & 1 \\
    1 & -1
    \end{pmatrix}
    \begin{pmatrix}
    1 & 0 \\
    0 & \rme^{-2\lambda t}
    \end{pmatrix}
    \begin{pmatrix}
    1 & 1 \\
    1 & -1
    \end{pmatrix}^{-1} =
    \dfrac{1}{2}\begin{pmatrix}
    1+\rme^{-2\lambda t} & 1-\rme^{-2\lambda t} \\
    1-\rme^{-2\lambda t} & 1+\rme^{-2\lambda t}
    \end{pmatrix} \eqsp.
\end{equation*}
\end{proof}

\subsection{General state space $\msx=\l\{0,1\r\}^d$} \label{app:general_state_space}
\subsubsection{Forward process}\label{proof_eq:transition_d}

We consider the jump times $(T_i)_{i\in\{1,\ldots,N\}}|N\simiid \unif(\ccint{0,\Tf})$ of a Poisson process over $\ccint{0,\Tf}$ where $N \sim \poissondist(\lambda \Tf)$ is the number of jump. Without loss of generality, we suppose that $T_0 = 0 \leq T_1< \ldots < T_N$. We define recursively $(\overrightarrow{X}_t)_{t \in \ccint{0,\Tf}}$ over $\ocint{T_{i},T_{i+1}}$ as follows. Suppose $\overrightarrow{X}_{T_i}$ has been defined. We set $\overrightarrow{X}_t = \overrightarrow{X}_{T_i}$ for $t\in\ooint{T_i,T_{i+1}}$, and finally,
%if $T_{i+1} < \Tf$,
set $\ovr{X}_{T_{i+1}}^{\ell_i} = 1-\ovr{X}_{T_i}^{\ell_i}$, where $\ell_i \sim \unif(\{1,\ldots,d\})$, with $\ell_i$ independent from the past, and $\ovr{X}_{T_{i+1}}^{j} = \ovr{X}_{T_i}^{j}$ for $j \neq \ell_i$. The associated generator matrix is given in \eqref{eq:def_generator}. We now seek to obtain the associated transition matrix.
\begin{proof}[\textbf{Proof of \eqref{eq:transition_d}}]
We start with a note that the generator matrix $\overrightarrow{q}$ can be expressed as a sum of matrices $\overrightarrow{q}^\ell$ as follows
\begin{align*}
    \overrightarrow{q} =\sum_{\ell=1}^d \overrightarrow{q}^\ell, \quad \text{with } \overrightarrow{q}^\ell(x,y)=\begin{cases}
        \hspace{0.2cm}\lambda \eqsp, \quad &\text{if } x^i=y^i \text{ for } i \neq \ell \text{ and } x^\ell \neq y^\ell \eqsp,\\
        -\lambda \eqsp, \quad &\text{if } x=y \eqsp,\\
        \hspace{0.2cm}0 \eqsp, \quad &\text{otherwise} \eqsp.
    \end{cases}
\end{align*}
Notice that $\overrightarrow{q}^\ell$ also admits the following formula with respect concerning the tensor product
\begin{align}\label{qlt}
    \overrightarrow{q}^\ell=\underbrace{\I \otimes \I \otimes... \otimes \I \otimes  \overrightarrow{A}\otimes \I\otimes... \otimes \I}_\text{$d$ times}\eqsp,
\end{align}
with $\I$ the $2\times 2$ identity matrix and $\overrightarrow{A}=\begin{pmatrix}
    -\lambda & \lambda\\
    \lambda & -\lambda
\end{pmatrix}$, which is the $\ell^{th}$ matrix in the previous product. Indeed, by the definition of tensor product, for any $x=(x^i)_{i=1}^d, y=(y^i)_{i=1}^d \in \msx$, we observe that
\begin{align*}
    (\I \otimes \I \otimes... \otimes \I \otimes \underbrace{\overrightarrow{A}}_{\ell^{th}}\otimes \I\otimes... \otimes \I) (x,y)&=\I(x^1,y^1)\I(x^2,y^2)...\overrightarrow{A} (x^\ell,y^\ell)...\I(x^d,y^d)\\
    &=\begin{cases}
         ~~\lambda \eqsp, \quad &\text{if } x^i=y^i \text{ for } i \neq \ell \text{ and } x^\ell \neq y^\ell \eqsp,\\
        -\lambda \eqsp, \quad &\text{if } x=y \eqsp,\\
        ~~0 \eqsp, \quad &\text{otherwise} \eqsp.
    \end{cases}
\end{align*}
which is exactly the expression of $\overrightarrow{q}^\ell(x,y)$. We now use the Kolmogorov equation combined with the expression of $\overrightarrow{q}^\ell_t$ in \eqref{qlt}, and apply the formula $\rme^{\I \otimes A+B\otimes \I}=\rme^{A}\otimes \rme^B$ for any matrix $A,B$ \citep[Appendix]{gavrilyuk2011exponentially} to get
\begin{align*}
    \overrightarrow{p}_{t}=\rme^{t\overrightarrow{q}}=\rme^{\sum_{\ell=1}^d t\overrightarrow{q}^\ell}=\underbrace{\rme^{t\overrightarrow{A}}\otimes...\otimes \rme^{t\overrightarrow{A}}}_\text{$d$ times}\eqsp.
\end{align*}
We are thus left with the computation of $\rme^{t\overrightarrow{A}}$. It is clear that the eigenvalues of $\overrightarrow{A}$ are $0$ and $-2\lambda $, with the corresponding eigenvectors $\begin{pmatrix}
    1 & 1
\end{pmatrix}^T$ and $\begin{pmatrix}
    1 & -1
\end{pmatrix}^T$  respectively. Consequently, we can compute $\rme^{t\ovr{A}}$ as: for any $a,b \in \l\{0,1\r\}$,
\begin{align*}
    \overrightarrow{p}_{t}^1(a,b):=\rme^{t\overrightarrow{A}}(a,b)=\begin{cases}
        \frac{1}{2}+\frac{1}{2}\rme^{-2\lambda t}\eqsp, \quad &\text{if } a=b\eqsp,\\
        \frac{1}{2}-\frac{1}{2}\rme^{-2\lambda t}\eqsp, \quad &\text{if } a\neq b\eqsp,
    \end{cases}
\end{align*}
and the formula of transition probability $\ovr{p}_t$ for $0\leq t \leq\Tf$ follows: for any $x=(x^i)_{i=1}^d$ and $ y=(y^i)_{i=1}^d$ in $\msx$,
\begin{align*}
    \overrightarrow{p}_{t}(x,y)=\prod_{i=1}^d \overrightarrow{p}^1_{t}(x^i,y^i), \quad \text{with } \overrightarrow{p}_{t}^1(x^i,y^i)=\begin{cases}
        \frac{1}{2}+\frac{1}{2}\rme^{-2\lambda t}\eqsp, \quad &\text{if } x^i=y^i\eqsp,\\
        \frac{1}{2}-\frac{1}{2}\rme^{-2\lambda t}\eqsp, \quad &\text{otherwise\eqsp.}
        \end{cases}
\end{align*}
\end{proof}

\subsubsection{Conditional expectation expression of the score function}\label{proof_prop:1}
\begin{proof}[\textbf{Proof of \Cref{prop:1}}]
Fix $x \in \msx$ and $\ell =1,\ldots, d$. First note that by definition of $\mu_{\Tf-t}$ as the marginal distribution of the noising process, we have
\begin{align}\label{eq:7}
    \mu_{\Tf-t}(x)-\mu_{\Tf-t}(\varphi^{(\ell)}(x))&=\sum_{z\in \msx}\mu_0(z)( \overrightarrow{p}_{\Tf-t}(z,x)-\overrightarrow{p}_{\Tf-t}(z,\varphi^{(\ell)}(x)))\eqsp.
\end{align}
The formula of transition probabilities $\overrightarrow{p}_{\Tf-t}(z,\varphi^{(\ell)}(x))$ combined with the definition of $\varphi^{(\ell)}(x)$ lead to
\begin{align*}
    \overrightarrow{p}_{\Tf-t}(z,\varphi^{(\ell)}(x))&=\prod_{i=1}^d \ovr{p}^1_{\Tf-t}(z^i,\varphi^i_\ell(x))\\
    &=\ovr{p}^1_{\Tf-t}(z^\ell,\varphi^\ell_\ell(x))\prod_{\substack{i=1\\i\neq \ell}}^d \ovr{p}^1_{\Tf-t}(z^i,x^i)\\
    &=\dfrac{\ovr{p}^1_{\Tf-t}(z^\ell,\varphi^\ell_\ell(x))}{\ovr{p}^1_{\Tf-t}(z^\ell,x^\ell)}\overrightarrow{p}_{\Tf-t}(z,x)\eqsp.
\end{align*}
Substituting this into \eqref{eq:7} implies
\begin{align*}
    \mu_{\Tf-t}(x)-\mu_{\Tf-t}(\varphi^{(\ell)}(x))&=\sum_{z\in \msx}\mu_0(z)\overrightarrow{p}^1_{\Tf-t}(z,x)(1 -\dfrac{\ovr{p}^1_{\Tf-t}(z^\ell,\varphi^{(\ell),\ell}(x))}{\ovr{p}^1_{\Tf-t}(z^\ell,x^\ell)})\\
    &=\sum_{z\in \msx}\l[\dfrac{2\rme^{-2\lambda(\Tf-t)}}{1+\rme^{-2\lambda(\Tf-t)}}-\dfrac{4\rme^{-2\lambda(\Tf-t)}(x^\ell-z^\ell)^2}{1-\rme^{-4\lambda(\Tf-t)}}\r]\P \l[\foX_0=z, \foX_{\Tf-t}=x \r]\eqsp,
\end{align*}
where the last equality comes from the formula of $\ovr{p}^1_{\Tf-t}$ and the fact that if $z^\ell=\varphi^{(\ell),\ell}(x)$ then $z^\ell \neq x^\ell$. Therefore, the score function in components are
\begin{align*}
    s_{t}^\ell(x) &= \dfrac{\mu_{\Tf-t}(x)-\mu_{\Tf-t}(\varphi^{(\ell)}(x))}{\mu_{\Tf-t}(x)}\\
    &= \sum_{z\in \msx}\l[\dfrac{2\rme^{-2\lambda(\Tf-t)}}{1+\rme^{-2\lambda(\Tf-t)}}-\dfrac{4\rme^{-2\lambda(\Tf-t)}(x^\ell-z^\ell)^2}{1-\rme^{-4\lambda(\Tf-t)}}\r]\P\l[\foX_0=z | \foX_{\Tf-t}=x \r]\\
    &=\E \l[\dfrac{2\alpha_{\Tf-t}}{1+\alpha_{\Tf-t}}-\dfrac{4\alpha_{\Tf-t}(\foX_{\Tf-t}^\ell-\foX_0^\ell)^2}{1-\alpha^2_{\Tf-t}} \middle| \foX_{\Tf-t}=x\r] \eqsp, \quad \text{for } t \in [0,\Tf) \eqsp,
\end{align*}
where $\alpha_t=\rme^{-2\lambda t}$, and we finish the proof of \cref{prop:1}.
\end{proof}

\subsubsection{Invariant measure of the forward process}\label{invariant_measure}
As we have a comprehensive understanding of the forward process, we observe that its invariant measure is the uniform distribution over $\msx$, denoted by $\gamma^d$. Indeed, for any $x\in \msx$ and $t \in \ccint{0,\Tf}$,
\begin{equation*}
    (\gamma^d \ovr{p}_t)(x)=\sum_{z\in \msx} \gamma^d(z)\ovr{p}_t(z,x)=\dfrac{1}{2^d}\sum_{z\in \msx}\ovr{p}_t(z,x)=\dfrac{1}{2^d}=\gamma^d(x) \eqsp.
\end{equation*}
Furthermore, by formula of $\ovr{p}$ given in \eqref{eq:transition_d}, we have $\ovr{p}_t(x,y) \xrightarrow[]{t\to \infty} \frac{1}{2^d}$ for any $x,y \in \msx$. Consequently, the following holds for any $x\in \msx$,
\begin{equation*}
    \mu_t(x)=\sum_{z\in \msx} \mu_0(z)\ovr{p}_t(z,x) \xrightarrow[]{t\to \infty} \dfrac{1}{2^d}\sum_{z\in \msx} \mu_0(z)=\dfrac{1}{2^d}=\gamma^d(x) \eqsp,
\end{equation*}
meaning that the forward dynamic $(\foX_t)_{t\in \ccint{0,\Tf}}$ converges geometrically fast to $\gamma^d$.

\section{Implementation of DMPMs}
\subsection{Alternative ideal backward simulation}\label{appendix:simulation_backward}

Besides the simulation of the backward process provided in \Cref{sec:general_state_space}, we can also use the following procedure to produce the time-reversal dynamic.

The second procedure to sample $(\baX_t)_{t \in\ccint{0,\Tf}}$ is to
consider a sample $\overleftarrow{X}_0$ from $\mu_{\Tf}$ and a sequence of
\iid~random variables distributed according to the exponential
distribution with parameter $1$,
$\{E_i^{\ell} \,: \, i \in \nset\, , \, \ell \in\iint{1}{d}\}$, we can define the jump times $(T_i)_{i\in \nset}$
%$(T_i)_{i\in\{1,\ldots,N\}}|N\simiid \unif(\ccint{0,\Tf})$
of the backward process and its transition by induction setting $T_0=0$. Given $(T_i,\baX_{T_i})$, we define the next jump time as $T_{i+1}^j = T_i + \Delta T_{i+1}^j$, where $\Delta T_{i+1}^j = \inf \{t \geq 0\, :\, \int_{0}^t \lambda(1-s^{j}(\baX_{T_i})) \rmd r \geq E_i^{j} \}$.
Then, set $T_{i+1} = T_{i+1}^{\ell_i}$, where $\ell_i = \argmin_{j\in\iint{1}{d}} T_{i+1}^{j}$, and $\overleftarrow{X}_t = \overleftarrow{X}_{T_i}$ for $t\in\ooint{T_i,T_{i+1}\wedge \Tf}$, and finally if $T_{i+1} < \Tf$, $\overleftarrow{X}_{T_{i+1}}^{\ell_i} = 1-\overleftarrow{X}_{T_i}^{\ell_i}$ for $\ell_i\in\{1,\ldots,d\}$.

\subsection{Perfect backward approximation}\label{pseudo_code}
 We provide here the pseudo-code of backward approximation sampling in continuous time scheme:

\begin{algorithm}
    \caption{DMPMs Algorithm (Continuous time scheme)}\label{alg:backward_approximation_continuous}
    \textbf{Input:} a time horizon $\Tf \gg 1$ large enough, a prescribed jump rate $\lambda$, an approximate score function $s^{\theta^\star}$
\begin{algorithmic}
\STATE \textbf{Backward process:}
\STATE Set $T_0=0$ and initialize $\overleftarrow{X}_0 \sim \gamma^d$
\STATE $i \gets 0$
\WHILE{$T_i\leq \Tf$}
    \STATE Draw $E_i\sim \text{Exp}(1)$
    \STATE Solve $\Delta T_{i+1} = \inf \{t \geq 0\, :\, \int_0^t \lambda^{\theta^\star}_{T_i+r}(\baX_{T_i})\rmd r \geq E_i \} $, with $\lambda^{\theta^\star}_t(x)=\lambda\sum_{\ell=1}^d(1-s^{\theta^\star,\ell}_{t}(x))$
    \STATE Set $T_{i+1}=T_i+\Delta T_{i+1}$
    % \FOR {$j=1,\ldots, M$}
    \IF{$T_i < t <\min(T_{i+1},\Tf)$}
        \STATE Set $\baX_{t}=\baX_{T_i}$
    \ENDIF
    % \ENDFOR
    \IF{$T_{i+1}<\Tf$}
        \STATE Draw $\ell_i\in\{1,\ldots,d\} \sim \categorial(\{  \lambda(1- s_{T_{i+1}}^{\theta^\star,\ell}(\baX_{T_i})) / \lambda^{\theta^\star}_{T_{i+1}}(\baX_{T_i}) \}_{\ell=1}^d)$
        \STATE Set $\baX_{T_{i+1}}=\varphi^{(\ell_i)}(\baX_{T_i})$
    \ENDIF
     \STATE $i \gets i+1$
\ENDWHILE

%\STATE Sample the backward process $(\overleftarrow{X}^\star_{t_k} )_{k=1,\ldots,N}$ , using the parametrized score $s^{\theta^\star}$
\end{algorithmic}
\textbf{Output:} $\overleftarrow{X}_{\Tf}$
\end{algorithm}

\subsection{Discrete denoiser and score reparameterization}
\label{app:discrete_denoiser}

\textbf{Discrete-denoiser structure.}

Recall from \cref{prop:1} that each score component admit the following conditional expectation:
{
    \begin{align}
         s_{t}^\ell(x)=\E \l[  f^{\ell}_t(\overrightarrow{X}_0,  \overrightarrow{X}_{\Tf-t})| \overrightarrow{X}_{\Tf-t}=x\r]\eqsp,
    \end{align}
    }
where
\begin{equation}
 f^{\ell}_t(\overrightarrow{X}_0^\ell,  \overrightarrow{X}_{\Tf-t})   =  \dfrac{2\alpha_{\Tf-t}}{1+\alpha_{\Tf-t}}-\dfrac{4\alpha_{\Tf-t}(\overrightarrow{X}_{\Tf-t}^\ell-\overrightarrow{X}_0^\ell)^2}{1-\alpha_{\Tf-t}^2}
\end{equation}
 for $t \in [0,\Tf)$, $x\in \msx$ and $\ell = 1,\ldots,d$.

Remark that
\begin{equation}
    \E \l[ f^{\ell}_t(\overrightarrow{X}_0^\ell,  \overrightarrow{X}_{\Tf-t}) | \overrightarrow{X}_{\Tf-t}=x \r] =
    \dfrac{2\alpha_{\Tf-t}}{1+\alpha_{\Tf-t}}-\dfrac{4\alpha_{\Tf-t}\mE \l[(\overrightarrow{X}_{\Tf-t}^\ell-\overrightarrow{X}_0^\ell)^2 | \overrightarrow{X}_{\Tf-t}=x \r]}
{1-\alpha_{\Tf-t}^2}\eqsp.
\end{equation}
Thus we introduce the function $d^\ell_t$ defined as
\begin{equation}
    d^\ell_t : x \mapsto \mE \l[(\overrightarrow{X}_{\Tf-t}^\ell-\overrightarrow{X}_0^\ell)^2 | \overrightarrow{X}_{\Tf-t}=x \r],
\end{equation}
which can be further rewritten as
\begin{align*}
    d_{t}^\ell(x) &= \mE \l[\l(\fX_{\Tf - t}^\ell - \fX_0^\ell\r)^2 \bigg| \fX_{\Tf - t} = x\r] \\
    &= \mE \l[\mathds{1}_{\fX_{\Tf - t}^\ell \neq \fX_0^\ell} \bigg| \fX_{\Tf - t} = x\r] \\
    &= \mathbb{P} \l(\fX_0^\ell \neq x^\ell \bigg| \fX_{\Tf - t} = x\r) \eqsp.
    % \\ &= \mathbb{P} \l(\fX_0^l \neq x^l \r)\eqsp.
\end{align*}
In some sense, this is the discrete version of the continuous denoiser $\mE[\fX_0 | \fX_t]$ approximated by classical diffusion models \citep{song2021scorebasedgenerativemodelingstochastic}, as obtained from the score by Tweedie's formula. Thus we call $d_{t}^\ell(x)$ the discrete denoiser.

\textbf{Score reparameterization.}
Based on the previous derivations, each score component $s_{t}^\ell(x)$ can be written as a function of $d^\ell_t$:
\begin{equation}
     s_{t}^\ell(x) = \dfrac{2\alpha_{\Tf-t}}{1+\alpha_{\Tf-t}}-\dfrac{4\alpha_{\Tf-t} d^\ell_t(x)}
{1-\alpha_{\Tf-t}^2}\eqsp,
\end{equation}
So we can reparameterize our score models $s_{t}^{\theta}$ as
\begin{equation}
    \label{eq:score_reparameterization}
     s_{t}^{\theta, \ell}(x)= \dfrac{2\alpha_{\Tf-t}}{1+\alpha_{\Tf-t}}-\dfrac{4\alpha_{\Tf-t} d_t^{\theta, \ell}(x)}{1-\alpha_{\Tf-t}^2}\eqsp,
\end{equation}
where $d_t^{\theta, \ell}(x)$ aims to approximate $d_t^{\ell}(x)$.

\subsection{Objective functions derived from the discrete denoiser structure}

Inspired by the previous derivations, we modify our existing $\lossscore$ loss function to replace by a denoising loss equivalent. We introduce a cross-entropy loss, and finally propose a scaling of the loss functions, based on the average output magnitude of the discrete denoiser, thus helping with the learning, and improving synergies between loss elements.

\label{app:objective_functions}
\textbf{Score-matching objective $\loss_{\mrl^2}$.}
We rewrite the objective function $\loss_{\mrl^2}$ to fit the \emph{discrete denoiser}, considered as a conditional expectation:
 \begin{equation}
 \label{eq:l2_loss_denoiser_1}
    \lossscoredenoiser :
     \theta \mapsto \int_0^{\Tf} \loss_{t, \mrl^2}(\theta) \rmd t\eqsp, \quad  \loss_{t, \mrl^2}(\theta) = \mE \l[\| d^\theta_{\Tf-t}(\overrightarrow{X}_t) - (\overrightarrow{X}_0 - \overrightarrow{X}_t) \rmd (\overrightarrow{X}_0 - \overrightarrow{X}_t) \|^2 \r]\eqsp,
\end{equation}
where $\rmd$ is the element-wise product.

\textbf{Cross-entropy objective $\lossCE$.}
Instead of the $\loss_{\mrl^2}$ loss suggested by the conditional expectation structure, we can consider a cross-entropy loss to fit our model to the correct distribution: classical derivations from the conditional log-likelihood $ \sum_{\ell=1}^d \E \l[\log p_t^{\theta, \ell}(\fX_{\Tf - t}^{\ell} | \fX_{0}^{\ell})\r]$, where
\begin{equation}
    p_t^{\theta, \ell}(x_{\Tf - t} | x_0) =
\begin{cases}
    d_t^{\theta, \ell}(x_{\Tf - t}) & \text{if} \ x_{\Tf - t} \neq x_{0} \\
    1 - d_t^{\theta, \ell}(x_{\Tf - t}) & \text{else}\\
\end{cases}\eqsp,
\end{equation}
lead to the following cross entropy loss:

\begin{align}
\label{eq:CE_loss}
   \lossCE(\theta) = &- \int_{0}^{\Tf} \loss_{t, \text{CE}}(\theta) \rmd t \eqsp,
\end{align}
where
\begin{equation*}
\loss_{t, \text{CE}}(\theta) = \mE \l[\sum_{l=1}^d
    Y_t^{\ell} \log d_t^{\theta, \ell}(\fX_{\Tf - t})+ (1 - Y_t^{\ell}) \log\l(1 - d_t^{\theta, \ell}(\fX_{\Tf - t})\r)\r]\eqsp, \quad
    Y_t^{\ell} = \begin{cases}
        1 \quad \text{if} \quad \fX_0^{\ell} \neq \fX_{\Tf - t}^{\ell} \eqsp, \\
        0 \quad \text{else} \eqsp.
    \end{cases}
\end{equation*}

\textbf{Further improvements.} To address training efficiency, we inspect the average magnitude of the loss across the dataset, at each timestep. Indeed, the average value of $d_t^{\ell}$ is
\begin{align}
\label{eq:gamma}
    w_{\Tf - t} &= \mE \l[ d_t^{\ell}(\fX_{\Tf - t}) \r] = \mE\l[ \mE \l[ d_t^{\ell}(\fX_{\Tf - t}) \Big| \fX_0 \r] \r] \notag\\
    &= \mE\l[ \mathbb{P} \l(\fX_0^\ell \neq \fX_{\Tf - t}^\ell  \Big| \fX_0 \r) \r] \notag\\
    &= \frac{1}{2}\l(1 - \alpha_{\Tf - t}\r)\eqsp,
\end{align}
as given by the formulas for the transition kernels of the forward process. We can see that the value of $w_t$ is close to zero for small values of $t$, which stalls the learning process. Empirically, we find that dividing the integrand of either loss terms $\loss_{\mrl^2}$ or $\lossCE$ by $w_t$ yields improvements. As a result, we modify the losses to counterbalance their diminishing magnitude across timesteps: $\loss_{t, \rml^2}^{w} = \frac{\loss_{t, \rml^2}^{\text{denoiser}}}{w_t}$,  $\loss_{t, \text{CE}}^{w} = \frac{ \loss_{t, \text{CE}}}{w_t} $,
% \begin{equation*}
%     \loss_{t, \rml^2}^{w} = \frac{\loss_{t, \rml^2}^{\text{denoiser}}}{w_t}\eqsp, \quad \loss_{t, \text{CE}}^{w} = \frac{ \loss_{t, \text{CE}}}{w_t}\eqsp,
%     %  \quad \text{and set} \quad\lossKL^{\text{Final}} = \lossKL^{\text{DN}}
% \end{equation*}
and define the associated losses
\begin{equation}
\label{eq:gamma_losses}
    \lossscorew(\theta) = \int_0^{\Tf} \loss_{t, \rml^2}^{w}(\theta) \rmd t \eqsp, \quad \lossCEw(\theta) = - \int_0^{\Tf} \loss_{t, \text{CE}}^{w}(\theta) \rmd t \eqsp.
\end{equation}

\textbf{Comparing $\lossscore, \lossCE, \lossscorew, \lossCEw$.}

In \Cref{fig:ce-l2-comparison}, we plot the average loss per timestep, for a trained model on MNIST (following the specifications given in \Cref{app:experiment_image})). It shows that, on average, the $\loss_{\rml}$ loss effectively becomes a scaled variant of the cross-entropy objective, which is reflected in similar performance results. This corroborates our derivation that L2 acts as an effective lower bound to the log-likelihood. This also supports its relevancy with respect to the underlying structure of this generative model. It must be noted that both losses still benefit from positive synergies when used together.

Importantly, dividing by $w_t = (1 - \alpha_t) / 2$ particularly helps at smaller timesteps, and keeps the loss values at the same magnitude across timesteps, enhancing training dynamics. This is illustrated in \cref{fig:gamma_no_gamma}, where scaling the losses with the $w$ scale factor consistently yields improvements.

% In \Cref{fig:ce-l2-comparison}, we

\begin{figure}[H]
\centering
\begin{minipage}{0.99\textwidth}
    \begin{minipage}{0.49\textwidth}
        \centering
        % \vspace{0pt}
        \includegraphics[width=0.9\textwidth]{./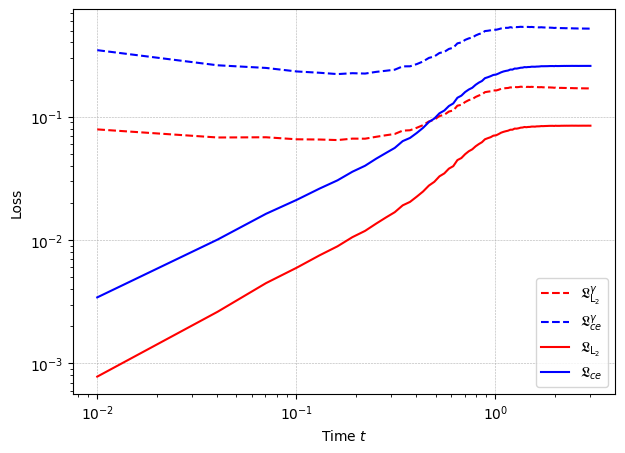}
        \caption{Comparison of $\lossscore, \lossCE, \lossscorew, \lossCEw$ average losses over timesteps. The two losses become scaled version of one another only when averaged over data, but otherwise benefit from positive synergies when mixed together.}
        \label{fig:ce-l2-comparison}
    \end{minipage}
    \hfill
    \begin{minipage}{0.49\textwidth}
        \centering
        % \vspace{0pt}
        \includegraphics[width=0.9\textwidth]{./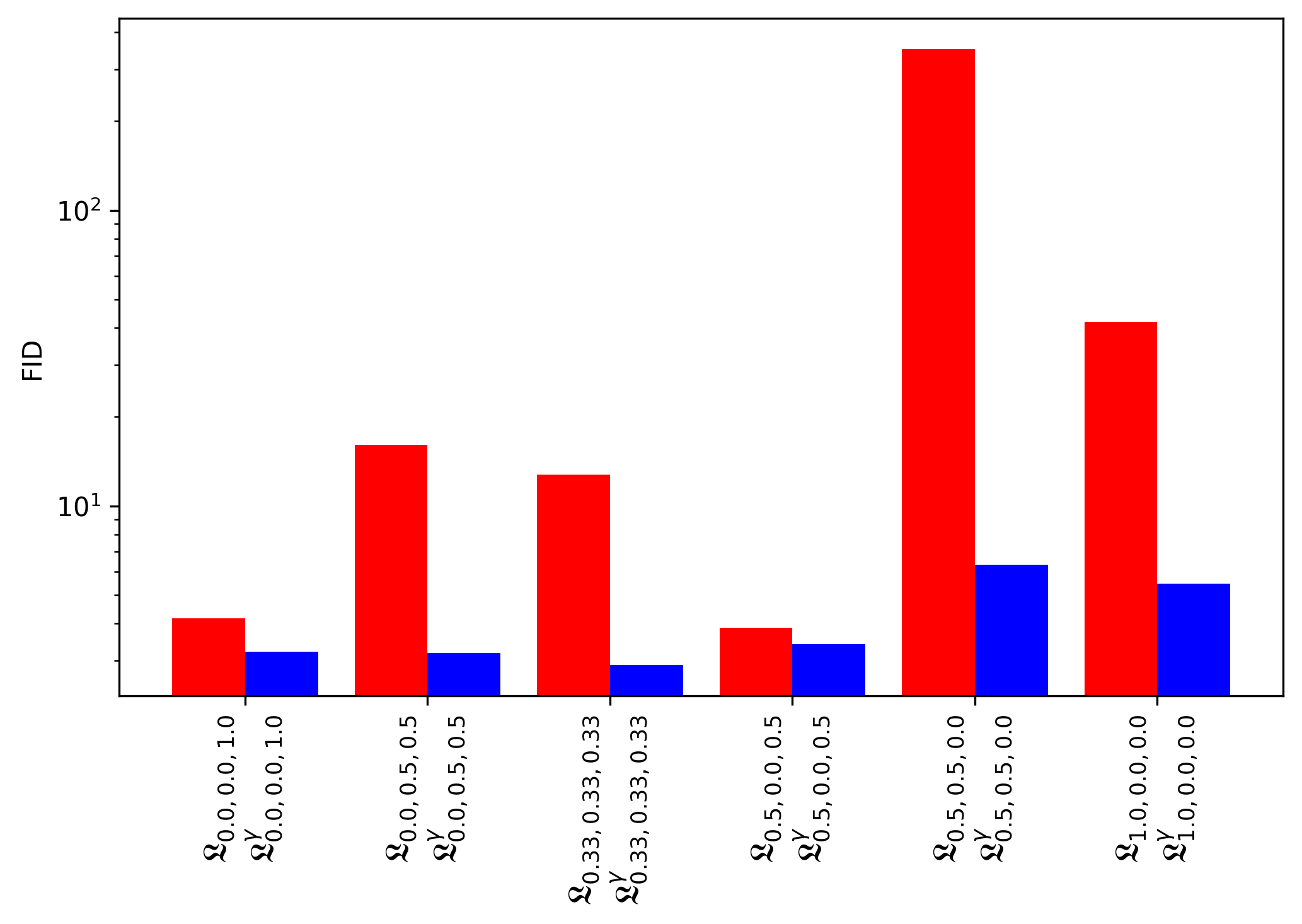}
        \caption{FID$\downarrow$, on MNIST, for models trained with $\losslinear$ and $\losslinearw$ losses, evaluated using $200$ reverse steps with the denoise-renoise sampler. Scaling with $w$ yields consistent improvements, with the best loss configuration $\loss_{1/3, 1/3, 1/3}^w$ involving all the methodological improvements we discussed.}
        \label{fig:gamma_no_gamma}
    \end{minipage}
\end{minipage}
\end{figure}
% \begin{figure*}[ht]
% \centering
% \includegraphics[width=0.7\textwidth]{./experiment_results_img/gamma_effect.png}
% \caption{Impact of enabling $w$ on the training loss at each time step. $w=\text{True}$ yields smoother training dynamics, especially early in the diffusion process.}
% \label{fig:gamma-curve}
% \end{figure*}

\textbf{Final objective functions.} We choose a linear combination of the previous loss objectives, weighted by positive coefficients $\varpi_1, \varpi_2, \varpi_3$:
\begin{equation} %\label{eq:linear_loss}
\losslinear = \varpi_1 \lossscoredenoiser + \varpi_2 \lossKL + \varpi_3 \lossCE\eqsp,
\end{equation}
and, if we choose their version weighted by $1 / w_t$:
\begin{equation}
\label{eq:linear_loss_gamma}
\losslinearw = \varpi_1 \lossscorew + \varpi_2 \lossKL + \varpi_3 \lossCEw\eqsp.
\end{equation}

% \item \textbf{Adaptively weighted loss:} we can also consider an adaptively weighted loss, as introduced in Vector Quantized Generative Adversarial Network, and also used, \eg, in Consistency Trajectory Models. In that case, we set $\mu = 1$, and
% \begin{equation}\label{eq:balance_parameters}
%     \zeta = \frac{\| \nabla_{\theta_L} \loss_{\mrl^2}^{w} \|}{\| \nabla_{\theta_L} \lossKL \|}, \quad
%     \eta = \frac{\| \nabla_{\theta_L} \loss_{\mrl^2}^{w} \|}{\| \nabla_{\theta_L} \lossCEw \|}\eqsp,
% \end{equation}
% where $\theta_L$ is the last layer of the neural network at the current training step.

\begin{algorithm}[ht]
\caption{Training Algorithm for DMPM (Reparameterized Score)}\label{alg:training}
\begin{algorithmic}[1]
\REQUIRE

Dataset $\mathcal{D}$ of samples $X \in \{0,1\}^d$;

Time horizon $\Tf > 0$ and rate $\lambda > 0$;

Parameterized discrete denoiser model $\{ d_{t}^{\theta,\ell}(x) : \theta \in \Theta \}_{t,\ell,x}$;

Derived score function $s_{t}^{\theta} := \frac{2\alpha_{\Tf-t}}{1+\alpha_{\Tf-t}}-\frac{4\alpha_{\Tf-t} d_t^{\theta}}{1-\alpha_{\Tf-t}^2}$ (score reparameterization \eqref{eq:discrete_score_reparameterization});

Define $\alpha_t$ as in \eqref{eq:def_alpha}, $f_t$ as in \eqref{def:f};

Loss coefficients $\varpi_1, \varpi_2, \varpi_3 \geq 0$;
% forming the combined loss $\ell_{\text{lin}}(\theta) = \mu\,\lossscoredenoiser + \zeta\,\lossKL + \eta\,\lossCE$, optionally scaled by $1/w_t$ to keep gradient magnitudes uniform over time.

\WHILE{optimization has not converged}
    \STATE Sample a batch $\{X_{i}\}_{i=1}^B$ from $\mathcal{D}$.
    \STATE Draw $t_1,\ldots,t_B \simiid \unif(\ccint{0,\Tf})$
    \STATE \textbf{Forward sampling: fast simulation via $p_{t|0} = (p_{t|0}^1)^{\otimes d}$}
    \FOR{$i = 1$ to $B$}
        \STATE $\fX_{i, 0} \gets X_i$
        \STATE $p_{\Tf - t_i} \gets (1 - \alpha_{\Tf - t_i}) / 2$
        \STATE Compute $\fX_{i, \Tf - t_i}$ by flipping each bit of $\fX_{i, 0}$ independently with probability $p_{\Tf - t_i}$
        \IF{Scaling losses with average $d_t$ magnitude} % in order to stabilize loss magnitude
            \STATE $w_i \gets (1 - \alpha_{\Tf - t_i}) / 2$
        \ELSE
            \STATE $w_i \gets 1$
        \ENDIF
    \ENDFOR
    \vspace{1em}
    \STATE  $ \lossscore(\theta) \gets \frac{1}{B}\sum_{i=1}^B \frac{1}{w_i} \| d_{t}^{\theta}(\fX_{i, \Tf - t_i}) - (\fX_{i, \Tf - t_i} - \fX_{i, 0})\rmd (\fX_{i, t} - \fX_{i, 0}) \|^2$
    \STATE  $ \lossCE(\theta) \gets \frac{1}{Bd} \sum_{i=1}^B \frac{1}{w_i} \sum_{\ell=1}^d \l(
    \mathds{1}_{\fX_0^{\ell} \neq \fX_{\Tf - t_i}^{\ell}} \log d_{t_i}^{\theta, \ell}(\fX_{\Tf - t_i}^{\ell})
    + (1 - \mathds{1}_{\fX_0^{\ell} \neq \fX_{\Tf - t_i}^{\ell}}) \log(1 - d_{t_i}^{\theta, \ell}(\fX_{\Tf - t_i}^{\ell})
    \r)$
    \STATE  $ \lossKL(\theta) \gets \frac{1}{B} \sum_{i=1}^B\sum_{\ell=1}^d\Bigg(-s_{\Tf-t_i}^{\theta,\ell}(\foX_{t_i})
    +(f_{\Tf-t_i}^{\ell}(\foX_{t_i})-1)\log (1-s_{\Tf-t_i}^{\theta, \ell}(\foX_{t_i}) )\Bigg)$
    \STATE $\losslinear(\theta) \gets \varpi_1\,\lossscoredenoiser + \varpi_2\,\lossKL + \varpi_3\,\lossCE$
    \STATE Perform a gradient step on $\losslinear(\theta)$ w.r.t.\ $\theta$.
\ENDWHILE
\STATE \textbf{Return} the final parameter $\theta^\star$.
\end{algorithmic}
\end{algorithm}

\subsection{Generative process and sampling procedures}
\label{app:generative_process_sampling}

Once we obtain our neural network $d_t^{\theta}$ approximating $d_t$, we use it to produce fresh samples that closely mimic the observed data. To do so, we first introduce a DMPM sampler based on the true reverse process. We then propose a slight modification, leveraging the distribution on indices available at each step, by flipping multiple bits instead of just one, using a flip-schedule. Finally, we derive a denoise-renoise sampler, solely based on the discrete denoiser structure of the problem, as inspired by similar lines of work in continuous diffusion.

\textbf{DMPM sampler.} A first sampling procedure is given in \cref{alg:dmpm_sampler}. It is designed to be as close as possible to the true backward process, while enabling efficient parallelization when implemented. It consists in a piecewise-approximation of the functions of interest, parameterized by the choice of a time discretization grid $0 = t_0 < t_1 < \cdots < t_K = \Tf$, which we call a \textbf{time-schedule}.

In Table~\ref{tab:time_schedules}, we give the different time-schedules we experiment with. We draw inspiration from numerous lines of work on continuous and discrete diffusion \citep{shi2024simplified, karras2022elucidating}, in which these are common choices.

% Now that we have trained our neural network $d_t^{\theta}$ to approximate $d_t$ via the objective function \eqref{eq:final_loss} with , we can use it to generate samples.

\textbf{DMPM sampler with flip-schedule} In \cref{alg:dmpm_sampler_flip_schedule}, we further take advantage of the specific structure of our backward process, by leveraging the distribution over indices given by the learned score model at each timestep $t$. Instead of flipping a single bit per timestep $t_k$, we flip a total of $M_{t_k}$ bits sampled without replacements from the given distribution. We call the sequence $\{M_{t}\}_{0 \leq t \leq \Tf}$ the flip-schedule. When a time-schedule $\{t_k\}_{k=1}^K$ has been chosen, we also call the corresponding discrete sequence $\{M_{t_k}\}_{k=1}^K$ a {flip-schedule}.

In Table~\ref{tab:M_schedules}, we give the two flip-schedules we explore in this paper. The choice for the linear schedule is inspired from the philosophy of the masking schedule introduced in the context of masked diffusion by \citet{shi2024simplified}.

\begin{table}[H]
\centering
\begin{minipage}{0.49\textwidth}
    \centering
    \scalebox{0.85}{ % Adjusts table size
        \begin{tabular}{lc}
            \toprule
            \textbf{Time-schedule} & \textbf{Value of $t_k$} \\
            \midrule
            Linear & $\Tf \frac{k}{K}$ \\
            Quadratic & $\Tf \left(\frac{k}{K}\right)^2$ \\
            Cosine & $\Tf \cos\left(\frac{(1 - k / K) \pi}{2} \right)$ \\
            \bottomrule
        \end{tabular}
    }
    \caption{Different time schedules $(t_k)_{k=1}^K$ used in our experiments. $\Tf$ denotes the final time, and $K$ is the number of reverse steps.}
    \label{tab:time_schedules}
\end{minipage}
\hfill
\begin{minipage}{0.49\textwidth}
    \centering
    \scalebox{0.85}{
        \begin{tabular}{lc}
            \toprule
            \textbf{Flip-schedule} & \textbf{Value of $M_t$} \\
            \midrule
            Constant & $M$ \\
            Linear & $M \frac{t}{\Tf}$ \\
            \bottomrule
        \end{tabular}
    }
    \caption{Different flip schedules $(M_t)_{0 \leq t \leq \Tf}$ used in our experiments. In both schedules, $M$ is a constant to be fixed and controls the total number of bits flipped during generation.}
    \label{tab:M_schedules}
\end{minipage}
\end{table}

\textbf{Denoise-renoise sampler.} In \Cref{alg:dmpm_dennoise_cycling}, we introduce the following denoise/renoise cycle, interpreting the model output $d^\theta_t$ as the probability that each bit should be flipped at timestep $t$ to reach timestep $0$. After doing a full denoise pass (from time $\Tf\to 0$), we noise the sample with the transition kernel of the forward process (from time $0 \to \Tf - \Delta$). Then we can do another denoise pass from $(\Tf - \Delta)\to 0$, etc.

\begin{algorithm}[H]
\caption{Backward sampling of DMPM with piecewise-constant score}
\label{alg:dmpm_sampler}
\begin{algorithmic}[1]

\REQUIRE

  Time horizon $\Tf>0$ and rate $\lambda>0$;

  $K>0$ number of reverse steps and time-schedule $0=t_0 < t_1<\cdots <t_K=\Tf$;

  Flip-schedule, \ie, sequence of positive integers $\{M_{t_k}\}_{k=1}^{K}$;

  Discrete denoiser model $d^{\theta}$;

  Derived score function $s_{t}^{\theta} := \frac{2\alpha_{\Tf-t}}{1+\alpha_{\Tf-t}}-\frac{4\alpha_{\Tf-t} d_t^{\theta}}{1-\alpha_{\Tf-t}^2}$ (score reparameterization \eqref{eq:discrete_score_reparameterization});

  Define $\alpha_t$ as in \eqref{eq:def_alpha};

\vspace{1ex}
\STATE $\baX^\star_0 \sim \mathrm{Unif}(0,1)^{\otimes d}$ % \COMMENT{initialize in $\{0,1\}^d$}
\STATE $E \sim \mathcal{E}(1)$  %\COMMENT{exponential(1) random variable}
\STATE $\Lambda \gets 0$   %\COMMENT{Rate integral}

\FOR{$k = 0$ to $K-1$}
  \STATE $\overline{\lambda}_{t_k} \gets \lambda \sum_{\ell=1}^d \bigl(1 - s^{\theta, \ell}_{t_k}\bigr)$  %\COMMENT{Compute rate}
  \STATE $\Delta t_{k} \gets t_{k+1}-t_k$
  \STATE $\Lambda \gets \Lambda + \overline{\lambda}_{t_k}\,\Delta t_{k}$  %\COMMENT{Piecewise constant approximation of the rate integral}

  \IF{ $\Lambda > E$ }
    \STATE $\ell^\star
       \;\sim\; \mathrm{Cate} \ \!\Bigl(\Bigl\{\dfrac{\lambda \,\bigl(1 - s^{\theta,\ell}_{t_k}\bigr)}{\overline{\lambda}_{t_k}}\Bigr\}_{\ell=1}^d \Bigr)$  %\COMMENT{sometimes known as multinomial. Number of components to flip}

    \STATE $\baX_{t_k}^{\star,\ell^\star} \gets 1 - \baX_{t_k}^{\star, \ell^\star}$  %\COMMENT{bit flip}

    \STATE $\Lambda \gets 0$  %\COMMENT{Reset integral}
    \STATE $E \sim \mathcal{E}(1)$
  \ENDIF
\vspace{1ex}
\STATE $\baX^\star_{t_{k+1}} \gets \baX^\star_{t_{k}}$ % \COMMENT{Update state}

\ENDFOR

\textbf{Output:} $\baX^\star_{\Tf}$ % \COMMENT{Final state}

\end{algorithmic}
\end{algorithm}

\begin{algorithm}[H]
\caption{Backward sampling of DMPM with piecewise-constant score and flip-schedule}
\label{alg:dmpm_sampler_flip_schedule}
\begin{algorithmic}[1]

\REQUIRE

  Time horizon $\Tf>0$ and rate $\lambda>0$;

  $K>0$ number of reverse steps and time-schedule $0=t_0 < t_1<\cdots <t_K=\Tf$;

  Flip-schedule, \ie, sequence of positive integers $\{M_{t_k}\}_{k=0}^{K}$;

  Discrete denoiser model $d^{\theta}$;

  Derived score function $s_{t}^{\theta} := \frac{2\alpha_{\Tf-t}}{1+\alpha_{\Tf-t}}-\frac{4\alpha_{\Tf-t} d_t^{\theta}}{1-\alpha_{\Tf-t}^2}$ (score reparameterization \eqref{eq:discrete_score_reparameterization});

  Define $\alpha_t$ as in \eqref{eq:def_alpha};

\vspace{1ex}
\STATE $\iXtheta_0 \sim \mathrm{Unif}(0,1)^{\otimes d}$ % \COMMENT{initialize in $\{0,1\}^d$}
\STATE $E \sim \mathcal{E}(1)$  %\COMMENT{exponential(1) random variable}
\STATE $\Lambda \gets 0$   %\COMMENT{Rate integral}

\FOR{$k = 0$ to $K-1$}
  \STATE $\overline{\lambda}_{t_k} \gets \lambda \sum_{\ell=1}^d \bigl(1 - s^{\theta, \ell}_{t_k}\bigr)$  %\COMMENT{Compute rate}
  \STATE $\Delta t_{k} \gets t_{k+1}-t_k$
  \STATE $\Lambda \gets \Lambda + \overline{\lambda}_{t_k}\,\Delta t_{k}$  %\COMMENT{Piecewise constant approximation of the rate integral}

  \IF{ $\Lambda > E$ }
    \STATE $\displaystyle [\,\ell_1^\star,\dots,\ell_M^\star\,]
       \;\sim\; \mathrm{Hypergeometric} \ \!\Bigl(\Bigl\{\dfrac{\lambda \,\bigl(1 - s^{\theta,\ell}_{t_k}\bigr)}{\overline{\lambda}_{t_k}}\Bigr\}_{\ell=1}^d,\ M_{t_{k}} \Bigr)$  %\COMMENT{sometimes known as multinomial. Number of components to flip}
    \FOR{$i=1$ to $M_{t_{k}}$}
      \STATE $\iX_{t_k}^{\theta, \ell_i^\star} \gets 1 - \iX_{t_k}^{\theta, \ell_i^\star}$  %\COMMENT{bit flip}

    \ENDFOR
    \STATE $\Lambda \gets 0$  %\COMMENT{Reset integral}
    \STATE $E \sim \mathcal{E}(1)$
  \ENDIF
\vspace{1ex}
\STATE $\iXtheta_{t_{k+1}} \gets \iXtheta_{t_{k}}$ % \COMMENT{Update state}

\ENDFOR

\textbf{Output:} $\iXtheta_{\Tf}$ % \COMMENT{Final state}

\end{algorithmic}
\end{algorithm}

\begin{algorithm}[H]
\caption{Denoise--Noise Cycling with a Discrete Denoiser Model}
\label{alg:dmpm_dennoise_cycling}
\begin{algorithmic}[1]

\REQUIRE

  Time horizon $\Tf>0$ and rate $\lambda>0$;

  $K>0$ number of reverse steps and time-schedule $0=t_0 < t_1<\cdots <t_K=\Tf$;

  Discrete denoiser model $d^{\theta}$;

\vspace{1ex}
\STATE $\iXtheta_{0} \sim \mathrm{Unif}(0,1)^{\otimes d}$  \COMMENT{initial sample in $\{0,1\}^d$}

\FOR{$k = 0 $ to $K-1$}
  %\COMMENT{Repeat denoise--noise cycles}
  \STATE \textbf{Denoise phase:}
       \STATE $d_{t_k} \gets d^\theta_{t_k}\bigl(\iXtheta_{t_k}\bigr)$  %\COMMENT{predict flipping probabilities}
       \STATE Compute $\iXtheta_{\Tf}$ by flipping each component $l$ of $\iX_{t_k}^{\theta}$ with probability $d_{t_k}^l$
       \vspace{1ex}
  \STATE \textbf{Noise phase:}
    \STATE Sample $\iXtheta_{t_{k+1}} \sim p_{\Tf - t_{k+1} | 0}(\cdot | \iXtheta_{\Tf})$, as in \cref{alg:training}  %\COMMENT{Renoised state}
\ENDFOR

\textbf{Output:}  $\iXtheta_{\Tf}$  %\COMMENT{Final denoised sample}

\end{algorithmic}
\end{algorithm}

\newpage

\section{Experiments}

\label{app:experiment}

All experiments are conducted using PyTorch. All the training and experiments are conducted on four NVIDIA RTX8000 GPU.

We use the score parameterization introduced in \eqref{eq:score_reparameterization}:
\begin{equation}
    s_{t}^{\theta, \ell}(x)= \dfrac{2\alpha_{\Tf-t}}{1+\alpha_{\Tf-t}}-\dfrac{4\alpha_{\Tf-t} d_t^{\theta, \ell}(x)}{1-\alpha_{\Tf-t}^2}\eqsp,
\end{equation}
where the neural network $d_t^{\theta, \ell}(x)$ aims to approximate $d_t^{\ell}(x) = \mathbb{P} (\fX_0^\ell \neq x^\ell \big| \fX_{\Tf - t} = x)$. Since the output of the neural network is $d_t^{\theta}(x) \in (0, 1)^d$, we add a sigmoid activation function at the last layer.

We consider various loss configurations $\losslinear, \losslinearw$ as introduced in \eqref{eq:linear_loss}, \eqref{eq:linear_loss_gamma}, with $6$ choices of coefficients $(\varpi_1, \varpi_2, \varpi_3)$ normalized in the 2-simplex $\Delta_2 \subset \R^3$. We test all $2^3 - 1 = 7$ possible non-empty combinations, minus the single $\lossKL$ loss combination ($\varpi_2 = 1$), as the latter only acts as entropic regularization and does not perform well by itself. This lets us study the synergies between the different loss terms.

\subsection{Small dimension data}
\label{app:experiment_small}

We first conduct experiments on a discrete data distribution $p$ supported on $\{0,1\}^d$. Each component of $X = (X_i)_{i=1}^d \sim p$ is independently distributed as a Bernoulli distribution with parameter $p_i$:
\begin{equation}
    p(x) = \prod_{i = 1}^d p_i(x_i) \eqsp,
\end{equation}
where the map $i \mapsto p_i$ forms a sawtooth-like pattern, oscillating linearly between $0.05$ and $0.95$, as can be seen in \cref{fig:sawtooth_pattern}.

\begin{figure}[ht]
    \centering
    \includegraphics[width=0.3\linewidth]{./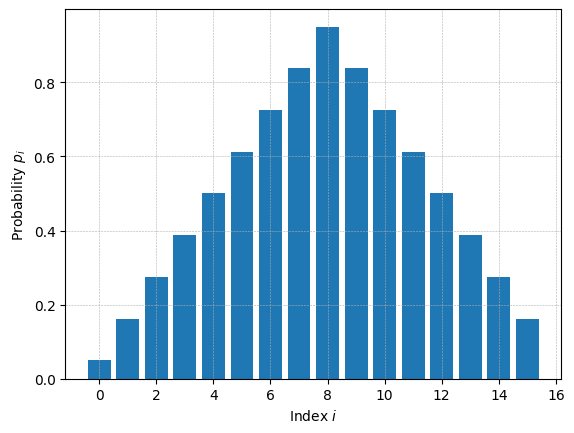}
    \caption{Sawtooth pattern used to define $i \mapsto p_i$, plotted with $d = 16$. Values oscillate linearly from $0.05$ to $0.95$, and back.}
    \label{fig:sawtooth_pattern}
\end{figure}

For training, we use $20\,000$ datapoints resampled at each epoch, and a batch size of $1024$. We train each model for $300$ epochs, using AdamW with a learning rate of 1e-3.
We employ a network composed of multiple MLP blocks: $4$ residual blocks, each consisting of two feed-forward layers of width $256$; layer normalization and SiLU activations in each block; a feed-forward embedding for the timesteps, mapping $\mathbb{R}$ to a hidden dimension of $256$, whose output is then injected into each residual block by an additional MLP of dimension $256\times 256$.

For evaluation, we estimate each distribution with $20{,}000$ samples, and draw $1000$ vectors uniformly on the simplex $\Delta_{d}$ to compute our $\text{SWD}$ metric (see \cref{app:experiment_metric}).

\subsection{Image data}
\label{app:experiment_image}

We work on the binarized MNIST dataset, which we scale from $28\times28$ to $32\times32$ in order to fit in the U-Net architecture. We set the pixel value to $0$ if its intensity is below $0.5$, and to $1$ otherwise.

We compare DMPM to MD4 (masked diffusion, as in \citet{shi2024simplified}) and DFM (discrete flow matching, as in \citet{gat2024discrete}). We reimplement MD4 with the cosine schedule and the algorithms given in Appendix F of \citet{shi2024simplified}. We implement DFM based on the Pytorch implementation in \url{https://github.com/gle-bellier/discrete-fm}, and we use corrector sampling for better results.

For DMPM, we are using the cosine time-schedule and time horizon $\Tf = 3$. For both MD4 and DFM, we set the mask value to the integer $2$.

To establish a fair comparison, we use the same network model for every method. We use a U-Net following the implementation of \cite{nichol2021improved} available in \url{https://github.com/openai/improved-diffusion}. We dimension the network as follows.

The first layer is an embedding layer of output dimension $32$ and input dimension $d_{\text{input}}$, where $d_{\text{input}} = 2$ for DMPM (input values are either $0$ and $1$) and $d_{\text{input}} = 3$ for MD4 and DFM (input values are either $0, 1$ or the mask value $2$).

We set the hidden layers to $[128, 256, 256, 256]$, fix the number of residual blocks to $2$ at each level, and add self-attention block at resolution $16\times16$, using 4 heads. We use an exponential moving average with a rate of $0.99$. We use the silu activation function at every layer. Timestep $t$ is fed to the model through the Transformer sinusoidal position embedding.

For DMPM and MD4, we set the number of output channels to $1$ and add a sigmoid activation at the last layer. For DFM, we set the output channels to $3$ and apply softmax channel-wise.

The optimizer is AdamW with learning rate 5e-4. We use the StepLR scheduler which scales the learning rate by $\gamma= .99$ every $400$ steps. We train on MNIST for $120\,000$ steps with batch size $256$. A single training run on MNIST takes approximately 6 hours per GPU, and requires about 6-12GB of VRAM for our settings.

To assess the quality of our generative models, we compute our metrics between $4\,000$ real images and $4\,000$ generated images. Generating $4\,000$ images with $1\,000$ reverse steps takes approximately $1$ hour on one GPU.

\subsection{Metrics}
\label{app:experiment_metric}

For low-dimensional data, we use a custom sliced Wasserstein metric. For image data, in addition to the classical FID metric, we use a $\text{F}_1^{\text{DC}}$ summary score, based on the density and coverage metrics.

\textbf{$\text{F}_1^{\text{DC}.}$ as summary metric of density-coverage} The density and coverage metrics are introduced in the setting of generative models by \cite{naeem2020reliable}. They assess the overlap of sample distributions using local geometric structures. Density measures how much the generated distribution is contained in the original data distribution (measuring quality), and coverage measures how much of the original data distribution is covered by the generated distribution (diversity).

These metrics are improvements of the precision and recall metrics for generative models \citep{kynkaanniemi2019improved}. They offer different measures to characterize the performance of generative models. For instance they can decorrelate the negative effect of mode collapse from the negative effect of noisy/blurry generations, each of them decreasing respectively coverage and density, and have been of importance in recent studies, \eg, in heavy-tailed generative modeling \citep{shariatian2025denoising, yoon2023score}.

We consider a single summary $\text{F}_1^{\text{DC}}$ score, which we define as the harmonic mean of these two values:
\begin{equation}
    \text{F}_1^{\text{DC}} = 2\cdot \dfrac{\text{density}\cdot \text{coverage}}{\text{density} + \text{coverage}}\eqsp.
\end{equation}

\textbf{Sliced Wasserstein metric $\text{SWD}$.}

Since the state space of our dataset over $\{0, 1\}^d$ is of size $2^d$, we cannot work with histogram-based metrics, which would require exponentially many samples when $d$ increases.

We address this issue with our sliced Wasserstein metric $\text{SWD}$. This metric is defined between distributions $\mu, \nu$ on $\{0, 1\}^d$ as:
\begin{equation}
    \text{SWD}(\mu, \nu) = \int_{\Delta_{d}} \text{W}\left( u_{\#}\mu,  u_{\#}\nu \right) \text{d} u\eqsp,
\end{equation}
where, for $u \in \Delta_{d}$, the pushforward $u_{\#}$ is derived from the function
\begin{equation}
    x \in \{0, 1\}^d \mapsto \langle u, x \rangle \in [0, 1] \eqsp.
\end{equation}
Simple Monte-Carlo averages are used to evaluate the integral with respect to the uniform distribution over the simplex $\Delta_{d}$, and we compute the Wasserstein distance between the pushforward measures with the \texttt{pyemd} package \citep{pyemd}.

% \begin{figure*}[ht]
% \centering
% \includegraphics[width=0.75\textwidth]{./experiment_results_img/mode_collapse_hist.png}
% \caption{Illustration of mode collapse for large reverse steps: histograms of generated data with $d=16$.}
% \label{fig:mode-collapse-hist}
% \end{figure*}

\section{Additional results}

In this section, we give grid images of generated samples for DMPM models trained on binarized MNIST, with the loss $\loss_{1/3, 1/3, 1/3}^{w}$.

\begin{figure}[ht]
    \centering
    \begin{minipage}[b]{0.48\textwidth}
        \centering
        \includegraphics[width=\linewidth]{./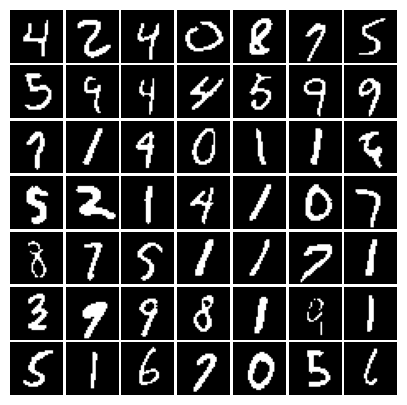}
        \caption{Default DMPM sampler, 25 reverse steps, cosine time-schedule, linear flip-schedule dimensioned for $1000$ total bit flips.}
        \label{fig:dmpm_default_grid}
    \end{minipage}
    \hfill
    \begin{minipage}[b]{0.48\textwidth}
        \centering
        \includegraphics[width=\linewidth]{./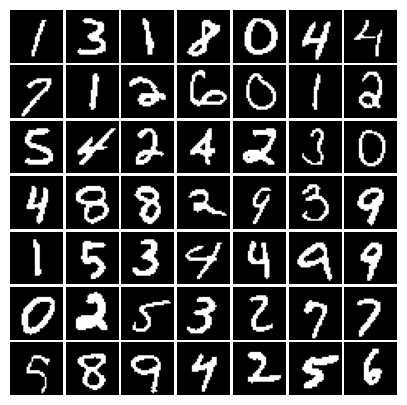}
        \caption{Denoise-renoise sampler, 200 reverse steps, cosine time-schedule.}
        \label{fig:dmpm_denoise_renoise_grid}
    \end{minipage}
\end{figure}

\newpage

\section{Convergence of DMPMs}
\label{appendix:convergence}

The proof of DMPMs' convergence requires understanding the backward dynamic under the canonical process point of view equivalent to the transition matrix point of view we provided in \cref{matrix_POV}. We provide first some essential preliminaries on Poisson measures and the corresponding It\^o's formula.

% {\color{black}
\subsection{Some basic facts on stochastic calculus for CTMCs}

\subsubsection{Point processes and Poisson point processes}

% \alain{Change $\msx$ in $\msx$, $\mB_{\msx}$ in $\mcx$,
% $M$ in $\mathscr{M}_{\msx}$, and $\mathcal{B}_M$ in $\mcm$
% }
% \alain{still a lot of typos, to re-read... also be consistent for the notations for example notation of processes and fitration}

{\color{black}
% Let $(\Omega,\mcf,\PP)$ be a complete probability space with a right-continuous increasing family $(\mcf_t)_{t\geq 0}$ of sub $\sigma$-fields of $\mcf$ such that each contains all $\PP$-null sets.
% a filtered probability space satisfying the usual conditions.
% We say that a non-negative measure $\bfmu$ on $(\msx,\mcx)$ is a counting measure if $\bfmu(\msa) \in\nset \cup \{\plusinfty\}$ for any $\msa \in\mcx$.
Let $(\msy , \mcy)$ be a measurable space. Let $\mathscr{M}_{\msy}$ be the set of non-negative (possibly infinite) integer-valued measures on $(\msy , \mcy)$. Let $\mcm_{\msy}$  be the smallest $\sigma$-field on $\mathscr{M}_{\msy}$ with respect to which the mappings $\bfmu \in \mathscr{M}_{\msy} \mapsto \bfmu(\mathsf{B}) \in \nset \cup \l\{ \plusinfty \r\}$, $ \mathsf{B} \in \mcy$, are measurable.

% \alain{$\Z^+$ is $\nset$}
% \alain{use mathsf command for any set, for example $B$ should be $\mathsf{B}$}
\begin{definition}[Poisson random measure]
   An $(\mathscr{M}_{\msy}, \mcm
   _{\msy})$-valued random variable $\bfmu$ (\ie, a mapping $\bfmu: \Omega \to \mathscr{M}_{\msy}$ defined on a probability space $(\Omega, \mF, \P)$ which is $\mF/\mcm_{\msy}$-measurable) is called a \emph{Poisson random measure} if
    \begin{itemize}
        \item for each $\mathsf{B} \in \mcy$, $\bfmu(\mathsf{B})$ is Poisson distributed; \ie, $\P( \bfmu(\mathsf{B}) = n)= \lambda_{\bfmu}(\mathsf{B})^n \exp[-\lambda_{\bfmu}(\mathsf{B})]/n!$ for any
        $n\in\nset$ where $\lambda_{\bfmu}(\mathsf{B}) = \E[\bfmu(\mathsf{B})]$, $ \mathsf{B} \in \mcy$. Note that if $\lambda_{\bfmu}(B)=\infty$, we understand that $\bfmu(B)=\infty$ a.s.
        \item If $\mathsf{B}_1, \mathsf{B}_2. \ldots, \mathsf{B}_n \in \mcy$ are disjoint, then $\bfmu(\mathsf{B}_1), \bfmu(\mathsf{B}_2), \ldots, \bfmu(\mathsf{B}_n)$ are mutually independent.
    \end{itemize}
\end{definition}
}
% Given a Poisson random measure, we can define the following \emph{compensated Poisson measure}, which is indeed a martingale.
% \begin{equation*}
%     \tilde N(\rmd t, \rmd x) = N(\rmd t, \rmd x) - \nu(\rmd t, \rmd x) \eqsp.
% \end{equation*}

Remark that it is easy to show that $\lambda_{\bfmu}$ is a non-negative measure on $(\msy,\mcy)$ that  uniquely determines the distribution of $\bfmu$ by the monotone class theorem; see \citep[Chapter I, Section 9]{ikeda2014stochastic}. It is called the \emph{mean measure} or the \emph{intensity measure} of the Poisson random measure $\bfmu$.

% Let $(\msx , \mcx)$ be a measurable space.
% \alain{$p$ in $\mathfrak{p}$ if point function}

% \begin{definition}
{\color{black}
Let $(\msx, \mcx)$ be a measurable space. We define a \emph{point function} $\mathfrak{p}$ on $\msx$ by a mapping $\mathfrak{p}: D_{\mathfrak{p}} \to \msx$, where its domain $D_{\mathfrak{p}}$ is a countable subset of $(0,\infty)$.} $\mathfrak{p}$ defines a non-negative (possibly infinite) integer-valued measure 
% counting measure 
$N_{\mathfrak{p}}$ on $(0,\infty) \times \msx$ endowed with the product $\sigma$-field $\mB((0,\infty)) \times \mcx$ by
%   \alain{this set is strange, check it}
\begin{equation*}
    N_{\mathfrak{p}} ( (0,t] \times \mathsf{U} )= \mathrm{Card} \l\{ s \in D_{\mathfrak{p}}\, :\, \eqsp s \leq t, \eqsp \mathfrak{p}(s) \in \mathsf{U} \r\}, \quad t>0\eqsp, \quad  \mathsf{U} \in \mcx \eqsp.
\end{equation*}
% \end{definition}

A point process is obtained by randomizing the notion of point functions. Let $\Pi_{\msx}$ be the set  of point functions with values in $\msx$ and $\mB(\Pi_{\msx})$ be the smallest $\sigma$-field on $\Pi_{\msx}$ with respect to which the mappings $\mathfrak{p} \mapsto N_{\mathfrak{p}}((0,t] \times \mathsf{U}), t>0, \mathsf{U} \in \mcx$, are measurable.

% \alain{For PP, denote it by $\mathbf{p}$}
\begin{definition}[Point process]
    A \emph{point process} $\mathbf{p}$ on $\msx$ is a $(\Pi_{\msx}, \mB(\Pi_{\msx}))$-valued random variable, that is, a mapping $\mathbf{p}: \Omega \to \Pi_{\msx}$ defined on a probability space $(\Omega, \mF, \P)$ which is $\mF/ \mB(\Pi_{\msx})$-measurable.

    A point process $\mathbf{p}$ is called \emph{stationary} if for every $t>0$, $\mathbf{p}$ and $\theta_t \mathbf{p}$ have the same probability law, where $\theta_t \mathbf{p}$ is defined by $D_{\theta_t \mathbf{p}} = \l\{ s\in (0,\infty)\, :\, s+t \in D_{\mathbf{p}} \r\}$ and $(\theta_t \mathbf{p})(s) = \mathbf{p}(s+t)$.
\end{definition}

\begin{definition}[Poisson point process]
    A point process $\mathbf{p}$ is called \emph{Poisson} if $N_{\mathbf{p}}$ is a Poisson random measure on $(0,\infty) \times \msx$. A Poisson point process is stationary if and only if its intensity measure $\hat N_{\mathbf{p}}(\rmd t \rmd x) = \E[ N_{\mathbf{p}}(\rmd t \rmd x)]$ is of the form
    \begin{equation*}
        \hat N_{\mathbf{p}} (\rmd t \rmd x) = \rmd t \mathrm{n} (\rmd x)
    \end{equation*}
    for some measure $\mathrm{n}$ on $(\msx, \mcx)$.
\end{definition}

\subsubsection{Stochastic Integral with respect to Point Process}

Here, we review the construction and definition of stochastic integral with respect to Point Processes for completeness; see \citep[Chapter II, Section 3]{ikeda2014stochastic}.

% Let $(\Omega, \mF, \P)$ be a complete probability space with a right-continuous increasing family $(\mF_t)_{t \geq 0}$ of sub $\sigma$-fields of $\mF$ such that each contains all $\P$-null sets. 
{\color{black} Let $(\Omega, (\mcf_t)_{t\geq 0}, \PP)$ be a filtered probability space, where $\PP$ is defined on the smallest $\sigma$-algebra containing all $\mcf_t, t \geq 0$ and $(\mcf_t)_{t\geq 0}$ satisfies the usual conditions (\ie, right-continuity and completeness).}
A point process $\mathbf{p} = (\mathbf{p}(t))$ on $\msx$ defined on $\Omega$ is called \emph{$(\mF_t)$-adapted} if for every $t>0$ and $\mathsf{U} \in \mcx$, $N_{\mathbf{p}}(t, \mathsf{U})  = \sum_{s \in D_{\mathbf{p}}, s\leq t} \1_{\mathsf{U}}(\mathbf{p}(s))$ is $\mF_t$-measurable. $\mathbf{p}$ is called \emph{$\sigma$-finite} if there exist $\mathsf{U}_n \in \mcx, n = 1,2,\ldots$ such that $\cup_n \mathsf{U}_n = \msx$ and $\E[N_{\mathbf{p}} (t,\mathsf{U}_n)] < \infty$ for all $t>0$ and $n=1,2,\ldots$. For a given $(\mF_t)$-adapted, $\sigma$-finite point process $\mathbf{p}$, let $\Gamma_{\mathbf{p}} = \l\{ U \in \mcx; \eqsp \E[N_{\mathbf{p}}(t, \mathsf{U})] <\infty \text{ for all } t>0 \r\}$. If $\mathsf{U} \in \Gamma_{\mathbf{p}}$, then $t \in [0,\infty) \mapsto N_{\mathbf{p}}(t,\mathsf{U})$ is an adapted, integrable increasing process and hence there exists a natural integrable increasing process $ \hat N_{\mathbf{p}}(t, \mathsf{U})$ such that $\tilde N_{\mathbf{p}}: t \in (0,\infty) \mapsto \tilde N_{\mathbf{p}}(t,\mathsf{U}) = N_{\mathbf{p}} (t,\mathsf{U}) - \hat N_{\mathbf{p}}(t,\mathsf{U})$ is a martingale.
% \np{I changed notation $\hat N$ to $\hat N$ to be consistent with the intensity measure defined before}
\begin{definition}\label{def:compensator}
    An $(\mF_t)$-adapted point process $\mathbf{p}$ on  $(\Omega, \mF, \P)$ is said to be \emph{of the class $\mathrm{(QL)}$ }(quasi left-continuous) (w.r.t. $(\mF_t)$) if it is $\sigma$-finite and there exists {\color{black} $\hat N_{\mathbf{p}}$ such that
    \begin{enumerate}
        \item for $\mathsf{U} \in \Gamma_{\mathbf{p}}$, $t \in (0,\infty) \mapsto \hat N_{\mathbf{p}}(t,\mathsf{U})$ is a continuous $(\mF_t)$-adapted increasing process,
        \item for each $t \in (0,\infty)$ and a.a. $\omega \in \Omega$, $\mathsf{U} \mapsto \hat N_{\mathbf{p}} (t, \mathsf{U}) $ is a $\sigma$-finite measure on $(\msx, \mcx)$,
        \item for $\mathsf{U} \in \Gamma_{\mathbf{p}}$, $ t \in (0,\infty) \mapsto \tilde N_{\mathbf{p}}(t,\mathsf{U}) = N_{\mathbf{p}}(t, \mathsf{U}) - \hat N_{\mathbf{p}}(t, \mathsf{U})$ is an $(\mF_t)$-martingale.
    \end{enumerate}
    
    The random measure $\hat N_{\mathbf{p}}$ is called the \emph{compensator} of the point process $\mathbf{p}$ (or $N_{\mathbf{p}}$).}
\end{definition}

\begin{definition}
    A point process $\mathbf{p}$ is called an  \emph{$(\mF_t)$-Poisson point process} if it is an $(\mF_t)$-adapted, $\sigma$-finite Poisson point process such that $\l\{ N_{\mathbf{p}}(t+h, \mathsf{U}) - N_{\mathbf{p}}(t, \mathsf{U}) \r\}_{h>0, \mathsf{U} \in \mcx}$ is independent of $\mF_t$ for any $t \in [0,\infty)$.
\end{definition}

An $(\mF_t)$-Poisson point process is of class $\mathrm{(QL)}$ if and only if $t \mapsto \E[N_{\mathbf{p}}(t, \mathsf{U})]$ is continuous for $\mathsf{U} \in \Gamma_{\mathbf{p}}$; in this case, the compensator $\hat N_{\mathbf{p}}$ is given by $\hat N_{\mathbf{p}}(t, \mathsf{U}) = \E[N_{\mathbf{p}}(t, \mathsf{U})]$.

{\color{black}
\begin{theorem}
% \citep[Theorem 6.2]{ikeda2014stochastic}
    Let $\mathbf{p}$ be a point process of class $\mathrm{(QL)}$ w.r.t. $(\mF_t)$ on some state space $(\msx, \mcx)$ such that its compensator $\hat N_{\mathbf{p}}(\rmd t \rmd x)$ is a \emph{non-random} $\sigma$-finite measure on $[0,\infty) \times \msx$. Then $\mathbf{p}$ is an $(\mF_t)$-Poisson point process. If, in particular, $\hat N_{\mathbf{p}}(\rmd t \rmd x) = \rmd t \rmn(\rmd x)$ where $\rmn(\rmd x)$ is a non-random $\sigma$-finite measure on $\msx$, $\mathbf{p}$ is a stationary $(\mF_t)$-Poisson point process with $\rmn$ as its characteristic measure.
\end{theorem}}

We are now going to discuss stochastic integrals w.r.t. a given point process of the class $\mathrm{(QL)}$. For this it is convenient to generalize the notion of predictable processes.

\begin{definition}[Predictable functions]
    A real function $f(t,x,\omega)$ defined on $[0, \infty) \times \msx \times \Omega$ is called $(\mF_t)$-predictable if the mapping $(t,x,\omega) \mapsto f(t,x,\omega)$ is $\mS/\mB(\R)$-measurable where $\mS$ is the smallest $\sigma$-field on $[0,\infty) \times \msx \times \Omega$ w.r.t. which all $g$ having the following properties are measurable:
    \begin{enumerate}
        \item for each $t>0, (x, \omega) \mapsto g(t,x,\omega)$ is $\mcx \times \mF_t$-measurable;
        {\color{black}\item for each $(x,\omega)$, $ t \in [0,\infty) \mapsto g(t,x,\omega)$ is left continuous.}
    \end{enumerate}
\end{definition}

% \np{For the stochastic integral w.r.t. $N_p$, can we use $\int_0^t \int_{\msx} f(s,x, \cdot) N_p(\rmd s \rmd x) = \sum_{s \leq t, s \in D_p} f(s, p(s-), \cdot)$?}

We introduce the following classes

\begin{equation}\label{def:class_stochastic_integral}
\begin{aligned}
    F_{\mathbf{p}} &= \l\{ f(t,x,\omega); f \text{ is } (\mF_t)\text{-predictable and for each } t>0, \eqsp \int_0^{t+} \int_\msx |f(s,x,\omega)| N_{\mathbf{p}}(\rmd s \rmd x) < \infty \text{ a.s.}\r\} \eqsp,\\
     F^1_{\mathbf{p}} &= \l\{ f(t,x,\omega); f \text{ is } (\mF_t)\text{-predictable and for each } t>0, \eqsp \E\l[\int_0^{t} \int_\msx |f(s,x,\cdot)| \hat N_{\mathbf{p}}(\rmd s \rmd x)\r] < \infty \r\} \eqsp.
\end{aligned}
\end{equation}

% \alain{the following result is incomplete since we need the generalisation of it for stochastic integral wrt $\hat{N}_{\mathbf{p}}$ as only a local martingale, to do}

\begin{theorem}[Stochastic integral]
  \label{theo:inte_sto}
% \citep[Chapter II, Section 3]{ikeda2014stochastic}
    For $f \in F_{\mathbf{p}}$, the stochastic integral $\int_0^{t+} \int_{\msx} f(s,x,\cdot) N_{\mathbf{p}}(\rmd s \rmd x)$ is well-defined a.s. and equals the absolutely convergent sum,
    \begin{equation*}
       \int_0^{t+} \int_{\msx} f(s,x,\cdot) N_{\mathbf{p}}(\rmd s \rmd x) = \sum_{s \leq t, s \in D_{\mathbf{p}}}   f(s, \mathbf{p}(s), \cdot) \eqsp.
    \end{equation*}
    {\color{black} 
    We then denote the stochastic integral w.r.t. $\tilde N_{\mathbf{p}}$ as
     \begin{equation*}
        \int_0^{t+} \int_{\msx} f(s,x,\cdot) \tilde N_{\mathbf{p}}(\rmd s \rmd x) = \int_0^{t+} \int_{\msx} f(s,x,\cdot) N_{\mathbf{p}}(\rmd s \rmd x) - \int_0^{t} \int_{\msx} f(s,x,\cdot) \hat N_{\mathbf{p}}(\rmd s \rmd x) \eqsp,
    \end{equation*}
    and the process $t \mapsto  \int_0^{t+} \int_{\msx} |f(s,x,\cdot)| \tilde N_{\mathbf{p}}(\rmd s \rmd x)$ is an $(\mcf_t)$-local martingale.

    Now let $f \in F^1_{\mathbf{p}}$, the stochastic integral $\int_0^{t} \int_{\msx} f(s,x,\cdot)\hat N_{\mathbf{p}}(\rmd s \rmd x)$ satisfies
    \begin{equation*}
        \E \l[ \int_0^{t+} \int_{\msx} |f(s,x,\cdot)| N_{\mathbf{p}}(\rmd s \rmd x) \r] = \E \l[ \int_0^{t} \int_{\msx} |f(s,x,\cdot)|\hat N_{\mathbf{p}}(\rmd s \rmd x)\r] \eqsp.
    \end{equation*}
    This implies, in particular, that $F^1_{\mathbf{p}} \subset F_{\mathbf{p}}$. Then the stochastic integral $t \mapsto \int_0^{t+} \int_{\msx} f(s,x,\cdot) \tilde N_{\mathbf{p}}( \rmd s \rmd x)$ with $f \in F^1_{\mathbf{p}}$ is an $(\mcf_t)$-true martingale.
    % Denote the stochastic integral for $f \in F^{1}_{\mathbf{p}}$ w.r.t. $\tilde N_{\mathbf{p}}$ as
    % \begin{equation*}
    %     \int_0^{t+} \int_{\msx} f(s,x,\cdot) \tilde N_{\mathbf{p}}(\rmd s \rmd x) = \int_0^{t+} \int_{\msx} f(s,x,\cdot) N_{\mathbf{p}}(\rmd s \rmd x) - \int_0^{t} \int_{\msx} f(s,x,\cdot) \hat N_{\mathbf{p}}(\rmd s \rmd x) \eqsp.
    % \end{equation*}
    % Then $t \mapsto \int_0^{t+} \int_{\msx} f(s,x,\cdot) \tilde N_{\mathbf{p}}( \rmd s \rmd x)$ is an $(\mF_t)$-martingale.
    }
    
\end{theorem}

\subsubsection{It\^o's formula for point process}

It\^o's formula  is one of the most important tools in the study of semi-martingales. It provides us with the differential-integral calculus for sample functions of stochastic processes.

{\color{black}
Let $(\Omega, (\mcf_t)_{t \geq 0}, \P)$ be a filtered probability space given as above and let $(\msx, \mcx)$ be a measurable space. Define a $d$-dimensional semi-martingale $\canoX_t = (\canoX^1_t, \canoX^2_t, \ldots, \canoX^d_t)$ adapted to $(\mcf_t)_{t\geq 0}$, taking values in $\msx$ by
\begin{equation*}
    \canoX^i_t = \canoX^i_0 + M^i_t + A^i_t + \int_0^{t+} \int_{\msx} f^i(s,x,\cdot) N_{\mathbf{p}}(\rmd s \rmd x) + \int_0^{t+} \int_{\msx} g^i(s,x,\cdot) \tilde N_{\mathbf{p}}(\rmd s \rmd x) \eqsp,
\end{equation*}
for $i = 1,2,\ldots, d$, where

% Suppose on this probability space the following are given:
% \alain{the following is too copy-paste, we never used the notation $X$ simply for a process, please re-read all the appendix and be consistent with the notations, the same holds for $(F_t)$, it should be with the index}
% \alain{the following is still very badly written to improve}
\begin{itemize}
    \item each $(M^i_t)_{t\geq 0} \in \mM_2^{\text{c,loc}}$ with $$\mM_2^{\text{c,loc}} = \l\{(M_t)_{t \geq 0}; (M_t)_{t\geq 0} \text{ is a locally square integrable } (\mF_t)\text{-martingale},  M_0 = 0 \text{ a.s. }; t \mapsto X_t \text{ is continuous a.s.} \r\} \eqsp;$$
    \item each $(A^i_t)_{t\geq 0}$ is a continuous $(\mF_t)$-adapted process satisfying $A^i_0=0$ and for almost every $\omega \in \Omega$, the sample path $t \mapsto A_t^i(\omega)$ has bounded variation on each finite interval;
    \item $\mathbf{p}$ is a point process of the class $\mathrm{(QL)}$ w.r.t. $(\mF_t)$ on some state such that $f^i(t,x,\omega) g^j (t,x, \omega) =0$ for any $(i,j,t,x,\omega) \in \{1,\ldots, d\}^2 \times [0,\infty) \times \msx \times \Omega$; furthermore, we assume that
    % $g(t,x,\omega)$ is bounded, \ie
    a constant $C >0$ exists such that
    \begin{equation*}
        |g^i(t,x,\omega)| \leq C \quad \text{for all } (i, t,x, \omega) \in \{1,\ldots, d\} \times [0,\infty) \times \msx \times \Omega \eqsp;
    \end{equation*}
    \item each $\canoX^i_0$ is an $\mF_0$-measurable random variable.
\end{itemize}

Denote also $f=(f^1, \ldots, f^d)$ and $g = (g^1, \ldots, g^d)$.

\begin{theorem}[It\^o's formula, \protect{\citep[Chapter II, Section 5]{ikeda2014stochastic}}]\label{theo:ito}
    Let $F$ be a function of class $\rmC^2$ on $\R^d$ and let $(\canoX_t)_{t \geq 0}$ be a $d$-dimensional semi-martingale given above. Then the stochastic process $(F(\canoX_t))_{t\geq 0}$ is also a semi-martingale w.r.t. $(\mF_t)_{t \geq 0}$ and the following formula holds:
    \begin{equation*}
    \begin{aligned}
         F(\canoX_t) - F(\canoX_0)&= \sum_{i=1}^d \int_0^t \frac{\partial F}{\partial x_i}(\canoX_s) \rmd M^i_s + \sum_{i=1}^d \int_0^t \frac{\partial F}{\partial x_i}(\canoX_s) \rmd A^i_s \\
         &+\frac{1}{2} \sum_{i,j = 1}^d \int_0^t \frac{\partial F^2}{\partial x_i \partial x_j}(\canoX_s) \rmd \langle M^i, M^j \rangle_s\\
         &+ \int_0^{t+} \int_{\msx} \l\{ F(\canoX_{s-} + f(s,x,\cdot)) - F(\canoX_{s-}) \r\} N_{\mathbf{p}}(\rmd s\rmd x)\\
         &+ \int_0^{t+} \int_{\msx} \l\{ F(\canoX_{s-} + g(s,x,\cdot)) - F(\canoX_{s-}) \r\} \tilde N_{\mathbf{p}}(\rmd s\rmd x) \\
         &+ \int_0^{t} \int_{\msx} \l\{ F(\canoX_s + g(s,x,\cdot)) - F(\canoX_s) - \sum_{i=1}^d g^i(s,x,\cdot) \frac{\partial F}{\partial x_i}(\canoX_s) \r\}\hat N_{\mathbf{p}}(\rmd s\rmd x) \eqsp.
    \end{aligned}
    \end{equation*}
\end{theorem}

\begin{corollary}[It\^o's isometry]\label{cor:ito_isometry}
    Let $N_{\mathbf{p}}$, $\tilde N_{\mathbf{p}}$ and $\hat N_{\mathbf{p}}$ given as above. Then the following holds for real-valued functions $f \in F_{\mathbf{p}}^1$,
    \begin{equation*}
        \E \l[\l( \int_0^{t+}\int_{\msx} f(s,x,\omega) \tilde N_{\mathbf{p}}(\rmd s \rmd x) \r)^2 \r] = \E \l[\int_0^t \int_{\msx} f^2(s,x,\omega) \hat N_{\mathbf{p}}(\rmd s \rmd x) \r] \eqsp.
    \end{equation*}
\end{corollary}

\begin{proof}[\textbf{Proof of \Cref{cor:ito_isometry}}]
    Denote the process $(\canoX_t)_{t \geq 0}$ by
    \begin{equation*}
        \canoX_t = \int_0^{t+}\int_{\msx} f(s,x,\omega) \tilde N_{\mathbf{p}}(\rmd s \rmd x) \eqsp,
    \end{equation*}
    and apply It\^o's formula (see \Cref{theo:ito}) for function $F(x)=x^2$, we get that
    \begin{align*}
        \canoX_t^2 - \canoX_0^2 &= \int_0^{t+} \int_\msx \l\{ (\canoX_{s-}+f(s,x,\omega))^2 - \canoX_{s-}^2 \r\} \tilde N_{\mathbf{p}}(\rmd s \rmd x)\\
        &+ \int_0^{t} \int_\msx \l\{ (\canoX_{s}+f(s,x,\omega))^2 - \canoX_{s}^2 - 2\canoX_s f(s,x,\omega) \r\} \hat N_{\mathbf{p}}(\rmd s \rmd x) \eqsp, \quad \text{for $t \geq 0$} \eqsp.
    \end{align*}
    Taking expectations on both sides and using the martingale property of the stochastic integral with respect to $\tilde N_{\mathbf{p}}$ implies that
    \begin{align*}
        \E \l[ \canoX_t^2 - \canoX_0^2 \r] = \E \l[ \int_0^{t} \int_\msx  f^2(s,x,\omega)  \hat N_{\mathbf{p}}(\rmd s \rmd x) \r] \eqsp.
    \end{align*}
    Plugging the formula of $\canoX_t$ into the above and noting that $\canoX_0=0$ yields the desired equation,
    \begin{align*}
        \E \l[\l( \int_0^{t+}\int_{\msx} f(s,x,\omega) \tilde N_{\mathbf{p}}(\rmd s \rmd x) \r)^2 \r] = \E \l[\int_0^t \int_{\msx} f^2(s,x,\omega) \hat N_{\mathbf{p}}(\rmd s \rmd x) \r] \eqsp,
    \end{align*}
    and concludes the proof.
\end{proof}

}
\subsubsection{Application to CTMCs}\label{sec:application}

% \np{I changed the rate from $\lambda$ to $r$ to be not confusing with the flipping rate}

% \alain{work on a measurable space $\msx$ instead of $\msx$ in this section, change $\mathcal{B}(\msx)$ with $\mcx$ a generic $\sigma$-field}
{\color{black} Let $(\msx, \mcx)$ be a measurable space with $\msx \subset \R^d$ and }let $(X_t)_{t \in [0,\Tf]}$ be a CTMC on $\msx$ associated with the jump rate function {\color{black} $r: (0,\Tf] \times \msx \to \rset_+$ and the kernel $k_t( X_{t-}, x)$ determining the probability of jumping into $x \in \msx$ at time $t$, given a jump occurs from a current state $X_{t-}$. The generator $\genericq_t$ is defined by $\genericq_t(X_{t-},x) = r_t(X_{t-}) k_t( X_{t-}, x)$ represents the rate of jumping from the current state $X_{t-}$ to the new state $x \in \msx$ at time $t \in (0,\Tf]$.} The CTMC $(X_t)_{t \in [0,\Tf]}$ defines a point process $\mathbf{p}_X= (\mathbf{p}_X(t))_{t \in [0,\Tf]}$ on $(\msx, \mcx)$, where
\begin{align*}
    \mathbf{p}_X : D_{\mathbf{p}_X} \subset (0,\Tf] \to \msx \eqsp, \quad \mathbf{p}_X (t) = X_t \eqsp, \, t\in D_{\mathbf{p}_X} \eqsp,
\end{align*}
with $D_{\mathbf{p}_X}$ is the set of jump times of $(X_t)_{t\in\ccint{0,\Tf}}$. We observe that $\mathbf{p}_X$ describes the new state after jumping at time $t \in (0,\Tf]$ and it constructs a corresponding random measure $N_{\mathbf{p}_X} (\rmd t \rmd x)$ on $(0,\Tf] \times \msx$ by

\begin{equation*}
\begin{aligned}
     N_{\mathbf{p}_X} ((0,t] \times \mathsf{U}) &= \mathrm{Card} \l\{ s \in D_{\mathbf{p}_X}: s \leq t , \mathbf{p}_X (s) \in \mathsf{U} \r\}\\
     &= \sum_{s \in D_{\mathbf{p}_X}} \updelta_{(s,X_s)} ((0,t] \times \mathsf{U}) \hspace{1.5cm} \text{for } t \in (0,\Tf], \quad \mathsf{U} \in \mcx \eqsp,
\end{aligned}
\end{equation*}

that counts the total jumps into $\mathsf{U} \in \mcx$ occurring during $(0,t]$. Then the random compensator $\bar\rmn^{\genericq}$ of $N_{\mathbf{p}_X}$ is given by
{\color{black}
\begin{equation*}
\begin{aligned}
%   n_{\mathbf{p}_X} (\rmd t \rmd x) &=\lambda(X_{t-}) \frac{q (X_{t-}, \rmd x)}{\lambda(X_{t-})} \1_{\l\{ X_{t-} \neq x \r\}}\\
%      &= q(X_{t-}, \rmd x) \1_{X_{t-} \neq x}   \rmd t \eqsp.
     \bar{\mathrm{n}}^{\genericq} (\rmd t \rmd x) &= \sum_{y \in \msx}  \genericq(X_{t-}, y) \1_{X_{t-} \neq y} \delta_y(\rmd x)  \rmd t \eqsp,
\end{aligned}
\end{equation*}
}

% intensity measure of $\mathbf{p}_X$ is its expectation \citep[Chapter II, Section 3]{ikeda2014stochastic} and given by
% \begin{equation*}
% \begin{aligned}
% %   n_{\mathbf{p}_X} (\rmd t \rmd x) &=\lambda(X_{t-}) \frac{q (X_{t-}, \rmd x)}{\lambda(X_{t-})} \1_{\l\{ X_{t-} \neq x \r\}}\\
% %      &= q(X_{t-}, \rmd x) \1_{X_{t-} \neq x}   \rmd t \eqsp.
%      n_{\mathbf{p}_X} (\rmd t \rmd x) &= \E [N_{\mathbf{p}_X} (\rmd t \rmd x)]\\
%      &= \E \l[ \sum_{s \in D_{\mathbf{p}_X}} \updelta_{(s,X_s)} (t, \rmd x)  \r] \rmd t \\
%     % &= \E \l[ \1_{\l\{ X_t = x, X_{t-} \neq X_t \r\}}  \r] \rmd t  \\
%     &= \E \l[\lambda(X_{t-}) \frac{q (X_{t-}, \rmd x)}{\lambda(X_{t-})} \1_{\l\{ X_{t-} \neq x \r\}} \r] \rmd t\\
%     &= \E \l[ q(X_{t-}, \rmd x) \1_{X_{t-} \neq x}  \r] \rmd t \eqsp.
% \end{aligned}
% \end{equation*}

% Note that $n_{\mathbf{p}_X}$ is the compensator of the Poisson point process $\mathbf{p}_X$ and
since the corresponding compensated measure
% \alain{change $q$ in $\genericq$, $\mathrm{n}_{\mathbf{p}_X}$ in $\rmn^{\genericq}$ and $  \tilde N_{\mathbf{p}_X}$ in $  \tilde N_{\mathbf{p}_X}^{\genericq}$}

\begin{equation*}
    \tilde N_{\mathbf{p}_X}^{\genericq} (\rmd t \rmd x) = N_{\mathbf{p}_X} (\rmd t \rmd x) - \bar\rmn^{\genericq} (\rmd t \rmd x)
\end{equation*}
is an $(\mF_t)$-martingale, {\color{black}where $(\mF_t)_{t \in [0,\Tf]}$ denotes the right-continuous and complete natural filtration generated by the process $(X_t)_{t \in [0,\Tf]}$.}
% a $\sigma$-algebra of $(X_t)$.
Indeed, we can show the martingale property of $\tilde N_{\mathbf{p}_X}^{\genericq}$ as follows. {\color{black} For any function $f \in F^1_{\mathbf{p}_X}$, where the class $F^1_{\mathbf{p}_X}$ is given in \eqref{def:class_stochastic_integral}}, define the following stochastic integrals:

\begin{equation*}
    \begin{aligned}
        \int_0^{t+} \int_{\msx} f(s,x, \cdot) N_{\mathbf{p}_X} (\rmd s \rmd x) &= \sum_{\substack{ 0 < s \leq t \\ s \in D_{\mathbf{p}_X}}} f(s, \mathbf{p}_X(s), \cdot) \eqsp, \\
        \int_0^{t+} \int_{\msx} f(s,x, \cdot) \tilde N_{\mathbf{p}_X}^{\genericq} (\rmd s \rmd x) &= \int_0^{t+} \int_{\msx} f(s,x, \cdot) N_{\mathbf{p}_X} (\rmd s \rmd x) - \int_0^{t} \int_{\msx} f(s,x, \cdot) \bar\rmn^{\genericq} (\rmd s \rmd x) \eqsp.
    \end{aligned}
\end{equation*}

Then for $0 \leq s < t \leq \Tf$ % and for any function $f \in F^1_{\mathbf{p}}$
, we have
{\color{black}
\begin{equation*}
    \begin{aligned}
        \E \l[ \int_s^{t+} \int_{\msx} f(z,x, \cdot) N_{\mathbf{p}_X} (\rmd z \rmd x) \middle| \mF_s \r] &= \E \l[ \sum_{\substack{ s < z \leq t \\ z \in D_{\mathbf{p}_X}}} f(z, X_z, \cdot) \middle| \mF_s \r] \\
        % &=  \int_{\msx} \sum_{\substack{ s < z \leq t \\ z \in D_{\mathbf{p}_X}}} f(z, x) \P(X_{z}=x | \mF_s) \rmd x \\
        &= \E \l[ \int_{\msx} \int_s^{t} f(z, x, \cdot) r (X_{z-}) \frac{\genericq(X_{z-},x)}{r (X_{z-})} \1_{ X_{z-} \neq x }  \rmd z  \rmd x \middle| \mF_s\r]  \\
        &= \E \l[  \int_s^t \int_{\msx} f(z,x, \cdot) \genericq( X_{z-}, x ) \1_{ X_{z-} \neq x } \rmd x \rmd z \middle| \mF_s\r] \\
        &= \E \l[ \int_s^t \int_{\msx} f(z,x, \cdot) \bar\rmn^{\genericq} (\rmd z \rmd x) \middle| \mF_s \r] \eqsp,
    \end{aligned}
\end{equation*}
}
meaning that $\bar\rmn^{\genericq}$ is indeed the compensator of the random measure $N_{\mathbf{p}_X}$.
With those notations in hand, we can decompose the CTMC $(X_t)_{t \in [0,\Tf]}$ as

\begin{equation*}
    X_t = X_0 + \sum_{ \substack{0 < s \leq t \\ s \in D_{\mathbf{p}_X} }} (X_s - X_{s-})   = X_0 + \int_0^{t+} \int_{\msx} (x - X_{s-}) N_{\mathbf{p}_X} (\rmd s \rmd x) \eqsp, \quad \text{for } t \in [0,\Tf] \eqsp,
\end{equation*}

{\color{black}
under the assumption $f(s,x,\omega):= x-\omega_{s-} \in F_{\mathbf{p}_X}$, where $F_{\mathbf{p}_X}$ defined in \eqref{def:class_stochastic_integral}. Applying It\^o's formula to this process, for any function $F$ in $\mrC^2_b(\msx)$, we get that
}

\begin{equation*}
\begin{aligned}
    F(X_t) - F(X_0) &= \int_0^{t+} \int_{\msx} \l\{ F(X_{s-} + x - X_{s-}) - F(X_{s-}) \r\} N_{\mathbf{p}_X} (\rmd s \rmd x) \\
    &= \int_0^{t+} \int_{\msx} \l\{ F(x) - F(X_{s-}) \r\} N_{\mathbf{p}_X} (\rmd s \rmd x)
\end{aligned}
\end{equation*}

Expressing the random measure as $N_{\mathbf{p}_X} = \tilde N_{\mathbf{p}_X}^{\genericq} + \bar\rmn^{\genericq}$ and plugging it into the formula above yield
{\color{black}
\begin{equation*}
    \begin{aligned}
         F(X_t) - F(X_0) - \int_0^t \int_{\msx} \l\{ F(x) - F(X_{s-}) \r\} \bar\rmn^{\genericq} (\rmd s \rmd x) = \int_0^{t+} \int_{\msx}  \l\{ F(x) - F(X_{s-}) \r\} \tilde N_{\mathbf{p}_X}^{\genericq} (\rmd s \rmd x) \eqsp.
    \end{aligned}
\end{equation*}
}
 In other words, the process

%  \begin{equation*}
%      \begin{aligned}
%          \l( F(X_t) - F(X_0) - \int_0^t \int_{\msx } \l\{ F(x) - F(X_{s-}) \r\}  \E \l[ q(X_{s-}, \rmd x) \1_{X_{s-} \neq x } \r]   \rmd s \r)_{t \in [0,\Tf]}
%      \end{aligned}
%  \end{equation*}
% is an $(\mF_t)$-martingale. Therefore the following process

 \begin{equation*}
     \begin{aligned}
        \l( F(X_t) - F(X_0) - \int_0^t \int_{\msx } \l\{ F(x) - F(X_{s-}) \r\} \1_{X_{s-} \neq x }  \genericq(X_{s-}, \rmd x)   \rmd s  \r)_{ t \in [0,\Tf] }
     \end{aligned}
 \end{equation*}

is an $(\mF_t)$-local martingale as the compensated measure $\tilde N_{\mathbf{p}_X}^{\genericq}$ was shown to be an $(\mF_t)$-martingale in the previous computation.
% , with $M^F_t$ is also given by $M_t^F = \int_0^{t+} \int_{\msx} f(s,x, \cdot) \tilde N_{\mathbf{p}_X}^{\genericq} (\rmd s \rmd x)$ for $t \in [0,\Tf]$.
It follows that for the CTMC $(X_t)_{t \in [0,\Tf]}$ with generator $\genericq$, It\^o's formula asserts that the process

\begin{equation*}
   \l( F(X_t) - F(X_0) - \int_0^t \genericq F (X_{s-}) \rmd s \r)_{t\in [0,\Tf]}
\end{equation*}
is an $(\mF_t)$-local martingale for any function $F \in \rmC^2_b(\msx)$. This result aligns with Dynkin’s formula.

% \alain{define Poisson point process associated with CTMCs}
% \alain{define compensator etc. for CTMCs with rate $\lambda : \msx \to \rset_+$ and kernel $\rmP$ and generator $\rmQ$}
% \alain{specify the Itô formula to CTMCs}

% \np{?}

% A CTMC can be constructed from point processes, and its jump times themselves form a point process. In words, a CTMC is a process with point process-driven jumps. In particular, if $(X_t)_{t \geq 0}$ is a CTMC on $\msx$ with rate of jump $\lambda$, then the set of jump times-jump sizes $\l\{(T_1, Z_1), (T_2, Z_2), \ldots \r\}$ forms a Poisson point process on $\R_+ \times \msx$ with an intensity $\lambda$, \ie
% \begin{itemize}
%     \item It is a random, countable set of times.
%     \item There are no accumulation points since waiting times are positive.
%     \item Between jumps, the process stays constant.
% \end{itemize}

% Conversely, given a Poisson point process representing jump times $\l\{ T_1, T_2 , \ldots \r\}$ with an intensity $\lambda$, we can construct the corresponding CTMC that jumps at times $ T_1, T_2, \ldots$ with jump rate $\lambda$.

% }

\subsection{Canonical process point of view}
% \alain{change $\Omega$ in $\mathbb{D}_{\Tf}$}
% We now want to give a description of the time reversal process as the solution of an optimal control process like in the continuous setting in \citet{conforti2025kl}.
{\color{black}We now want to give a description of the time reversal process as a controlled process like in the continuous setting in \citet{conforti2025kl}.}
To this purpose, we consider the following canonical setting. 
{\color{black}Let $\mathbb{D}_{\Tf} = \D(\ccint{0,\Tf};\msx)$ be the canonical space of all càdlàg (right-continuous with left limits) paths from $\ccint{0,\Tf}$ to $\msx = \{0,1\}^d$, and let $(\canoX_t)_{t\in \ccint{0,\Tf}}$ be the canonical process defined by 
$$ \canoX_t(\omega) = \omega_t, \quad \text{ for }t\in \ccint{0,\Tf}\, , \, (\omega_t)_{t\in \ccint{0,\Tf}}\in \D_{\Tf} \eqsp.$$ 
We endow this space with the canonical filtration, that is, the right-continuous and complete augmentation of the filtration generated by $(\canoX_t)_{t\in \ccint{0,\Tf}}$, denoted by $(\mathcal{F}_t)_{t \in [0,\Tf]}$.}

% Let $\mathbb{D}_{\Tf} = \D(\ccint{0,\Tf};\msx)$ be the canonical space of all càdlàg (right continuous and left limited) paths from $\ccint{0,\Tf}$ to $\msx= \l\{0,1\r\}^d$, endowed with its canonical filtration $(\mathcal{F}_t)_{t\in [0,\Tf]}$. With abuse of notation, we denote as  $(\canoX_t)_{t\in \ccint{0,\Tf}}$ the canonical process defined by
% \begin{align*}
%     \canoX_t(\omega) = \omega_t, \quad \text{ for }t\in \ccint{0,\Tf}\, , \, (\omega_t)_{t\in \ccint{0,\Tf}}\in \D_{\Tf} \eqsp.
% \end{align*}
For any $\PP \in \mcp(\Dtf)$ we denote by $\PE_{\PP}$ the corresponding expectation. For any $t\in\ccint{0,\Tf}$, we denote by $\PP_t$, the distribution of $\bfX_t$ under $\PP$.
As usual convention, we simply denote the the random variable $\omega \mapsto U(\omega,t,x)$ as $U(t,x)$ {\color{black} for a predictable process $U: \Dtf \times [0,\Tf] \times \msx \to \R$.}

For any random generator $\canoq: \D_{\Tf} \times [0,\Tf] \times \msx^2 \to \R$ and predictable process $U: \D_{\Tf} \times [0,\Tf] \times \msx \to \R$, the new random generator $U\canoq: \D_{\Tf} \times [0,\Tf] \times \msx^2 \to \R$ is defined as $(U\canoq)(\omega,t,x,y) := U(\omega, t, y) \canoq (\omega, t, x, y)$ for $x \neq y$.

For convenience, we denote by $\mathbb{F}(\msx)$ the set of functions from $\msx$ to $\rset$.

We now follow the approach for time reversal used in \citet{leonard2012girsanov} to characterize the distribution of $(\canoX_t)_{t\in\ccint{0,\Tf}}$ as the solution of a martingale problem.

\begin{definition}[Martingale problem]
    Let $\canoq : \Dtf \times \ccint{0,\Tf} \times \msx^2$ be a non-homogeneous predictable random generator.  We say that $\P \in \Pc(\D_{\Tf})$ solves \emph{the Martingale problem} $\MP(\canoq)$ with initial condition $\mu_0$, and write $\P \in \MP(\canoq)$, if under $\PP$, $\canoX_0$ has distribution  $\mu_0$ and
    % the integrability assumption
    % $$\mathbb E_{\P}\int_{\ccint{0,\Tf} \times \msx}(|q|^2\wedge 1)\bar \rmn^{\foq}(\rmd q\rmd t)<\infty$$
    % holds, and if
    the process
    \begin{align*}
    \parenthese{    f(\canoX_t)-f( \canoX_0)-\int_0^t \canoq(s) f(\canoX_{s-}) \rmd s }_{t \in [0,\Tf]}
    \end{align*}
    is an $(\mcf_t)_{t \in [0,\Tf]}$-local martingale for any function $f \in \mathbb{F}(\msx)$, where we denote $\canoq(t) f(x) = \sum_{y\in\msx} \canoq(\omega,t,x,y) f(y)$.
\end{definition}

Note that by It\^o's formula and \Cref{sec:application}, if under $\PP$, $(\canoX_t)_{t\in\ccint{0,\Tf}}$ is a CTMC with generator $\canoq$, then $\PP$ solves $\MP(\canoq)$.
% In addition by \citet[Theorem 4.1]{ethier2009markov}, the following condition is automatically satisfied.

    % \begin{definition}[Condition (U)]
    %     One says that $\P \in \MP(\canoq)$  satisfies the \emph{uniqueness condition (U)} if for any probability measure $\P'$ on $\D_{\Tf}$ such that the distributions of $\canoX_0$ under $\PP$ and $\PP'$ coincide, $\P'\ll \P$ and $\P'\in \MP(\canoq)$, we have $\P=\P'$.
    % \end{definition}

Recall that for the forward process under consideration, the generator $\foq$ is defined as

\begin{equation}\label{eq:generator_canonical}
    \begin{aligned}
         \foq(x,y):&=\begin{cases}
        ~~~\lambda\eqsp, \quad &\text{if } y = \trans(x) \text{ for some } \ell \in \l\{1, \ldots, d\r\} \eqsp,\\
        -\lambda d \eqsp, \quad &\text{if } y = x \eqsp,\\
        ~~~0 \eqsp, \quad &\text{otherwise}\eqsp,
    \end{cases}
    \end{aligned}
\end{equation}
where $\varphi^{(\ell)} : \msx \to \msx$ is the function which flips the $\ell$-th component for $\ell \in \l\{1,\ldots,d\r\}$.
Note that by \eqref{eq:transition_d}, $\gamma^d = \unif(\msx)$ is an invariant distribution for the CTMC with generator $\foq$, \ie, for any measurable function $f$, $\sum_{x\in\msx} \fop_t f(x) \gamma^d(x)  = \sum_{x\in\msx} f(x) \gamma^d (x)$.
In fact the transition density $\fop_t$ is reversible with respect to $\gamma^d$ for any $t\in\ccint{0,\Tf}$ using  \eqref{eq:transition_d}, \ie, for any $x,y\in\msx$ and $t \in\ccint{0,\Tf}$,
\begin{equation*}
  \gamma^d(x) \fop_t(x,y) =   \gamma^d(y) \fop_t(y,x)\eqsp.
\end{equation*}
As a result, we get that for any $0 \leq t_1<\ldots<t_n\leq \Tf$, under $\foR$, where
 $\foR$ denoted the distribution of the CTMC with generator $\foq$ started at stationarity $\gamma^d$,  $(\canoX_{t_1},\ldots,\canoX_{t_n})$ has the same distribution as $(\canoX_{\Tf-t_1},\ldots,\canoX_{\Tf-t_n})$ and therefore the reference path measure $\foR$ is reversible, \ie, $\foR = \baR$, where $\baR$ is the distribution of $(\canoX_{\Tf-t})_{t\in\ccint{0,\Tf}}$ under $\foR$.

Following \Cref{sec:application}, for any $\PP\in\mcp(\Dtf)$ such that $(\canoX_{t})_{t\in\ccint{0,\Tf}}$ is a CTMC with generator $\genericq : [0,\Tf] \times \msx^2 \to \rset$ under $\PP$, we denote by $\mathbf{p}_{\canoX}$ the point process and by $N_{\canoX}$ the random measure associated with $(\canoX_t)_{t\in [0,\Tf]}$.

We also define for any $(\omega_t)_{t\in\ccint{0,\Tf}} \in\Dtf$:
\begin{equation}\label{eq:jump_kernel}
   \bar \rmn^{\genericq}((\omega_t)_{t\in\ccint{0,\Tf}},\rmd t \rmd x) = \rmn^{\genericq}(t,\rmd x)  \rmd t \eqsp, \quad \rmn^{\genericq}(t,\rmd x) = \sum_{y \in \msx} \1_{ \omega_{t-} \neq y  } \genericq_t( \omega_{t-}, y) \updelta_{y}(\rmd x) \eqsp.
 \end{equation}
 By convention, we denote $\bar \rmn^{\genericq}((\canoX_t)_{t\in\ccint{0,\Tf}},\rmd t \rmd x)$ by $   \bar \rmn^{\genericq}(\rmd t \rmd x)$ which corresponds to the compensator of $(\canoX_{t})_{t\in\ccint{0,\Tf}}$ under $\PP$, if under this distribution $(\canoX_{t})_{t\in\ccint{0,\Tf}}$ is a CTMC with generator $\genericq : \msx^2 \to \rset$.
 Consequently, the compensated sum of jumps $\tilde N_{\bfX}^{\genericq} =N_{\canoX}-\bar \rmn^{\genericq}$ forms a martingale under $\PP$.

\subsubsection{Girsanov's theorem}\label{sec:girsanov}

% \alain{replace $\varrho$ by $\varrho$ here}
As proven in \citet[Theorem 2.6-2.9]{leonard2012girsanov}, the relative entropy of two path measures associated to two jump processes can be expressed using the Young function
\begin{equation*}
  \label{eq:def_Young}
   \varrho(a):=\rme^a-a-1\eqsp , \qquad \text{for $a\in \R$} \eqsp.
\end{equation*}
Note that the convex conjugate $\varrho$ is equal to
\begin{equation*}
  \label{eq:conjugate_young}
  \varrho^*(b)=(b+1)\log(b+1)-b \eqsp, \text{ for } b> -1 \eqsp,
\end{equation*}
with the convention $\varrho^*(-1)=1$ and $\varrho^*(b)=\infty$, for $b<-1$. We recall that the functions $\varrho$ and $\varrho^*$ are respectively equivalent to $a^2/2$ and $b^2/2$ near zero.
Finally, we recall the definition of the function
\begin{equation*}
  \label{eq:def_h_appendix}
  h(a) = \varrho^*(a-1) \eqsp,  \qquad a \in\rset \eqsp.
\end{equation*}

%  \np{using $\bar L$ is more convenient?}

\begin{theorem}[Girsanov's theorem]
\label{girsanovtheorem}
\citep[Theorem 2.6-2.9]{leonard2012girsanov}
    Let $\P \in \Pc(\D_{\Tf})$ verifying $\KL(\P|\overrightarrow{R})<\infty$. Then, there exists a unique predictable non-negative process $U:\D_{\Tf} \times \ccint{0,\Tf}\times \msx \to [0,\infty)$
    % , $U(\omega, t,x) = u_t (\omega_{t-},x)$ for some functions $u: [0,\Tf] \times \msx^2 \to [0,\infty)$,
    satisfying the integrability condition
    \begin{align}\label{eq:cond_control}
        \mathbb E_{\P} \parentheseDeux{\int_{\ccint{0,\Tf} \times \msx} \varrho^*(|U(t,x) -1|)   \bar \rmn^{\foq}(\rmd t \rmd x) } <\infty \eqsp,
    \end{align}
    and $\P \in \MP(U \foq)$ where $(U\foq) (\omega,t,x,y)= U(\omega, t,y)  \foq(x,y)$ for $x \neq y$. Moreover, we have that
    \begin{equation*}
         \dfrac{\rmd\P}{\rmd\overrightarrow{R}} ((\canoX_t)_{t\in\ccint{0,\Tf}})=\dfrac{\rmd\P_0}{\rmd\overrightarrow{R}_0}(\canoX_0)\exp\l( \int_{\ccint{0,\Tf} \times \msx} \log U(t,x) \tilde N_{\canoX}^{\foq} (\rmd t \rmd x) -\int_{\ccint{0,\Tf} \times \msx}\varrho(\log U(t,x) ) \bar \rmn^{\foq} (\rmd t \rmd x)  \r) \eqsp,
    \end{equation*}
    % where $\log u \rmd \tilde \bfmu^{\foq}$ represents the stochastic integral $\int_{[0,\Tf] \times \msx} \log u \rmd \tilde \bfmu^{\foq}$,
    and the $\KL$ divergence reads as
    \begin{align*}
        \KL(\P|\overrightarrow{R})=\KL(\P_0|\overrightarrow{R}_0)+\mathbb E_{\P} \parentheseDeux{\int_{\ccint{0,\Tf} \times \msx} h(U(t, x)) \bar \rmn^{\foq} ( \rmd t \rmd x) }\eqsp.
    \end{align*}
    % Plugging in the formula of $\bar \rmn^{\foq}$ and $\foq$ yields
    % \begin{align*}
    %     \KL(\P|\overrightarrow{R})=\KL(\P_0|\overrightarrow{R}_0)+ \lambda \mathbb E_{\P} \parentheseDeux{\int_{\ccint{0,\Tf} } \sum_{\ell =1}^d h(u_t(\canoX_{t}, \trans(\canoX_{t}))) \rmd t }\eqsp.
    % \end{align*}
\end{theorem}

% \alain{adapt the following notation with the one that we now introduce before}

The proof of  \Cref{girsanovtheorem} is given for completeness and is  based on several technical lemmas, which we introduce in the following framework. Let $\PP^{\genericq} \in \mathcal{P}(\mathbb{D}_{\Tf})$ such that $(\canoX_t)_{t \in [0,\Tf]}$ is a CTMC with generator $\genericq: [0,\Tf] \times \msx^2 \to \R$, \ie, $\sum_{y \in \msx}\genericq (t, x,y) = 0$ for any $(t,x) \in [0,\Tf] \times \msx$, and denote $\mathbf{p}_{\canoX}$, $N_{\canoX}$, $\bar \rmn^{\genericq}$, $\rmn^{\genericq}$  and
$\tilde N_{\bfX}^{\genericq}$ as in previous Section.

% $$\bar \rmn^{\genericq}(\rmd t \rmd x) = \1_{ \canoX_{t-} \neq x } {\genericq}(\canoX_{t-}, \rmd x) \rmd t =: \rmn^{{\genericq}}( t, \rmd x) \rmd t \eqsp,$$

% which also serves as the compensator of the random Poisson measure corresponding to the generator ${\genericq}$.
Let $\chi$ be a $\R$-valued predictable process on $\D_{\Tf} \times \ccint{0,\Tf}\times \msx$ such that $\int_{[0,\Tf] \times \msx} \varrho(\chi_t(\omega,x) )\bar \rmn^{\genericq}(\rmd t \rmd x) <\infty$, $\PP^{\genericq}$-a.s. Define
\begin{equation}
  \label{eq:def_process_Z_chi}
  \Z_t^{\chi}:=\exp\l(\int_{[0,t] \times \msx} \chi_s(\omega,x) \tilde N_{\canoX}^{\genericq} (\rmd s \rmd x) -\int_{\ccint{0,t} \times \msx} \varrho(\chi_s(\omega,x)) \bar \rmn^{\genericq} (\rmd s \rmd x) \r) \eqsp, \quad \text{for } t\in \ccint{0,\Tf} \eqsp.
\end{equation}
 % where $\int_{[0,t] \times \msx} \chi_s(\omega,x)  \tilde N_{\canoX}^{\genericq} (\rmd s \rmd x)$ is the $\PP^{\genericq}$-stochastic integral w.r.t. the compensated sum of jumps $\tilde N^{{\genericq}}_{\canoX} = N_{\canoX} - \bar \rmn^{\genericq}$.

%  \begin{equation}\label{eq:compensated_measure}
%      \tilde N^{{\genericq}}_{\canoX}:=\bfmu^{\canoX}-\bar K= \sum_{t\in \ccint{0,\Tf}: \varphi^{\canoX_{t-}, \canoX_t} \neq \Id } \updelta_{(t,\varphi^{\canoX_{t-}, \canoX_t} )}-\bar \rmn^{\genericq}\eqsp.
%  \end{equation}
% Note that the jump measure $\bfmu^{\canoX}$ is a Poisson random measure, and $\bar K$ is its intensity measure.

\begin{lemma}\label{Zmartingale}
 Let $\PP^{\genericq} \in \mathcal{P}(\mathbb{D}_{\Tf})$ such that $(\canoX_t)_{t \in [0,\Tf]}$ is a CTMC with generator $\genericq: [0,\Tf] \times \msx^2 \to \R$. Assume that $\chi$ is a $\R$-valued predictable process on $\D_{\Tf} \times [0,\Tf] \times \msx$ satisfying the integrability condition
    \begin{align}\label{eq:h}
        \mathbb E_{\PP^{\genericq}} \int_{\ccint{0,\Tf} \times \msx}\varrho(\chi_s(\omega,x)) \bar \rmn^{\genericq} (\rmd s \rmd x)
        <\infty \eqsp.
    \end{align}
    Then $\int_{[0,t] \times \msx} \chi_s(\omega,x)  \tilde N_{\canoX}^{\genericq} (\rmd s \rmd x)$ is a local $\PP^{\genericq}$-martingale. Moreover, the process $(\Z_t^{\chi})_{t\in\ccint{0,\Tf}}$ defined in \eqref{eq:def_process_Z_chi} is a local $\PP^{\genericq}$-martingale and a positive $\PP^{\genericq}$-supermartingale, which satisfies
    $$\rmd \Z_t^{\chi} =\Z_{t-}^\chi \int_{ \msx}
    (\rme^{\chi_t(\omega,x)}-1) \tilde N^{{\genericq}}_{\canoX} (\rmd t \rmd x) \eqsp.$$
\end{lemma}
\begin{proof}[\textbf{Proof of Lemma \ref{Zmartingale}}]
  By \Cref{theo:inte_sto},
the process
% \alain{do not use $\rmd$ but $\int_{\ccint{0,t} \times \msx} $}
    $$M_t^\chi:=\int_{\ccint{0,t} \times \msx} \chi_s(\omega,x) \tilde N^{{\genericq}}_{\canoX} (\rmd s \rmd x)$$
    is a local $\PP^{\genericq}$-martingale. Denote $\Y_t^\chi:=M^\chi_t -\int_{\ccint{0,t}}\beta_s \rmd s$ with $\beta_s:=\int_{ \msx}\varrho(\chi_s(\omega,x))\rmn^{{\genericq}}(s, \rmd x)$.
    Applying It\^o's formula provided in \cref{theo:ito} for the jump process $(\Y_t^\chi)_{t\in \ccint{0,\Tf}}$ and for a function $f$ of class $\rmC^2(\R)$ implies %\alain{always use $\rmC$ and $C$ for the regularity class} on $\R$ implies
    \begin{align*}
        \rmd f(\Y_t^\chi)&=\l[\int_{ \msx}\l[f(\Y_{t-}^\chi+\chi_t(\omega,x))-f(\Y_{t-}^\chi) - f'(\Y_{t-}^\chi) \cdot\chi_t(\omega,x) \r]\rmn^{{\genericq}}(t, \rmd x)\r]\rmd t \\
        &\hspace{6cm}+ f'(\Y_{t-}^\chi)\cdot \beta_t\rmd t+\rmd M_t^f \eqsp, \quad \PP^{\genericq}\text{-a.s.} \eqsp,
    \end{align*}
    where $M_t$ is given by

    \begin{equation*}
         M_t^f = \int_{[0,t] \times \msx} \l[f(Y^{\chi}_{s-} + \chi_s(\omega,x)) - f(Y^\chi_{s-}) \r] \tilde N_{\canoX}^{\genericq} (\rmd s \rmd x)
    \end{equation*}
    is a local $\PP^{\genericq}$-martingale, since the integrand is $\R$-valued predictable process and $\tilde N_{\canoX}^{\genericq}$ forms a martingale under $\PP^{\genericq}$.
    % \alain{explicit the martingale and justify why it is a martingale}.
    Using this formula for $f(y)=\rme^y$, we obtain
    \begin{align*}
        \rmd \rme^{\Y_t^\chi}&=\left[\int_{\msx} (\rme^{\Y_{t-}^\chi+\chi_t(\omega,x)}-\rme^{\Y_{t-}^\chi} - \rme^{\Y_{t-}^\chi} \cdot\chi_t(\omega,x) )\rmn^{{\genericq}}(t, \rmd x)  \right]\rmd t-\rme^{\Y_{t-}^\chi} \beta_t \rmd t+\rmd M_t^{\exp} \\
        &=\rme^{\Y_{t-}^\chi}\beta_t \rmd t-\rme^{\Y_{t-}^\chi}\beta_t\rmd t+\rmd M_t^{\exp}=\rmd M_t^{\exp} \eqsp, \quad \PP^{\genericq}\text{-a.s.} \eqsp.
    \end{align*}
    %  with $M^f$ a local $\PP^{\genericq}-$martingale.
     This implies $\Z_t^{\chi}=\rme^{\Y_t^\chi}$ is a local $\PP^{\genericq}$-martingale and, since $\Z_t^{\chi}$ is positive, we can conclude that $\Z_t^{\chi}$ is a $\PP^{\genericq}$-supermartingale thanks to Fatou's lemma.
     % \alain{to justify}.
     In addition, we have
     % It\^o's formula provides the formulation of the martingale term $\rmd M_t$ above as follows
      \begin{align*}
        \rmd M_t^{\exp} &=\int_{ \msx} \l(\rme^{Y^{\chi}_{t-} + \chi_t(\omega,x)} - \rme^{Y^\chi_{t-}} \r) \tilde N_{\canoX}^{\genericq} (\rmd t \rmd x)
        =\rme^{\Y_{t-}^\chi} \int_{\msx} (\rme^{\chi_t(\omega,x) }-1 ) \tilde N^{{\genericq}}_{\canoX} (\rmd t \rmd x) \eqsp,
    \end{align*}
    % \begin{align*}
    %     \rmd \rme^{\Y_t^\chi}&=\rme^{\Y_{t-}^\chi}[\varrho(\Delta Y_t)+\rmd Y_t]\\
    %     &=\rme^{\Y_{t-}^\chi}\left[\varrho(\chi)\rmd \rmd\tilde\bfmu^{{\genericq}}_t+\l(\int_{\msx}\varrho(\chi_t(\omega,x)K^{{\genericq}}(t, \rmd x) \r)\rmd t-\beta_t \rmd t+\chi\rmd \rmd\tilde\bfmu_t^{{\genericq}}\right]\\
    %     &=\rme^{\Y_{t-}^\chi}\left[\varrho(\chi)\rmd \rmd\tilde\bfmu^{{\genericq}}_t+\chi\rmd \rmd\tilde\bfmu_t^{{\genericq}}\right]\\
    %     &=\rme^{\Y_{t-}^\chi}\left[(\rme^{\chi}-1 )\rmd \rmd\tilde\bfmu^{{\genericq}}_t \right] \eqsp,
    % \end{align*}
    \ie, $\rmd \Z_t^{\chi} = \Z_{t-}^\chi \int_{\msx} (\rme^{\chi_t(\omega,x) }-1 ) \tilde N^{{\genericq}}_{\canoX} (\rmd t \rmd x) $ and we conclude the proof of \Cref{Zmartingale}.
\end{proof}

We now define the stopping time for $k, j \geq 1$,
\begin{equation}
  \label{eq:def_sigma_jk}
  \sigma^k_j:=\inf \left\{t\in \ccint{0,\Tf}\, :\, \int_{\ccint{0,t} \times \msx}\varrho(\chi_s(\omega,x))\bar \rmn^{\genericq}(\rmd s \rmd x)\geq k \text{ or }  \chi_t(\omega,\omega_t) \notin [-j,k] \right\}\eqsp.
\end{equation}

For $\P \in \mathcal{P}(\D_{\Tf})$, let us denote $\P^{\sigma^k_j}:= \canoX^{\sigma^k_j}_\# \P$ the law under $\P$ of the process $\canoX^{\sigma^k_j}$ which is stopped at the stopping time $\sigma^k_j$.

\begin{lemma}\label{importantlemma}
    Let $\PP^{\genericq} \in \mathcal{P}(\mathbb{D}_{\Tf})$ such that $(\canoX_t)_{t \in [0,\Tf]}$ is a CTMC with generator $\genericq: [0,\Tf] \times \msx^2 \to \R$ and let $\chi$ be a $\R$-valued predictable process on $\D_{\Tf} \times [0,\Tf] \times \msx$ satisfying the integrability condition \eqref{eq:h}. 
    % \alain{complete}
    Let $(\Z_t^{\chi})_{t\in\ccint{0,\Tf}}$ be defined in \eqref{eq:def_process_Z_chi} and $\sigma^k_j$ be defined in \eqref{eq:def_sigma_jk}. For all $j, k\geq 1$, the process $(Z^{\sigma^k_j}_t)_{t\in\ccint{0,\Tf}}$ defined as $\Z^{\sigma^k_j}_t:= \Z_t^{\chi^k_j}$, is a genuine $\PP^{\genericq}$-martingale with $\chi^k_j = \1_{[0,\sigma^k_j]} \chi$, and the measure $\QQ^k_j$ defined for any measurable function $F : \D_{\Tf} \to \rset_+$ by
    $$\PE_{\QQ^k_j}[F((\bfX_{t})_{t\in\ccint{0,\Tf}})]=\PE_{\PP^{\genericq}}[F((\bfX_{t\wedge \sigma_j^k})_{t\in\ccint{0,\Tf}})\mathrm{Z}_{\Tf}^{\sigma^k_j}] \eqsp, \quad \ie \eqsp, \quad \QQ_j^k = \mathrm{Z}^{\sigma^k_j}_{\Tf} (\P^{\genericq})^{\sigma^k_j}$$
    is a probability measure on $\D_{\Tf}$ which satisfies
    $$\QQ^k_j \in \MP(\1_{\l[0,\sigma^k_j\r]}\rme^\chi {\genericq}) \eqsp.$$
\end{lemma}
% \np{I changed $Z_{\Tf}$ into $Z_t$}
\begin{proof}[\textbf{Proof of Lemma \ref{importantlemma}}]
Fix $j, k \geq1$. We have
    $$\Z^{\sigma^k_j}_{t}=\exp \l( \int_{[0,t] \times \msx} \chi^k_j \rmd  \tilde N^{{\genericq}}_{\canoX} -\int_{\ccint{0,t} \times \msx} \varrho(\chi^k_j)\rmd\bar \rmn^{\genericq}\r) \eqsp,$$
    where $\chi^k_j=\1_{\l[0,\sigma^k_j\r]}\chi$ is predictable since $\chi$ is predictable and $\1_{\l[0,\sigma^k_j\r]}$ is left continuous. 
    % \alain{change $\chi$ in $\tilde{\chi}$ and $\Z$ in $\tilde{\Z}$}
    % , and $\chi^k_j \rmd \tilde N^{{\genericq}}_{\canoX}:=\int_{\ccint{0,\Tf}} \sum_{\msx} \chi(q)\rmd\tilde N^{{\genericq}}_{\canoX}$
     For simplicity, we drop the subscripts and superscripts and write $\tilde\chi=\chi^k_j$ and $\tilde\Z_t=\Z^{\sigma^k_j}_{t}$ for the rest of the proof. From the definition of $\sigma^k_j$, we obtain that $\PP^{\genericq}$ \as,
    \begin{align}\label{inequality}
        \int_{\ccint{0,t} \times \msx} \varrho(\tilde\chi_s) \rmd\bar \rmn^{\genericq}\leq k \eqsp, \quad \text{ and  } \tilde\chi_t(\omega,\bfX_t) \in [-j,k] \eqsp, \text{ for any } t \in \ccint{0,\Tf} \eqsp.
    \end{align}
    First, we prove that $(\tilde \Z_t)$ is a $\PP^{\genericq}$-martingale. %\alain{as such $\tilde\Z$ is a random variable and not a process}
    From \cref{Zmartingale}, $(\tilde \Z_t)$ is a local martingale. Therefore, it is enough to show that for $t \in [0,\Tf]$,
    \begin{align*}
        \mathbb E_{\PP^{\genericq}}[\tilde\Z_t^p] <\infty \eqsp, \quad \text{for some }p>1 \eqsp.
    \end{align*}
    {\color{black}For $p >1$}, we have
    \begin{align*}
        \tilde \Z^p_t=\exp\l( p\int_{[0,t] \times \msx} \tilde \chi_s \rmd \tilde N^{{\genericq}}_{\canoX}-p\int_{\ccint{0,t} \times \msx}\varrho(\tilde \chi_s)\rmd\bar \rmn^{\genericq}\r)\leq \exp\l(p\int_{[0,t] \times \msx} \tilde\chi_s \rmd \tilde N_{\canoX}^{\genericq}\r) \eqsp,
    \end{align*}
    and
    \begin{align*}
       \exp\l(p\int_{[0,t] \times \msx} \tilde \chi_s \rmd \tilde N_{\canoX}^{\genericq}-\int_{\ccint{0,t} \times \msx} \varrho(p\tilde \chi_s)\rmd\bar \rmn^{\genericq}\r)\geq \left. \exp\l(p\int_{[0,t] \times \msx} \tilde \chi_s \rmd \tilde N_{\canoX}^{\genericq}\r)\middle /C(k,p,t) \right. \eqsp,
    \end{align*}
    {\color{black}for some finite deterministic constant $0<C(k,p,t)<\infty$. Indeed, $\PP^{\genericq}$-\as, for any $s \in \ccint{0,\Tf}$, it holds that  $\varrho(p\tilde\chi_s)\leq \rme^{k(p-1)}(\varrho(\tilde \chi_s)+k+1)$ since  $ \tilde\chi_s \leq k$ and $p>1$. It yields that $\PP^{\genericq}$-\as, it holds
    \begin{align*}
        \exp\l(\int_{\ccint{0,t} \times \msx} \varrho(p\tilde \chi_s)\rmd\bar \rmn^{\genericq}\r)&\leq \exp\l(\rme^{k(p-1)}\int_{\ccint{0,t} \times \msx}  (\varrho(\tilde\chi_s)+k+1) \rmd\bar \rmn^{\genericq}\r)\\
        &\overset{\eqref{inequality}}{\leq} \exp \l(k\rme^{k(p-1)}+(k+1)\rme^{k(p-1)}\int_{[0,t] \times \msx}  1 \rmd \bar \rmn^{\genericq} \r)\leq C(k,p,t) <\infty \eqsp,
    \end{align*}
    where the last inequality follows from the formula of $\bar \rmn^{\genericq}$ given in \eqref{eq:jump_kernel} and the fact that $\msx$ is finite.}
    This implies $\PP^{\genericq}$-a.s.,
    \begin{align}\label{eq:z}
        \tilde \Z^p_t \leq \exp\l(p \int_{[0,t] \times \msx} \tilde\chi_s \rmd \tilde N^{{\genericq}}_{\canoX}\r)\leq C(k,p,t) \exp\l(p\int_{[0,t] \times \msx} \tilde \chi_s \rmd \tilde N_{\canoX}^{\genericq}-\int_{\ccint{0,t} \times \msx} \varrho(p \tilde \chi_s)\rmd\bar \rmn^{\genericq}\r) \eqsp.
    \end{align}
    On the other hand, applying Lemma \ref{Zmartingale} for $p\tilde \chi$ yields that $\exp\l(p\int_{[0,t] \times \msx} \tilde \chi_s \rmd \tilde N_{\canoX}^{\genericq} -\int_{\ccint{0,t} \times \msx} \varrho(p\tilde \chi_s)\rmd\bar \rmn^{\genericq}\r)$ is a $\PP^{\genericq}$-supermartingale, and we get
    $$\mathbb E_{\PP^{\genericq}} \l[\exp\l(p\int_{[0,t] \times \msx} \tilde \chi_s \rmd \tilde N_{\canoX}^{\genericq}-\int_{\ccint{0,t} \times \msx} \varrho(p \tilde \chi_s)\rmd\bar \rmn^{\genericq}\r) \r] \leq \E_{\PP^{\genericq}}\l[\exp\l(p\int_{[0,0] \times \msx} \tilde \chi_s \rmd \tilde N_{\canoX}^{\genericq}-\int_{[0,0] \times \msx} \varrho(p\tilde \chi_s)\rmd\bar \rmn^{\genericq}\r) \r]=1 \eqsp.$$
    Plugging this estimate into \eqref{eq:z} gives
    \begin{align*}
       \E_{\PP^{\genericq}} [\tilde \Z^p_t] \leq C(k,p,t)<\infty \eqsp, \quad \text{for any } t \in [0,\Tf] \eqsp,
    \end{align*}
    which allow us to conclude that $(\tilde \Z_{\Tf})$ is a $\PP^{\genericq}$-martingale \citep[see, $e.g.$,][]{zitkovic2015uniform}. Thereby {\color{black} $\mathbb E_{\PP^{\genericq}}[\tilde \Z_{t}]=\E_{\PP^{\genericq}}[\tilde \Z_0]=1$ for any $t \in [0,\Tf]$} and it follows  $\QQ^k_j$ is a probability measure on $\D_{\Tf}$.

    Now, we show the second claim of \cref{importantlemma}:
    % \alain{change $Q$ in $\QQ$}
    $$\QQ^k_j\in \MP(\1_{\l[0,\sigma^k_j\r]}\rme^\chi {\genericq}) \eqsp.$$
    Let $\tau$ be a finitely valued stopping time which will be specified later, and {\color{black}for any function $f \in\mathbb{F}(\msx)$,}
    % $f$, we denote $g(t,\Delta \canoX_t):=f(\canoX_{t-}+\Delta \canoX_t)-f(\canoX_{t-})$ with $g(t,0)=0$for all $t\in \ccint{0,\Tf}$.
    we denote
    \begin{align*}
        \mathrm{F}_t=\sum_{0 \leq s\leq t }\{f(\bfX_s)-f(\bfX_{s-})\} = \int_0^{t+}\int_{\msx} \{ f(x) - f(\bfX_{s-}) \} N_{\bfX}(\rmd s \rmd x) \eqsp, \quad \text{for } t \in [0,\Tf] \eqsp.
    \end{align*}
   Recall that by \Cref{Zmartingale}, the martingale $(\tilde \Z_t)$ satisfies the followings for $\PP^{\genericq}$-a.s., 
    % \alain{1. express $F_t$ as a integral wrt to $N_t^X$. 2. define the square bracket in section F.1.2 and justify why the bracket between $N_p$ and $\tilde{N}_p$ is $\hat{N}_p$. 3. use this result to justify the expression of the bracket between Z and F}
    % \alain{modify the proof so that, we do not introduce $g$}
    \begin{align*}
       \rmd \tilde \Z_t=\1_{\l[0,\sigma^k_j\r]}(t)\tilde \Z_{t-}\int_{\msx} (\rme^{\tilde\chi_t (\omega,x)}-1)  \tilde N^{{\genericq}}_{\canoX} (\rmd t \rmd x) \eqsp.
       % \quad \text{and } \quad  \rmd \mathrm{F}_t=%\1_{[0,\tau]} (t) 
       % \int_{\msx} \{ f(x) - f(\canoX_{t-}) \} N_{\canoX} (\rmd t \rmd x) \eqsp, 
    \end{align*}
    {\color{black}
    We have
     \begin{align}\label{eq:MP_1}
       &\hspace{0.5cm} \mathbb E_{\QQ^k_j} \l[ \sum_{0 \leq s\leq t \wedge \tau} \{ f(\canoX_s)  - f(\canoX_{s-}) \} \middle|\mF_{0} \r] = \E_{\QQ^k_j} \l[ \mathrm{F}_{t \wedge \tau} |\Fc_0 \r]
       =\mathbb E_{\PP^{\genericq}}\l[\tilde\Z_{t \wedge \tau \wedge \sigma^k_j}\mathrm{F}_{t \wedge \tau \wedge \sigma^k_j}-\tilde\Z_{0}\mathrm{F}_{0} \middle| \mF_{0} \r] \eqsp.
        % &=\mathbb E_{\P^{\genericq}} \l[\int_{[0,t \wedge \tau \wedge \sigma^k_j]} (\mathrm{F}_{s-}\rmd \tilde\Z_s+ \tilde\Z_{s-}\rmd \mathrm{F}_s+ \rmd[\tilde\Z,\mathrm{F}]_s)\middle| \mF_{0} \r]
        % \\
        % &=\mathbb E_{\P^{\genericq}} \left[
        %     \int_{[0,t \wedge \tau \wedge \sigma^k_j]} \mathrm{F}_{s-}\rmd \tilde\Z_s + \int_{[0,t \wedge \tau \wedge \sigma^k_j] \times \msx} \tilde\Z_{s-} (1+ \rme^{\tilde\chi_s(\omega,\omega_s)}-1) \{ f(x) - f(\canoX_{s-}) \} N_{\canoX} (\rmd s \rmd x) \middle| \mF_{0}
        % \right]
        % \\
        % &=\mathbb E_{\P^{\genericq}} \l[\int_{[0,t \wedge \tau \wedge \sigma^k_j] \times \msx} \tilde\Z_{s-}\rme^{\tilde\chi_s(\omega, x)} \{ f(x) - f(\canoX_{s-}) \} N_{\canoX} (\rmd s \rmd x) \middle| \mF_{0} \r] \eqsp,
    \end{align}
    Let us denote the two-dimensional process $(\mri_t)_{t \in [0,\Tf]} = (\mri^1_t, \mri^2_t)_{t \in [0,\Tf]}$, where 
    \begin{equation*}
        \mri^1_t := \mathrm{F}_t = \int_{[0,t] \times \msx} \underbrace{\l[ f(x) - f(\bfX_{s-}) \r]}_{=:v^1(s,x,\omega)} \tilde N^{\genericq}_{\bfX} (\rmd s \rmd x) + \underbrace{\int_{[0,t] \times \msx} {\l[ f(x) - f(\bfX_{s-}) \r]} \bar \rmn^{\genericq}(\rmd s \rmd x)}_{=:A^1_t}\eqsp,
    \end{equation*}
    and 
    \begin{equation*}
        \mri^2_t = \tilde \Z_t = \int_{[0,t] \times \msx} \underbrace{  \tilde \Z_{s-} (\rme^{\tilde\chi_s (\omega,x)}-1) }_{:=v^2 (s,x,\omega)}  \tilde N^{\genericq}_{\canoX} (\rmd s \rmd x) \eqsp.
    \end{equation*}
    % We treat $(A^2)_t$ as the continuous $(\Fc_t)$-adapted process with finite variation given  
     Let $v = (v^1, v^2)$ and apply Itô's formula (see \Cref{theo:ito}) to the process $(\mri_t)_{t\in [0,\Tf]}$ using the function given by the product of the coordinates, treating $(A^1_t)_{t \in [0, \Tf]}$ as a continuous, finite variation process adapted to the filtration $(\Fc_t)$,

    \begin{align*}
       &\hspace{0.5cm} \E_{\PP^{\genericq}} \l[\mathrm{F}_{t \wedge \tau \wedge \sigma^k_j} \tilde\Z_{t \wedge \tau \wedge \sigma^k_j} - \mathrm{F}_0 \tilde \Z_0 \middle| \Fc_0 \r] = \E_{\PP^{\genericq}} \l[\mri^1_{t \wedge \tau \wedge \sigma^k_j}\mri^2_{t \wedge \tau \wedge \sigma^k_j} - \mri^1_0 \mri^2_0 \middle| \Fc_0 \r]\\
        &=\E_{\PP^{\genericq}} \l[ \int_{[0,t \wedge \tau \wedge \sigma^k_j] \times \msx} \l\{(\mri^1_{s-}+v^1(s,x,\omega))(\mri^2_{s-}+v^2(s,x,\omega)) - \mri^1_{s-}\mri^2_{s-} \r\} \tilde N^{\genericq}_{\bfX}(\rmd s \rmd x) \middle| \Fc_0 \r] \\
        &+ \E_{\PP^{\genericq}} \l[\int_{[0,t\wedge \tau \wedge \sigma^k_j]\times \msx} \l\{(\mri^1_s+v^1(s,x,\omega))(\mri^2_s+v^2(s,x,\omega)) - \mri^1_s \mri^2_s - \mri^2_s v^1(s,x,\omega) - \mri^1_s v^2(s,x,\omega) \r\} \bar\rmn^{\genericq}(\rmd s\rmd x) \middle| \Fc_0 \r] \\
        &\hspace{4cm}+ \E_{\PP^{\genericq}} \l[\int_{[0,t\wedge \tau \wedge \sigma^k_j] \times \msx} \mri^2_s v^1(s,x,\omega) \bar\rmn^{\genericq}(\rmd s\rmd x)  \middle| \Fc_0\r] \\
        &= \E_{\PP^{\genericq}} \l[ \int_{[0,t\wedge \tau \wedge \sigma^k_j] \times \msx} \l\{ \mri^2_s v^1(s,x,\omega) + v^1(s,x,\omega)v^2(s,x,\omega) \r\} \bar\rmn^{\genericq} (\rmd s \rmd x) \middle| \Fc_0\r] \quad \text{(as $\tilde N_{\bfX}^{\genericq}$ is a $\PP^{\genericq}$-martingale)} \\
        &= \E_{\PP^{\genericq}} \l[ \int_{[0,t\wedge \tau \wedge \sigma^k_j] \times \msx} \l\{ \tilde \Z_s v^1(s,x,\omega) + v^1(s,x,\omega)\tilde \Z_{s-} \l( \rme^{\tilde \chi_s(\omega,x)}-1 \r) \r\} \bar\rmn^{\genericq} (\rmd s \rmd x) \middle| \Fc_0\r] \eqsp,
    \end{align*}
    where we reduce the stochastic integral w.r.t. $\tilde N_{\bfX}^{\genericq}$ as it is a local $\PP^{\genericq}$-martingale, since the integrand is $\R$-valued predictable process and $\tilde N_{\canoX}^{\genericq}$ forms a martingale under $\PP^{\genericq}$.
    Since $\tilde \Z_s = \tilde \Z_{s-}$ for Lebesgue almost all $s \in [0,t \wedge \tau \wedge \sigma^k_j]$ \citep[Proposition 2.1]{mozumder2009some} and $\bar \rmn^{\genericq}$ is atomless in time, the calculation follows

    \begin{align}\label{eq:MP_2}
         \E_{\PP^{\genericq}} \l[\mathrm{F}_{t \wedge \tau \wedge \sigma^k_j} \tilde\Z_{t \wedge \tau \wedge \sigma^k_j} - \mathrm{F}_0 \tilde \Z_0 \middle| \Fc_0 \r] &= \E_{\PP^{\genericq}} \l[ \int_{[0,t\wedge \tau \wedge \sigma^k_j] \times \msx}  \tilde \Z_{s} v^1(s,x,\omega) \l(1 + \rme^{\tilde \chi_s(\omega,x)}-1 \r)  \bar\rmn^{\genericq} (\rmd s \rmd x) \middle| \Fc_0\r] \nonumber \\
         &= \E_{\PP^{\genericq}} \l[ \int_{[0,t\wedge \tau \wedge \sigma^k_j] \times \msx}  \tilde \Z_{s} v^1(s,x,\omega) \rme^{\tilde \chi_s(\omega,x)}  \bar\rmn^{\genericq} (\rmd s \rmd x) \middle| \Fc_0\r] \eqsp.
         % &=\mathbb E_{\P^{\genericq}} \l[\int_{[0,t \wedge \tau \wedge \sigma^k_j] \times \msx}\tilde\Z_{s} \rmd \mathrm{G}_s \middle| \mF_{0} \r] \eqsp,
    \end{align}
    
%    where the first integral is reduced as $(\tilde\Z_s)$ is a $\P^{\genericq}$-martingale (see \cref{importantlemma}). Now using the fact that $\P^{\genericq} \in \MP({\genericq})$, the calculation follows
% {\color{black}
%     \begin{equation}\label{eq:MP}
%     \begin{aligned}
%        % &\hspace{0.5cm}
%        \mathbb E_{\QQ^k_j} \l[ \sum_{0 \leq s\leq t \wedge \tau }\{ f(\canoX_s) - f(\canoX_{s-}) \} \middle|\mF_{0} \r] %\\
%         &=\mathbb E_{\P^{\genericq}} \l[\int_{[0,t \wedge \tau \wedge \sigma^k_j] \times \msx}\tilde\Z_{s-}\rme^{\tilde\chi_s(\omega, x)} \{ f(x) - f(\canoX_{s-}) \} \bar \rmn^{\genericq}(\rmd s \rmd x) \middle| \mF_{0} \r]
%     \end{aligned}
%     \end{equation}

Denote $\mathrm{G}_t:= \int_{[0,t] \times \msx}  v^1(s,x,\omega) \rme^{\tilde \chi_s(\omega, x)}\bar \rmn^{\genericq}(\rmd s \rmd x) $. Applying It\^o's formula for the process $(\mathrm{G}_t, \tilde \Z_t)$ analogously as argued before, we obtain that 

\begin{align}\label{eq:MP_3}
   &\hspace{0.5cm} \E_{\PP^{\genericq}} \l[ \tilde \Z_{t \wedge \tau \wedge \sigma^k_j} \mathrm{G}_{t \wedge \tau \wedge \sigma^k_j} - \tilde \Z_0 \mathrm{G}_0 \middle| \Fc_0\r] \nonumber\\
    &= \E_{\PP^{\genericq}} \l[ \int_{[0,t \wedge \tau \wedge \sigma^k_j]\times \msx} \l\{\mathrm{G}_{s-}(\tilde \Z_{s-}+v^2(s,x,\omega)) - \mathrm{G}_{s-}\tilde \Z_{s-} \r\} \tilde N^{\genericq}_{\bfX} (\rmd s \rmd x) \middle| \Fc_0 \r] \nonumber \\
    &+ \E_{\PP^{\genericq}} \l[\int_{[0,t \wedge \tau \wedge \sigma^k_j] \times \msx} \l\{ (\mathrm{G}_s (\tilde \Z_s+ v^2(s,x,\omega)) - \mathrm{G}_s \tilde \Z_s - \mathrm{G}_s v^2(s,x,\omega)  \r\} \bar \rmn^{\genericq} (\rmd s \rmd x) \middle| \Fc_0 \r] \nonumber\\
    &+ \E_{\PP^{\genericq}} \l[\int_{[0,t \wedge \tau \wedge \sigma^k_j] \times \msx} \tilde \Z_s v^1(s,x,\omega) \rme^{\tilde\chi_s(\omega,x)} \bar \rmn^{\genericq}(\rmd s \rmd x) \middle| \Fc_0 \r] \nonumber \\
    &= \E_{\PP^{\genericq}} \l[\int_{[0,t \wedge \tau \wedge \sigma^k_j] \times \msx} \tilde \Z_s v^1(s,x,\omega) \rme^{\tilde\chi_s(\omega,x)} \bar \rmn^{\genericq}(\rmd s \rmd x) \middle| \Fc_0 \r] \eqsp,%\quad \text{(as $\tilde N_{\bfX}^{\genericq}$ is a $\PP^{\genericq}$-martingale)} \eqsp,
\end{align}
as the stochastic integral w.r.t. $\tilde N_{\bfX}^{\genericq}$ is a local $\PP^{\genericq}$-martingale, since the integrand is $\R$-valued predictable process and $\tilde N_{\canoX}^{\genericq}$ forms a martingale under $\PP^{\genericq}$.
Combining \eqref{eq:MP_1}, \eqref{eq:MP_2} and \eqref{eq:MP_3} implies

\begin{align*}
    \mathbb E_{\QQ^k_j} \l[ \sum_{0 \leq s\leq t \wedge \tau} \{ f(\canoX_s)  - f(\canoX_{s-}) \} \middle|\mF_{0} \r]  &= \E_{\PP^{\genericq}} \l[ \tilde \Z_{t \wedge \tau \wedge \sigma^k_j} \mathrm{G}_{t \wedge \tau \wedge \sigma^k_j}\middle| \Fc_0\r] \quad \text{(since $\mathrm{G}_0 =0$)}\\
    & = \E_{\QQ^k_j} \l[ \mathrm{G}_{t \wedge \tau } \middle| \Fc_0\r] \\
     &=\mathbb E_{\QQ^k_j}\l[\int_{[0,t \wedge \tau ] \times \msx}\{ f(x) - f(\canoX_{s-}) \} \rme^{\tilde\chi_s(\omega,x)}\bar \rmn^{\genericq}(\rmd s \rmd x)\middle| \mF_{0} \r] \\
        &= \mathbb E_{\QQ^k_j}\l[\int_{[0,t \wedge \tau ] \times \msx} \{ f(x) - f(\canoX_{s-}) \}  \1_{\canoX_{s-} \neq x } \rme^{\tilde\chi_s(\omega,x)} {\genericq}_s(\canoX_{s-}, \rmd x)  \rmd s \middle| \mF_{0} \r] \eqsp.
\end{align*}

% \begin{align*}
%     \mathbb E_{\P^{\genericq}} \l[\int_{[0,t \wedge \tau \wedge \sigma^k_j] \times \msx}\tilde\Z_{s} \rmd \mathrm{G}_s \middle| \mF_{0} \r] &= \E_{\PP^{\genericq}} \l[ \tilde \Z_{t \wedge \tau \wedge \sigma^k_j} \mathrm{G}_{t \wedge \tau \wedge \sigma^k_j} - \tilde \Z_0 \mathrm{G}_0 - \int_{[0,t \wedge \tau \wedge \sigma^k_j]} ( \mathrm{G}_{s-} \rmd \tilde \Z_s + \rmd [\tilde \Z, \mathrm{G}]_s ) \middle| \Fc_0\r]\\
%     &= \E_{\PP^{\genericq}} \l[ \tilde \Z_{t \wedge \tau \wedge \sigma^k_j} \mathrm{G}_{t \wedge \tau \wedge \sigma^k_j} \middle| \Fc_0\r] \eqsp,
% \end{align*}

% since $\mathrm{G}_0=0$
% and $(\tilde \Z_s)$ is a $\PP^{\genericq}$-martingale and $[\tilde \Z, \mathrm{G}]_s=0$ for any $s \in [0, t \wedge \tau \wedge \sigma^k_j]$ thanks to \Cref{prop:quad_cov}. Plugging it into \eqref{eq:MP} implies

% \begin{align*}
%      \mathbb E_{\QQ^k_j} \l[ \sum_{0 \leq s\leq t \wedge \tau }\{ f(\canoX_s) - f(\canoX_{s-}) \} \middle|\mF_{0} \r] &= \E_{\PP^{\genericq}} \l[ \tilde \Z_{t \wedge \tau \wedge \sigma^k_j} \mathrm{G}_{t \wedge \tau \wedge \sigma^k_j} \middle| \Fc_0\r] = \E_{\QQ^k_j} \l[ \mathrm{G}_{t \wedge \tau } \middle| \Fc_0\r] \\
%      &=\mathbb E_{\QQ^k_j}\l[\int_{[0,t \wedge \tau ] \times \msx}\{ f(x) - f(\canoX_{s-}) \} \rme^{\tilde\chi_s(\omega,x)}\bar \rmn^{\genericq}(\rmd s \rmd x)\middle| \mF_{0} \r] \\
%         &= \mathbb E_{\QQ^k_j}\l[\int_{[0,t \wedge \tau ] \times \msx} \{ f(x) - f(\canoX_{s-}) \}  \1_{\canoX_{s-} \neq x } \rme^{\tilde\chi_s(\omega,x)} {\genericq}_s(\canoX_{s-}, \rmd x)  \rmd s \middle| \mF_{0} \r] \eqsp.
% \end{align*}

}
Recall that the random generator $\rme^{\tilde\chi} \genericq: \D_{\Tf} \times [0,\Tf] \times \msx^2 \to \R$ is defined by $(\rme^{\tilde\chi} \genericq) (\omega, t,x,y) := \rme^{\tilde\chi(\omega,t, y)} \genericq(w,t,x,y) = \rme^{\tilde\chi_t(\omega, y)} \genericq(t,x,y)$ for $y \neq x$, since the generator $\genericq$ under consideration is deterministic. Denote by $\bar \rmn^{\rme^{\tilde\chi} \genericq}$ the corresponding jump kernel for $\omega = (\omega_t)_{t \in [0,\Tf]} \in \D_{\Tf}$ and $(t,x) \in [0,\Tf] \times \msx$,

\begin{align*}
     \bar \rmn^{\rme^{\tilde\chi} \genericq}(\omega,\rmd t \rmd x) &:= \1_{\omega_{t-}\neq x} (\rme^{\tilde\chi} \genericq)(\omega, t, \omega_{t-},\rmd x) \rmd t \\
     &= \1_{ \omega_{t-} \neq x  } \rme^{\tilde\chi_t(\omega,x)} \genericq_t( \omega_{t-}, \rmd x)  \rmd t \eqsp,
\end{align*}
then the previous equation rewrites
\begin{align*}
    \mathbb E_{\QQ^k_j} \l[ f(\canoX_{t \wedge \tau}) - f(\canoX_0) \middle|\mF_{0} \r] &= \mathbb E_{\QQ^k_j}\l[\int_{[0,t \wedge \tau] \times \msx} \{ f(x) - f(\canoX_{s-}) \}  \bar \rmn^{\rme^{\tilde\chi} \genericq} (\rmd s \rmd x)   \middle| \mF_{0} \r] \\
    &= \E_{\QQ^k_j} \l[\int_{[0,t \wedge \tau]}(\rme^{\tilde\chi} {\genericq})(s) f (\canoX_{s-}) \rmd s \middle| \mF_{0} \r]\eqsp.
\end{align*}

    % where we used $\P \in LK^{{\genericq}}(\bar K)$ in the fifth equality.
%     Using the convention
%     \begin{equation*}
%         \chi (x,x) = \log \frac{\sum_{y \in \msx} \rme^{\chi(x,y)} {\genericq}(x,y) \1_{ y \neq x }}{\sum_{y \in \msx} {\genericq} (x,y) \1_{ y \neq x }} \eqsp,
%     \end{equation*}
%     then $\rme^{\chi} {\genericq}$ is indeed a generator, since for any $s \in [0, \Tf]$, the condition
% $
% \sum_{y \in \mathsf{X}} \rme^{\chi(x, y)} {\genericq}(x, y) = 0
% $
% holds.
% Applying this identity to the function

% $$
% g_s(\omega, x) = f(x) - f(\omega_{s-}) \eqsp,
% $$

% for an arbitrary function $f$, we obtain

%     \begin{equation*}
%         \begin{aligned}
%             \mathbb E_{\QQ^k_j} \l[ f(\canoX_{t \wedge \tau}) - f(\canoX_{0}) \middle|\mF_{0} \r] = \E_{\QQ^k_j} \l[\int_{[0,t \wedge \tau \wedge \sigma^k_j]}(\rme^{\chi} {\genericq})(s) f (\canoX_{s-}) \rmd s \middle| \mF_{0} \r] \quad \text{for any function } f \eqsp.
%         \end{aligned}
%     \end{equation*}

    Choosing $\tau$ such that the above terms are meaningful, we conclude that $\QQ^k_j \in \MP(\rme^{\chi^k_j}{\genericq})$ and finish the proof.
\end{proof}

% \np{todo}

\begin{proof}[\textbf{Proof of \Cref{girsanovtheorem}}]

  This proof is an adaptation of Theorem 2.6 in \citet{leonard2012girsanov} based on technical lemmas provided above applying on the reference measure $\foR \in \MP (\foq)$ defined at the beginning of Section \ref{appendix:convergence}.
Consider $\|.\|_\varrho$ defined as
$$\|\phi\|_\varrho:=\inf \left\{a>0 \, :\,  \mathbb E_\P \l[ \int_{\ccint{0,\Tf} \times \msx} \varrho(\phi/a)\rmd \bar \rmn^{\foq} \r] \leq 1 \right\} \eqsp,$$
for any $\phi \in S_{\varrho}$, where $S_{\varrho}$ is the Orlicz space defined by
    \begin{align*}
        S_\varrho:=\left\{\phi:\D_{\Tf} \times\ccint{0,\Tf}\times \msx\to \R\,  \text{ measurable s.t. }\mathbb E_\P \l[\int_{\ccint{0,\Tf} \times \msx}\varrho (b|\phi|)\rmd\bar \rmn^{\foq} \r]<\infty,\, \text{ for any } b\geq 0\right\} \eqsp.
    \end{align*}
It is well-known that it is a norm  called the Luxemburg norm of $S_{\varrho}$.

  Furthermore, for any $\chi \in S_{\varrho}$,
 by \Cref{theo:inte_sto},
the process
% \alain{do not use $\rmd$ but $\int_{\ccint{0,t} \times \msx} $}
    $$M_t^{\chi,\foq}:=\int_{\ccint{0,t} \times \msx} \chi_s(\omega,x) \tilde N^{{\foq}}_{\canoX} (\rmd s \rmd x)$$
    is a local $\foR$-martingale. Then,
    % \alain{change symbol $Y$ in $\mathrm{Y}$ and $\Z$ in $\mathrm{Z}$}
    we define the two stochastic processes $(\Y_t^{\chi, \foq})_{t\in\ccint{0,\Tf}}$ and $(\Z_t^{\chi, \foq})_{t\in\ccint{0,\Tf}}$ as  $\Y_t^{\chi, \foq}:=M^{\chi, \foq}_t -\int_{\ccint{0,t}}\beta_s^{\chi,\foq} \rmd s$ with $\beta_s^{\chi,\foq}:=\int_{ \msx}\varrho(\chi_s(\omega,x))\rmn^{{\foq}}(s, \rmd x)$, and $\Z^{\chi,\foq}_t = \rme^{\Y_t^{\chi, \foq}}$.
    By Lemma \ref{Zmartingale}, the process $\Z_{\Tf}^{\chi,\foq}$ is a $\foR$-supermartingale, thus $0<\mathbb E_{\overrightarrow{R}} \l[ \Z^{\chi,\foq}_{\Tf} \r]\leq 1$.

  For $\P \in \Pc(\D_{\Tf})$ such that $\KL(\P|\overrightarrow{R})<\infty$, the Donsker-Varadhan variational formulation of the KL implies that
 \begin{align}\label{variational}
     \KL(\P|\overrightarrow{R})=\sup\l\{\int u \rmd\P-\log \int \rme^u \rmd\overrightarrow{R}\, :\,  \quad u \text{ measurable and such that } \int \rme^u \rmd\overrightarrow{R}<\infty \r\} \eqsp.
 \end{align}
For any $\chi \in S_{\varrho}$, choosing  $u = \Y_{\Tf}^{\chi,\foq}$ and noting that $\log \E_{\foR} \l[\Z^{\chi,\foq}_{\Tf} \r] \leq \log 1 =0$, we derive
    \begin{align*}
        \mathbb E_{\P} \l[ \int_{\ccint{0,\Tf} \times \msx} \chi_t(\omega,x) \tilde N^{{\foq}}_{\canoX} (\rmd t \rmd x) - \int_{\ccint{0,\Tf} \times \msx} \varrho(\chi_t(\omega,x))\bar \rmn^{\foq}( \rmd t \rmd x) \r] \leq \KL(\P|\overrightarrow{R}) \eqsp.
    \end{align*}
 Therefore, for any $\chi \in S_{\varrho}$,
    \begin{align}\label{eq:11}
        \mathbb E_\P \l[ \int_{\ccint{0,\Tf} \times \msx} \chi \rmd \tilde N^{{\foq}}_{\canoX}  \r] \leq \KL(\P|\overrightarrow{R})+\int_{\ccint{0,\Tf} \times \msx} \varrho(\chi) \rmd\bar \rmn^{\foq} \eqsp.
    \end{align}

  For any function $\phi \in S_{\varrho}$, taking $\chi:=\phi/{\|\phi\|_\varrho}$ in \eqref{eq:11} implies that for any $\phi \in S_{\varrho}$,
    \begin{align}\label{eq:continuous}
        \mathbb E_\P \l[\int_{\ccint{0,\Tf} \times \msx} \phi \rmd \tilde N^{{\foq}}_{\canoX} \r] \leq [\KL(\P|\overrightarrow{R})+1 ]\|\phi\|_\varrho  \eqsp.
    \end{align}

    Consider now the sub-space $\mathcal H \subset S_{\varrho}$,
    \begin{align*}
       \mathcal H :=\left\{\phi:\D_{\Tf} \times\ccint{0,\Tf}\times \msx\to \R\,  \text{ predictable and bounded  s.t. }\mathbb E_\P \l[\int_{\ccint{0,\Tf} \times \msx}\varrho (b|\phi|)\rmd\bar \rmn^{\foq} \r]<\infty,\, \text{ for any } b\geq 0\right\} \eqsp.
    \end{align*}
 Since  any $\phi \in \mathcal{H}$ satisfies \eqref{eq:h}, Lemma \ref{Zmartingale} entails \eqref{eq:continuous} for all $\phi\in \mathcal{H}$, as $\KL(\P|\overrightarrow{R})<\infty$. This implies the linear mapping $\phi \mapsto \mathbb E_\P \l[\int_{\ccint{0,\Tf} \times \msx} \phi \rmd \tilde N^{{\foq}}_{\canoX} \r]$ is continuous on $\mathcal{H}$ equipped with the norm $\|\cdot \|_\varrho$.
 It is worth noting that $(S_\varrho, \|\cdot\|_{\varrho})$ forms a Banach space {\color{black}\citep[see][Proposition 1.18]{leonard2007orlicz}}, and $\mathcal{H}$ is a linear subspace of $S_\varrho$ {\color{black}\citep[see][Proposition 1.11]{leonard2007orlicz}}%\alain{provide ref}
 , hence the Hahn-Banach extension theorem {\color{black}(\citet[Theorem 1]{delattelecture}}) ensures that the previously defined linear functional on $\mathcal{H}$ can be extended to the entire space $S_\varrho$ without loss of continuity {\color{black}since the Luxemburg norm $\|\cdot \|_{\varrho}$ is a convex function on $S_{\varrho}$, see \citet[Section 2.2]{rao1991theory}}. %\alain{provide ref}.
Note that the convex conjugate of the Young function $\varrho(|a|)$ is $\varrho^*(|b|)$. Therefore, as showed in \citet[Theorem 3.1.9]{rao1991theory}, the dual space of $(S_\varrho,\|\cdot\|_\varrho)$ is isomorphic to the space
    \begin{align*}
        L_{\varrho^*}:=\left\{ K:\D_{\Tf} \times \ccint{0,\Tf}\times \msx \to \R \text{ measurable s.t. } \mathbb E_\P \l[\int_{\ccint{0,\Tf} \times \msx} \varrho^*(|K|)\rmd\bar \rmn^{\foq} \r]<\infty \right\} \eqsp,
    \end{align*}
    that means there exists some $K \in L_{\varrho^*}$ such that
    \begin{align}\label{eq:k}
       \mathbb E_\P \l[\int_{\ccint{0,\Tf} \times \msx} \phi \rmd \tilde N^{{\foq}}_{\canoX} \r]
       =\mathbb E_\P \l[\int_{\ccint{0,\Tf} \times \msx} K\phi \rmd\bar \rmn^{\foq} \r], \quad \text{for any } \phi\in \mathcal{H} \eqsp.
    \end{align}
    We now prove the uniqueness and predictability of $K$. Introduce the predictable projection of $K \in L_{\varrho^*}$ as $K^{pr}:=\mathbb E_\P(K|\canoX_{[0,t)})$, for $t\in\ccint{0,\Tf}$. Since $\mathcal{B}$ is dense in $S_\varrho$, $\mathcal{H}$ is dense in the subspace of all the predictable processes in $S_\varrho$. Then, any two functions $K_1$, $K_2 \in L_{\varrho^*}$ satisfying \eqref{eq:k} must share the same projection, \ie, $K_1^{pr}=K_2^{pr}$. It follows that there exists a unique predictable process $K$ in the space
    \begin{align*}
        \mathcal{K}(P):=\left\{K:\D_{\Tf}\times\ccint{0,\Tf}\times \msx \to\R\, \text{ predictable s.t. } \mathbb E_\P\l[ \int_{\ccint{0,\Tf} \times \msx}\varrho^*(|K|)\rmd\bar \rmn^{\foq} \r]<\infty \right\}\eqsp,
    \end{align*}
    which satisfies \eqref{eq:k}. Moreover, for any function $\phi\in \mathcal{H}$,
    \begin{align*}
        \int_{\ccint{0,\Tf} \times \msx} \phi \rmd
        (\tilde N^{{\foq}}_{\canoX} - K\bar \rmn^{{\foq}}) &= \int_{\ccint{0,\Tf} \times \msx} \phi \rmd
        ( N_{\canoX} - \bar\rmn^{\foq} - K\bar \rmn^{{\foq}})\\
        &=
       \int_{\ccint{0,\Tf} \times \msx} \phi \rmd
        ( N_{\canoX}  - (K+1)\bar \rmn^{{\foq}})\\
        &=\int_{\ccint{0,\Tf} \times \msx} \phi \rmd
        ( N_{\canoX}  - U\bar \rmn^{{\foq}}) \eqsp,
    \end{align*}
    with $U=K+1$, and the equation \eqref{eq:k} is thus equivalent to
    \begin{align}\label{propertyl}
        \E_{\P} \l[\int_{\ccint{0,\Tf} \times \msx} \phi \rmd
        ( N_{\canoX}  - U\bar \rmn^{{\foq}}) \r]=0, \quad \text{for any } \phi \in \Hc \eqsp.
    \end{align}
    Thus, $U\bar \rmn^{\foq}$ is a positive measure and $U$ is non-negative. Furthermore, we can argue analogously to obtain equation \eqref{propertyl} on the interval $[s,t]$ for $0 \leq s \leq t \leq \Tf$ then choose $\phi_s(\omega,x) = f(x) - f(\omega_{s-})$ to deduce
\begin{align}\label{eq:uq}
    \mathbb E_\P \l[f(\canoX_t) - f(\canoX_s) \middle| \mF_s\r]=\mathbb E_\P \l[\int_{\ccint{s,t} \times \msx} (f(x) - f(\canoX_{z-}))  (U \bar \rmn^{\foq})(\rmd z \rmd x) \middle| \mF_s \r], \quad {\color{black}\text{for any } f \in \mathbb{F}(\msx)} \eqsp.
\end{align}
% where $\Fc_s$ is the $\sigma$-algebra of $\canoX_s$.

Define the random generator $U\foq$ on $\D_{\Tf} \times [0,\Tf] \times \msx^2$ by $(U\foq) (\omega, t, x, y) := U(\omega, t,y) \foq(x,y)$ for $y\neq x$ and use the convention 

\begin{equation*}
    (U\foq) (\omega, t, x, x) := U(\omega, t,x) \foq(x,x) = -\sum_{y\neq x } U(\omega, t,y) \foq(x,y) \eqsp.
\end{equation*}

Then $U\foq$ forms a generator and \eqref{eq:uq} rewrites

\begin{equation*}
    \mathbb E_\P \l[f(\canoX_t) - f(\canoX_s) \middle| \mF_s\r]=\mathbb E_\P \l[\int_{\ccint{s,t}} (U\foq)(z) f(\canoX_{z-}) \rmd z \middle| \mF_s \r], \quad {\color{black}\text{for any } f \in \mathbb{F}(\msx)} \eqsp.
\end{equation*}

% where $U^{\foq} (\omega,t,x,y)= U(\omega, t,x)  \foq(x,y)$.
% \np{todo}

As a result, we conclude that $\P \in \MP(U\foq)$. We now show the formulation of the Radon-Nikodym density $\rmd \P /\rmd \foR$. When $\P \sim \overrightarrow{R}$, define the stopping time $\tau_j^k$ as
\begin{align*}
    \tau_j^k:=\inf\left\{t \in \ccint{0,\Tf};\int_{\ccint{0,t} \times \msx}\varrho(\log U_s(\omega, x)\bar \rmn^{\foq}(\rmd x \rmd s))\geq k \text{ or } \log U_t(\omega, \omega_t )\notin [-j,k]\right\} \eqsp,
\end{align*}
    which coincides with the stopping time $\sigma^k_j$ when $\chi = \log U$. Denote $U^{\sigma^k_j}:= \1_{[0,\sigma^k_j]} U$ and for simplicity, we write $U = U^{\sigma^k_j}$.
    By conditioning w.r.t. $\canoX_0$, we can assume without loss of generality that $\overrightarrow{R}_0=\P_0$, \ie, $\frac{d\P_0}{d\overrightarrow{R}_0}(\canoX_0)=1$.
    
    Applying Lemma \ref{importantlemma} for $\P \in \MP(U\foq)$ and $\chi=-\log U$, we have
    % the measure $\QQ^{\tau^k_j}$ defined for any measurable function $F : \D_{\Tf} \to \rset_+$ by
    % $$\PE_{\QQ^{\tau^k_j}}\l[F((\bfX_{t})_{t\in\ccint{0,\Tf}})\r]=\PE_{\PP}\l[F((\bfX_{t\wedge \tau_j^k})_{t\in\ccint{0,\Tf}})Z_{\Tf}^{-\log U} \r]$$
    % satisfies that
    \begin{align}\label{Qkj}
        \QQ^{\tau_j^k}
        := \mathrm{Z}^{-\log U, U \foq}_{\Tf} \P^{\tau_j^k}
        &\in \MP(\1_{[0,\tau_j^k]}\rme^{-\log U}U \foq)= \MP(\1_{[0,\tau_j^k]} \foq ) \eqsp.
    \end{align}
    % where $\tilde N_{\canoX}^{U\foq}:= N_{\canoX} - \bar \rmn^{U\foq} = N_{\canoX} - U \bar \rmn^{\foq}$.

    Furthermore, $\QQ^{\tau^k_j}_0 = \P^{\tau^k_j}_0 = \foR^{\tau^k_j}_0 = \gamma^d$, where $\gamma^d$ is the invariant distribution of $(\canoX_t)_{t\in [0,\Tf]}$, which combined with the equation \eqref{Qkj} imply $\QQ^{\tau^k_j}$ is the invariant path measure, \ie,  
    % and the fact that $\foR^{\tau_j^k} \in \MP (\1_{[0,\tau_j^k]} \foq )$, we deduce that
    \begin{align*}
        \QQ^{\tau^k_j}=\overrightarrow{R}^{\tau^k_j} \eqsp.
    \end{align*}
     Now, applying first \Cref{importantlemma} for $\overrightarrow{R} \in \MP(\foq)$ and $\chi = \log U$ yields
    %  , the measure $\tilde \PP^{\tau^k_j}$ defined for any measurable function $F : \D_{\Tf} \to \rset_+$ by
    % $$\PE_{\tilde \P^{\tau^k_j}}\l[F((\bfX_{t})_{t\in\ccint{0,\Tf}}) \r]=\PE_{\foR}\l[F((\bfX_{t\wedge \tau_j^k})_{t\in\ccint{0,\Tf}})Z_{\Tf}^{\log U} \r]$$
    % fulfills that
    \begin{align*}
        \tilde \P^{\tau_j^k}
        := \mathrm{Z}^{\log U, \foq}_{\Tf}\overrightarrow{R}^{\tau_j^k}
        \in \MP(\1_{[0,\tau^k_j]}\rme^{\log U} \foq)= \MP(\1_{[0,\tau^k_j]} U \foq)\eqsp.
    \end{align*}
    Secondly, applying Lemma \ref{importantlemma} with $\tilde \P^{\tau^k_j}\in \MP(\1_{[0,\tau^k_j]}U \foq)$ and $\chi = -\log U$ implies 
    % the measure $\tilde \QQ^{\tau^k_j}$ defined for any measurable function $F : \D_{\Tf} \to \rset_+$ by
    % $$\PE_{\tilde \QQ^{\tau^k_j}}\l[F((\bfX_{t})_{t\in\ccint{0,\Tf}}) \r]=\PE_{\tilde \P^{\tau^k_j}}\l[F((\bfX_{t\wedge \tau_j^k})_{t\in\ccint{0,\Tf}})Z_{\Tf}^{-\log U} \r]$$
    % fulfills that
    \begin{align*}
        \tilde \QQ^{\tau^k_j}
        :=\mathrm{Z}^{-\log U, U\foq}_{\Tf} \tilde \P^{\tau_j^k}
        \in  \MP(\1_{[0,\tau_j^k]}\rme^{-\log U} U \foq)= \MP(\1_{[0,\tau_j^k]} \foq ) \eqsp.
    \end{align*}
    %From the uniqueness condition (U) satisfied by
    Argue as before, the previous equation together with the initial condition $\tilde \QQ^{\tau^k_j}_0 = \tilde \P^{\tau^k_j}_0 = \foR^{\tau^k_j}_0 = \gamma^d$ yield that $\tilde \QQ^{\tau^k_j}=\overrightarrow{R}^{\tau^k_j}$. Combining it with $\QQ^{\tau^k_j}=\overrightarrow{R}^{\tau^k_j}$ implies
    $$\QQ^{\tau^k_j}=\tilde \QQ^{\tau^k_j} \eqsp,$$
    which means that
    % for any measurable function $F: \D_{\Tf} \to \R_+$, the following holds
    \begin{align}\label{eq:uniq_p}
    \mathrm{Z}^{-\log U, U\foq}_{\Tf} \P^{\tau_j^k} = \mathrm{Z}^{-\log U, U\foq}_{\Tf} \tilde\P^{\tau_j^k} \eqsp.
    % \PE_{\PP} \l[F((\bfX_{t\wedge \tau_j^k})_{t\in\ccint{0,\Tf}})Z_{\Tf}^{-\log U} \r] = \PE_{\tilde \P^{\tau^k_j}}\l[ F((\bfX_{t\wedge \tau_j^k})_{t\in\ccint{0,\Tf}})Z_{\Tf}^{-\log U} \r] \eqsp.
        % &\exp\l(\int_{[0,\Tf] \times \msx}(-\log U) \rmd \tilde N^{U\foq}_{\canoX}-\int_{\ccint{0,\Tf} \times \msx}\varrho(-\log U) \rmd(U\bar \rmn^{\foq})\r) \P^{\tau_j^k}\\
        % =&\exp\l(\int_{[0,\Tf] \times \msx}(-\log U)\rmd \tilde N^{U\foq}_{\canoX} -\int_{\ccint{0,\Tf} \times \msx}\varrho(-\log U) \rmd(U\bar \rmn^{\foq})\r)\tilde \P^{\tau_j^k} \eqsp.
    \end{align}
     Now observe that $\mathrm{Z}_{\Tf}^{-\log U, U\foq} >0$, therefore, equation \eqref{eq:uniq_p} implies
    % % $\exp\l(\int_{[0,\Tf] \times \msx}(-\log U) \rmd \tilde N^{U\foq}_{\canoX}-\int_{\ccint{0,\Tf} \times \msx}\varrho(-\log U) \rmd(U\bar \rmn^{\foq})\r) >0$
    % we finally conclude that 
    $\P^{\tau^k_j}=\tilde \P^{\tau^k_j}$. Hence $\P^{\tau_j^k} = \Z_{\Tf}^{\log U, \foq} \foR^{\tau^k_j}$, \ie,
    \begin{align*}
        \1_{[0,\tau_j^k\wedge \Tf]}\dfrac{d\P}{d\overrightarrow{R}}((\canoX_t)_{t\in [0,\Tf]})&=\1_{[0,\tau_j^k\wedge \Tf]}\dfrac{d\P_0}{d\overrightarrow{R}_0}(\canoX_0)\\
        &\exp \l(\int_{[0,\tau^k_j\wedge \Tf] \times \msx}(\1_{[0,\tau^k_j\wedge \Tf]}\log U) \rmd \tilde N^{\foq}_{\canoX}-\int_{[0,\tau^k_j\wedge \Tf] \times \msx}\varrho(\log U)\rmd\bar \rmn^{\foq} \r)\eqsp.
    \end{align*}

    Letting $k$ and $j$ tend to infinity, since $\tau:=\lim_{k,j\to \infty}\tau_j^k=\infty$, we get
\begin{align*}
    \dfrac{\rmd\P}{\rmd\overrightarrow{R}}((\canoX_t)_{t\in [0,\Tf]}) =\dfrac{\rmd\P_0}{\rmd\overrightarrow{R}_0}(\canoX_0)\exp\l( \int_{\ccint{0,\Tf} \times \msx}\log U \rmd \tilde N^{\foq}_{\canoX}-\int_{\ccint{0,\Tf} \times \msx}\varrho(\log U)\rmd \bar \rmn^{\foq} \r) \eqsp.
    %  \quad \P-\text{a.s.}
\end{align*}
    We now extend the result above to the case when $\P$ might not be equivalent to $\overrightarrow{R}$. The idea is to approximate $\P$ by a sequence $(\P_n)$, which satisfies $\P_n \sim \overrightarrow{R}$ for all $n\geq1$. Denoting
    \begin{equation}\label{eq:14}
        \P_n=\l(1-\dfrac{1}{n}\r)\P+\dfrac{\overrightarrow{R}}{n}  \quad \text{ for } n \geq 1 \eqsp,
    \end{equation}

    we have $\P_n \sim \overrightarrow{R}$ and $\lim_{n\to\infty} \KL(\P|\P_n)=0$. For simplicity, we write $\chi=\log U$ and $\chi^n=\log U^n$, which are well-defined $\P$-a.s. From the variational representation given in \eqref{variational} and using $\P\in \MP(U \foq)$ combined with Lemma \ref{Zmartingale}, we obtain
    \begin{align*}
        \KL(\P|\P_n)&\geq\mathbb E_\P \l[\int_{\ccint{0,\Tf} \times \msx}(\chi-\chi^n) \rmd \tilde N^{U^n \foq}_{\canoX}-\int_{\ccint{0,\Tf} \times \msx}\varrho(\chi-\chi^n) \rmd(U^n\bar \rmn^{\foq}) \r] \eqsp.
    \end{align*}
    By definition, we have $$\tilde N^{U^n \foq}_{\canoX}=N_{\canoX}-U^n\bar \rmn^{\foq}=N_{\canoX}-U\bar \rmn^{\foq}+(U-U^n)\bar \rmn^{\foq}=\tilde N^{U\foq}_{\canoX}+(U-U^n)\bar \rmn^{\foq} \eqsp,$$
    which yields
    \begin{align*}
        \KL(\P|\P_n)&\ge\mathbb E_\P \l[\int_{\ccint{0,\Tf} \times \msx} (\chi-\chi^n) \rmd(\tilde N^{U\foq}_{\canoX}+\bar \rmn^{\foq}(U-U^n))
        -
        \int_{\ccint{0,\Tf} \times \msx}\l(\dfrac{U}{ U^n}-\log \dfrac{U}{U^n}-1 \r)U^n \rmd\bar \rmn^{\foq} \r]\\
        &=\mathbb E_\P \l[ \int_{\ccint{0,\Tf} \times \msx} (\chi-\chi^n)\rmd \tilde N^{U\foq}_{\canoX}+\int_{\ccint{0,\Tf} \times \msx} U\log \dfrac{U}{U^n} \rmd\bar \rmn^{\foq} -\int_{\ccint{0,\Tf} \times \msx}\l(\dfrac{U}{U^n}-1 \r)U^n \rmd\bar \rmn^{\foq} \r]\eqsp.
    \end{align*}
    Since $\P \in \MP(U\foq)$, we deduce that the stochastic integral $\int_{\ccint{0,\Tf} \times \msx}(\chi-\chi^n)\rmd \tilde N^{U\foq}_{\canoX}$ is a local $\P$-martingale. Therefore,
    \begin{align*}
        \KL( \P|\P_n)&\ge\mathbb E_\P \l[ \int_{\ccint{0,\Tf} \times \msx} \l( U^n-U-U\log \dfrac{U^n}{U} \r)\rmd\bar \rmn^{\foq} \r]\\
        &=\mathbb E_\P \l[ \int_{\ccint{0,\Tf} \times \msx} \l( \dfrac{U^n}{U}-\log \dfrac{U^n}{U}-1 \r)U\rmd\bar \rmn^{\foq} \r]\\
        &=\mathbb E_\P \l[ \int_{\ccint{0,\Tf} \times \msx}\varrho(\chi^n-\chi)\rmd(U\bar \rmn^{\foq}) \r] \eqsp.
    \end{align*}
    Since $\lim_{n \to \infty}\KL(\P|\P_n)=0$, we obtain
    \begin{align}\label{lim}
        \lim_{n\to \infty}\mathbb E_\P \l[ \int_{\ccint{0,\Tf} \times \msx}\varrho(\chi^n-\chi)\rmd (U\bar \rmn^{\foq}) \r]=0 \eqsp.
    \end{align}
    % Furthermore, as $\int_{\ccint{0,\Tf} \times \msx}\varrho(\chi^n-\chi)\rmd (U\bar \rmn^{\foq}) \geq 0$, the above expression implies
    % \begin{equation*}
    %     \lim_{n \to \infty} \int_{\ccint{0,\Tf} \times \msx}\varrho(\chi^n-\chi)\rmd (U\bar \rmn^{\foq} ) =0 \eqsp, \quad \P\text{-a.s.}
    % \end{equation*}

    On the other hand, the fact that $\P_n\sim \overrightarrow{R}$ yields
    \begin{align}\label{eq:12}
        \dfrac{\rmd \P_n}{\rmd \overrightarrow{R}} ((\canoX_t)_{t\in [0,\Tf]})=\dfrac{ \rmd \P_{n,0}}{\rmd \overrightarrow{R}_0}(\canoX_0)\exp\l(\int_{[0,\Tf] \times \msx} \chi^n \rmd \tilde N^{\foq}_{\canoX}- \int_{\ccint{0,\Tf} \times \msx}\varrho(\chi^n)\rmd\bar \rmn^{\foq}\r) \eqsp.
    \end{align}

To obtain the desired expression for the Radon–Nikodym density $\frac{\mathrm{d} \mathbb{P}}{\mathrm{d} \foR}$, we represent it as

        \begin{align}\label{eq:radon_dens}
            \frac{\rmd \P}{\rmd \foR} ((\canoX_t)_{t\in [0,\Tf]}) &= \l(\frac{\rmd \P}{ \rmd \P_n}. \frac{\rmd \P_n}{\rmd \foR}\r) ((\canoX_t)_{t\in [0,\Tf]})\nonumber \\
            &\overset{\eqref{eq:12}}{=} \l(\frac{\rmd \P}{ \rmd \P_n}. \frac{ \rmd \P_{n,0}}{\rmd \overrightarrow{R}_0}\r)(\canoX_0)\exp\l(\int_{[0,\Tf] \times \msx} \chi^n \rmd \tilde N^{\foq}_{\canoX}- \int_{\ccint{0,\Tf} \times \msx}\varrho(\chi^n)\rmd\bar \rmn^{\foq}\r) \nonumber \\
            &= \l(\frac{\rmd \P}{ \rmd \P_n} . \frac{\rmd \P_{n,0}}{\rmd \P_0}\r) (\canoX_0) \frac{\rmd \P_0}{\rmd \foR_0} (\canoX_0) \exp\l(\int_{[0,\Tf] \times \msx} \chi \rmd \tilde N^{\foq}_{\canoX}- \int_{\ccint{0,\Tf} \times \msx}\varrho(\chi)\rmd\bar \rmn^{\foq}\r) \nonumber \\
            &\hspace{2cm} \exp\l(\int_{\ccint{0,\Tf} \times \msx}(\chi^n -\chi) \rmd \tilde N^{\foq}_{\canoX}- \int_{\ccint{0,\Tf} \times \msx} (\varrho(\chi^n) -\varrho (\chi))\rmd\bar \rmn^{\foq}\r) \eqsp, \P\text{-a.s.}
        \end{align}

    % it remains to show

    % \begin{equation*}
    %     \lim_{n \to \infty} \exp\l( (\chi^n - \chi )\rmd \tilde \bfmu^{\foq}- \int_{\ccint{0,\Tf} \times \msx} (\varrho(\chi^n) - \varrho (\chi)) \rmd\bar \rmn^{\foq}\r) = 1 \quad \P\text{-a.s.}
    % \end{equation*}

%   By equation \eqref{eq:k}, proving it is equivalent to

%   \begin{equation*}
%       \dfrac{\rmd \P}{\rmd \P_n}=\dfrac{ \rmd \P_{0}}{\rmd \P_{n,0}}(\canoX_0)\exp\l( (\chi - \chi^n)\rmd (\rme^\chi - 1) \bar \rmn^{\foq} - \int_{\ccint{0,\Tf} \times \msx} (\varrho(\chi) - \varrho (\chi^n)) \rmd\bar \rmn^{\foq}\r) \eqsp.
%   \end{equation*}

  {\color{black}
  The last part can be rewritten as follows using equation \eqref{eq:k},

  \begin{align*}
      &\hspace{0.5cm} \exp\l(\int_{\ccint{0,\Tf} \times \msx}(\chi^n - \chi )  \rmd \tilde N_{\bfX}^{U\foq} + \int_{\ccint{0,\Tf} \times \msx}(\chi^n - \chi ) (\rme^\chi - 1)  \rmd \bar \rmn^{\foq} - \int_{\ccint{0,\Tf} \times \msx} ( \rme^{\chi^n}
            -\chi^n - \rme^{\chi} + \chi ) \rmd\bar \rmn^{\foq} \r) \\
            &= \exp\l(\int_{\ccint{0,\Tf} \times \msx}(\chi^n - \chi )  \rmd \tilde N_{\bfX}^{U\foq} - \int_{\ccint{0,\Tf} \times \msx} \varrho (\chi - \chi^n) \rmd (U \bar \rmn^{\foq}) \r)
  \end{align*}

We first handle the second integral above using \eqref{lim} and the fact that $\varrho$ is a non-negative function,

  \begin{equation*}
      \E_{\PP} \l[ \l|\int_{\ccint{0,\Tf} \times \msx} \varrho (\chi - \chi^n) \rmd (U \bar \rmn^{\foq}) \r| \r] \xrightarrow{n \to \infty} 0 \eqsp.
  \end{equation*}
  This together with Markov's inequality lead to 

  \begin{equation}\label{eq:p_conv_1}
       \int_{\ccint{0,\Tf} \times \msx} \varrho (\chi - \chi^n) \rmd (U \bar \rmn^{\foq}) \xrightarrow{\PP} 0  \eqsp,
  \end{equation}
  and therefore, from \citet[Proposition 1.4]{meliotconvergence}, there is a subsequence $\chi^{n_k}$ such that 
  \begin{equation}\label{eq:lim_1}
       \int_{\ccint{0,\Tf} \times \msx} \varrho (\chi - \chi^{n_k}) \rmd (U \bar \rmn^{\foq}) \xrightarrow{n \to \infty} 0 \eqsp, \quad \PP\text{-a.s.} 
  \end{equation}

  Furthermore, recall that $\varrho(a)=\rme^a - a -1 \geq a^2/2$, hence \eqref{lim} can be used to control the stochastic integral w.r.t. the $\PP$-martingale $\tilde N_{\bfX}^{U\foq}$ as follows

  \begin{align*}
      \E_{\PP} \l[ \l(\int_{\ccint{0,\Tf} \times \msx}(\chi^n - \chi )  \rmd \tilde N_{\bfX}^{U\foq} \r)^2 \r] &\overset{\Cref{cor:ito_isometry}}{=} \E_{\PP} \l[ \int_{\ccint{0,\Tf} \times \msx}(\chi^n - \chi )^2  \rmd ( U\bar \rmn^{\foq}) \r] \\
      &\hspace{0.8cm}\leq 2 \E_{\PP} \l[ \int_{\ccint{0,\Tf} \times \msx} \varrho (\chi - \chi^n) \rmd (U \bar \rmn^{\foq})  \r] \xrightarrow{n \to \infty} 0 \eqsp.
  \end{align*}
This, along with Markov’s inequality, results in

\begin{equation*}
    \int_{\ccint{0,\Tf} \times \msx}(\chi^n - \chi )  \rmd \tilde N_{\bfX}^{U\foq}  \xrightarrow{\PP} 0 \eqsp.
\end{equation*}

Combining this with \eqref{eq:p_conv_1} implies that

\begin{equation*}
    \int_{\ccint{0,\Tf} \times \msx}(\chi^n - \chi )  \rmd \tilde N_{\bfX}^{U\foq} - \int_{\ccint{0,\Tf} \times \msx} \varrho (\chi - \chi^n) \rmd (U \bar \rmn^{\foq}) \xrightarrow{\PP} 0 \eqsp.
\end{equation*}

As a consequence, \citet[Proposition 1.4]{meliotconvergence} asserts that there is a subsequence $(\chi^{n_k})$ such that

\begin{equation*}
    \l[\int_{\ccint{0,\Tf} \times \msx}(\chi^{n_k} - \chi )  \rmd \tilde N_{\bfX}^{U\foq} - \int_{\ccint{0,\Tf} \times \msx} \varrho (\chi - \chi^{n_k}) \rmd (U \bar \rmn^{\foq}) \r] \xrightarrow{k \to \infty} 0 \eqsp, \quad \PP\text{-a.s.}
\end{equation*}

It helps interpreting \eqref{eq:radon_dens} as 

\begin{align*}
    \frac{\rmd \P}{\rmd \foR} ((\canoX_t)_{t\in [0,\Tf]}) &= \l(\frac{\rmd \P}{ \rmd \P_{n_k}} . \frac{\rmd \P_{n_k,0}}{\rmd \P_0} \r)(\canoX_0) \frac{\rmd \P_0}{\rmd \foR_0} (\canoX_0) \exp\l(\int_{[0,\Tf] \times \msx} \chi \rmd \tilde N^{\foq}_{\canoX}- \int_{\ccint{0,\Tf} \times \msx}\varrho(\chi)\rmd\bar \rmn^{\foq}\r) \\
    &\hspace{2cm}\exp\l(\int_{\ccint{0,\Tf} \times \msx}(\chi^{n_k} -\chi) \rmd \tilde N^{\foq}_{\canoX}- \int_{\ccint{0,\Tf} \times \msx} (\varrho(\chi^{n_k}) -\varrho (\chi))\rmd\bar \rmn^{\foq}\r) \\
    & \xrightarrow[\eqref{eq:14}]{k \to \infty} \frac{\rmd \P_0}{\rmd \foR_0} (\canoX_0) \exp\l(\int_{[0,\Tf] \times \msx} \chi \rmd \tilde N^{\foq}_{\canoX}- \int_{\ccint{0,\Tf} \times \msx}\varrho(\chi)\rmd\bar \rmn^{\foq}\r)\eqsp, \quad \P\text{-a.s.}
\end{align*}

  % By equation \eqref{eq:k}, the last term's expectation is calculated as

  %  \begin{equation*}
  %      \begin{aligned}
  %            & \hspace{0.5cm}\E_{\PP} \l[ \int_{\ccint{0,\Tf} \times \msx}(\chi^n -\chi) \rmd \tilde N^{\foq}_{\canoX}- \int_{\ccint{0,\Tf} \times \msx} (\varrho(\chi^n) - \varrho (\chi)) \rmd\bar \rmn^{\foq} \r] \\
  %            &= \E_{\PP} \l[ \int_{\ccint{0,\Tf} \times \msx}(\chi^n - \chi ) (\rme^\chi - 1)  \rmd \bar \rmn^{\foq} - \int_{\ccint{0,\Tf} \times \msx} ( \rme^{\chi^n}
  %           -\chi^n - \rme^{\chi} + \chi ) \rmd\bar \rmn^{\foq}  \r] \\
  %           % &= \E_{\PP} \l[ \int_{\ccint{0,\Tf} \times \msx}(\chi^n - \chi ) (\rme^\chi - 1) \rmd \bar \rmn^{\foq}
  %           % -
  %           % \int_{\ccint{0,\Tf} \times \msx}( \rme^{\chi^n}
  %           % -\chi^n - \rme^{\chi} + \chi ) \rmd \bar \rmn^{\foq} \r]\\
  %           &= \E_{\PP} \l[\int_{\ccint{0,\Tf} \times \msx} ( (\chi^n - \chi) \rme^{\chi} - \rme^{\chi^n} + \rme^{\chi} ) \rmd \bar \rmn^{\foq}\r]\\
  %           &= \E_{\PP} \l[ \int_{\ccint{0,\Tf} \times \msx} -( \rme^{\chi - \chi^n} - (\chi - \chi^n) -1 ) \rmd ( \rme^{\chi} \bar \rmn^{\foq} )  \r] \\
  %           &= \E_{\PP} \l[ \int_{\ccint{0,\Tf} \times \msx} -\varrho (\chi - \chi^n) \rmd (U \bar \rmn^{\foq}) \r] \quad \xrightarrow{n \to \infty} 0  \quad \text{ (see \eqref{lim})} \eqsp.
  %      \end{aligned}
  %  \end{equation*}
   }
% Therefore

% \begin{equation*}
%     \begin{aligned}
%         \frac{\rmd \P}{\rmd \foR} ((\canoX_t)_{t\in [0,\Tf]}) =\frac{\rmd \P}{ \rmd \P_n} . \frac{\rmd \P_{n,0}}{\rmd \P_0} (\canoX_0) \frac{\rmd \P_0}{\rmd \foR_0} (\canoX_0) &\exp\l(\int_{[0,\Tf] \times \msx} \chi \rmd \tilde N^{\foq}_{\canoX}- \int_{\ccint{0,\Tf} \times \msx}\varrho(\chi)\rmd\bar \rmn^{\foq}\r) \\
%         &\hspace{2cm }\exp\l( \int_{\ccint{0,\Tf} \times \msx}-\varrho (\chi - \chi^n) \rmd (U\bar \rmn^{\foq}) \r) \eqsp, \P\text{-a.s.}
%     \end{aligned}
% \end{equation*}

%     % Taking the limit for $n\to \infty$ and using \eqref{lim}, we get
%     % % together with the dominated convergence theorem
%     % \begin{equation}\label{eq:13}
%     %     \lim_{n \to \infty} \exp\l( (\chi^n - \chi )\rmd \tilde \bfmu^{\foq}- \int_{\ccint{0,\Tf} \times \msx} (\varrho(\chi^n) - \varrho (\chi)) \rmd\bar \rmn^{\foq}\r) = \exp(0) = 1 \quad \P\text{-a.s.}
%     % \end{equation}

%     % Finally,
%     Tend $n \to \infty$ and use \eqref{eq:14} and \eqref{lim}
Replacing $\chi=\log U$, we arrive at our desired claim

    \begin{equation*}
        \dfrac{\rmd\P}{\rmd\overrightarrow{R}} ((\canoX_t)_{t\in [0,\Tf]})=\dfrac{\rmd\P_0}{\rmd\overrightarrow{R}_0}(\canoX_0)\exp\l( \int_{\ccint{0,\Tf} \times \msx} \log U \rmd \tilde N^{\foq}_{\canoX}-\int_{\ccint{0,\Tf} \times \msx}\varrho(\log U)\rmd \bar \rmn^{\foq} \r) \eqsp, \quad \P\text{-a.s.}
    \end{equation*}

    % \begin{align*}
    %     \KL(\P|\overrightarrow{R})=\KL(\P_0|\overrightarrow{R}_0)+\lim_{n\to \infty}\mathbb E_\P \l[ \int_{[0,\Tf] \times \msx} \chi^n \rmd \tilde N^{\foq}_{\canoX}- \int_{\ccint{0,\Tf} \times \msx}\varrho(\chi^n)\rmd\bar L \r] \eqsp.
    % \end{align*}
    % By direct computation, we can show
    % \begin{align*}
    %     \mathbb E_\P \l[ \int_{[0,\Tf] \times \msx} \chi^n \rmd \tilde N^{\foq}_{\canoX}- \int_{\ccint{0,\Tf} \times \msx}\varrho(\chi^n)\rmd\bar L \r] &=\mathbb E_\P \l[ \int_{[0,\Tf] \times \msx} \chi \rmd \tilde N^{\foq}_{\canoX}- \int_{\ccint{0,\Tf} \times \msx}\varrho(\chi)\rmd\bar L\r] \\
    %     & \hspace{2cm}-\mathbb E_\P \int_{\ccint{0,\Tf} \times \msx} \varrho(\chi^n-\chi)\rmd u\bar L \eqsp.
    % \end{align*}
    % Taking $n\to \infty$ and using \eqref{lim}, we deduce that
Consequently, the $\KL$ divergence reads as
\begin{align*}
    \KL(\P|\overrightarrow{R})=\KL(\P_0|\overrightarrow{R}_0)+\mathbb E_\P \l[ \int_{\ccint{0,\Tf} \times \msx}\log U \rmd\tilde N^{\foq}_{\canoX}-\int_{\ccint{0,\Tf} \times \msx}\varrho(\log U)\rmd\bar \rmn^{\foq} \r] \eqsp.
\end{align*}
Applying \eqref{propertyl} to the function $\phi=\log U$, we get
\begin{align*}
    \KL(\P|\overrightarrow{R})&=\KL(\P_0|\overrightarrow{R}_0)+\mathbb E_\P \l[ \int_{\ccint{0,\Tf} \times \msx}(U-1)\log U \rmd\bar \rmn^{\foq} -\int_{\ccint{0,\Tf} \times \msx}\varrho(\log U)d\bar \rmn^{\foq} \r]\\
    &=\KL(\P_0|\overrightarrow{R}_0)+\mathbb E_\P \l[ \int_{\ccint{0,\Tf} \times \msx}[(U-1)\log U-U+\log U+1]\rmd\bar \rmn^{\foq} \r]\\
    &=\KL(\P_0|\overrightarrow{R}_0)+\mathbb E_\P \l[\int_{\ccint{0,\Tf} \times \msx}h(U(t,x)) \bar \rmn^{\foq} (\rmd t \rmd x) \r]\eqsp,
\end{align*}
with $h(a):=\varrho^*(a-1)=a\log a-a+1$ for $a>0$. The proof of \cref{girsanovtheorem} is then finished.
\end{proof}

{\color{black}
In fact, we can simplify the $\KL$ expression above by replacing $\bar \rmn^{\foq} (\rmd t \rmd x) = \sum_{y\in \msx} \1_{ \canoX_{t-} \neq y } \foq (\canoX_{t-},y) \delta_{y}(\rmd x)  \rmd t$
% and define the function $u: [0,\Tf] \times \msx^2 \to \R$ satisfying $U(\omega,t,x) = u(t, \omega_{t-},x)$. Using
and $\foq$ by the formula given in \eqref{eq:generator_canonical} to arrive at
\begin{align*}
    \KL(\P|\overrightarrow{R}) &= \KL(\P_0|\overrightarrow{R}_0)+\mathbb \E_\P \int_{\ccint{0,\Tf} \times \msx} h(U(t,x)) \sum_{y \in \msx}\1_{ \canoX_{t-} \neq y }  \foq (\canoX_{t-}, y) \delta_{y}(\rmd x) \rmd t \\
    &= \KL(\P_0|\overrightarrow{R}_0)+ \lambda \E_\P \int_{\ccint{0,\Tf}}\sum_{\ell = 1}^d h(U(t, \trans(\canoX_{t-}))) \rmd t \eqsp.
    % \KL(\P|\overrightarrow{R})&=\KL(\P_0|\overrightarrow{R}_0)+\mathbb \E_\P \int_{\ccint{0,\Tf}}\sum_{\ell =1 }^d (u\log u-u+1)(t, \canoX_{t},\trans)\rmd t \eqsp.
\end{align*}
}
% As $\canoX_{t-} = \canoX_t$ for Lebesgue almost all $t \in [0,\Tf]$, we conclude that
% \begin{align*}
%     \KL(\P|\overrightarrow{R})=\KL(\P_0|\overrightarrow{R}_0)+ \lambda \mathbb E_\P \int_{\ccint{0,\Tf}}\sum_{\ell =1 }^d h(U_t( \canoX_{t},\trans(\canoX_{t}))) \rmd t \eqsp,
% \end{align*}

\subsubsection{Interpreting the time-reversed dynamic as a control-driven process}

Let $\overrightarrow{R} \in \MP (\foq)$ be the stationary measure on the path space $\D_{\Tf}$ introduced in the previous section. Then, the process $\foR$ is showed to be reversible, meaning $\overleftarrow{R} = \overrightarrow{R}$, and corresponds to the invariant measure $\gamma^d = \mathrm{Uniform}(\msx)$. Let $\overrightarrow{\P}^{\mustar} = \mathrm{Law}((\foX_t)_{t\in [0,\Tf]})$ represent the forward probability measure on the interval $\ccint{0,\Tf}$ starting from ${\mustar}$ and governed by the forward generator $\foq$ given in \eqref{eq:generator_canonical}. We denote the corresponding time-reversed probability measure ending at $\mustar$ by $\overleftarrow{\P}^{\mustar} = \mathrm{Law}((\baX_t)_{t\in [0,\Tf]})$. Let $\mu_t$ be the marginal density of the forward dynamic at time $t \in [0,\Tf]$ and denote by $\tilde \mu_t := \mu_t/{\gamma^d}$ the corresponding relative density.

\begin{proposition}\label{prop:formula_u}
    The time reversal process $\baP^{\mustar}$ solves the Martingale Problem $\MP (u \foq)$ with the control $u$ given by: for $(t,x) \in [0,\Tf] \times \msx$,
    {\color{black}
    \begin{equation}\label{eq:control}
        u_t(x,y) = \begin{cases}
            {\tilde \mu_{\Tf-t} (\trans(x))}/{\tilde \mu_{\Tf-t} (x)} \eqsp, \quad&\text{if } y = \trans(x) \text{ for some } \ell =1, \ldots, d \eqsp,\\
            {\sum_{\ell=1}^d u_t(x, \trans(x)) }/d \eqsp, \quad &\text{if } y=x \eqsp,\\
             \hspace{2cm} 1 \eqsp, \quad &\text{otherwise} \eqsp.
        \end{cases}           
    \end{equation}
   
    }
\end{proposition}

\begin{proof}[\textbf{Proof of \Cref{prop:formula_u}}]
    Recall that the generator of the backward process $\baq$ satisfies the following equation for $t \in [0,\Tf]$ and $x \neq y$,

    \begin{equation*}
        \baq_t (x,y) = \frac{\mu_{\Tf-t}(y)}{\mu_{\Tf-t}(x)} \foq (y,x) \eqsp.
    \end{equation*}
    Using the fact that the stationary distribution $\gamma^d$ satisfies the following balance equation for $x, y \in \msx$, $x \neq y$, 
    \begin{equation*}
        \gamma^d(x) \foq(x,y) = \gamma^d(y) \foq(y,x) \eqsp,
    \end{equation*}
    we can express the backward generator as the perturbation of the forward one as follows

    \begin{equation*}
        \baq_t (x,y) = \frac{\mu_{\Tf-t}(y)}{\mu_{\Tf-t}(x)} \foq (y,x) = \frac{\mu_{\Tf-t}(y) \gamma^d(x)}{\mu_{\Tf-t}(x)\gamma^d(y)} \foq (x,y) = \frac{\tilde\mu_{\Tf-t} (y)}{\tilde \mu_{\Tf-t}(x)} \foq (x,y) \eqsp.
    \end{equation*}
    {\color{black}
    Note that $\foq(x,y) = 0$ for $y \notin \{x, \trans(x); \eqsp \ell=1, \ldots, d \}$, thus we can define the control $u$ as: for $(t,x) \in [0,\Tf] \times \msx$,
    \begin{equation*}
        u_t(x, \trans(x)) = \frac{\tilde\mu_{\Tf-t} (\trans(x))}{\tilde \mu_{\Tf-t}(x)} \quad \text{ for } \ell =1, \ldots, d \eqsp, \quad \text{and} \quad u_t(x,y) = 1 \quad \text{for } y \notin \{x, \trans(x); \eqsp \ell=1,\ldots,d \} \eqsp,
    \end{equation*}
    to obtain the relation 

    \begin{equation*}
        \baq_t (x,y) = u_t(x,y) \foq(x,y) \eqsp, \quad \text{for } (t,x,y) \in [0,\Tf] \times \msx^2 \text{ and } y \neq x \eqsp.
    \end{equation*}
    }
    Furthermore, under the convention 
    \begin{equation*}
        u_t(x,x) = \frac{-\sum_{y \neq x} u_t(x,y) \foq (x,y)}{\foq(x,x)} = \frac{1}{d} \sum_{\ell=1}^d u_t(x,\trans(x)) \eqsp, \quad \text{for } (t,x) \in [0,\Tf] \times \msx \eqsp,
    \end{equation*}
    $u \foq$ in fact forms a generator and satisfies $\baq = u\foq$, which implies that $\baP^{\mustar} \in \MP(u \foq)$ and we conclude the proof.  
\end{proof}

{\color{black}
\Cref{prop:formula_u} shows that the unique control $u$ associated with the backward dynamic $(\baX_t)_{t\in [0,\Tf]}$ in Girsanov's thereom \ref{girsanovtheorem} is in fact Markovian. This enables expressing the Radon-Nikodym density $\rmd \baP^{\mustar}/\rmd \foR$ as
\begin{align*}
    \dfrac{\rmd\baP^{\mustar}}{\rmd\overrightarrow{R}} ((\baX_t)_{t \in [0,\Tf]})=\dfrac{\rmd\baP^{\mustar}_0}{\rmd\overrightarrow{R}_0}(\baX_0)\exp\l( \int_{\ccint{0,\Tf} \times \msx} \log u_t (\baX_{t-}, x) \rmd \tilde N^{\foq}_{\canoX}-\int_{\ccint{0,\Tf} \times \msx}\varrho(\log u_t (\baX_{t-}, x))\rmd \bar \rmn^{\foq} \r) \eqsp,
\end{align*}
where $u$ is explicitly given in \eqref{eq:control}.
}

\subsubsection{Evolution of the control in the reversed-time system}

This section aims to characterize the control corresponding to the backward dynamics through the Hamilton–Jacobi–Bellman (HJB) equation. This characterization serves as the key ingredient for applying It\^o's formula to analyze the evolution of the time-reversed process.

\begin{proposition}[Hamilton--Jacobi--Bellman equation]\label{prop:hjb}
    The control $u$ given in \eqref{eq:control} admits the following formula 
     \begin{align*}
        u_t(x,y)=
            \rme^{V(t,x)-V(t,y)} \eqsp, \quad \text{for } (t,x,y) \in [0,\Tf] \times \msx^2 \eqsp, \quad  x \neq y \eqsp,
    \end{align*}
    where $V(t,x) = -\log \tilde \mu_{\Tf-t}(x)$, which satisfies the following HJB equation
    \begin{align}\label{eq:hjb}
    \begin{cases}
        \partial_t V(t,x) =\lambda \sum_{\ell =1}^d [\rme^{V(t,x)-V(t,\trans(x))} -1 ] \eqsp,\\
        V(\Tf,x)=g(x) = -\log \frac{\rmd \mustar}{\rmd \gamma^d} (x) \eqsp,
    \end{cases} \quad \text{for } (t,x) \in [0,\Tf) \times \msx \eqsp.
    \end{align}
\end{proposition}

\begin{proof}[\textbf{Proof of \Cref{prop:hjb}}]
    Denote $V(t,x) := -\log \tilde \mu_{\Tf-t}(x)$, then the optimal control can be interpreted as 
    \begin{equation*}
        u_t (x,y) = \frac{\tilde \mu_{\Tf-t}(y)}{\tilde \mu_{\Tf-t}(x)} = \frac{\rme^{-V(t,y)}}{\rme^{-V(t,x)}} = \rme^{V(t,x) - V(t,y)} \eqsp, \quad \text{for } (t,x,y) \in [0,\Tf] \times \msx^2 \text{ and } x \neq y \eqsp.
    \end{equation*}
    
    In addition, the function $V$ fulfills the following equation: for $(t,x) \in [0,T) \times \msx$,

\begin{align*}
    \partial_t V(t,x) &= \frac{\partial_t  \mu_{\Tf-t}(x)}{ \mu_{\Tf-t}(x)}\\
    &= \frac{\sum_{y\in \msx} \mu_{\Tf-t}(y) \foq(y,x)}{\mu_{\Tf-t}(x)} \quad \text{(by forward Kolmogorov equation)} \\
    &= \frac{\sum_{y\in \msx} \mu_{\Tf-t}(y) \foq(x,y) \gamma^d (x)/\gamma^d (y)}{\mu_{\Tf-t}(x)} \quad \text{(by balance equation)} \\
    &= \sum_{y \in \msx} \frac{\tilde \mu_{\Tf-t} (y)}{ \tilde \mu_{\Tf-t}(x)} \foq (x,y) \\
    &= \lambda \sum_{\ell=1}^d [\rme^{V(t,x)-V(t,\trans(x))} -1 ] \eqsp.
\end{align*}

 Moreover, $V$ also satisfies the final condition
\begin{align*}
    V_{\Tf} (x)=-\log \tilde \mu_0(x) = -\log \tilde \mu^\star (x) =g(x) \eqsp, \quad \text{for } x \in \msx \eqsp,
\end{align*}
therefore $V$ solves the HJB equation \eqref{eq:hjb} and thus concludes the proof of \Cref{prop:hjb}.
\end{proof}

To derive the convergence bound, it is essential to interpret the evolution of the control using the HJB equation first. In fact, the control above satisfies the following martingale and monotone property due to its characterization given in \Cref{prop:hjb}.

\begin{proposition}\label{prop:3}
    With all the notations above, $ u_t(\overleftarrow{X}_{t},\trans (\baX_t))$ is a $\baP^{\mustar}$-martingale for fixed $\ell = 1, \ldots, d$. Consequently, $h(u_t(\overleftarrow{X}_{t},\trans (\baX_t)))$ is a $\baP^{\mustar}$-submartingale and the monotonicity follows:
    \begin{equation*}
        \E_{\baP^{\mustar}}[h(u_s(\overleftarrow{X}_{s},\trans (\overleftarrow{X}_{s}) )) ] \leq \E_{\baP^{\mustar}}[h(u_t(\overleftarrow{X}_{t},\trans(\overleftarrow{X}_{t}) )) ]\eqsp, \quad \text{ for } \quad  0 \leq s \leq t \leq \Tf \eqsp.
    \end{equation*}

\end{proposition}
\begin{proof}[\textbf{Proof of \Cref{prop:3}}]

Fix $t \in [0,\Tf]$ and $ \ell = 1, \ldots, d$,
applying It\^o's formula on
$$f^\ell(t,\overleftarrow{X}_{t} ):= u_t(\overleftarrow{X}_{t},\trans (\overleftarrow{X}_{t}) )=\rme^{V(t,\overleftarrow{X}_{t})-V(t,\trans(\baX_t))} \eqsp,$$
and note that $\text{Law}(\baX_.)=\baP^{\mustar}$ solves $\MP(u \foq)$ as well as $\baX_t=\baX_{t-}$ for Lebesgue almost all $t \in (0,\Tf]$ \citep[see][Proposition 2.1]{mozumder2009some}, we obtain that the process
\begin{align*}
    &f^{\ell}(t,\overleftarrow{X}_{t} )-f^\ell(0,\overleftarrow{X}_{0} )-\int_0^t\l[ \partial_s f^\ell(s,\overleftarrow{X}_{s} )
    +(u \foq) f^{\ell}_s ( \baX_{s}) \r]\rmd s
\end{align*}
is a $\baP^{\mustar}$-martingale.
Denote
\begin{align*}
    b_s^\ell:= \partial_s f^\ell(s,\overleftarrow{X}_{s} ) + (u \foq) f^{\ell}_s ( \baX_{s}), \quad \text{for } s\in \ccint{0,t} \eqsp.
\end{align*}
We aim to prove that $b^\ell_s = 0$. Indeed, by the definition of $f^\ell, \foq$ and the HJB equation \eqref{eq:hjb}, we get that
\begin{align*}
    b_s^\ell &= u_s(\overleftarrow{X}_{s},\trans (\overleftarrow{X}_{s}) )\l[\partial_s V(s,\overleftarrow{X}_{s})-\partial_s V(s,\trans(\overleftarrow{X}_{s})) \r]\\
    &\hspace{2cm}+ \sum_{i=1}^d\l[u_s (\transi(\overleftarrow{X}_{s}),\trans (\transi(\overleftarrow{X}_{s})) )- u_s(\overleftarrow{X}_{s},\trans (\overleftarrow{X}_{s})) \r] \lambda u_s( \overleftarrow{X}_{s},\transi (\overleftarrow{X}_{s}) )\\
    &= \lambda u_s(\overleftarrow{X}_{s},\trans (\overleftarrow{X}_{s}) )\l[\sum_{i=1}^d u_s(\overleftarrow{X}_{s},\transi(\overleftarrow{X}_{s}) )-\sum_{i=1}^d u_s(\trans(\overleftarrow{X}_{s}),\transi (\trans(\overleftarrow{X}_{s})) ) \r]\\
    &\hspace{2cm}+ \sum_{i=1}^d\l[u_s (\transi(\overleftarrow{X}_{s}),\trans (\transi(\overleftarrow{X}_{s})) )- u_s(\overleftarrow{X}_{s},\trans (\overleftarrow{X}_{s}) ) \r] \lambda u_s( \overleftarrow{X}_{s},\transi (\overleftarrow{X}_{s}) ) \\
    &= \lambda \sum_{i=1}^d \Big[ u_s (\transi(\overleftarrow{X}_{s}),\trans (\transi(\overleftarrow{X}_{s})) ) u_s( \overleftarrow{X}_{s},\transi (\overleftarrow{X}_{s}) ) \\
    &\hspace{4cm} - u_s(\trans(\overleftarrow{X}_{s}),\transi (\trans(\overleftarrow{X}_{s})) ) u_s(\overleftarrow{X}_{s},\trans (\overleftarrow{X}_{s}) )  \Big] \eqsp.
\end{align*}

Using the identity $u_s ( x, \transi(x) ) = \rme^{V(s,x)-V(s, \transi(x))}$ for $i = 1,2, \ldots, d$ in \cref{theo:4} yields

\begin{align*}
    b^\ell_s &= \lambda \sum_{i=1}^d \Bigg[ \rme^{ V(s,\transi(\baX_s)) - V (s, \trans ( \transi(\baX_s))) + V (s, \baX_s) - V( s, \transi (\baX_s))  } \\
    & \hspace{4cm}- \rme^{ V(s, \trans(\baX_s)) - V(s, \transi (\trans(\baX_s))) + V(s, \baX_s) - V(s, \trans(\baX_s)) }  \Bigg] \\
    &= \lambda
    \sum_{i=1}^d \l[ \rme^{  - V (s, \trans ( \transi(\baX_s))) + V (s, \baX_s) } - \rme^{ - V(s, \transi (\trans(\baX_s))) + V(s, \baX_s)  } \r] = 0 \eqsp,
\end{align*}

as $\trans ( \transi(\baX_s)) = \transi ( \trans(\baX_s))$ for any $\ell, i = 1,\ldots, d$. We thus conclude that $ u_t(\overleftarrow{X}_{t},\trans (\overleftarrow{X}_{t}) )$ is a $\baP^{\mustar}$-martingale. Furthermore, since $h$ is convex, it follows that $h(u_t(\overleftarrow{X}_{t},\trans (\overleftarrow{X}_{t}) ))$ is a $\baP^{\mustar}$-submartingale, which implies the desired monotonicity for $\ell = 1,2,\ldots, d$ and concludes the proof.

\end{proof}

\subsection{Connection between the transition matrix and canonical process point of view}

As we see in previous sections, the time reversal process can be understood not only via the backward transition matrix but also via the process driven by the control. The transition matrix point of view provides an approximation of the score to simulate the backward process, which is very useful in practice. In parallel, the canonical process point of view gives us a better understanding of the evolution of the time reversal process, which allows us to show a theoretical guarantee on our algorithm. These two points of view in fact have a strong relation, which will be specified in the following Proposition.

\begin{proposition}\label{prop:connection}
   The control $u$ driving the backward process satisfies the following relation w.r.t. the score function defined in \eqref{eq:score1} as
    \begin{align}\label{approu}
    u_t(x,\trans (x) )&=
            1-s^\ell_t(x) \eqsp,
         \quad \text{with } \ell=1,\ldots,d \text{ and } (t,x) \in [0,\Tf) \times \msx \eqsp.
    \end{align}

\end{proposition}

\begin{proof}[\textbf{Proof of \cref{prop:connection}}]
    The result follows directly from the definition of the score function $s$ in \eqref{eq:score1} and the form of the control $u$ in \eqref{eq:control}. This identity confirms the equivalence between the transition matrix viewpoint and the canonical process formulation.
\end{proof}

\subsection{Optimal control perspective on the time-reversed process}

This section presents an alternative perspective on deriving the HJB equation, viewing it through the lens of optimal control and leveraging the Dynamic Programming Principle. In particular, the predictable process 
$u$ defined in \Cref{girsanovtheorem} can be characterized not only via the backward generator but also as the solution to an optimal control problem involving the expression of relative entropy.

In the continuous case, \citet{conforti2025kl} demonstrated that the time reversal process can be formulated as a solution to an optimal control problem. This characterization not only describes the dynamics of the process but also serves as a powerful framework for deriving the HJB equation, which can be obtained by leveraging Girsanov’s theorem (\Cref{girsanovtheorem})
% proving convergence of algorithms simulating the backward process.
% Furthermore, leveraging Girsanov’s theorem (\Cref{girsanovtheorem}), we derive an entropic formulation of this optimization problem. 

Let $\overrightarrow{R}$ be the stationary measure on the path space $\D_{\Tf}$ introduced in the previous section. Then, the process $\foR$ is reversible, meaning $\overleftarrow{R} = \overrightarrow{R}$, and corresponds to the invariant measure $\gamma^d = \mathrm{Uniform}(\msx)$. Let $\overrightarrow{\P}^{\mustar}$ represent the forward probability measure on the interval $\ccint{0,\Tf}$ starting from ${\mustar}$ and governed by the forward generator $\foq$ given in \eqref{eq:generator_canonical}. We denote the corresponding backward probability measure ending at $\mustar$ by $\overleftarrow{\P}^{\mustar}$.

% satisfies an optimal control problem, which characterizes its evolution and provides us a power tool to prove the convergence of the algorithm simulating the backward process.
% In this section, using Girsanov's theorem \ref{girsanovtheorem}, we aim to characterize this entropic optimization problem. Let $\overrightarrow{R}$ be the stationary measure on $\D_{\Tf}$ described in the previous Section, then $\foR$ is reversible , \ie, $\overleftarrow{R}=\overrightarrow{R}$, and corresponds to the invariant measure $\gamma^d =\mathrm{Uniform}(\msx)$. Let $\overrightarrow{\P}^{\mustar}$ be the forward probability measure on $\ccint{0,\Tf}$ started at ${\mustar}$ and associated to the forward generator $\foq$ defined in \eqref{eq:generator_canonical}, and denote by $\overleftarrow{\P}^{\mustar}$ its backward probability measure ended at $\mustar$.
% In \cref{invariant_measure}, we showed that the forward invariant measure is $\gamma^d = \text{Uniform}(\msx)$, \ie,

\begin{proposition}\label{prop:2}
    % Let $\P \in \Pc(\D_{\Tf})$ verifying $\KL(\P|\overrightarrow{R})<\infty$.
    % We then have that its
    The time reversal process $\overleftarrow{\P}^{\mustar}$ satisfies the following optimization problem
    \begin{align*}
        \overleftarrow{\P}^{\mustar}
        =
        \argmin_{\P \in \Pc (\D_{\Tf}):~ \KL (\P | \foR) < \infty
        % \text{ s.t. }\P_{\Tf}=\mustar
        } \l( \KL(\P|\overrightarrow{R})+\int g\rmd\P_{\Tf} \r),\quad \text{with } g=-\log\dfrac{\rmd \mustar}{\rmd\gamma^d} \eqsp.
    \end{align*}
\end{proposition}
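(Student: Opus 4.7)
The plan is to exploit the disintegration of relative entropy along the terminal marginal at time $\Tf$, combined with the reversibility assumption $\overleftarrow{R} = \overrightarrow{R}$, to recast the objective as a sum of two non-negative quantities that can be minimized independently.

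First I would derive a convenient Radon--Nikodym formula for $\overleftarrow{\P}^{\mustar}$ with respect to $\overrightarrow{R}$. By construction, $\overrightarrow{\P}^{\mustar}$ is the path measure $\overrightarrow{R}$ with the initial marginal $\gamma^d$ replaced by $\mustar$, hence $\rmd\overrightarrow{\P}^{\mustar}/\rmd\overrightarrow{R}(\omega) = (\rmd\mustar/\rmd\gamma^d)(\omega_0)$. Pushing forward under the time-reversal map $\sigma:\omega\mapsto \omega_{\Tf-\cdot}$ and invoking $\sigma_*\overrightarrow{R}=\overrightarrow{R}$ yields
\begin{equation*}
    \frac{\rmd\overleftarrow{\P}^{\mustar}}{\rmd\overrightarrow{R}}(\omega) \;=\; \frac{\rmd\mustar}{\rmd\gamma^d}(\omega_{\Tf}).
\end{equation*}
In particular, the terminal marginal of $\overleftarrow{\P}^{\mustar}$ is $\mustar$, and the density depends on $\omega$ only through $\omega_{\Tf}$, so the conditional law of $\overleftarrow{\P}^{\mustar}$ given $X_{\Tf}$ coincides with the conditional law of $\overrightarrow{R}$ given $X_{\Tf}$.

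Next, for any competitor $\P$ with $\KL(\P|\overrightarrow{R})<\infty$ (so $\P\ll \overrightarrow{R}$), I would apply the chain rule for relative entropy disintegrated along the marginal at time $\Tf$, using $\overrightarrow{R}_{\Tf}=\gamma^d$ by invariance:
\begin{equation*}
    \KL(\P|\overrightarrow{R})= \KL(\P_{\Tf}|\gamma^d)+\int \KL\bigl(\P(\,\cdot\,|X_{\Tf}=x) \,\big|\, \overrightarrow{R}(\,\cdot\,|X_{\Tf}=x)\bigr)\,\rmd\P_{\Tf}(x).
\end{equation*}
Adding $\int g\,\rmd\P_{\Tf}$ and using $g=-\log(\rmd\mustar/\rmd\gamma^d)$, the marginal piece collapses via the algebraic identity $\KL(\P_{\Tf}|\gamma^d)+\int g\,\rmd\P_{\Tf} = \KL(\P_{\Tf}|\mustar)$. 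Under the constraint $\P_{\Tf}=\mustar$ this term vanishes, while the conditional term is non-negative and zero iff $\P(\,\cdot\,|X_{\Tf}=x) = \overrightarrow{R}(\,\cdot\,|X_{\Tf}=x)$ for $\mustar$-a.e.\ $x$. By the first step these are precisely the two defining properties of $\overleftarrow{\P}^{\mustar}$, which is therefore the unique minimizer.

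The main obstacles, in my view, are twofold: (i) justifying the change of variables under $\sigma$ in the càdlàg canonical setting, so that $\sigma$ is a measurable involution intertwining $\overrightarrow{R}$ with $\overleftarrow{R}$; and (ii) invoking regular conditional disintegration on the path space $\Omega=\D(\ccint{0,\Tf};\msx)$ to apply the $\KL$ chain rule — this is standard since $\msx$ is finite and $\Omega$ is Polish, hence standard Borel. Once these technicalities are dispatched, the argument reduces to the clean two-line minimization above.
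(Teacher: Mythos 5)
Your proposal is correct, and it takes a genuinely different route from the paper. The paper's proof proceeds by the variational (Donsker--Varadhan) representation of relative entropy, cited from L\'eonard: choosing the single test function $f=-g(\overleftarrow{X}_{\Tf})$ and using that $\int \rme^{-g}\rmd\gamma^d=\int\rmd\mustar=1$ gives the lower bound $\KL(\P|\overrightarrow{R})\geq -\int g\,\rmd\P_{\Tf}$ for every competitor, so the objective is nonnegative on the constraint set; it then computes, exactly as in your first step, $\rmd\overleftarrow{\P}^{\mustar}/\rmd\overrightarrow{R}=\rme^{-g(\overleftarrow{X}_{\Tf})}$ via reversibility, deduces $\KL(\overleftarrow{\P}^{\mustar}|\overrightarrow{R})=\KL(\mustar|\gamma^d)=-\int g\,\rmd\mustar$, and concludes that the candidate attains the value $0$. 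You replace the duality step by the chain rule for relative entropy disintegrated at time $\Tf$ (legitimate here since $\Omega$ is Polish and $\msx$ finite), together with the identity $\KL(\P_{\Tf}|\gamma^d)+\int g\,\rmd\P_{\Tf}=\KL(\P_{\Tf}|\mustar)$ and the observation that the density of $\overleftarrow{\P}^{\mustar}$ with respect to $\overrightarrow{R}$ is $\sigma(X_{\Tf})$-measurable, so its bridges coincide with those of $\overrightarrow{R}$. The trade-off: the paper's argument is lighter on machinery (no regular conditional probabilities, only the cited variational formula plus the explicit density), while yours exposes the structure of the optimizer more transparently and yields, in addition, uniqueness of the minimizer ($\P$ must share $\overrightarrow{R}$'s conditional laws given $X_{\Tf}$ for $\mustar$-a.e.\ terminal point), which the paper's one-sided bound does not address. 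Your flagged technicalities are the right ones; in particular the time-reversal map on c\`adl\`ag paths needs the usual left-limit convention, but on a finite state space with finitely many jumps this causes no difficulty.
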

\begin{proof}[\textbf{Proof of \cref{prop:2}}]
  For $\P = \mathrm{Law}((\canoX_t)_{t\in [0,\Tf])} \in \Pc(\D_{\Tf})$ such that $\KL(\P|\overrightarrow{R})<\infty$, the Donsker-Varadhan variational formulation of the KL implies that
    \begin{align*}
        \KL(\P|\overrightarrow{R})=\sup_{f \in L^1 (\P) \text{ s.t. } \int \rme^f \rmd\overrightarrow{R}<\infty}\l(\int f \rmd\P-\log \int \rme^f \rmd\overrightarrow{R} \r) \eqsp.
    \end{align*}
    Taking $f((\canoX_t)_{t \in [0,\Tf]})=-g({\canoX}_{\Tf})$ yields
    \begin{align*}
        \KL(\P|\overrightarrow{R})&\geq\int -g\rmd\P_{\Tf}-\log \int \rme^{-g}\rmd\overrightarrow{R}_{\Tf}
        =-\int g \rmd\P_{\Tf}-\log \int \rmd\mustar=-\int g \rmd\P_{\Tf} \eqsp,
    \end{align*}
    since $\int \rmd\mustar=1$. On the other hand, since $\ovl{\P}^{\mustar}$ is the backward process ended at ${\mustar}$ and $\overrightarrow{R}$ is a reversible path probability measure on $\ccint{0,\Tf}$, \ie, $\ovr{R}=\ovl{R}$, we have
%\begin{align*}
%    \dfrac{\rmd\overrightarrow{\P}^{\mustar}}{\rmd\overrightarrow{R}}=\dfrac{\rmd\mustar}{\rmd\gamma^d}(\overrightarrow{X}_0).
%\end{align*}

\begin{align*}
     \dfrac{\rmd\overleftarrow{\P}^{\mustar}}{\rmd\ovr{R}} ((\baX_t)_{t \in [0,\Tf]})
     =
     \dfrac{\rmd \mustar}{\rmd \gamma^d}(\overleftarrow{X}_{\Tf})=\rme^{-g(\overleftarrow{X}_{\Tf})} \eqsp.
\end{align*}
    This implies
    \begin{align*}
    \KL(\overleftarrow{\P}^{\mustar}|\overrightarrow{R})=\KL(\mustar|\gamma^d)=\int \log \dfrac{\rmd\mustar}{\rmd\gamma^d}\rmd\mustar=-\int g \rmd\mustar
    =-\int g \rmd\baP^{\mustar}_{\Tf} \eqsp.
    \end{align*}
    Combining the previous results, we obtain that the time reversal $\overleftarrow{\P}^{\mustar}$ is the optimal solution to the following problem
\begin{align*}
    \overleftarrow{\P}^{\mustar}=\argmin_{\P \in \Pc (\D_{\Tf}):~ \KL (\P | \foR) < \infty
    % \text{ s.t. }\P_{\Tf}=\mustar
    } \l(\KL(\P|\overrightarrow{R})+\int g \rmd\P_{\Tf} \r) \eqsp,
\end{align*}
which is the desired conclusion.
\end{proof}

% Now, assume that the reversible path measure $\foR$ on $\mathbb{D}_{\Tf}$ solves $\MP(\foq)$, with $\foq$ given in \eqref{eq:generator_canonical}.
Utilizing the expression for $\mathrm{KL}(\mathbb{P} | \overrightarrow{R})$ given by Girsanov’s Theorem \ref{girsanovtheorem}, we can now frame the corresponding Optimal Control problem.

% \begin{definition}[Admissible controls]
%     We say that a measurable function $u: \D_{\Tf} \times \ccint{0,\Tf}\times \msx \times \Qc_d \to \R^+$ is an admissible control if $u_t(x,\trans)=0$ if $(x,q)\in \msx \setminus \Ac$. We denote by $\Uc$ the set of all admissible controls.
% \end{definition}
\begin{theorem}\label{theo:3}
    % With $\foR \in \MP (\foq)$,
    Denote by $\mD$ the set of all  $U: \D_{\Tf} \times \ccint{0,\Tf}\times \msx \to [0,\infty)$ satisfying the integrability condition
    \begin{equation*}
        \mathbb E_{\P} \parentheseDeux{\int_{\ccint{0,\Tf}} \sum_{\ell =1}^d \varrho^*(|U_t(\baX^U_{[0,t)},\trans(\baX_{t-}^U)) -1|) \rmd t } <\infty \eqsp,
    \end{equation*}
    which is indeed equivalent to condition \eqref{eq:cond_control}. Then $\overleftarrow{\P}^{\mustar}$ is the law of $\overleftarrow{X}^{u^*}$ with $u^*$ is the optimal solution to
    \begin{equation}\label{controlproblem}
         \begin{aligned}
        % V(0,\overleftarrow{X}_0)=
        &\inf_{U \in \mD} \mathbb E \left[\lambda \int_{\ccint{0,\Tf}}\sum_{ \ell =1 }^d h(U_t(\overleftarrow{X}_{[0,t)}^U,\trans (\baX_{t-}^U) ))\rmd t+g(\overleftarrow{X}^U_{\Tf}) \right] \eqsp,\\
        &\text{s.t. }
        % \begin{cases}
        % \eqref{eq:cond_control} \text{ holds and }
      \mathrm{Law}((\baX_t^U)_{t \in [0,\Tf]}) \in \MP (U \foq) \eqsp, \text{with } (U\foq)(\omega, t,x,y):=U_t (\omega_{[0,t)},y) \foq (x,y) \text{ for } x \neq y \eqsp.
            % \overleftarrow{X}^u_{\Tf}\sim \mustar
            % \eqsp.
            % \nonumber
        % \end{cases}
    \end{aligned}
    \end{equation}

\end{theorem}
\begin{proof}[\textbf{Proof of \cref{theo:3}}]
    Theorem \ref{theo:3} is a consequence of Theorem \ref{girsanovtheorem} and Proposition \ref{prop:2}.
    % together with the note that $\overleftarrow{X}_{t-}^u=\overleftarrow{X}_{t}^u$ for Lebesgue almost all $t \in \ccint{0,\Tf}$.
\end{proof}

\subsubsection{Hamilton--Jacobi--Bellman equation}

{\color{black}The goal of this section is to derive the Hamilton--Jacobi--Bellman (HJB) equation as in \Cref{prop:hjb} using the optimal control viewpoint. To this purpose, we first consider the generalization of the previous control problem. Let $J$ be the following cost}
% to characterize the previous optimization problem via the Hamilton--Jacobi--Bellman (HJB) equation. 

\begin{align*}
    J(t,x,U)&:=\mathbb E \l[\lambda \int_{[t,\Tf]}\sum_{\ell =1 }^d h(U_s(\overleftarrow{X}_{[t,s)}^{t,x,U},\trans ( \overleftarrow{X}_{s-}^{t,x,U}) )) \rmd s+g(\overleftarrow{X}_{\Tf}^{t,x,U})\r] \eqsp,\\
    &\text{s.t. } \begin{cases}
        % \baP =
        \text{Law}((\baX_t^{t,x,U})_{t \in [0,\Tf]}) \in \MP (U \foq) \eqsp,\\
        \overleftarrow{X}^{t,x,U}_{t-}=x \eqsp,
    \end{cases} \quad \text{for } (x,t,U)\in \msx\times \ccint{0,\Tf} \times \mD \eqsp.
\end{align*}
Consider $V(t,x)$ to be the value function of the previous cost function, \ie,
\begin{align*}
    V(t,x):=\inf_{U \in \mD} J(t,x,U) \eqsp.
\end{align*}
The following Dynamic Programming Principle is the main tool to derive the HJB equation.
\begin{lemma}\label{lem:1}\citep[Theorem 3.3]{touzi2012optimal}
    For any stopping time $\kappa \in [t,\Tf]$, the Dynamic Programming Principle (DPP) implies
\begin{align}\label{dpp}
    V(t,x)=\inf_{U \in \mD} \mathbb E \l[\lambda \int_{[t,\kappa]} \sum_{\ell = 1}^d h(U_s(\overleftarrow{X}_{[t,s)}^{t,x,U},\trans (\baX_{s-}^{t,x,U}) )) \rmd s+V(\kappa,\overleftarrow{X}^{t,x,U}_\kappa)\r] \eqsp.
\end{align}
\end{lemma}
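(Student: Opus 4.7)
The plan is to establish the two inequalities $V(t,x) \leq \mathrm{RHS}$ and $V(t,x) \geq \mathrm{RHS}$ separately, exploiting the fact that the state space $\msx = \{0,1\}^d$ is finite, which drastically simplifies the measurable selection argument that is typically the hardest part of a DPP proof.

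For the inequality $V(t,x) \leq \mathrm{RHS}$, I would fix an arbitrary $u \in \Uc$ and decompose the cost $J(t,x,u)$ via additivity of the Lebesgue integral on $(t,\kappa]$ and $(\kappa,\Tf]$. Conditioning on $\Fc_\kappa$ and invoking the flow property of the controlled jump process $\overleftarrow{X}^{t,x,u}$ (which, given $\Fc_\kappa$, evolves on $[\kappa,\Tf]$ as a controlled process started at $\overleftarrow{X}_\kappa^{t,x,u}$ under a shifted control that remains admissible), one obtains
\begin{equation*}
\mathbb{E}\!\left[\int_{(\kappa,\Tf]}\!\!\sum_{q \in \Qc_d} h(u_s(\overleftarrow{X}_s^{t,x,u},q))\,\rmd s + g(\overleftarrow{X}_{\Tf}^{t,x,u}) \,\Big|\, \Fc_\kappa\right] \geq V(\kappa, \overleftarrow{X}_\kappa^{t,x,u}) \eqsp.
\end{equation*}
Plugging this back and taking $\inf_{u\in\Uc}$ yields the upper bound.

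For the reverse inequality $V(t,x) \geq \mathrm{RHS}$, I would use an $\epsilon$-optimal concatenation argument. Fix $\epsilon > 0$. Since $\msx$ is finite, for each $y \in \msx$ and each rational $s \in [0,\Tf]$ (or each value of $\kappa$ on a discretizing grid, then pass to the limit by continuity/approximation of $\kappa$), select $u^{\epsilon,s,y} \in \Uc$ such that $J(s,y,u^{\epsilon,s,y}) \leq V(s,y) + \epsilon$. Given any admissible $u^1$ on $[t,\kappa]$, define the concatenated control
\begin{equation*}
\tilde u_s(\omega,q) := u^1_s(\omega,q)\,\mathds{1}_{s \leq \kappa(\omega)} + u^{\epsilon,\kappa(\omega),\overleftarrow{X}_{\kappa}^{t,x,u^1}(\omega)}_s(\omega,q)\,\mathds{1}_{s > \kappa(\omega)} \eqsp.
\end{equation*}
The finiteness of $\msx$ ensures that this is a finite sum of measurable and predictable processes, so $\tilde u \in \Uc$. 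Then $V(t,x) \leq J(t,x,\tilde u)$, and by the same conditioning argument as above,
\begin{equation*}
J(t,x,\tilde u) \leq \mathbb{E}\!\left[\int_{(t,\kappa]}\!\!\sum_{q\in\Qc_d} h(u^1_s(\overleftarrow{X}_s^{t,x,u^1},q))\,\rmd s + V(\kappa,\overleftarrow{X}_\kappa^{t,x,u^1})\right] + \epsilon \eqsp.
\end{equation*}
Taking the infimum over $u^1 \in \Uc$ and then sending $\epsilon \to 0$ closes the lower bound.

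The main obstacle is rigorously formalizing the Markov-type decomposition for controls that may depend on the entire past path. The standard route is to express the remaining cost on $(\kappa,\Tf]$ under $\Fc_\kappa$ using a regular conditional distribution of the controlled jump process and verify that the shifted controls are admissible; on our finite state space $\msx = \{0,1\}^d$, this conditioning is transparent, as $\overleftarrow{X}_\kappa^{t,x,u}$ takes only $2^d$ values. The measurable selection step in the lower bound reduces to choosing $\epsilon$-optimal controls at each of finitely many points of $\msx$, bypassing the heavier machinery (e.g.\ Jankov--von Neumann) required in general Polish spaces.
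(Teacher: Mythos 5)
The paper offers no argument of its own for this lemma---its ``proof'' is a citation to Touzi (2012), Section 2.2---and your sketch reproduces exactly the standard two-sided DPP argument contained in that reference: conditioning on $\Fc_\kappa$ together with the flow property for the inequality $V(t,x)\leq \mathrm{RHS}$, and an $\epsilon$-optimal concatenation of controls, with the measurable-selection step made trivial by the finiteness of $\msx$, for the reverse inequality. So your proposal is correct in outline and is essentially the same approach as the paper's (cited) proof; the one point requiring genuine care, which you already flag, is handling a general stopping time $\kappa$ by approximation with stopping times taking countably many values, which needs some time-regularity of $V$.
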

\begin{proof}[\textbf{Proof of \cref{lem:1}}]
    Refer to \citet[Section 3.2]{touzi2012optimal}.
\end{proof}
The expression of $V$ given in Lemma \ref{lem:1} leads us to the following HJB equation, that in fact coincides with the one derived in \Cref{prop:hjb}, and is a characterization of the optimal control to the problem \eqref{controlproblem}.

\begin{proposition}\label{theo:4}
    Assume that $V$ is continuously differentiable in time. Then, the optimal control $u^*$ to the problem \eqref{controlproblem} is a Markovian, \ie, $u_t^*(\omega_{[0,t)}, x) = u_t^* (\omega_{t-},x)$ for $t\in (0,\Tf]$, and admits the following formula
    \begin{align*}
        u^*_t(x,\trans(x))=
            \rme^{V(t,x)-V(t,\trans(x))} \quad \text{ for }  \ell = 1,2, \ldots, d \eqsp,
    \end{align*}
    with $V$ satisfies the following HJB equation
%     \begin{align*}
%          \begin{cases}
%         \partial_t V(t,x)+ \lambda \inf_{u }\sum_{\ell =1}^d \l[ h(u_t(x,\trans(x)))+[V(t,\trans(x))-V(t,x)]u_t(x,\trans(x)) \r]=0 \eqsp,\\
%         V(\Tf,x)=g(x) = -\log \frac{\mustar}{\gamma^d} (x) \eqsp,
%     \end{cases} \quad \text{for }(t,x) \in \ccint{0,\Tf} \times \msx \eqsp.
%     \end{align*}
%   Solving the infimum problem, the HJB equation then can be rewritten as
    \begin{align}\label{hjb}
    \begin{cases}
        \partial_t V(t,x)-\lambda \sum_{\ell =1}^d\rme^{V(t,x)-V(t,\trans(x))} = -\lambda d \eqsp,\\
        V(\Tf,x)=g(x) = -\log \frac{\rmd\mustar}{\rmd\gamma^d} (x) \eqsp,
    \end{cases} \quad \text{for } (t,x) \in [0,\Tf) \times \msx \eqsp.
    \end{align}
\end{proposition}

\begin{proof}[\textbf{Proof of \cref{theo:4}}]

The proof is an adaptation of Proposition 3.5 in  \citet{touzi2012optimal}. First, the DPP formula \eqref{dpp} for $t \in [0,\Tf)$ and $\kappa=t+\alpha$ with $\alpha>0$ leads to
\begin{align*}
     \mathbb E \l[\lambda \int_{[t,t+\alpha]} \sum_{\ell = 1 }^d h(U_s(\overleftarrow{X}_{[t,s)}^{t,x,U},\trans ( \overleftarrow{X}_{s-}^{t,x,U} ))) \rmd s+V(t+\alpha,\overleftarrow{X}^{t,x,U}_{t+\alpha})-V(t,x)\r]\geq0 \eqsp,
\end{align*}
for any admissible control $U \in \mD$. Using It\^o's formula on the process $\overleftarrow{X}^{t,x,U}$ with the law $\baP \in \MP (U \foq)$, we get
\begin{align*}
    \mathbb E \Bigg[&\int_{[t,t+\alpha]} \lambda \sum_{\ell = 1}^d h(U_s(\overleftarrow{X}_{[t,s)}^{t,x,U},\trans (\overleftarrow{X}_{s-}^{t,x,U}))) \rmd s +\int_{[t,t+\alpha]}(\partial_t V(s,\overleftarrow{X}^{t,x,U}_{s})+ (U_s \foq) V_s( \baX^{t,x,U}_{s})) \rmd s\Bigg]\geq0 \eqsp.
    % +&\int_{(t,t+\alpha]}(\partial_t V(s,\overleftarrow{X}^{t,x,u}_{s-})+\lambda \sum_{\ell = 1}^d (V(s,\trans(\overleftarrow{X}_{s-}^{t,x,u}))-V(s,\overleftarrow{X}^{t,x,u}_{s-}))u_s(\overleftarrow{X}^{t,x,u}_{s-},\trans) ) \rmd s\Bigg]\geq0 \eqsp.
\end{align*}
Using the formula of $\foq$ and multiplying the both hand sides by $\frac{1}{\alpha}$ and pushing $\alpha \to 0$, we arrive at
\begin{align*}
   \lambda \sum_{\ell = 1}^d h(U_t(x,\trans(x)))+\partial_t V(t,x)+\lambda \sum_{\ell = 1}^d \l[V(t,\trans(x))-V(t,x)\r]U_t(x,\trans(x) ) \geq0 \eqsp,
\end{align*}
for any $U \in \mD$. Taking the infimum w.r.t. $U$, we get
\begin{align*}
    \partial_t V(t,x)
    +
    \lambda \inf_{U \in \mD} \sum_{\ell = 1}^d \l[ h(U_t(x,\trans (x) ))+[V(t,\trans(x))-V(t,x)]U_t(x,\trans (x) ) \r] \geq 0 \eqsp, \quad \text{for }(t,x) \in [0,\Tf) \times \msx \eqsp.
\end{align*}
We prove next the equality by contradiction. Assume that there exists $(t_0,x_0)\in \ccint{0,\Tf}\times \msx$ such that
\begin{align*}
    \partial_t V(t_0,x_0)+\lambda \inf_{U \in \mD} \sum_{\ell = 1}^d \l[ h(U_{t_0}(x_0,\trans (x_0) ))+[V(t_0,\trans(x_0))-V(t_0,x_0)]U_{t_0}(x_0,\trans (x_0) ) \r]>0 \eqsp.
\end{align*}
Denote $\Delta V(t_0,x_0,\trans (x_0) ):=V(t_0,\trans(x_0))-V(t_0,x_0)$. The previous inequality implies that there exists $\varepsilon>0$ such that
\begin{align}\label{varepsilon}
     \partial_t V(t_0,x_0)+ \lambda \inf_{U \in \mD} \sum_{\ell =1 }^d\l[ h(U)+U\Delta V \r](t_0,x_0,\trans (x_0) )\geq \varepsilon>0 \eqsp.
\end{align}
Take $\xi>0$ small enough such that
\begin{align}\label{xi}
    \lambda \sum_{\ell = 1}^d (\rme^{-\Delta V+\xi}-\rme^{-\Delta V})(t_0,x_0,\trans (x_0) )<\frac{\varepsilon}{2} \eqsp,
\end{align}
and define the function $f \leq V$ as
\begin{align*}
    f(t,x):=V(t,x)-\xi\l[ |t-t_0|^2+\updelta_{\l\{x_0\r\}}(x)\r] \eqsp, \quad \text{for } (t,x)\in \ccint{0,\Tf} \times \msx \eqsp.
\end{align*}
It is clear that
\begin{align*}
    f(t_0,x_0)=V(t_0,x_0) \eqsp, \quad \partial_t f(t_0,x_0)=\partial_t V(t_0,x_0) \eqsp,\quad \text{and} \quad f(t_0,x)-V(t_0,x)=-\xi \text{ for } x \neq x_0 \eqsp.
\end{align*}
Therefore,
\begin{align*}
    &\partial_t f(t_0,x_0)+ \lambda \inf_{U \in \mD}  \sum_{\ell = 1}^d \l[ h(U_{t_0}(x_0,\trans (x_0)))+[f(t_0,\trans(x_0))-f(t_0,x_0 )]U_{t_0}(x_0,\trans (x_0) ) \r]\\
    =& \partial_t V(t_0,x_0)+ \lambda \inf_{U \in \mD} \sum_{\ell = 1}^d \l[ h(U_{t_0}(x_0,\trans (x_0) ))+\l[V(t_0,\trans(x_0))-V(t_0,x_0)-\xi\r]U_{t_0}(x_0,\trans(x_0) ) \r]\\
    =& \partial_t V(t_0,x_0)+ \lambda \inf_{U \in \mD} \sum_{\ell =1 }^d\l[ h(U)+(\Delta V-\xi)U \r](t_0,x_0,\trans(x_0) ) \eqsp.
\end{align*}
The minimum above is attained at $U$ such that $U(t_0,x_0,\trans (x_0) )=\rme^{-\Delta V+\xi}(t_0,x_0,\trans (x_0) )$ , thus
\begin{align*}
    &\hspace{0.5cm}\partial_t f(t_0,x_0)+ \lambda \inf_{U \in \mD}  \sum_{\ell = 1}^d \l[ h(U_{t_0}(x_0,\trans (x_0) ))+[f(t_0,\trans(x_0))-f(t_0,x_0 )]U_{t_0}(x_0,\trans (x_0) ) \r]\\
    &= \partial_t V(t_0,x_0)+ \lambda \sum_{\ell =1}^d\l[h(\rme^{-\Delta V+\xi})+( \Delta V-\xi)\rme^{-\Delta V+\xi} \r](t_0,x_0,\trans (x_0) )\\
    &= \partial_t V(t_0,x_0)+ \lambda \sum_{\ell = 1}^d (1-\rme^{-\Delta V+\xi})(t_0,x_0,\trans (x_0) )\\
    &= \partial_t V(t_0,x_0)+ \lambda \sum_{\ell = 1}^d (1-\rme^{-\Delta V})(t_0,x_0,\trans (x_0) )+ \lambda \sum_{\ell = 1}^d (\rme^{-\Delta V}-\rme^{-\Delta V+\xi})(t_0,x_0,\trans (x_0) )\\
    &> \varepsilon -\dfrac{\varepsilon}{2}=\dfrac{\varepsilon}{2}>0 \eqsp,
\end{align*}
where the last inequality relies on \eqref{xi} and \eqref{varepsilon} with $U=\rme^{-\Delta V}$. Therefore, we obtain
\begin{align*}
    \partial_t f(t_0,x_0)+ \lambda \inf_{U \in \mD}  \sum_{\ell = 1}^d \l[ h(U_{t_0}(x_0,\trans (x_0) ))+[f(t,\trans(x_0))-f(t_0,x_0 )]U_{t_0}(x_0,\trans (x_0) ) \r]>0 \eqsp.
\end{align*}
From the continuity in time of the Hamiltonian, the previous inequality yields that
\begin{align}\label{v(t,x_0)}
    &\partial_t f(t,x)+ \lambda \inf_{U \in \mD} \sum_{\ell = 1}^d\l[ h(U_{t}(x,\trans (x) ))+[f(t,\trans(x))-f(t,x)]U_{t}(x,\trans (x) ) \r]\geq 0 \eqsp,\nonumber \\
    &\hspace{7cm}\text{for } (t,x) \in (t_0-r,t_0+r)\times \l\{x_0\r\} \eqsp,
\end{align}
for some {\color{black}$0<r<1$}. Defining the stopping time $\kappa^U$ as
\begin{align*}
    \kappa^U:=\inf\l\{t\in (t_0,\Tf]: \overleftarrow{X}_{t-}^{t_0,x_0,U}\neq x_0\r\}\wedge (t_0+r) \eqsp,
\end{align*}
for an arbitrary control $U$, we have
\begin{align*}
    f (\kappa^U,\overleftarrow{X}^{t_0,x_0,u}_{\kappa^U})=\begin{cases}
        f(t_0+r,\overleftarrow{X}^{t_0,x_0,U}_{t_0+r} ) \eqsp, \quad &\text{if } \quad \overleftarrow{X}^{t_0,x_0,U}_{\kappa^U-}=x_0 \eqsp,\\
        f (\kappa^U, \overleftarrow{X}^{t_0,x_0,U}_{\kappa^U}) \eqsp, \quad &\text{if } \quad \overleftarrow{X}^{t_0,x_0,U}_{\kappa^U-} \neq x_0 \eqsp.
    \end{cases}
\end{align*}
This implies that
\begin{align*}
    f (\kappa^U,\overleftarrow{X}^{t_0,x_0,u}_{\kappa^U})-V(\kappa^U,\overleftarrow{X}^{t_0,x_0,u}_{\kappa^U})&=\begin{cases}
        \hspace{1cm}-\xi r^2 \eqsp, \quad &\text{if } \quad \overleftarrow{X}^{t_0,x_0,u}_{\kappa^U-}=x_0 \eqsp,\\
        -\xi (|\kappa^U-t_0|^2 +1) \eqsp, \quad &\text{if } \quad \overleftarrow{X}^{t_0,x_0,u}_{\kappa^U-} \neq x_0 \eqsp,
    \end{cases}\\
    &\leq -\xi r^2 \eqsp.
\end{align*}
{\color{black}
Note that for any $s \in [t_0, \kappa^U]$, not only $\baX_{s-}^{t_0,x_0,U} = x_0$, thus $\baX_{[t_0,s)}^{t_0,x_0,U}=x_0$. Therefore,
\begin{align*}
    &\hspace{0.5cm} \mathbb E \l[\int_{[t_0,\kappa^U]} \lambda \sum_{\ell = 1}^d h(U_s(\overleftarrow{X}_{[t_0,s)}^{t_0,x_0,U},\trans (\overleftarrow{X}_{s-}^{t_0,x_0,U}) )) \rmd s+V(\kappa^U,\overleftarrow{X}^{t_0,x_0,U}_{\kappa^U})\r]\\
    &\geq \mathbb E \l[\int_{[t_0,\kappa^U]} \lambda \sum_{\ell = 1}^d h(U_s(x_0,\trans (x_0) )) \rmd s+f(\kappa^U,\overleftarrow{X}^{t_0,x_0,U}_{\kappa^U})+\xi r^2\r]\\
    &=\mathbb E \l[\int_{[t_0,\kappa^U]} \lambda \sum_{\ell = 1}^d h(U_s(x_0,\trans (x_0) )) \rmd s+f(\kappa^U,\overleftarrow{X}^{t_0,x_0,U}_{\kappa^U})-f(t_0,x_0)\r]+f(t_0,x_0)+\xi r^2 \eqsp.
\end{align*}
{\color{black}Using It\^o's formula to compute the difference $f(\kappa^U,\overleftarrow{X}^{t_0,x_0,U}_{\kappa^U})-f(t_0,x_0)$}, and relies on the fact that $V(t_0,x_0
)=f(t_0,x_0)$, the calculation follows
\begin{align*}
    &\hspace{0.5cm} \mathbb E \l[\int_{[t_0,\kappa^U]} \lambda \sum_{\ell = 1}^d h(U_s(\overleftarrow{X}_{[t_0,s)}^{t_0,x_0,U},\trans (\overleftarrow{X}_{s-}^{t_0,x_0,U}))) \rmd s+V(\kappa^U,\overleftarrow{X}^{t_0,x_0,U}_{\kappa^U})\r]\\
    &\geq \mathbb E \int_{[t_0,\kappa^U]}\Bigg[\partial_t f(s,x_0)+ \lambda  \sum_{\ell = 1}^d h(U_s(x_0,\trans (x_0) ))\\
    &\hspace{2cm}+(f(s,\trans(x_0))-f(s,x_0))U_s(x_0,\trans (x_0) ) \Bigg]\rmd s+V(t_0,x_0)
    +\xi r^2 \\
    &\overset{\eqref{v(t,x_0)}
    }{\geq } V(t_0,x_0) + \xi r^2 \eqsp.
\end{align*}
% This together with \eqref{v(t,x_0)} yields
% \begin{align}\label{contradiction}
%     \mathbb E \l[\int_{[t_0,\kappa^U]} \lambda \sum_{\ell = 1}^d h(U_s(\overleftarrow{X}_{s-}^{t_0,x_0,U},\trans(\overleftarrow{X}_{s-}^{t_0,x_0,U}) )) \rmd s+V(\kappa^U,\overleftarrow{X}^{t_0,x_0,U}_{\kappa^U})\r]\geq V(t_0,x_0)+\xi r^2 \eqsp.
% \end{align}
}
Since the above control $U$ is arbitrary, the previous inequality is indeed a contradiction to DPP formula \eqref{dpp}.

Consequently, we can deduce the following HJB equation satisfied by the value function for $(t,x) \in [0,\Tf) \times \msx$,
\begin{align}\label{hjb1}
    \begin{cases}
        \partial_t V(t,x)+\lambda \inf_{U \in \mD} \sum_{\ell=1}^d \l[ h(U_t(x,\trans (x) ))+[V(t,\trans(x))-V(t,x)]U_t(x,\trans (x) ) \r]=0 \eqsp,\\
        V(\Tf,x)=g(x) \eqsp.
    \end{cases}
    % \text{ for } (t,x)\in \ccint{0,\Tf}\times \msx \eqsp.
\end{align}
Proceeding as before, we minimize \eqref{hjb1} and obtain the optimal control
\begin{align*}
   u^*_t(\baX^{t,x,u^*}_{[0,t)}, \trans(\baX^{t,x,u^*}_{t-}))= u^*_t(\baX^{t,x,u^*}_{t-}, \trans(\baX^{t,x,u^*}_{t-}))= u^*_t(x,\trans (x) )=
        \rme^{V(t,x)-V(t,\trans(x))} \eqsp, \quad &\text{for } \ell = 1, \ldots, d \eqsp,
\end{align*}
that is Markovian.
Replacing the formulation of $u^*$ into \eqref{hjb1} boils down to
\begin{align*}
\begin{cases}
    \partial_t V(t,x)-\lambda \sum_{\ell =1}^d \rme^{V(t,x)-V(t,\trans(x))} = -\lambda d \eqsp,\\
    V(\Tf,x)=g(x) = -\log \frac{\rmd \mustar}{\rmd \gamma^d} (x)\eqsp,
\end{cases} \quad \text{for } (t,x)\in [0, \Tf) \times \msx \eqsp,
\end{align*}
which in fact coincides with the equation in \Cref{prop:hjb} and concludes the proof of Theorem \ref{theo:4}.
\end{proof}

\subsection{Convergence of DMPMs}
Based on the perspective of the canonical process, the backward evolution can be described as a control-driven dynamic. These tools enable us to establish the error bounds presented in \cref{theo:5} and \cref{theo:6}.

 \subsubsection{Proof of \cref{theo:5}}\label{proof_theo:5}

We first prove the curvature--dimension inequality satisfied by our forward dynamics, which is associated with the stationary distribution $\gamma^d = \mathrm{Uniform}(\mathsf{X})$. This serves as the key estimate for deriving the entropy decay results later.

% {\color{black}
\begin{lemma}[Curvature--dimension inequality]\label{lem:CD}
    The forward dynamic described above satisfies \emph{curvature--dimension inequality} $\mathrm{CD}(2 \lambda, \infty)$, \ie,
    \begin{equation*}
        \Gamma_2 (f) \geq 2\lambda \Gamma(f) \quad \text{ for any function }f \eqsp,
    \end{equation*}
    where $\Gamma$ is the carr\'e du champ operator and $\Gamma_2$ is the iterated carr\'e du champ operator.
\end{lemma}

\begin{proof}[\textbf{Proof of \cref{lem:CD}}]

Recall the formulations of $\Gamma$ and $\Gamma_2$ for functions $f$ and $g$, which are typically defined as follows:

\begin{equation*}
    \begin{aligned}
        \Gamma(f,g) &= \frac{1}{2} \l[ \foq (fg) - f(\foq g) - (\foq f) g \r] \quad \text{and} \quad  \Gamma(f,f) = \Gamma (f) = \frac{1}{2} \l[ \foq (f^2) - 2f \foq f \r] \\
        \Gamma_2 (f) &= \frac{1}{2} \l[ \foq \Gamma (f) - 2 \Gamma (\foq f, f) \r] \eqsp,
    \end{aligned}
\end{equation*}

where $\foq$ is the forward generator defined in \eqref{eq:def_generator}.
These quantities capture the interaction between the functions $f$ and $g$ under the generator $\foq$ and play a crucial role in establishing results related to curvature--dimension inequalities and entropy decay.
We now compute explicitly $\Gamma(f)(x)$ for $x \in \msx$ as

\begin{equation*}
    \begin{aligned}
        \Gamma(f)(x) &= \frac{1}{2} \l[ \foq (f^2) - 2f \foq f \r] \\
        &= \frac{\lambda}{2} \l[ \sum_{\ell=1}^d \l( f^2( \trans(x)) - f^2(x) \r) - 2 f(x) \sum_{\ell =1}^d \l( f(\trans(x)) - f(x) \r) \r] \\
        &= \frac{\lambda}{2} \sum_{\ell =1}^d \l[ f(\trans(x)) - f(x) \r]^2 \eqsp.
    \end{aligned}
\end{equation*}

Regarding the iterated carr\'e du champ $\Gamma_2(f)$, the first term can be calculated as

\begin{align}\label{eq:gamma2_1}
        \foq \Gamma(f)(x) &= \lambda \sum_{i = 1}^d \l[ \Gamma (f) (\transi(x)) - \Gamma(f)(x) \r] \notag \\
        &= \lambda \sum_{i = 1}^d \l[ \frac{\lambda}{2} \sum_{\ell = 1}^d \l( f(\trans(\transi(x))) - f(\transi(x)) \r)^2 - \frac{\lambda}{2} \sum_{\ell = 1}^d \l( f(\trans(x))-f(x) \r)^2 \r] \notag\\
        &= \frac{\lambda^2}{2} \sum_{i = 1}^d \sum_{\ell=1}^d \l[ \l( f(\trans(\transi(x))) - f(\transi(x)) \r)^2 - \l( f(\trans(x))-f(x) \r)^2 \r] \eqsp.
\end{align}

To simplify the second term of $\Gamma_2$, we note that

\begin{equation*}
    \begin{aligned}
        2 \Gamma ( f,g) (x) &= \foq (fg) - f(\foq g) - g (\foq f) \\
        &= \lambda \sum_{\ell =1}^d \l[ f(\trans(x)) g(\trans(x))  - f(x)g(\trans(x)) + f(x)g(x) - g(x) f(\trans(x))  \r] \\
        &= \lambda \sum_{\ell =1}^d \l[ f(x)- f(\trans(x)) \r] \l[ g(x)- g(\trans(x)) \r] \eqsp.
    \end{aligned}
\end{equation*}

Applying this to $g=\foq f$ yields

\begin{align}\label{eq:gamma2_2}
        2 \Gamma (f, \foq f) (x) &=  \lambda \sum_{\ell =1}^d \l[ f(x)- f(\trans(x)) \r] \l[ \lambda \sum_{i=1}^d \l( f(\transi(x)) - f(x) \r) - \lambda \sum_{i=1}^d \l( f(\trans(\transi(x)))-f(\trans(x)) \r) \r] \notag\\
        &= \lambda^2 \sum_{\ell =1}^d \sum_{i=1}^d \l[ f(x)- f(\trans(x)) \r] \l[ f(\transi(x)) - f(x) -f(\trans(\transi(x)))+f(\trans(x)) \r] \eqsp.
\end{align}

Plugging \eqref{eq:gamma2_1} and \eqref{eq:gamma2_2} into $\Gamma_2$ yields

\begin{equation*}
    \begin{aligned}
        \Gamma_2(f) (x) &= \frac{1}{2} \l[ \foq \Gamma (f) - 2 \Gamma (\foq f, f) \r] \\
        &= \frac{\lambda^2}{4} \sum_{\ell =1}^d \sum_{i=1}^d \Bigg[ \l( f(\trans(\transi(x))) - f(\transi(x)) \r)^2 - \l( f(\trans(x))-f(x) \r)^2 \\
        & \hspace{1cm} + 2 \l( f(x)- f(\trans(x)) \r)^2  -2 \l(  f(x)- f(\trans(x)) \r) \l( f(\transi(x))-f(\trans(\transi(x)))\r) \Bigg] \\
        &= \frac{\lambda^2}{4} \sum_{\ell =1}^d \sum_{i=1}^d \l[ f(\trans(\transi(x))) - f(\transi(x)) - f(\trans(x))+ f(x)  \r]^2 \\
        &\geq  \frac{\lambda^2}{4} \sum_{\ell =1}^d \l[ f(x) - f(\trans(x)) - f(\trans(x)) + f(x) \r]^2 \\
        &= \lambda^2 \sum_{\ell=1}^d \l[f (\trans(x)) - f(x) \r]^2 = 2\lambda \Gamma(f)(x) \eqsp, \quad \text{for any } x\in \msx \eqsp.
    \end{aligned}
\end{equation*}

Thus the desired inequality holds and we conclude the proof.

\end{proof}

% }

We are now prepared to analyze the key distinguishing result of this paper.

\begin{proof}[\textbf{Proof of \cref{theo:5}}]

We begin by establishing a bound on the “distance” between the backward path measure in continuous time, $\overleftarrow{\mathbb{P}}^{\mu^*}$, associated with the controlled process $(\overleftarrow{X}_t)_{t \in [0,\Tf]}$, and the path measure $\overleftarrow{\mathbb{P}}^\star$ corresponding to the simulated backward process $(\overleftarrow{X}^\star_t)_{t \in [0,\Tf]}$ generated in \Cref{alg:dmpm_sampler}.
For brevity, we denote the backward path measure $\overleftarrow{\mathbb{P}}^{\mu^*}$ by $\overleftarrow{\mathbb{P}}$
% , and the corresponding process $\overleftarrow{X}^{u^*}$ by $\overleftarrow{X}$ 
throughout the remainder of the paper.
By taking the stationary forward path measure $\foR \in \MP(\foq)$ as the reference in Girsanov’s \Cref{girsanovtheorem}, we derive the Radon–Nikodym density of $\baP \in \MP(u\foq)$ with respect to $\foR$, where the control $u$ is specified in \eqref{eq:control}, as follows

\begin{align*}
    \dfrac{\rmd\overleftarrow{\P}}{\rmd\overrightarrow{R}} ((\canoX_t)_{t \in [0,\Tf]}) =\dfrac{\rmd\overleftarrow{\P}_0}{\rmd\overrightarrow{R}_0}(\canoX_0) \exp\l( \int_{\ccint{0,\Tf} \times \msx}\log u_t(\canoX_{t-},x) \tilde  N^{\foq}_{\canoX}(\rmd t \rmd x)-\int_{\ccint{0,\Tf} \times \msx} \varrho(\log u_t(\canoX_{t-},x))\bar \rmn^{\foq}(\rmd t \rmd x) \r) \eqsp.
\end{align*}
With a partition $0=t_0<...<t_K=\Tf$ for $K \geq1$ of $\ccint{0,\Tf}$ associated with the sequence of step-size $\l\{\tau_k \r\}_{k=1}^K: \tau_{k+1} = t_{k+1} - t_k$, the previous expression rewrites 

{\color{black}
\begin{align*}
    \dfrac{\rmd\overleftarrow{\P}}{\rmd\overrightarrow{R}} ((\canoX_t)_{t \in [0,\Tf]}) &=\dfrac{\rmd\overleftarrow{\P}_0}{\rmd\overrightarrow{R}_0}(\canoX_0) \exp \sum_{k=0}^{K-1}\Bigg( \int_{[t_k, t_{k+1}) \times \msx}\log u_t(\canoX_{t-},x) \tilde  N^{\foq}_{\canoX}(\rmd t \rmd x)\\
    &\hspace{5cm}-\int_{[t_k, t_{k+1}) \times \msx} \varrho(\log u_t(\canoX_{t-},x))\bar \rmn^{\foq}(\rmd t \rmd x) \Bigg)\\
    &= \dfrac{\rmd\overleftarrow{\P}_0}{\rmd\overrightarrow{R}_0}(\canoX_0) \exp \sum_{k=0}^{K-1} \Bigg( \int_{[t_k, t_{k+1}) \times \msx}\log u_t(\canoX_{t-},x) \tilde  N^{u\foq}_{\canoX}(\rmd t \rmd x)\\
    &\hspace{1cm} +\int_{[t_k, t_{k+1}) \times \msx} \log u_t(\canoX_{t-},x) \bar \rmn^{(u-1)\foq} (\rmd t \rmd x) -\int_{[t_k, t_{k+1}) \times \msx} \varrho(\log u_t(\canoX_{t-},x))\bar \rmn^{\foq}(\rmd t \rmd x) \Bigg)\\
    &= \dfrac{\rmd\overleftarrow{\P}_0}{\rmd\overrightarrow{R}_0}(\canoX_0) \exp \sum_{k=0}^{K-1} \Bigg( \int_{[t_k, t_{k+1}) \times \msx}\log u_t(\canoX_{t-},x) \tilde  N^{u\foq}_{\canoX}(\rmd t \rmd x) \\
    &\hspace{5cm}+ \lambda\int_{[t_k, t_{k+1})} \sum_{\ell=1}^d (u_t-1)\log u_t(\canoX_{t-},\trans(\canoX_{t-})) \rmd t\\
    &\hspace{5cm}-\lambda\int_{[t_k, t_{k+1})} \sum_{\ell=1}^d (u_t-\log u_t -1)(\canoX_{t-},\trans(\canoX_{t-})) \rmd t \Bigg)\\
    &= \dfrac{\rmd\overleftarrow{\P}_0}{\rmd\overrightarrow{R}_0}(\canoX_0) \exp \sum_{k=0}^{K-1}\Bigg( \int_{[t_k, t_{k+1}) \times \msx}\log u_t(\canoX_{t-},x) \tilde  N^{u\foq}_{\canoX}(\rmd t \rmd x) \\
    &\hspace{5cm}+ \lambda\int_{[t_k, t_{k+1})} \sum_{\ell=1}^d (u_t\log u_t-u_t +1)(\canoX_{t},\trans(\canoX_{t})) \rmd t \Bigg) \eqsp,
\end{align*}
as $\canoX_{t-}=\canoX_t$ for Lebesgue almost all $t \in [0,\Tf]$. }Applying Girsanov’s theorem \ref{girsanovtheorem} once more to the path measure $\ovl{\P}^\star \in \MP(\hat u^{\theta^{\star}} \foq)$ associated with the process $(\baX^\star_t)_{t \in [0,\Tf]}$ generated by \Cref{alg:dmpm_sampler}, we obtain the following expression:

{\color{black}
\begin{align*}
    \dfrac{\rmd{\ovl{\P}^\star}}{\rmd\overrightarrow{R}} ((\canoX_t)_{t \in [0,\Tf]}) &=\dfrac{\rmd{\ovl{\P}_0^\star}}{\rmd\overrightarrow{R}_0}(\canoX_0) \exp\sum_{k=0}^{K-1}\Bigg( \int_{[t_k, t_{k+1}) \times \msx} \log \hat u_{t}^{\theta^\star}(\canoX_{t-}, x) \tilde  N^{\foq}_{\canoX} (\rmd t \rmd x)\\
    &\hspace{5cm}-\int_{[t_k, t_{k+1}) \times \msx}\varrho(\log \hat u_{t}^{\theta^\star}(\canoX_{t-},x))\bar \rmn^{\foq} (
    \rmd t \rmd x) \Bigg) \\
    &= \dfrac{\rmd{\ovl{\P}_0^\star}}{\rmd\overrightarrow{R}_0}(\canoX_0) \exp\sum_{k=0}^{K-1}\Bigg( \int_{[t_k, t_{k+1}) \times \msx} \log \hat u_{t}^{\theta^\star}(\canoX_{t-}, x) \tilde  N^{u\foq}_{\canoX} (\rmd t \rmd x)\\
    &+\int_{[t_k, t_{k+1}) \times \msx} \log \hat u_{t}^{\theta^\star}(\canoX_{t-}, x) \bar \rmn^{(u-1)\foq} (\rmd t \rmd x) -\int_{[t_k, t_{k+1}) \times \msx}\varrho(\log \hat u_{t}^{\theta^\star}(\canoX_{t-},x))\bar \rmn^{\foq} (
    \rmd t \rmd x) \Bigg) \\
    % &= \dfrac{\rmd{\ovl{\P}_0^\star}}{\rmd\overrightarrow{R}_0}(\canoX_0) \exp\sum_{k=0}^{K-1}\Bigg( \int_{[t_k, t_{k+1}) \times \msx} \log \hat u_{t_k}^{\theta^\star}(\canoX_{t-}, x) \tilde  N^{u\foq}_{\canoX} (\rmd t \rmd x)\\
    % &\hspace{2cm}+ \lambda \int_{[t_k, t_{k+1}) } \sum_{\ell =1}^d \log \hat u_{t_k}^{\theta^\star} (u_t-1)(\canoX_{t-},\trans(\canoX_{t-})) \rmd t  \\
    % &\hspace{2cm}- \lambda\int_{[t_k, t_{k+1})} \sum_{\ell=1}^d (\hat u_{t_k}^{\theta^\star}-\log \hat u_{t_k}^{\theta^\star}-1)(\canoX_{t-},\trans(\canoX_{t-})) \rmd t \Bigg)\\
    &= \dfrac{\rmd{\ovl{\P}_0^\star}}{\rmd\overrightarrow{R}_0}(\canoX_0) \exp\sum_{k=0}^{K-1}\Bigg( \int_{[t_k, t_{k+1}) \times \msx} \log \hat u_{t}^{\theta^\star}(\canoX_{t-}, x) \tilde  N^{u\foq}_{\canoX} (\rmd t \rmd x) \\
    & \hspace{4cm}+\lambda  \int_{[t_k, t_{k+1})} \sum_{\ell=1}^d (u_t\log \hat u_{t}^{\theta^\star}-\hat u_{t}^{\theta^\star} +1 )(\canoX_{t-},\trans(\canoX_{t-})) \rmd t \Bigg) \eqsp.
\end{align*}

Since $\canoX_t = \canoX_{t-}$ for Lebesgue almost all $t \in [0,\Tf]$ and using the fact that $\hat u_t^{\theta^\star} (\canoX_{t},\trans(\canoX_{t})) = u_{t_k}^{\theta^\star} (\canoX_{t_k},\trans(\canoX_{t_k}))$ for any $t \in [t_k, t_{k+1})$, $\ell =1, \ldots, d$ and $(\canoX_t)_{t \in [0,\Tf]}$, the calculation follows

\begin{align*}
     \dfrac{\rmd{\ovl{\P}^\star}}{\rmd\overrightarrow{R}} ((\canoX_t)_{t \in [0,\Tf]}) &= \dfrac{\rmd{\ovl{\P}_0^\star}}{\rmd\overrightarrow{R}_0}(\canoX_0) \exp\sum_{k=0}^{K-1}\Bigg( \int_{[t_k, t_{k+1}) \times \msx} \log \hat u_{t}^{\theta^\star}(\canoX_{t-}, x) \tilde  N^{u\foq}_{\canoX} (\rmd t \rmd x) \\
    &\hspace{-1cm}+\lambda  \int_{[t_k, t_{k+1})} \sum_{\ell=1}^d (u_t(\canoX_{t},\trans(\canoX_{t}))\log u_{t_k}^{\theta^\star}(\canoX_{t_k},\trans(\canoX_{t_k}))- u_{t_k}^{\theta^\star}(\canoX_{t_k},\trans(\canoX_{t_k})) +1 ) \rmd t \Bigg)\eqsp.
\end{align*}

Combining two Radon-Nikodym densities $\rmd \baP/\rmd \foR$ and $\rmd \baP^\star/\rmd \foR$, we deduce that
\begin{align*}
    \dfrac{\rmd\overleftarrow{\P}}{\rmd\ovl{\P}^\star} ((\canoX_t)_{t \in [0,\Tf]}) =\dfrac{\rmd\overleftarrow{\P_0}}{\rmd\ovl{\P}_0^\star}(\canoX_0)\exp\sum_{k=0}^{K-1}\Bigg( &\int_{[t_k, t_{k+1}) \times \msx} (\log u_t(\canoX_{t-},x)-\log  \hat u_{t}^{\theta^\star}(\canoX_{t-},x) ) \tilde N^{u\foq}_{\canoX} (\rmd t \rmd x)\\
    +\lambda &\int_{[t_k, t_{k+1})} \sum_{\ell=1}^d \Bigg( u_t (\canoX_{t},\trans(\canoX_{t}))\log \frac{u_t(\canoX_{t},\trans(\canoX_{t}))}{ u_{t_k}^{\theta^\star}(\canoX_{t_k},\trans(\canoX_{t_k}))}\\
    &\hspace{2cm}-u_t (\canoX_{t},\trans(\canoX_{t})) +  u_{t_k}^{\theta^\star}(\canoX_{t_k},\trans(\canoX_{t_k})) \Bigg)  \rmd t \Bigg) \eqsp.
\end{align*}

This leads to the following expression of the $\KL$ divergence
\begin{align*}
     \KL(\overleftarrow{\P}| \ovl{\P}^\star)=\KL(\mu_{\Tf}|\gamma^d)+\sum_{k=0}^{K-1}\E_{\overleftarrow{\P}}\Bigg[ &\int_{[t_k, t_{k+1}) \times \msx} (\log u_t(\baX_{t-},x)-\log  \hat u_{t}^{\theta^\star}(\baX_{t-},x) ) \tilde N^{u\foq}_{\baX} (\rmd t \rmd x)\\
    +\lambda &\int_{[t_k, t_{k+1})} \sum_{\ell=1}^d \Bigg( u_t (\baX_{t},\trans(\baX_{t}))\log \frac{u_t(\baX_{t},\trans(\baX_{t}))}{ u_{t_k}^{\theta^\star}(\baX_{t_k},\trans(\baX_{t_k}))}\\
    &\hspace{2cm}-u_t (\baX_{t},\trans(\baX_{t})) + u_{t_k}^{\theta^\star}(\baX_{t_k},\trans(\baX_{t_k})) \Bigg)  \rmd t \Bigg] \eqsp.
\end{align*}

Note that $\baP \in \MP (u\foq)$, thus $\tilde N_{\bfX}^{u\foq}$ is a $\baP$-martingale, which in turn allow us to reduce the first integral above. Hence we can simplify the expression for the $\KL$ divergence as follows

\begin{align*}
    \KL (\baP| \baP^\star) = \KL (\mu_{\Tf}|\gamma^d) + \sum_{k=0}^{K-1} \E_{\baP} \Bigg[ \lambda \int_{[t_k, t_{k+1})} \sum_{\ell=1}^d \Bigg(& u_t (\baX_{t},\trans(\baX_{t}))\log \frac{u_t(\baX_{t},\trans(\baX_{t}))}{ u_{t_k}^{\theta^\star}(\baX_{t_k},\trans(\baX_{t_k}))}\\
    &-u_t (\baX_{t},\trans(\baX_{t})) +  u_{t_k}^{\theta^\star}(\baX_{t_k},\trans(\baX_{t_k})) \Bigg)  \rmd t \Bigg] \eqsp.
\end{align*}

For brevity, we denote $u_t (\baX_{t},\trans(\baX_{t}))$ as $u_t^\ell$ throughout the remainder of the proof. We rewrite $\KL(\baP|\baP^\star)$ as

\begin{align*}
    \KL(\baP|\baP^\star) &= \KL(\mu_{\Tf}|\gamma^d) + \lambda \sum_{k=0}^{K-1} \E_{\baP} \l[ \int_{[t_k, t_{k+1})} \sum_{\ell=1}^d \l({u_t^\ell}\log {u_t^\ell} - u_t^\ell \log{u_{t_k}^{\theta^\star, \ell}} -{u_t^\ell}+{u_{t_k}^{\theta^\star,\ell}} \r)  \rmd t \r] \eqsp.
\end{align*}
We separate this expression into three terms to control:
\begin{align}\label{eq:3}
 \KL(\overleftarrow{\P}|  \ovl{\P}^\star)    = \underbrace{\KL(\mu_{\Tf}|\gamma^d)}_{E_1} 
 &+ \lambda \underbrace{\sum_{k=0}^{K-1} \E_{\baP} \l[ \int_{[t_k,t_{k+1})} \sum_{\ell=1}^d \l( u_{t_k}^\ell \log u_{t_k}^\ell - u_t^\ell \log{u_{t_k}^{\theta^\star, \ell}} -{u_{t_k}^\ell}+{u_{t_k}^{\theta^\star,\ell}}\r) \rmd t  \r]}_{E_2} \notag\\
 &+ \lambda \underbrace{\sum_{k=0}^{K-1}\E_{\overleftarrow{\P}} \l[ \int_{[t_k, t_{k+1})}\sum_{\ell = 1}^d \l({u_t^\ell}\log {u_t^\ell} - u_t^\ell - u_{t_k}^\ell \log u_{t_k}^\ell + u_{t_k}^\ell \r) \rmd t \r]}_{E_3}  \eqsp.
\end{align}

We begin by observing that the uniform distribution $\gamma^d$ is the invariant measure of the forward process, which satisfies the curvature--dimension condition $\mathrm{CD}(2\lambda, \infty)$ (see \Cref{lem:CD}). As a consequence, it satisfies a logarithmic Sobolev inequality by \citet[Theorem 5.10]{bakry2014analysis}. This, in turn, implies exponential decay of entropy over time by \citet[Theorem 5.12]{bakry2014analysis}, and thus we obtain:
\begin{equation}\label{eq:E1}
    E_1 = \KL (\mu_{\Tf}|\gamma^d) \leq \rme^{-4\lambda\Tf}\KL(\mustar|\gamma^d) \eqsp.
\end{equation}

Next, we bound $E_2$ using the tower property, the martingale property established in \Cref{prop:3}, and the approximation error assumption in \Cref{ass:approx_score},

\begin{align*}
    E_2 &= \sum_{k=0}^{K-1} \E_{\baP} \l[\int_{[t_k,t_{k+1})} \sum_{\ell=1}^d \l( u_{t_k}^\ell \log u_{t_k}^\ell - \E_{\baP}\l[u_t^\ell \middle| \Fc_{t_k} \r] \log{u_{t_k}^{\theta^\star, \ell}} -{u_{t_k}^\ell}+{u_{t_k}^{\theta^\star,\ell}}\r) \rmd t  \r] \\
    &\hspace{-0.9cm}\overset{\Cref{prop:3}}{=} \sum_{k=0}^{K-1} \E_{\baP} \l[ \int_{[t_k,t_{k+1})}\sum_{\ell=1}^d \l( u_{t_k}^\ell \log u_{t_k}^\ell - u_{t_k}^\ell  \log{u_{t_k}^{\theta^\star, \ell}} -{u_{t_k}^\ell}+{u_{t_k}^{\theta^\star,\ell}}\r) \rmd t  \r] \\
    &= \sum_{k=0}^{K-1} (t_{k+1}-t_k)\E_{\baP} \l[\sum_{\ell=1}^d \l( u_{t_k}^\ell \log \frac{u_{t_k}^\ell}{u_{t_k}^{\theta^\star, \ell}} -{u_{t_k}^\ell}+{u_{t_k}^{\theta^\star,\ell}}\r)  \r] \\
    &= \sum_{k=0}^{K-1} (t_{k+1}-t_k)\E_{\baP} \l[ \sum_{\ell=1}^d  u_{t_k}^{\theta^\star,\ell} h \l( \frac{u_{t_k}^\ell}{u_{t_k}^{\theta^\star, \ell}} \r)   \r] \eqsp,
\end{align*}

 where $h(a)=a\log a -a+1$ for $a>0$ and $(\mF_t)_{t \in [0,\Tf]}$ denotes the right-continuous and complete natural filtration generated by the process $(\baX_t)_{t \in [0,\Tf]}$. Using \Cref{ass:approx_score}, we can now bound the term $E_2$ as

\begin{align}\label{eq:E2}
    E_2&\leq \epsilon\sum_{k=0}^{K-1}(t_{k+1}-t_k)=\epsilon (t_K-t_0)=\epsilon\Tf \eqsp,
\end{align}
since $ \sum_{k=0}^{K-1}(t_{k+1}-t_k)$ is a telescoping sum. It remains to control $E_3$. To do so, we leverage the monotonicity of the function $h$ established in \Cref{prop:3},

\begin{align*}
    E_3 &= \sum_{k=0}^{K-1}\E_{\overleftarrow{\P}} \l[ \int_{[t_k, t_{k+1})}\sum_{\ell = 1}^d \l( \E_{\baP} \l[h({u_t^\ell})\middle|\Fc_{t_k}\r] -h (u_{t_k}^\ell) \r) \rmd t \r] \\
    &\leq  \sum_{k=0}^{K-1}\E_{\overleftarrow{\P}} \l[ \int_{[t_k, t_{k+1})}\sum_{\ell = 1}^d \l( \E_{\baP} \l[h({u_{t_{k+1}}^\ell})\middle|\Fc_{t_k}\r] -h (u_{t_k}^\ell) \r) \rmd t \r] \\
    &= \sum_{k=0}^{K-1}(t_{k+1}-t_k) \l(\E_{\overleftarrow{\P}} \l[\sum_{\ell = 1}^d  h({u_{t_{k+1}}^\ell}) \r] - \E_{\overleftarrow{\P}} \l[\sum_{\ell = 1}^d  h({u_{t_{k}}^\ell}) \r] \r)
\end{align*}

Define $\tau = \max\{t_{k+1} - t_k : \eqsp k =0,\ldots K-1\}$, the calculation follows

\begin{align}\label{eq:E3}
    E_3 &\leq \tau \sum_{k=0}^{K-1} \l(\E_{\overleftarrow{\P}} \l[\sum_{\ell = 1}^d  h({u_{t_{k+1}}^\ell}) \r] - \E_{\overleftarrow{\P}} \l[\sum_{\ell = 1}^d  h({u_{t_{k}}^\ell}) \r] \r) \notag\\
    &= \tau \l(\E_{\overleftarrow{\P}} \l[\sum_{\ell = 1}^d  h({u_{\Tf}^\ell}) \r] - \E_{\overleftarrow{\P}} \l[\sum_{\ell = 1}^d  h({u_{0}^\ell}) \r] \r) \leq \tau \beta_{\gamma^d}(\mustar) \eqsp,
\end{align}
since $h$ is non-negative. Here, the Fisher-like information functional $\beta_{\gamma^d}$ of the data distribution $\mustar$ is defined as $ \beta_{\gamma^d}(\mustar) := \E_{\overleftarrow{\P}} \l[\sum_{\ell = 1}^d  h({u_{\Tf}^\ell}) \r]$. Combining \eqref{eq:E1}, \eqref{eq:E2} and \eqref{eq:E3}, we arrive at 

\begin{equation}\label{eq:5}
     \KL(\overleftarrow{\P}| \ovl{\P}^\star)  \leq \rme^{-4\lambda\Tf}\KL(\mustar|\gamma^d)+\lambda \tau \beta_{\gamma^d}(\mustar)+ \lambda \epsilon \Tf \eqsp.
\end{equation}

}
Finally, noting that $\mu^* = \mathrm{Law}(\overleftarrow{X}^{u^*}_{\Tf})$, we obtain the relation

\begin{align*}
    \KL(\mustar|\mathrm{Law}(\baX^\star_{\Tf} ) )=\KL(\mathrm{Law}(\overleftarrow{X}^{}_{\Tf})|\mathrm{Law}(\baX^\star_{\Tf}) ) \leq \KL(\mathrm{Law}((\baX_t)_{t \in [0,\Tf]})|\mathrm{Law}((\baX_t^\star)_{t \in [0,\Tf]}) )= \KL(\overleftarrow{\P}| \ovl{\P}^\star) \eqsp,
\end{align*}

where the inequality is known as \textit{Data processing} inequality for $\KL$ divergence \citep[Lemma 1.6]{nutz2021introduction}. Combining this with \eqref{eq:5}, we conclude that
\begin{align*}
    \KL(\mustar|\mathrm{Law}(\baX^\star_{\Tf} ) ) \leq \rme^{-4\lambda\Tf}\KL(\mustar|\gamma^d)+ \lambda \tau \beta_{\gamma^d}(\mustar)+ \lambda \epsilon \Tf \eqsp,
\end{align*}
and successfully provide a theoretical guarantee for our generative models.
\end{proof}

\subsubsection{Proof of \cref{theo:6}}\label{proof_theo:6}

We demonstrate \Cref{theo:6} through the following result:

\begin{lemma}\label{lem:2}
    Denoting $y_t = \E_{\baP} \l[ \sum_{\ell =1 }^d h(u_t ( \baX_t, \trans (\baX_t)) \r]$ then it holds for $t \in [0,\Tf)$,

    \begin{equation}\label{eq:bound_score}
        y_t \lesssim \dfrac{d}{ \Tf - t } \eqsp.
    \end{equation}
\end{lemma}

 \begin{proof}[\textbf{Proof of \cref{lem:2}}]
 Recall the definition of $h(a) = a \log a - a + 1$ for $a > 0$, and the connection between the optimal control $u_t$ and the score function $s_t$, as provided by \Cref{prop:connection}, is given by

 \begin{equation*}
     u_t ( x, \trans (x) ) = 1 - s^\ell_t (x) \quad \text{for } \ell = 1, \ldots, d \text{ and } (t,x) \in [0,\Tf) \times \msx \eqsp,
 \end{equation*}

 with the score function admitting a conditional expectation expression as given in \eqref{eq:function_score_d},

 \begin{equation*}
     s_{t}^\ell(x)=\E \l[ \dfrac{2\alpha_{\Tf-t}}{1+\alpha_{\Tf-t}}-\dfrac{4\alpha_{\Tf-t}(\overrightarrow{X}_{\Tf-t}^\ell-\overrightarrow{X}_0^\ell)^2}{1-\alpha_{\Tf-t}^2} \middle| \overrightarrow{X}_{\Tf-t}=x\r] \eqsp, \quad \text{with } \alpha_t = \rme^{-2t} \eqsp.
 \end{equation*}

%  \np{todo}

 Using the above formulation of the score function, we estimate $y_t$ as follows

 \begin{align*}
     y_t &= \E_{\baP} \l[ \sum_{\ell =1 }^d (1-s^\ell_t(\baX_t)) \log (1 - s^\ell_t (\baX_t)) + s^\ell_t (\baX_t)  \r] \\
     & \leq \E_{\baP} \l[ \sum_{\ell =1 }^d (1-s^\ell_t(\baX_t)) (1 - s^\ell_t (\baX_t) -1 ) + s^\ell_t (\baX_t)  \r] \quad \text{(since $\log a \leq a -1$)} \\
     & =  \E_{\baP} \l[ \sum_{\ell =1 }^d (s^\ell_t (\baX_t))^2 \r] \\
     & = \sum_{\ell =1 }^d \E \l[   \l( \E \l[ \dfrac{2\alpha_{\Tf-t}}{1+\alpha_{\Tf-t}}-\dfrac{4\alpha_{\Tf-t}(\overrightarrow{X}_{\Tf-t}^\ell-\overrightarrow{X}_0^\ell)^2}{1-\alpha_{\Tf-t}^2} \middle| \overrightarrow{X}_{\Tf-t}=x\r] \r)^2 \r] \\
     & \leq \sum_{\ell =1 }^d  \E \l[  \E \l[ \l( \dfrac{2\alpha_{\Tf-t}}{1+\alpha_{\Tf-t}}-\dfrac{4\alpha_{\Tf-t}(\overrightarrow{X}_{\Tf-t}^\ell-\overrightarrow{X}_0^\ell)^2}{1-\alpha_{\Tf-t}^2} \r)^2 \middle| \overrightarrow{X}_{\Tf-t}=x\r]  \r] \\
     & =\sum_{\ell =1 }^d  \E \l[ \l( \dfrac{2\alpha_{\Tf-t}}{1+\alpha_{\Tf-t}}-\dfrac{4\alpha_{\Tf-t}(\overrightarrow{X}_{\Tf-t}^\ell-\overrightarrow{X}_0^\ell)^2}{1-\alpha_{\Tf-t}^2} \r)^2 \r] \eqsp.
 \end{align*}

 Expanding the last quantity and noting that $(\foX^\ell_{\Tf-t}-\foX^\ell_0)^2 = (\foX^\ell_{\Tf-t}-\foX^\ell_0)^4$ since $(\foX^\ell_{\Tf-t}-\foX^\ell_0) \in \l\{0, \pm 1 \r\}$, we obtain that

 \begin{align*}
     y_t & \leq \sum_{\ell =1 }^d  \E \l[ \dfrac{4\alpha_{\Tf-t}^2}{ (1+\alpha_{\Tf-t} )^2 } +\dfrac{ 16 \alpha_{\Tf-t}^2 (\foX^\ell_{\Tf-t}-\foX^\ell_0)^2 }{(1+ \alpha_{\Tf-t}) ( 1 - \alpha_{\Tf-t}^2 )}\l( -1 + \dfrac{1}{1-\alpha_{\Tf-t}} \r)   \r] \\
     & = \sum_{\ell =1 }^d  \E \l[ \dfrac{4\alpha_{\Tf-t}^2}{ (1+\alpha_{\Tf-t} )^2 } +\dfrac{ 16 \alpha_{\Tf-t}^3 (\foX^\ell_{\Tf-t}-\foX^\ell_0)^2 }{ ( 1 - \alpha_{\Tf-t}^2 )^2 } \r] \\
     & \leq \sum_{\ell =1 }^d   \l( \dfrac{4\alpha_{\Tf-t}^2}{ (1+\alpha_{\Tf-t} )^2 } +\dfrac{ 16 \alpha_{\Tf-t}^3 \E \l[ (\foX^\ell_{\Tf-t}-\foX^\ell_0)^2 \r]}{ ( 1 - \alpha_{\Tf-t}^2 )^2 } \r)
    %  \quad \text{ as } \alpha^2_{\Tf-t} = \rme^{-4(\Tf-t) } \leq 1
    \eqsp.
 \end{align*}

 Note that
 \begin{equation*}
     \E \l[ (\foX^\ell_{\Tf-t}-\foX^\ell_0)^2 \r] = \P \l( \foX^\ell_{\Tf-t} \neq \foX^\ell_0 \r) = \dfrac{1}{2} ( 1 - \alpha_{\Tf -t}) \eqsp.
 \end{equation*}

Thus the upper bound of $y_t$ is

\begin{align*}
    y_t &  \leq \sum_{\ell =1 }^d   \l( \dfrac{4\alpha_{\Tf-t}^2}{ (1+\alpha_{\Tf-t} )^2 } +\dfrac{ 8 \alpha_{\Tf-t}^3 (1- \alpha_{\Tf -t }) }{ ( 1 - \alpha_{\Tf-t}^2)^2 } \r) \\
    & = \dfrac{4d \alpha^2_{\Tf-t}}{(1+\alpha_{\Tf-t})^2 } \l[ 1 + \dfrac{2 \alpha_{\Tf-t}}{(1-\alpha_{\Tf-t}) } \r] \\
     & = \dfrac{4d \alpha^2_{\Tf-t}}{1-\alpha_{\Tf-t}^2} \lesssim \dfrac{d}{  \rme^{4(\Tf-t)}-1 } \lesssim \dfrac{d}{\Tf - t} \eqsp, \quad \text{for } t \in [0,\Tf) \eqsp,
\end{align*}

where the last estimate follows from the elementary inequality $\rme^a \geq a + 1$ for all $a \in \mathbb{R}$. Therefore, the bound in \eqref{eq:bound_score} holds for all $t \in [0, \Tf)$.

 \end{proof}

\begin{proof}[\textbf{Proof of \cref{theo:6}}]

We follow the same strategy as in the proof of \cref{theo:5}, the only difference being the way we handle the following term in \eqref{eq:E3}

\begin{equation*}
    E_3 := \sum_{k=0}^{K-1} (t_{k+1}-t_k) \l( \E_{\baP} \l[ \sum_{\ell =1}^d h(u_{t_{k+1}}) \r] - \E_{\baP} \l[ \sum_{\ell = 1 }^d h(u_{t_k}) \r] \r) = \sum_{k=0}^{K-1} \tau_{k+1} \l( y_{t_{k+1}} - y_{t_k} \r)  \eqsp.
\end{equation*}

Following precisely the argument structure in the proof of Theorem 3 from \citet{conforti2025kl}, and fixing $\Tf$, $a$, and $c$, we choose the sequence of step-size as

\begin{equation}\label{eq:step-size}
    \tau_{k+1} = \begin{cases}
        \Tf - t_{N-1} \eqsp, \quad & k = N-1 \eqsp, \\
        ca \eqsp, \quad & k_0 + k_1 \leq k \leq k_0 + k_1 + k_2 - 1 \eqsp, \\
        c( \Tf - t_k)\eqsp, \quad & k_0 \leq k \leq k_0 +k_1 -1 \eqsp, \\
        c \eqsp, \quad & 0 \leq k \leq k_0 - 1 \eqsp,
    \end{cases}
\end{equation}
and set the number of iterations $K = k_0 +k_1 + k_2 +1 $, with
\begin{equation}\label{eq:def_k0k1k2}
    k_0 = \max \l\{ k \geq 0 : \Tf - t_k \geq 1 \r\},  k_1 = \max \l\{ k \geq 0: \Tf - t_{k_0+k} \geq a \r\}  \text{ and } k_2 = \max \l\{ k \geq 0: \Tf - t_{k_0 + k_1 +k} \geq 0 \r\} \eqsp.
\end{equation}

It is shown in \citet{conforti2025kl} that
\begin{equation}\label{eq:bound_k0k1k2}
\begin{split}
    k_0 &= \lfloor c^{-1} (\Tf - 1) \rfloor, \quad k_1 = \lfloor \log(a / (\Tf - t_{k_0})) / \log (1-c) \rfloor \lesssim \log(1/a) /c \eqsp, \\
    K  - k_0 - k_1 &= k_2 + 1 \lesssim 1/c \quad \text{and} \quad \tau_{k+1} = c(1-c)^{k-k_0} (\Tf-t_{k_0}) \quad \text{for } k \in \l\{ k_0, \ldots, k_0+ k_1 -1 \r\} \eqsp.
\end{split}
\end{equation}

Using \eqref{eq:step-size} and the monotonicity of $y_t$ established in \Cref{prop:3}, we can bound $E_3$ as

\begin{align*}
    E_3 &\leq \sum_{k=0}^{K-1} \tau_{k+1} \l( y_{t_{k+1}} - y_{t_k} \r) \\
    & = \tau_K y_{t_K} + \sum_{k=1}^{K-1} y_{t_k} (\tau_k - \tau_{k+1}) \\
    & = \tau_K y_{t_K} + \sum_{k=1}^{k_0} y_{t_k} (\tau_k - \tau_{k+1}) + \sum_{k = k_0 +1}^{k_0+k_1} y_{t_k} (\tau_k - \tau_{k+1}) \\
    & \hspace{1.5cm}+\sum_{k = k_0 +k_1+1}^{k_0 + k_1 + k_2 -1}y_{t_k} (\tau_k - \tau_{k+1}) + y_{t_{k_0+k_1+k_2}}(\tau_{K-1} - \tau_{K}) \\
    & \lesssim  \underbrace{y_{t_{k_0}} [c - c (\Tf - t_{k_0})]}_{(1)} + \underbrace{c \sum_{k = k_0 +1}^{k_0 + k_1-1} y_{t_k} \tau_k}_{(2)} \\
    & \hspace{1.5cm} + \underbrace{c y_{t_{k_0+k_1}} (\Tf - t_{k_0+k_1 - 1} - a)}_{(3)} + \underbrace{y_{t_K} \tau_{K-1}}_{(4)} \eqsp.
\end{align*}

We now bound $(1) - (2) - (3) - (4)$ one-by-one. We start with
\begin{align*}
    (1) : y_{t_{k_0}} [c - c (\Tf - t_{k_0})] \leq c y_{t_{k_0}} \overset{\cref{lem:2}}{\lesssim} \dfrac{cd}{\Tf - t_{k_0}} \overset{\eqref{eq:def_k0k1k2}}{\leq} cd \eqsp.
\end{align*}
Next, we bound the second term
\begin{align*}
    (2) : c \sum_{k = k_0 +1}^{k_0 + k_1 - 1} y_{t_k} \tau_k &\overset{\cref{lem:2}}{\lesssim} c \sum_{k = k_0 +1}^{k_0 + k_1 - 1} \dfrac{ d \tau_k}{\Tf - t_k} = c^2 d \sum_{k = k_0 +1}^{k_0 + k_1 - 1}  \dfrac{\tau_k}{\tau_{k+1}} \\
    &  \overset{\eqref{eq:bound_k0k1k2}}{=} c^2 d \sum_{k = k_0 +1}^{k_0 + k_1 - 1}  \dfrac{c(1-c)^{k-k_0-1}(\Tf - t_{k_0})}{c(1-c)^{k-k_0}(\Tf - t_{k_0})} \\
    & \leq c^2d \sum_{k = k_0 +1}^{k_0 + k_1 - 1} \dfrac{1}{1-c} \overset{c \leq 1/2}{\lesssim} c^2 d  k_1 \lesssim c d \log(1/a) \eqsp.
\end{align*}
Recall that $\Tf - t_{k_0+k_1 -1} \leq 1$. As a result, we have

\begin{align*}
    (3) : c y_{t_{k_0+k_1}} (\Tf - t_{k_0+k_1 - 1} - a) \overset{\cref{lem:2}}{\lesssim} \dfrac{cd}{\Tf - t_{k_0+k_1}} (\Tf - t_{k_0+k_1-1}) & = \dfrac{cd}{\Tf - t_{k_0+k_1-1}- \tau_{k_0+k_1}} (\Tf - t_{k_0+k_1-1})
    \\
    \overset{ \eqref{eq:step-size}}{\leq} \dfrac{cd}{1-c} \overset{c \leq 1/2}{ \lesssim } cd \eqsp.
\end{align*}

Finally, for the last term, we have by definition of $L = y_{\Tf}/d$,

\begin{equation*}
    (4) : y_{t_K} h_{K-1} = y_{\Tf} ca = c a d L \eqsp.
\end{equation*}

Plugging all the bounds of $(1) - (2) - (3) - (4)$ into $E_3$ gives

\begin{equation*}
    E_3 \lesssim cd [1+ \log(1/a) + aL] \eqsp.
\end{equation*}

Moreover, choosing $a =  1/L$ yields the following bound on the sampling error

\begin{equation*}
    \KL(\mustar|\mathrm{Law}(\baX^\star_{\Tf} ) ) \lesssim \rme^{-4\lambda\Tf}\KL(\mustar|\gamma^d)+ \lambda cd [1+ \log(L) ] + \lambda \epsilon \Tf \eqsp,
\end{equation*}

and this concludes the proof of \cref{theo:6}.

\end{proof}

\subsubsection{Proof of \cref{coro:1}}\label{proof_coro:1}

\begin{proof}[\textbf{Proof of \cref{coro:1}}]

% As a consequence of \cref{theo:6}, with $\lambda \leq 1$, we have

% \begin{equation*}
%     \KL(\mustar|\mathrm{Law}(\baX^\star_{\Tf} ) ) \lesssim \rme^{-4\lambda\Tf}\KL(\mustar|\gamma^d)+  cd [1+ \log(L) ] +  \epsilon \Tf \eqsp.
% \end{equation*}

Applying \cref{theo:6} with $c, \Tf$ chosen in \eqref{eq:def_c_Tf} implies

\begin{equation*}
    \KL(\mustar|\mathrm{Law}(\baX^\star_{\Tf} ) ) \lesssim  \epsilon  +  \epsilon \log \dfrac{\KL (\mustar | \gamma^d)}{\epsilon} \eqsp.
\end{equation*}

Therefore we obtain the approximation error $\tilde O (\epsilon \log (\KL(\mustar | \gamma^d))$. In addition, the number of iterations is given by

\begin{equation*}
\begin{split}
    K = k_0 + k_1 +K-k_0-k_1 &\overset{\eqref{eq:bound_k0k1k2}}{\lesssim} \dfrac{\Tf - 1}{c} + \dfrac{\log(1/a)}{c} + \dfrac{1}{c} = \dfrac{\Tf + \log(L)}{c} \\
    & \overset{\eqref{eq:def_c_Tf}}{=} \dfrac{\lambda d [ 1+ \log(L)][\log (\KL(\mustar|\gamma^d)/\epsilon)/\lambda + \log(L) ]}{\epsilon}\\
    &= \frac{ d [ 1+ \log(L)][\log (\KL(\mustar|\gamma^d)/\epsilon) + \lambda \log(L) ]}{\epsilon}\eqsp,
\end{split}
\end{equation*}
which grows logarithmically rather than linearly with respect to the discrete Fisher information $\beta_{\gamma^d}(\mustar)$. We thus conclude that this step-size sequence offers improved performance compared to the constant step-size.

\end{proof}

\subsection{Convergence of DMPMs with early stopping strategy}

\subsubsection{Proof of \cref{theo:early_stopping}}\label{proof_theo:early_stopping}

\begin{proof}[\textbf{Proof of \cref{theo:early_stopping}}]

% We begin estimating the discrete Fisher information of $\mu_\eta$ as

% \begin{align*}
%     \beta_{\gamma^d}(\mu_\eta) &=\E\l[ \sum_{\ell=1}^d h\l(\rme^{-\log\l(\frac{\mu_\eta}{\gamma^d}(\foX_{\eta})\r)+\log\l(\frac{\rmd\mu_\eta}{\gamma^d}(\varphi^{(\ell)}(\foX_\eta))\r)}\r) \r]\\
%     &= \E\l[ \sum_{\ell=1}^d h\l(\rme^{\log\l(\frac{\mu_\eta(\varphi^{(\ell)}(\foX_\eta))}{\mu_\eta(\foX_{\eta})}\r)} \r)\r]\\
%     &= \E \l[\sum_{\ell=1}^d h\l( \frac{\mu_\eta(\varphi^{(\ell)}(\foX_\eta))}{\mu_\eta(\foX_{\eta})}\r)\r] \\
%     &= \E \l[\sum_{\ell=1}^d \l( \frac{\mu_\eta(\varphi^{(\ell)}(\foX_\eta))}{\mu_\eta(\foX_{\eta})}\log \frac{\mu_\eta(\varphi^{(\ell)}(\foX_\eta))}{\mu_\eta(\foX_{\eta})}-\frac{\mu_\eta(\varphi^{(\ell)}(\foX_\eta))}{\mu_\eta(\foX_{\eta})}+1\r)\r]\\
%     & \leq \E \l[ \sum_{\ell = 1}^d \l( \frac{\mu_\eta(\varphi^{(\ell)}(\foX_\eta))}{\mu_\eta(\foX_{\eta})} \l( \frac{\mu_\eta(\varphi^{(\ell)}(\foX_\eta))}{\mu_\eta(\foX_{\eta})} - 1 \r) -\frac{\mu_\eta(\varphi^{(\ell)}(\foX_\eta))}{\mu_\eta(\foX_{\eta})}+1\r) \r] \\
%     & =  \E \l[ \sum_{\ell = 1}^d \l( \frac{\mu_\eta(\varphi^{(\ell)}(\foX_\eta))}{\mu_\eta(\foX_{\eta})} - 1\r)^2 \r] \\
%     & = \E \l[ \sum_{\ell = 1}^d \l(s^\ell_{\Tf - \eta} (\foX_\eta) \r)^2 \r] \eqsp.
% \end{align*}

% Using the same computation as in \cref{lem:2}, we deduce that

% \begin{equation} \label{eq:bound_beta_eta}
%     \beta_{\gamma^d}(\mu_\eta) \lesssim \dfrac{d}{\Tf - ( \Tf - \eta)} = \dfrac{d}{\eta} \quad \text{and} \quad L = d^{-1}  \beta_{\gamma^d}(\mu_\eta) \lesssim \eta^{-1} \eqsp.
% \end{equation}

Applying the proof of \Cref{theo:6} with the target distribution $\mu_\eta$ in place of $\mu^*$ leads to the following analogous result:

\begin{equation}\label{eq:mu_eta}
    \KL(\mu_{\eta}| \mathrm{Law}(\overleftarrow{X}_{\Tf -\eta }^\star ) ) \leq \rme^{-4\lambda (\Tf - \eta)}\KL(\mu_\eta|\gamma^d) + \lambda c d [ 1+ \log(L) ] + \lambda \epsilon(\Tf-\eta)  \eqsp.
\end{equation}

Recall that the forward dynamic satisfies the curvature--dimension condition $\mathrm{CD}(2\lambda, \infty)$ (see \cref{lem:CD}) and consequently, the logarithm Sobolev inequality holds \citep[Theorem 5.10]{bakry2014analysis}:

\begin{equation}\label{eq:LSI}
    \KL (\mu_\eta | \gamma^d) \leq \frac{1}{4\lambda} \mathcal{E}
    \l( \frac{\mu_\eta}{\gamma^d},
    \log  \frac{\mu_\eta}{\gamma^d}  \r) \eqsp.
\end{equation}

As computed in \cref{lem:CD}, we have

\begin{align}\label{eq:dirichlet}
     \mathcal{E}
    \l( \frac{\mu_\eta}{\gamma^d},
    \log  \frac{\mu_\eta}{\gamma^d}  \r)
    &=
    \lambda \sum_{x \in \msx} \sum_{\ell = 1}^d
    \frac{\mu_\eta}{\gamma^d}(x)  \l( \log \frac{\mu_\eta}{\gamma^d} (x) - \log \frac{\mu_\eta}{\gamma^d}(\trans(x))  \r) \gamma^d (x)\nonumber \\
    &=
    \lambda \E
    \l[ \sum_{\ell=1}^d
    \l(   \log \frac{\mu_\eta (\foX_\eta)}{\mu_\eta (\trans(\foX_\eta))} \r)
    \r] \eqsp.
\end{align}

On the other hand, the discrete Fisher information of $\mu_\eta$ is given by

\begin{align*}
     \beta_{\gamma^d}(\mu_\eta) &= \E\l[ \sum_{\ell=1}^d h\l(\rme^{-\log\l(\frac{\mu_\eta}{\gamma^d}(\foX_{\eta})\r)+\log\l(\frac{\mu_\eta}{\gamma^d}(\varphi^{(\ell)}(\foX_\eta))\r)}\r) \r]\\
     &= \E\l[ \sum_{\ell=1}^d h\l(\rme^{\log\l(\frac{\mu_\eta(\varphi^{(\ell)}(\foX_\eta))}{\mu_\eta(\foX_{\eta})}\r)} \r)\r]\\
    &= \E \l[\sum_{\ell=1}^d h\l( \frac{\mu_\eta(\varphi^{(\ell)}(\foX_\eta))}{\mu_\eta(\foX_{\eta})}\r)\r] \\
    &= \E \l[\sum_{\ell=1}^d \l( \frac{\mu_\eta(\varphi^{(\ell)}(\foX_\eta))}{\mu_\eta(\foX_{\eta})}\log \frac{\mu_\eta(\varphi^{(\ell)}(\foX_\eta))}{\mu_\eta(\foX_{\eta})}-\frac{\mu_\eta(\varphi^{(\ell)}(\foX_\eta))}{\mu_\eta(\foX_{\eta})}+1\r)\r] \\
    &= \sum_{x \in \msx} \sum_{\ell=1}^d \l( \mu_\eta(\varphi^{(\ell)}(x)) \log \frac{\mu_\eta(\varphi^{(\ell)}(x))}{\mu_\eta(x)}- \mu_\eta (\trans(x)) + \mu_\eta (x) \r) \\
    &= \sum_{x \in \msx} \sum_{\ell=1}^d \l( \mu_\eta(\varphi^{(\ell)}(x)) \log \frac{\mu_\eta(\varphi^{(\ell)}(x))}{\mu_\eta(x)} \r)\\
    &= \sum_{x \in \msx} \sum_{\ell=1}^d \l( \mu_\eta(x) \log \frac{\mu_\eta(x)}{\mu_\eta(\trans(x))} \r) \\
    &= \E
    \l[ \sum_{\ell=1}^d
    \l(   \log \frac{\mu_\eta (\foX_\eta)}{\mu_\eta (\trans(\foX_\eta))} \r) \r] \overset{\eqref{eq:dirichlet}}{=} \frac{1}{\lambda}\mathcal{E}
    \l( \frac{\mu_\eta}{\gamma^d},
    \log  \frac{\mu_\eta}{\gamma^d}  \r) \eqsp.
\end{align*}

Plugging it into \eqref{eq:LSI} implies that the discrete Fisher information dominates the $\KL$ divergence as

\begin{equation*}
    \KL (\mu_\eta | \gamma^d ) \leq \frac{1}{4}\beta_{\gamma^d}(\mu_\eta) \eqsp.
\end{equation*}

To complete the proof, it remains to bound the discrete Fisher information. To this end, we employ the elementary inequality $\log a \leq a - 1$ for $a > 0$, and proceed using the same reasoning as in \Cref{lem:2}, as detailed below,

\begin{align*}
    \beta_{\gamma^d}(\mu_\eta)
    &= \E \l[\sum_{\ell=1}^d \l( \frac{\mu_\eta(\varphi^{(\ell)}(\foX_\eta))}{\mu_\eta(\foX_{\eta})}\log \frac{\mu_\eta(\varphi^{(\ell)}(\foX_\eta))}{\mu_\eta(\foX_{\eta})}-\frac{\mu_\eta(\varphi^{(\ell)}(\foX_\eta))}{\mu_\eta(\foX_{\eta})}+1\r)\r]\\
    & \leq \E \l[ \sum_{\ell = 1}^d \l( \frac{\mu_\eta(\varphi^{(\ell)}(\foX_\eta))}{\mu_\eta(\foX_{\eta})} \l( \frac{\mu_\eta(\varphi^{(\ell)}(\foX_\eta))}{\mu_\eta(\foX_{\eta})} - 1 \r) -\frac{\mu_\eta(\varphi^{(\ell)}(\foX_\eta))}{\mu_\eta(\foX_{\eta})}+1\r) \r] \\
    & =  \E \l[ \sum_{\ell = 1}^d \l( \frac{\mu_\eta(\varphi^{(\ell)}(\foX_\eta))}{\mu_\eta(\foX_{\eta})} - 1\r)^2 \r] \\
    & = \E \l[ \sum_{\ell = 1}^d \l(s^\ell_{\Tf - \eta} (\foX_\eta) \r)^2 \r] \lesssim \frac{d}{\Tf - ( \Tf - \eta)} = \dfrac{d}{\eta} \eqsp.
\end{align*}

It follows that

\begin{equation*}
     \KL (\mu_\eta | \gamma^d ) \lesssim \frac{d}{\eta} \quad \text{and} \quad L = d^{-1} \beta_{\gamma^d}(\mu_\eta) \lesssim \eta^{-1} \eqsp.
\end{equation*}

Combined with \eqref{eq:mu_eta}, this leads to the desired conclusion:

\begin{equation*}
    \KL(\mu_{\eta}| \mathrm{Law}(\overleftarrow{X}_{\Tf -\eta }^\star ) ) \lesssim d\eta^{-1} \rme^{-4\lambda (\Tf - \eta)} + \lambda c d [ 1+ \log(\eta^{-1}) ] + \lambda \epsilon(\Tf-\eta) \eqsp.
\end{equation*}

\end{proof}

\subsubsection{Proof of \cref{prop:bound_tv_mustar_and_noise}}\label{proof_prop:bound_tv_mustar_and_noise}

\begin{proof}[\textbf{Proof of \cref{prop:bound_tv_mustar_and_noise}}]
Recall that the total variation distance between $\mu_\eta$ and $\mu^*$ for $\eta \in (0, \max\l\{\Tf, \frac{1}{\lambda}\r\})$ is defined as

\begin{equation*}
    \tvnorm{\mu_\eta - \mustar} = \sum_{x\in \msx} |\mu_\eta(x)-\mustar(x)| \eqsp.
\end{equation*}
By the triangle inequality, we obtain
\begin{equation*}
    \tvnorm{\mu_\eta - \mustar} \leq \sum_{x\in \msx} |\mu_\eta(x) -\mustar(x)\ovr{p}_\eta(x,x)|+|\mustar(x)-\mustar(x)\ovr{p}_\eta(x,x)| \eqsp,
\end{equation*}
where the transition probability $\ovr{p}_\eta$ is defined in \eqref{eq:transition_d}. The two terms above are non-negative as
\begin{equation*}
    \mu_\eta(x) = \sum_{z \in \msx} \mustar(z) \ovr{p}_\eta (z,x) \geq \mustar(x) \ovr{p}_\eta (x,x) \eqsp,
\end{equation*}
and the transition probability $\ovr{p}_\eta(x,x) \leq 1$ for any $x\in \msx$. This together with the formula of $\ovr{p}_\eta$ in \eqref{eq:transition_d} yield
\begin{align*}
    \tvnorm{\mu_\eta - \mustar} &\leq \sum_{x \in \msx} \l[\mu_\eta(x)+\mustar(x)-2\mustar(x)\l(\frac{1}{2}+\frac{1}{2}\rme^{-2\lambda\eta}\r)^d \r]\\
    &\leq 2-2\l(\frac{1}{2}+\frac{1}{2}\rme^{-2\lambda\eta}\r)^d \eqsp.
\end{align*}
To simplify this upper bound, we apply the exponential inequality $\rme^a \geq a + 1$ to $\rme^{-2\lambda\eta}$ and note that $1-\lambda \eta >0$:

\begin{align*}
    \tvnorm{\mu_\eta - \mustar} \leq 2-2\l(\frac{1}{2}+\frac{1}{2}(-2\lambda\eta+1)\r)^d = 2-2(1-\lambda\eta)^d \eqsp,
\end{align*}
and thus the proof is complete.

\end{proof}

\subsubsection{Proof of \cref{coro:early_stopping}}\label{proof_coro:early_stopping}

\begin{proof}[\textbf{Proof of \cref{coro:early_stopping}}]
We observe the following by applying the triangle inequality and Pinsker's inequality,
\begin{align*}
    \tvnorm{\mustar- \mathrm{Law}(\overleftarrow{X}_{\Tf-\eta}^\star ) }
    &\leq \tvnorm{\mustar - \mu_\eta } + \tvnorm{\mu_\eta - \mathrm{Law}(\overleftarrow{X}_{\Tf-\eta}^\star )} \\
    &\leq \tvnorm{\mustar - \mu_\eta } + \sqrt{2\KL(\mu_\eta| \mathrm{Law}(\overleftarrow{X}_{\Tf-\eta}^\star ))}  \eqsp.
\end{align*}
Then \cref{theo:early_stopping} and \cref{prop:bound_tv_mustar_and_noise} together imply
\begin{align}\label{eq:9}
    \tvnorm{\mustar- \mathrm{Law}(\overleftarrow{X}_{\Tf-\eta}^\star ) } \lesssim  1-(1-\lambda \eta)^d + \sqrt{d\eta^{-1}\rme^{-4\lambda (\Tf - \eta)}} + \sqrt{\lambda c d [ 1+ \log(\eta^{-1}) ] }+\sqrt{ \lambda \epsilon(\Tf-\eta)} \eqsp.
\end{align}

% Choosing $\lambda \leq 1$ boils down to simplify the constants in the bound as follows

% \begin{equation}\label{eq:9}
%     \tvnorm{\mustar- \mathrm{Law}(\overleftarrow{X}_{\Tf-\eta}^\star ) } \lesssim  1-(1- \eta)^d + \sqrt{d\eta^{-1}\rme^{-4\lambda (\Tf - \eta)}} + \sqrt{ c d [ 1+ \log(\eta^{-1}) ] }+\sqrt{ \epsilon(\Tf-\eta)} \eqsp.
% \end{equation}
% Furthermore,
The choices of $\eta, c $ and $\Tf$ in \eqref{cond:early_stopping} lead to
\begin{equation}\label{eq:10}
    1-(1-\lambda \eta)^d = \espilon \quad \text{and } \quad d\eta^{-1}\rme^{-4\lambda (\Tf - \eta)} =  \epsilon^2 \quad \text{and} \quad \lambda c d [ 1+ \log(\eta^{-1}) ] = \epsilon^2 \eqsp.
\end{equation}
Substituting \eqref{eq:10} into \eqref{eq:9} gives the desired upper bound

\begin{align*}
    \tvnorm{\mustar- \mathrm{Law}(\overleftarrow{X}_{\Tf-\eta}^\star ) } \lesssim \espilon+\sqrt{ \lambda \epsilon (\Tf - \eta)} \eqsp,
\end{align*}
with the number of iterations is

\begin{align}\label{eq:iterations}
    K \lesssim \dfrac{\Tf - \eta + \log(\eta^{-1})}{c} &\overset{\eqref{cond:early_stopping}}{=} \dfrac{\lambda d[1+ \log(\eta^{-1})][ \log (d/\eta\epsilon^2)/\lambda + \log(\eta^{-1})] }{\epsilon^2} \nonumber \\
    &= \dfrac{ d[1+ \log(\eta^{-1})][ \log (d/\eta\epsilon^2) + \lambda\log(\eta^{-1})] }{\epsilon^2} \nonumber \\
    &= \dfrac{ d[1+ \log(\eta^{-1})][ \log (d/\epsilon^2) + (\lambda+1)\log(\eta^{-1})] }{\epsilon^2}\eqsp.
\end{align}

Finally, the term $\log\eta^{-1}$ can be bounded from above by Bernoulli's inequality $(1-\epsilon)^{1/d}\leq 1-\frac{\epsilon}{d}$ for $\epsilon \in (0,1)$ and $d \geq 1$,

\begin{equation*}
    \log(\eta^{-1}) = \log \l(\frac{\lambda}{1-(1-\epsilon)^{1/d}} \r) \leq \log \frac{\lambda d}{\epsilon} \eqsp.
\end{equation*}

Plugging it into \eqref{eq:iterations} concludes the proof of \cref{coro:early_stopping}.

\end{proof}

%%%%%%%%%%%%%%%%%%%%%%%%%%%%%%%%%%%%%%%%%%%%%%%%%%%%%%%%%%%%%%%%%%%%%%%%%%%%%%%
%%%%%%%%%%%%%%%%%%%%%%%%%%%%%%%%%%%%%%%%%%%%%%%%%%%%%%%%%%%%%%%%%%%%%%%%%%%%%%%

\end{document}